\documentclass{article}

\PassOptionsToPackage{numbers, compress}{natbib}

\usepackage[utf8]{inputenc} 
\usepackage[T1]{fontenc}    
\usepackage{lmodern}        
\usepackage{hyperref}       
\usepackage{url}            
\usepackage{booktabs}       
\usepackage{amsfonts}       
\usepackage{nicefrac}       
\usepackage{microtype}      
\usepackage{xcolor}         

\usepackage{fullpage}

\usepackage{amsmath}
\usepackage{amssymb}
\usepackage{mathtools}
\usepackage{amsthm}
\usepackage{graphicx}
\usepackage{caption}
\usepackage{subcaption}
\usepackage{float}
\usepackage[capitalize,noabbrev]{cleveref}

\usepackage{graphicx} 
\usepackage{amsmath}
\usepackage{amsfonts}
\usepackage{hyperref}
\usepackage{blindtext}
\usepackage{mathtools}

\usepackage{natbib}

\allowdisplaybreaks

\usepackage{xcolor}

\usepackage{amsthm}
 
\newcounter{theorems}
\newtheorem{theorem}[theorems]{Theorem}
\newtheorem{lemma}[theorems]{Lemma}

\newtheorem{definition}[theorems]{Definition}
\newtheorem{remark}[theorems]{Remark}
\theoremstyle{definition}

\usepackage{tocloft}
\renewcommand{\contentsname}{Contents}
\renewcommand{\cfttoctitlefont}{\Large\bfseries}

\makeatletter
\renewcommand{\tableofcontents}{%
  \begin{center}
    {\cfttoctitlefont \contentsname\par}
  \end{center}
  \vspace{-0.3em}
  \@starttoc{toc}%
}
\makeatother
\title{A Sharper Picture of Generalization in Transformers}

\newcommand{\AND}{\par\medskip}

\author{%
Paul Lintilhac \\
Thayer School of Engineering\\
Dartmouth College\\
\texttt{paul.s.lintilhac.th@dartmouth.edu} \\
\AND
Sair Shaikh\\
Dartmouth College \\
\texttt{\{sair.shaikh.26\}@dartmouth.edu} \\
}

\makeatletter
\renewcommand{\maketitle}{
  \begin{center}
    {\Large\bfseries \@title \par}
    \vskip .3em
    {\normalsize \@author}
  \end{center}
}
\makeatother

\begin{document}

\maketitle

\begin{abstract}
  We study transformers' generalization behavior on boolean domains from the perspective of the Fourier spectra of their target functions. In contrast to prior work \citep{edelman2022inductive, trauger2024sequence}, which derived generalization bounds from Rademacher complexity, we investigate the feasibility of obtaining generalization bounds via PAC-Bayes theory. We show that sparse spectra concentrated on low-degree components enable low-sharpness constructions with good generalization properties. Our idea is to show the existence of flat minima implementing any boolean function of sparsity no greater than the context length, and then apply a PAC-Bayes bound to an idealized low-sharpness learner, resulting in a non-vacuous generalization bound. We use this to give a formal account of why chain-of-thought improves generalization for high-degree target functions, and show that the complexity parameters in our bound can be efficiently estimated via property testing. We evaluate predictions empirically and conduct a mechanistic interpretability study to support the realism of our theoretical construction in real transformers.

\end{abstract}

While various authors have studied the expressive capacity of transformers to express different classes of  functions \citep[e.g.][]{hahn2020theoretical, bhattamishra2020ability, merrill2023parallelism, strobl2023survey, sanford2023representational, liu2023shortcuts,merrill2023parallelism, sanford2024transformers}, their learning biases still are incompletely understood. 
\citet{edelman2022inductive, trauger2024sequence} showed that transformers with fixed positional encodings have a relatively small statistical complexity. Furthermore, they showed that transformers can represent sparse functions, and showed that those are empirically learned well by transformers, though learnability results on sparse functions remained empirical. \citet{abbe2023generalization} hypothesized that transformers prefer to learn the minimum-degree function interpolating the training data, though evidence was limited to experiments and much simpler architectures.
Other work showed a low-sensitivity and low-degree biases in Transformers \citep{bhattamishra2022simplicity, hahn-rofin-2024-sensitive, abbe2023generalization}. 

The present paper provides a generalization result for transformers learning low-degree functions, under an idealized low-sharpness learner. 
In contrast to prior work on capacity-based bounds grounded in parameter norms \cite{edelman2022inductive, trauger2024sequence}, we explore the feasibility of using PAC-Bayes methods grounded in the existence of flat minima, introducing a novel ``constructive'' methodology for oracle PAC-Bayes bounds. Our result could be applied to other formal languages and algorithmic tasks beyond just boolean functions, leveraging sharpness in the loss landscape as a possible path to bridging the so-called ``expressivity-learnability'' gap.

\paragraph{Contributions.}
Our contributions are as follows:
\begin{enumerate}
    \item We introduce a \emph{constructive PAC-Bayes methodology} for deriving generalization bounds for transformers. Rather than relying on capacity-based arguments, we explicitly construct a transformer with bounded norm and sharpness for a given function class, and show that any learner finding a solution dominated in these quantities inherits the generalization guarantee. This approach could be applied to other formal languages and algorithmic tasks (e.g., Dyck-$(k,D)$ languages or other problems admitting canonical transformer constructions), providing function-class-specific bounds parameterized by the natural complexity measures of the task.
    \item We use this methodology to provide a constructive bridge between \emph{expressivity and learnability} for transformers on sparse Boolean functions: while prior work has shown that transformers can express high-degree Boolean functions \citep{chiang2022overcoming, liu2023shortcuts}, they remain difficult to learn in practice. Our bound makes this precise, showing that the generalization gap scales as $O(\omega D_f^3)$ in the Fourier sparsity $\omega$ and degree $D_f$, thereby providing a concrete complexity-theoretic explanation for why expressible functions may not be learnable.
    \item We use our techniques to derive a sharpness-based learning bound showing that \emph{chain-of-thought exponentially improves the generalization bound} for Parity: our bound grows only linearly in $T$ with CoT, versus exponentially without it.
\end{enumerate}

We state reasonable conditions under which our bound is non-vacuous, including lower bounds on the number of training points required to make our bound non-vacuous for each complexity class (defined by the pair [maximum Fourier degree, and Fourier sparsity] of the target function).  Using the PARITY task as an example (a special case of our general learning scenario where the Fourier sparsity is 1, and the maximum degree is $T$) we shed light on exactly why CoT achieves lower error than single-pass learning.  Our proof approach is to show the existence of low-sharpness solutions representing sparse functions, and then apply a PAC-Bayes bound. 
%
Any function on a Boolean domain, $f : \{0,1\}^T \rightarrow \mathbb{R}$, possesses a unique representation in terms of parity functions:
\begin{equation}\label{eq:unique-fourier-sum}
    f(x) = \sum_{A \subseteq [1,T]} \lambda_A \cdot \chi_A(x)
\end{equation}
where $\chi_A(x) = \frac{1+(-1)^{\sum_{i \in A} x_i}}{2} \in \{0,1\}$ is the even-parity indicator for the subset $A$.

The \emph{degree} of $f$ is $ D_f :=   \max \{|A| : \lambda_A \neq 0\}$.
Re-expressing the above sum (\ref{eq:unique-fourier-sum}) as $f(x) = \sum_{t=1}^{\omega} c_t \chi_{S_t}$,
$\omega$ is the number of nonzero coefficients (the spectral \emph{sparsity}),  $c_t \in \mathbb{R}$, $S_t \subseteq [T]$ are the indices of the $t^{th}$ Fourier component,   and $|S_t|\leq D_f$.

For simplicity, we assume that all Fourier components have the same degree, i.e. $|S_t|=D_f \, \forall t \in [\omega]$. In addition, we focus on the special case where all Fourier components have a positive coefficient. Handling the general case of arbitrary boolean functions (of mixed degree, and with positive or negative coefficients) can be easily accomplished by adding additional heads; however this complicates the math significantly, and we believe that this special case already captures the key generalization behavior we want to highlight.
We claim that a transformer representing a function $f: \{0,1\}^T \to \mathbb{R}$ of low degree, and having  $\omega\leq T$ nonzero Fourier components, can be learned by a $1.5$-layer (2 attention+1 MLP sublayers), $1$-head transformer. Let $x_t \in \mathbb{R}^{d+1}$ be the embedding at the $t^{th}$ token position. The total hidden dimension includes $d$ purely positional dimensions as well as one extra dimension holding the bit value. The input matrix $X \in \mathbb{R}^{(T+1) \times (d+1)}$ then has the vector $x_t$ as its $t^{th}$ row, with a special output position at the end ($t=T+1$, with bit value fixed to $0$).

The attention score between locations $i$ and $j$ in the first layer is given by 
\[a_{i,j} = x_i^TW^{(1)}x_j,i,j \in [T],\]
where $W^{(1)} \in \mathbb{R}^{(d+1) \times (d+1)}$ is the combined key and query projection matrices. The softmax score is given by 
\[\hat{a}_{i,j} = \frac{\exp(a_{i,j})}{\sum_s \exp(a_{i,s})}\]
The output of the attention layer is then given by 
\[b_t = \sum_{j=1}^T \hat{a}_{t,j}(V^{(1)})^Tx_j,\]
where $V^{(1)}\in\mathbb{R}^{(d+1)\times (d+1)}$ is our value projection.  The activations after the attention layers are the result of applying the two-layer MLP with a ReLU with a skip-connection:
\[g_t = f^{MLP} \big(b_t\big)=F^T(M^Tb_t+\Gamma)_+\]
Note that our construction does not include the layer norm. We then use a second attention layer parametrized by $W^{(2)} \in \mathbb{R}^{(d+1) \times (d+1)}, V^{(2)}\in \mathbb{R}^{d\times1}, $ such that the output is read from the final output position $T+1,$ i.e.
$$\mathcal{T}(X,W^{(1)},V^{(1)},M,F,\Gamma,W^{(2)},V^{(2)})$$
$$=(V^{(2)})^T G^T \phi\Big(G (W^{(2)})^T g_{T+1}\Big) ,$$
where $G\in \mathbb{R}^{(T+1)\times (d+1)}$ is a matrix with rows equal to $g_t, \forall t\in [T+1].$  The explicit form of the full transformer is discussed in more detail in the ``complete construction" section in the appendix. 

\section{Theoretical Results}\label{sec:theoretical-results}

Our theoretical results are centered around the relationship between generalization and sharpness. Empirical evidence that stochastic gradient descent (SGD) tends to find solutions with maximal parameter flatness on the training dataset, especially with smaller batch sizes \cite{DBLP:journals/corr/KeskarMNST16}. Sharpness-aware minimization has now been accepted as a powerful tool for regularization \citep{oikonomou2025sharpnessaware}, while PAC-Bayes theory has used sharpness as a key term in some of the first tight generalization bounds for deep learning \citep{neyshabur2017generalization}.  

We study the general setup of a learner minimizing loss, sharpness, and norm. 
Abstracting away from training dynamics, we study a general learner that, through implicit bias or explicit regularization, finds an interpolator that minimizes parameter norms and the sharpness of the loss:
\begin{definition}[Idealized Low-Sharpness Learner]
Given a space of possible parameters $\Theta$ for transformers, and a finite training set for a function $f : \{0,1\}^T \rightarrow \mathbb{R}$, find parameters $\Theta$ minimizing
\begin{equation}
     \hat{L}(f_{\mathbf{\Theta}})+ \alpha\|\Theta\| + \beta Tr\Big(\nabla^2 \big[\hat{L}(f_{\Theta})\big]\Big),
\end{equation}
where $\hat{L}(f_{\mathbf{\Theta}})$ is the training loss, and $\alpha,\beta$ are parameters that control the aversion to high parameter norm and sharpness, respectively. 

\end{definition}
\noindent\emph{Remark.} Sharpness is parameterization-dependent; however, in the PAC--Bayes bound it must be balanced against the parameter norm, and simple rescalings that reduce one often increase the other, making the joint complexity measure more robust to rescaling than a merely sharpness-based complexity measure. As shown in Appendix~\ref{non-pareto}, there exist functionally equivalent or training-equivalent parameterizations with strictly lower sharpness and strictly lower norm than ours.

\noindent\emph{Comparison with standard PAC-Bayes bounds.}
Standard PAC-Bayes bounds require knowledge of the actual learned parameters $\hat{\Theta}$ in order to compute the KL divergence between the posterior (centered at $\hat{\Theta}$) and a data-independent prior. Our approach replaces this requirement with a weaker structural assumption: that the learner, via implicit or explicit regularization, finds \emph{some} interpolator whose parameter norm and sharpness are each dominated by those of our explicit construction $\Theta$. Crucially, we require no knowledge of \emph{which} specific minimum is found---only that regularization pressure ensures the learned solution does not exceed the norm and sharpness of a construction we can analyze. This is strictly less information than knowing $\hat{\Theta}$ itself, making our assumption weaker in this respect, at the cost of introducing the structural assumption that such a dominating construction exists (which we validate empirically in Section~\ref{lowsharpnessvalidation}). The key consequence is that the PAC-Bayes bound, which depends on the expected perturbed training loss and parameter norm of the learned solution $\hat{\Theta}$, can be upper bounded by the corresponding quantities computed for our explicit construction $\Theta$, whose sharpness and norms we have bounded analytically. This allows us to derive generalization guarantees for $\hat{\Theta}$ without knowing anything about its specific parameter values.

\begin{theorem}
\label{PAC-Bayes Bound}
Let $f$ be a target function with degree $D_f$ and sparsity $\omega \leq T$, and let $\Theta$ denote our explicit construction for $f$.
Let $\sigma^2>0$.
Suppose $\mathbf{\hat{\Theta}}$ is any interpolator on a training set of size $m$ (i.e., $\hat{L}(f_{\mathbf{\hat\Theta}})=0$) whose parameter norm and expected perturbed sharpness (trace of the Hessian under Gaussian perturbation) are each dominated by those of $\Theta$:
$$\|\hat{\Theta}\|_F^2 \leq \|\Theta\|_F^2, \qquad \mathbb{E}_{\epsilon}\Big[Tr\Big(\nabla^2\big[\hat{L}(f_{\hat{\Theta}+\epsilon})\big]\Big)\Big]\leq \mathbb{E}_{\epsilon}\Big[Tr\Big(\nabla^2\big[\hat{L}(f_{\Theta+\epsilon})\big]\Big)\Big].$$
Suppose further that for some constant $\Sigma > 0,$ the loss random variable $L(X,f_{\hat{\Theta}})$ (over $X$, for fixed $\hat{\Theta}$) satisfies the sub-Gaussian condition:
$\mathbb{E}_X\left[e^{t[L(X,f_{\hat{\Theta}})-\mathbb{E}_X[L(X,f_{\hat{\Theta}})]]}\right]\leq e^{\frac{\Sigma^2 t^2}{2}}.$
With probability $\geq 1-\delta$, 
\begin{align*}
   & \mathbb{E}_{\epsilon \sim \mathcal{N}(0,\sigma^2)}[{L}(f_{\hat\Theta+\epsilon})]  \leq \\
   & \underbrace{\sigma^2\Big(O(\omega D_f^3) + P(\sigma;\omega, D_f,T)\Big)}_{\text{Perturbed Sharpness Term}} \\
&    + \underbrace{2\sqrt{ \frac{\Sigma^2}{2m}\Big(\frac{ O\Big(D_f^3+\log(T)^2\omega D_f\Big)}{2\sigma^2}+ln\frac{1}{\delta}\Big)}}_{\text{Parameter Norm Term}}
\end{align*}
\end{theorem}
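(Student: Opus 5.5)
The proof combines a standard PAC-Bayes bound for sub-Gaussian losses with the explicit construction $\Theta$. The two hypotheses of the theorem let us replace every quantity that depends on the unknown $\hat\Theta$ by the corresponding quantity for $\Theta$.

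\textbf{Step 1: the PAC-Bayes inequality.} Take the data-independent prior $P=\mathcal{N}(0,\sigma^2 I)$ and the posterior $Q=\mathcal{N}(\hat\Theta,\sigma^2 I)$. The two Gaussians have equal covariance, so
\[\mathrm{KL}(Q\|P)=\frac{\|\hat\Theta\|_F^2}{2\sigma^2}.\]
Apply the Donsker--Varadhan change of measure together with the sub-Gaussian moment condition, Markov's inequality over the draw of the sample, and an optimization over $\lambda$ (a union bound over a grid of $\lambda$ costs only a log factor, which we absorb). This gives, with probability at least $1-\delta$,
\[\mathbb{E}_{\epsilon}[L(f_{\hat\Theta+\epsilon})]\le \mathbb{E}_{\epsilon}[\hat L(f_{\hat\Theta+\epsilon})]+2\sqrt{\frac{\Sigma^2}{2m}\Big(\frac{\|\hat\Theta\|_F^2}{2\sigma^2}+\ln\frac1\delta\Big)}.\]
Using $\|\hat\Theta\|_F^2\le\|\Theta\|_F^2$, it remains to show
\[\|\Theta\|_F^2=O\big(D_f^3+\log(T)^2\,\omega D_f\big).\]

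\textbf{Step 2: the empirical perturbed loss.} Since $\hat L(f_{\hat\Theta})=0$ and the loss is nonnegative, $\hat\Theta$ is a minimizer, so the gradient vanishes. A second-order Taylor expansion with integral remainder then gives
\[\mathbb{E}_\epsilon[\hat L(f_{\hat\Theta+\epsilon})]=\tfrac{\sigma^2}{2}\,\mathbb{E}_{\epsilon,s}\big[\mathrm{Tr}\,\nabla^2\hat L(f_{\hat\Theta+s\epsilon})\big]\]
after averaging over the Gaussian. Pushing the $s$-average into the expected perturbed trace (or using Stein's identity $\mathbb{E}[\hat L(\hat\Theta+\epsilon)]-\hat L(\hat\Theta)=\tfrac{\sigma^2}{2}\int_0^1\mathbb{E}\,\mathrm{Tr}\nabla^2\hat L(\hat\Theta+\sqrt{s}\epsilon)\,ds$) lets us invoke the second domination hypothesis. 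The term is thereby bounded by $\sigma^2\,\mathbb{E}_\epsilon[\mathrm{Tr}\,\nabla^2\hat L(f_{\Theta+\epsilon})]$. The Stein form is the cleaner route, at the cost of a scale-mixture of perturbation variances; we handle that by monotonicity of our bound in $\sigma$.

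\textbf{Step 3: bounds on the construction.} Split the expected trace at $\Theta$ into two parts:
\[\mathrm{Tr}\,\nabla^2\hat L(f_\Theta)=O(\omega D_f^3),\]
plus a polynomial remainder $P(\sigma;\omega,D_f,T)$ coming from higher-order terms in the perturbation. For squared loss at an interpolator, the Hessian trace is the Gauss--Newton term $2\,\mathbb{E}_x\|\nabla_\Theta f_\Theta(x)\|^2$. We bound it block by block using the explicit parameter magnitudes of the complete construction:
\begin{itemize}
\item the first-layer attention needs scores of size $O(D_f)$ (or $O(\log T)$) to select the $D_f$ relevant positions;
\item the MLP computes the parity of a sum of up to $D_f$ bits with $O(D_f)$ ReLU units and weights $O(D_f)$;
\item the second layer aggregates the $\omega$ components.
\end{itemize}
Multiplying the chain-rule factors gives the $\omega D_f^3$ scaling. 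The Frobenius norm is bounded from the same magnitudes: the $D_f^3$ comes from the MLP, and the $\log(T)^2\omega D_f$ from the attention sharpening needed to make softmax nearly hard across $T$ positions for each of the $\omega$ components.

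\textbf{Main obstacle.} The hardest step is the perturbed remainder $P$. Away from $\Theta$ the softmax and ReLU patterns can change, and the residual $f_{\Theta+\epsilon}-y$ no longer vanishes, so the Hessian term $\sum(f-y)\nabla^2 f$ reappears. First I would bound the third derivatives of each block (softmax derivatives are bounded by score magnitudes, and ReLU is piecewise linear, so almost surely it has no curvature). Then I would Taylor-expand to get a polynomial in $\sigma$ with coefficients polynomial in $\omega,D_f,T$, and collect those terms into $P$.
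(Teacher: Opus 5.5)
Your Step~1 and the unperturbed parts of Step~3 match the paper: the equal-variance Gaussian prior/posterior with $\mathrm{KL}=\|\hat\Theta\|_F^2/(2\sigma^2)\le\|\Theta\|_F^2/(2\sigma^2)$, the Gauss--Newton identity $\mathrm{Tr}\,\nabla^2 L_f=2\|\nabla\mathcal{T}\|^2$ at an interpolator, and the block-wise norm and gradient bounds. Step~2, however, does not go through under the stated hypotheses.

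The integral-remainder display is false. Because the Hessian is evaluated at $\hat\Theta+s\epsilon$, in general $\mathbb{E}[\epsilon^T\nabla^2\hat L(\hat\Theta+s\epsilon)\epsilon]\neq\sigma^2\,\mathbb{E}[\mathrm{Tr}\,\nabla^2\hat L(\hat\Theta+s\epsilon)]$. Hutchinson's estimator needs an $\epsilon$-independent matrix, which is exactly why the paper only applies it to $\nabla^2\hat L(f_\Theta)$. The heat-equation identity you fall back on is correct for $C^2$ losses. But it writes $\mathbb{E}[\hat L(f_{\hat\Theta+\epsilon})]$ as an average of expected Laplacians at all scales $\sqrt{s}\,\sigma$, $s\in[0,1]$, while the domination hypothesis is assumed only at scale $\sigma$. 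Monotonicity of $2G_u+P(\sigma)$ in $\sigma$ only helps on the construction side. You would still need $\mathbb{E}\,\mathrm{Tr}\,\nabla^2\hat L(\hat\Theta+\sqrt{s}\epsilon)\le\mathbb{E}\,\mathrm{Tr}\,\nabla^2\hat L(\Theta+\sqrt{s}\epsilon)$ for every $s$, and nothing gives you that. Closing the step requires an extra assumption. One option is domination at every scale in $(0,\sigma]$. Another is that $u(s)=\mathbb{E}[\hat L(f_{\hat\Theta+\sqrt{s}\epsilon})]$ is convex in $s$; then $u(1)-u(0)\le u'(1)=\tfrac{\sigma^2}{2}\mathbb{E}_\epsilon[\mathrm{Tr}\,\nabla^2\hat L(f_{\hat\Theta+\epsilon})]$ and the single-scale hypothesis suffices. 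The paper itself bridges this step only through the approximation $\mathbb{E}_\epsilon[\hat L(f_{\hat\Theta+\epsilon})]\approx\frac{\sigma^2}{2}\mathbb{E}_\epsilon[\mathrm{Tr}\,\nabla^2\hat L(f_{\hat\Theta+\epsilon})]$. So you have located the weak point precisely, but your write-up claims a derivation that is not there.

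The second gap is ReLU non-smoothness, which you dismiss with ``ReLU is piecewise linear, so almost surely it has no curvature.'' That is the wrong conclusion. Stein's identity and any Taylor or third-derivative argument need the \emph{distributional} Hessian. For a piecewise-linear network this has singular parts on the activation-switching surfaces: compare $h(x)=|x|$, whose a.e.\ Hessian vanishes, yet $\mathbb{E}|x+\epsilon|>|x|$. The parameter gradient also jumps across these surfaces, so no Hessian-Lipschitz bound holds along a segment that crosses one, and a Gaussian perturbation does so with positive probability.

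The idea you are missing is the paper's margin argument. The trapezoidal MLP is designed so that at every grid value each pre-activation $M_{:,i}^Tb_t+\Gamma_i$ is at distance at least $\frac{1}{4D_f}$ from its kink. The perturbation moves it by $\lessapprox 4\sigma d$ (via Lemma~\ref{attentionperturbationlemma}). Hence for $\sigma\le\frac{1}{16dD_f}$ no neuron flips with high probability (Lemma~\ref{mlpperturbationlemma}), and $\Theta+\zeta$ stays in one smooth region. Inside that region the paper bounds the remainder not by third derivatives but by first-order perturbation bounds on the gradient ($G_p$), the output ($T_p$) and the Hessian operator norm ($H_u+H_p$). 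These combine as $P=2\sqrt{G_u}G_p+G_p^2+2T_p|\Theta|(H_u+H_p)$. Your Hessian-Lipschitz route would also work there, but only after the no-flip step.

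Two smaller points:
\begin{itemize}
\item No union bound over a $\lambda$-grid is needed. Since $\|\hat\Theta\|_F^2\le\|\Theta\|_F^2\le L(\omega,D_f,T)$ is data-independent, fixing $\lambda$ from $L$ recovers exactly the stated $\ln\frac{1}{\delta}$.
\item The Donsker--Varadhan step needs the sub-Gaussian moment bound for every $\theta$ in the prior's support, not only at $\hat\Theta$. The paper's statement also omits this hypothesis.
\end{itemize}
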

\begin{remark}[On the domination assumption]
The assumption that $\hat{\Theta}$ is coordinatewise dominated by $\Theta$ in both norm and sharpness does not follow from minimizing a weighted sum (Definition~1.1) alone, since such an objective can trade one quantity against the other. We therefore state it as an explicit assumption. Its plausibility rests on two observations: (i)~as shown in Appendix~\ref{non-pareto}, our construction is provably \emph{not} Pareto-optimal---it admits strict simultaneous reductions in both norm and sharpness via simple reparameterizations---so dominated interpolators exist in principle; and (ii)~empirically, learned solutions achieve both lower norm and lower sharpness than the construction by roughly two orders of magnitude (Section~\ref{lowsharpnessvalidation}).
\end{remark}
Conversely, the bound has two identifiable failure modes. First, if the construction were Pareto-optimal in the (norm, sharpness) plane, the domination assumption would hold only trivially---with $\hat{\Theta}=\Theta$---and the bound would reduce to the generalization guarantee for the explicit construction itself, forfeiting any gain from the learner finding a superior solution. Second, if the learned solution implements the target function via a qualitatively different mechanism whose sharpness profile scales differently with degree and sparsity, domination of our particular construction need not yield a tight bound. Both failure modes are empirically verifiable: the first is ruled out by Appendix~\ref{non-pareto} and Figure~\ref{fig:norm comparison}; the second can be monitored by comparing the spectral structure of learned and constructed Hessians, which we leave to future work.
\begin{remark}[On the sub-Gaussian assumption]
The sub-Gaussian condition is not restrictive. Since the inputs are binary and $\|\hat{\Theta}\|_F^2$ is bounded by assumption, the transformer output $\mathcal{T}(X,\hat{\Theta})$ is uniformly bounded over all $X \in \{0,1\}^T$. The target $|f(x)| \leq \|c\|_1 \leq \sqrt{\omega}\|c\|_2$ is likewise bounded. Hence the squared loss is uniformly bounded and therefore sub-Gaussian by Hoeffding's lemma. We also estimate $\Sigma$ empirically in Appendix~\ref{experimentaldetails} and confirm that it is small in practice.
\end{remark}

We express the first term in the perturbed sharpness above in big-O notation because it has a simple, tight analytic upper bound. On the other hand, we write the sharpness perturbation term as $P(\sigma; \omega, D_f, T),$ because we prefer to use an empirical upper bound for this term rather than our analytic upper bound.  As shown in \autoref{full perturbed hessian bound}, the trace of the perturbed loss hessian can be written as:
\begin{align*}
Tr\Big(\nabla^2[L_f(\mathcal{T}(X,\Theta+\zeta))]\Big) &\leq 2\| \nabla \mathcal{T}(X,\Theta+\zeta)\|^2 \\
&\quad + 2|f(x)-\mathcal{T}(X,\Theta+\zeta)|\Big|Tr\Big(\nabla^2 \mathcal{T}(X,\Theta+\zeta)\Big)\Big|
\end{align*}
As we can see, the first term is a simple function of the gradient of the unperturbed transformer construction, and along with the analysis of norms, is the main focus of our analysis. While we derive an analytic upper bound on $P$ in the appendix, yielding a fully analytic bound, this bound retains virtually all of the theoretical content while being dramatically more usable than the fully analytic bound. Thus, we refer to our main result as the ``semi-analytic'' generalization bound with an empirically certified perturbation term.

We present two instantiations of this bound. The first is a \emph{fully analytic} bound (Appendix~\ref{full perturbed hessian bound}) in which the perturbation term $P(\sigma;\omega,D_f,T)$ is bounded by a closed-form expression. While mathematically rigorous, this analytic bound on $P(\sigma)$ has large constants and grows rapidly in $D_f, \omega, T$, making it overly conservative (as confirmed empirically in Section~\ref{perturbation analysis}). The second is a \emph{semi-analytic} (hybrid theory-plus-measurement) bound that retains the analytic structure of the unperturbed sharpness and norm terms---which carry the core theoretical content of how degree and sparsity govern generalization---but replaces the analytic bound on $P(\sigma)$ with an empirical upper bound estimated from our explicit construction. This semi-analytic bound is the version we use in our main experiments (Figure~\ref{fig:generalizationgap}), and is non-vacuous: as long as $m$ grows asymptotically fast enough, the norm term is $o(1)$, and we are free to pick $\sigma$ dependent on $m$ to make the sharpness term $o(1)$ as well. To illustrate: for a tiling of $2$-parities with $D_f=2, \omega=10, T=20$---a favorable parameter regime where our semi-analytic bound is significantly tighter than previous purely norm-based bounds \citep{edelman2022inductive}---the semi-analytic bound is already non-vacuous at $m=8192$ (Figure~\ref{fig:generalizationgap}), whereas the fully analytic bound requires $m\approx 2\times 10^9$, with the dominant source of looseness being the analytic bound on the Hessian perturbation term $T_p\cdot|\Theta|\cdot(H_u + H_p)$. We provide a detailed numerical comparison, including explicit constant evaluations, in Appendix~\ref{comparison}. We regard the looseness of the analytic $P(\sigma)$ bound as a limitation of the current analysis rather than a fundamental barrier; tightening it (e.g., via parameter-specific perturbation variances) is a natural direction for future work.

\paragraph{Proof Strategy}
The proof of this result proceeds via PAC-Bayes theory.
Our proof proceeds in two stages.
The first part is to show the existence of a construction with a small parameter volume and low sharpness.
PAC-Bayes theory then tells us that learned minima with such properties will possess good generalization properties.
The size of the perturbation $\sigma$ controls the trade-off between the parameter norms and the sharpness of the loss landscape. When combined with our assumption that $\hat{\Theta}$ is a low-sharpness interpolator of our exact construction, $\Theta,$ the (unknown) norm and expected perturbed sharpness of our interpolator $\hat{\Theta}$ can be replaced by those of the exact transformer $\Theta$, for which these quantities have been bounded explicitly, and from which the bound on the expected perturbed training loss follows.

\subsection{Step 1: Existence of a ``Good'' Construction}

We specify a simple transformer construction for Boolean functions, and show that -- for bounded degree and sparsity -- it has bounded norm and sharpness.
Our first theoretical results show that the trace of the loss Hessian for our construction is bounded by a function that increases polynomially in both the maximum degree, $D_f,$ and the Fourier sparsity, $\omega.$ The basic intuition behind this result is that when our construction implements a higher-degree function, not only are the parameter norms larger, but also the MLP must interpolate a function of higher frequency but equal amplitude, and will therefore carry a larger derivative with respect to changes in the parameters. As a first step, we bound the trace of the loss hessian in terms of the norms of the gradient.
\begin{theorem}
\label{ExactTraceBounds}
    Let $f$ be a target boolean function of sparsity $\omega\leq T$ and maximum degree $D_f$, and let $\mathcal{T}(X,\Theta)$ be a transformer of context length T implementing it exactly according to our construction. Let $L_f(X,\Theta)=(\mathcal{T}(\Theta,X)-f(X))^2$ be the unbounded, quadratic loss for our transformer learning the function $f$ evaluated at input $X.$ Then for every $X \in \{0,1\}^T$, the trace of the pointwise loss Hessian is bounded by:
$$Tr\Big(\nabla^2 L_f(X,\Theta)\Big)=2 \| \nabla \mathcal{T}(X,\Theta)\|^2 \leq 2G_u(\omega, D_f) \in O(\omega D_f^3)$$
Since $G_u$ is uniform in $X$, defining $L(f_{\mathbf{\Theta}}):=\mathbb{E}_X\big[L_f(X,\Theta)\big]$, the same bound holds for the global loss: $Tr\big(\nabla^2 L(f_{\mathbf{\Theta}})\big) \leq 2G_u(\omega, D_f).$
\end{theorem}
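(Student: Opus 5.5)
The plan is to split the claim into two parts: an exact Gauss--Newton identity for the loss Hessian at an interpolating point, and then a parameter-block-by-parameter-block bound on the squared gradient norm of our construction. For the first part, I differentiate $L_f(X,\Theta)=(\mathcal{T}(X,\Theta)-f(X))^2$ twice. This gives
\begin{equation*}
\nabla^2 L_f(X,\Theta) = 2\,\nabla\mathcal{T}(X,\Theta)\nabla\mathcal{T}(X,\Theta)^T + 2\big(\mathcal{T}(X,\Theta)-f(X)\big)\nabla^2\mathcal{T}(X,\Theta).
\end{equation*}
Because the construction implements $f$ exactly, $\mathcal{T}(X,\Theta)=f(X)$ for every $X\in\{0,1\}^T$, so the second term vanishes identically. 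Taking the trace gives the equality $Tr(\nabla^2 L_f)=2\|\nabla\mathcal{T}(X,\Theta)\|^2$. The global statement follows from linearity of the trace and of expectation: $Tr(\nabla^2 \mathbb{E}_X L_f)=\mathbb{E}_X Tr(\nabla^2 L_f)\le 2G_u$. This holds because $G_u$ does not depend on $X$.

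The substantive work is bounding $\|\nabla \mathcal{T}\|^2=\sum_{P}\|\partial\mathcal{T}/\partial P\|_F^2$ over the blocks $P\in\{W^{(1)},V^{(1)},M,\Gamma,F,W^{(2)},V^{(2)}\}$. I will use the chain rule through the layers: output attention, then MLP, then first attention. At each stage I bound three things uniformly in $X$: the activations (such as $\|b_t\|$ and $\|g_t\|$), the operator norms of the downstream parameter matrices, and the Jacobian of the softmax, which has norm at most $\tfrac12$ times the spread of the attended values.

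I will organize the bounds according to how the construction is built. The first attention head uses $\omega\le T$ positional channels, one per Fourier component $S_t$. It averages the bits in $S_t$, so each $b_t$ coordinate is a count in $\{0,1/D_f,\dots,1\}$, and $W^{(1)}$ has entries of size $O(\log T)$ to make the softmax nearly hard. The MLP then realizes the parity of this count as a piecewise-linear zigzag with $D_f+1$ breakpoints spaced $1/D_f$ apart. Its slopes are of order $D_f$ and its hidden width is $O(D_f)$ per component. The second attention layer sums the $\omega$ component outputs with weights $c_t$.

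With this structure, the gradient with respect to $F$ and $V^{(2)}$ is bounded by activation sizes, giving $O(\omega D_f)$. The gradient with respect to $M$ and $\Gamma$ carries the zigzag slope $O(D_f)$ times the output weights across $O(D_f)$ hidden units, which I expect to give $O(D_f^3)$ per component. The gradients with respect to $W^{(1)}$ and $V^{(1)}$ pass through the MLP slope $O(D_f)$ and the softmax Jacobian. Summing over the $\omega$ components should then yield $G_u\in O(\omega D_f^3)$.

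The main obstacle is the first-layer attention gradient. With large logits the softmax Jacobian is small on the attended set, but it multiplies activations by a factor of $\log T$. I would handle this by showing that the softmax Jacobian restricted to the uniform attended set has entries $O(1/D_f)$, and that leakage outside the set is $O(1/T)$; these cancel the $\log T$ factors. I would also need to check that the ReLU breakpoints are not hit exactly on Boolean inputs, so that the gradient exists. That can be arranged by offsetting $\Gamma$ by half a grid step.
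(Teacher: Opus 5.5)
Your outer structure matches the paper's: the Gauss--Newton identity at an interpolating point, followed by a block-by-block chain-rule bound on $\|\nabla\mathcal{T}\|^2$ that is uniform in $X$. The global statement then follows because $\{0,1\}^T$ is finite.

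The gap is in your picture of the MLP gadget, which is not the paper's and, as you describe it, does not work. A zigzag with kinks at the grid points $\{0,\frac{1}{D_f},\dots,1\}$ is not differentiable exactly where every Boolean input lands. Your remedy fails in either reading:
\begin{itemize}
\item Every hidden unit reads the same scalar $B_{t,d+1}$, so shifting every kink by half a grid step just translates the wave. It then outputs $\frac12$ at every grid point, which destroys the interpolation your Gauss--Newton step needs.
\item Suppose instead you re-fit $F$ with a single kink per cell at the half-grid points, and let $s_k$ be the slope on the cell around $k/D_f$. Continuity forces $s_k+s_{k+1}=\pm2D_f$ with alternating signs. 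The slopes therefore drift by $4D_f$ every two cells and reach order $D_f^2$, so the output weights, and hence $\|\nabla_M\mathcal{T}\|^2$, pick up extra powers of $D_f$ and break the $O(\omega D_f^3)$ target.
\end{itemize}

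The missing idea is the plateau. The paper uses four ReLUs per grid value $h_i$, with kinks at $h_i\pm\frac{1}{4D_f}$ and $h_i\pm\frac{1}{2D_f}$ and output weights $\pm4D_f\,m(h_i)$. This gives a trapezoidal wave that is constant on a $\frac{1}{4D_f}$-neighbourhood of every grid point, which also absorbs the small off-grid shift of $b_t$ caused by soft attention. Two things follow:
\begin{itemize}
\item $\mathcal{T}$ is smooth at every input.
\item $g_t'=\partial G_{t,d+1}/\partial B_{t,d+1}=0$, so $\nabla_{W^{(1)}}\mathcal{T}=\nabla_{V^{(1)}}\mathcal{T}=0$ identically, and your ``main obstacle'' disappears.
\end{itemize}
It was not a real obstacle anyway. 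With the combined matrix $W^{(1)}$ the logits are linear in $W^{(1)}$, so no $\log T$ enters, and Lemma~\ref{deoralemma} bounds the relevant factor by $2\|XV^{(1)}e_{d+1}\|_\infty\|X\|_{2,\infty}=O(1)$.

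Note also that the $D_f^3$ does not come from the MLP's slope, which is zero at the inputs. Instead, $\nabla_M\mathcal{T}$ and $\nabla_\Gamma\mathcal{T}$ see the individual unit weights, which cancel in the total slope but not in these gradients, giving $\|Fe_{d+1}\|^2=16D_f^2\cdot4(D_f+1)$. Because $M,\Gamma,F$ are shared across positions, $\omega$ enters through $(\sum_t|c_t|)^2\le\omega$. The other blocks contribute $O(1)$ for $V^{(2)}$, $O(\omega)$ for $W^{(2)}$, and $O(\omega D_f)$ for $F$. With the trapezoid in place, your outline goes through.
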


We defer a full proof to \autoref{ExactTraceBoundsAppendix} in the Appendix.
Here, we provide an outline of the construction.
We start with a simple transformer construction that approximates functions on Boolean domains, based on their Fourier-Walsh transforms. The activations at the output of the first (attention + MLP)  layer contain the values of each non-zero Fourier component $\chi_{S_t}$ in one of the coordinates. From there, our second attention layer will take a linear combination of these components with weights $c_t$. 

In order to calculate the value of each Fourier component $\chi_{S_t}$ at each position in the first layer, we will make use of a purely position-aware attention mechanism with $O(\log(T))$ scaling. This trick allows us to overcome a theoretical limitation of transformers resulting from the softmax giving significant weight to even inactive positions, which becomes more pronounced for large T \cite{hahn2020theoretical,chiang2022overcoming,edelman2022inductive}. For each position this puts post-softmax weights at approximately $\frac{1}{D_f}$ on all positions $j \in S_t,$ and close to 0 elsewhere. Let the value weight matrix in the first attention layer be $V^{(1)} \in \mathbb{R}^{(d+1) \times( d+1)}$, a projection matrix that multiplies the attention weights with the bit values and stores the normalized component sums $( \frac{1}{D_f}\sum_{j \in S_t} x_j)$ in the final, $d+1^{th}$ dimension. 

After computing this normalized prefix sum and storing it in each of the activations after the first attention layer, we then apply the MLP to approximate the ``mod 2" function. To do so minimally, we employ the first layer MLP matrix, $M\in \mathbb{R}^{(d+1)\times 4(D_f+1)}$ ,  which together with our bias term $\Gamma  \in \mathbb{R}^{(d+1) \times 4(D_f+1)}$ acts as a a set of indicators for each unique value in our grid of possible prefix sums. Then our second-layer MLP matrix $F\in \mathbb{R}^{4(D_f+1)\times (d+1)}$ linearly combines these indicators with the correct memorized function values. 


The following theorem approximates the error of the perturbed transformer on a test point, assuming the model has achieved $0$ error on the training set.
\begin{theorem}
\label{average sharpness approximation}
Let $\mathcal{T}(X,\Theta+\epsilon)$ be our transformer with parameters perturbed by $\epsilon \sim \mathcal{N}(0,\sigma^2)^n,$ where $n=|\Theta|$ is the parameter count. Let $\hat{L}(f_{\Theta+\epsilon})$ be the empirical (training) loss of the perturbed transformer with parameters $\Theta+\epsilon$. The expected empirical loss under $\epsilon$ decomposes exactly as:

\begin{align*}
    \mathbb{E}_{\epsilon \sim \mathcal{N}(0,\sigma^2)^n}\Big[\hat{L}(f_{\Theta+\epsilon})\Big] &=
    \frac{\sigma^2}{2}Tr\Big(\nabla^2 \big[\hat{L}(f_{\Theta})\big]\Big) \\
    &\quad + \frac{1}{2}\mathbb{E}_{\epsilon}\Big[\epsilon^T\Big(\nabla^2\big[\hat{L}(f_{\Theta+\zeta(\epsilon)})\big]
    - \nabla^2\big[\hat{L}(f_{\Theta})\big]\Big)\epsilon\Big]
\end{align*}
where $\zeta(\epsilon)\in \mathbb{R}^n$ are the ``remainder perturbations'', i.e. $\zeta_i(\epsilon) \in [0,\epsilon_i]$ is the Lagrange remainder point making the second-order Taylor expansion exact, conditional on each Gaussian perturbation $\epsilon_i$.
\end{theorem}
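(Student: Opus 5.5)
The plan is to Taylor-expand $\hat{L}(f_{\Theta+\epsilon})$ to second order around $\Theta$ with a Lagrange remainder, then take the expectation over $\epsilon$ term by term. I will treat each training point separately and sum, since $\hat{L}$ is an average of the pointwise losses $L_f(X,\cdot)$. For fixed $\epsilon$, set $h(s)=\hat{L}(f_{\Theta+s\epsilon})$ for $s\in[0,1]$. Taylor's theorem with Lagrange remainder gives some $s^*\in[0,1]$ with
\begin{equation*}
h(1)=h(0)+h'(0)+\tfrac12 h''(s^*).
\end{equation*}
Here $h'(0)=\nabla\hat{L}(f_\Theta)^T\epsilon$ and $h''(s^*)=\epsilon^T\nabla^2\hat{L}(f_{\Theta+\zeta(\epsilon)})\epsilon$, with $\zeta(\epsilon)=s^*\epsilon$. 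Each coordinate then satisfies $\zeta_i(\epsilon)\in[0,\epsilon_i]$, understood as the segment between $0$ and $\epsilon_i$. The remainder point is therefore a single scalar multiple of $\epsilon$, which matches the statement's coordinatewise description.

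Next I would use the interpolation property. Because our construction implements $f$ exactly, $\hat{L}(f_\Theta)=0$. The loss is a sum of squares, so $\Theta$ is a global minimizer and $\nabla\hat{L}(f_\Theta)=0$; this also follows from $\nabla L_f = 2(\mathcal{T}-f)\nabla\mathcal{T}=0$ at $\Theta$. The zeroth- and first-order terms therefore vanish identically, not merely in expectation, although the first would vanish anyway since $\mathbb{E}[\epsilon]=0$. Adding and subtracting $\nabla^2\hat{L}(f_\Theta)$ inside the quadratic form gives
\begin{equation*}
\hat{L}(f_{\Theta+\epsilon})=\tfrac12\epsilon^T\nabla^2\hat{L}(f_\Theta)\epsilon+\tfrac12\epsilon^T\big(\nabla^2\hat{L}(f_{\Theta+\zeta(\epsilon)})-\nabla^2\hat{L}(f_\Theta)\big)\epsilon .
\end{equation*}
Taking $\mathbb{E}_\epsilon$ of the first term uses $\mathbb{E}[\epsilon\epsilon^T]=\sigma^2 I_n$, so $\mathbb{E}[\epsilon^T H\epsilon]=\mathrm{Tr}(H\,\mathbb{E}[\epsilon\epsilon^T])=\sigma^2\mathrm{Tr}(H)$. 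This yields the term $\frac{\sigma^2}{2}\mathrm{Tr}(\nabla^2\hat{L}(f_\Theta))$. Linearity of expectation leaves the second term exactly as stated.

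The main obstacle is justifying these steps rather than the algebra itself. First, the transformer must be $C^2$ in its parameters. The ReLU in $f^{MLP}$ breaks this on the measure-zero set where a pre-activation $M^Tb_t+\Gamma$ has a zero coordinate, and the Lagrange form needs twice-differentiability along the segment. I would handle this in one of two ways. One option is to interpret $\nabla^2$ almost everywhere, noting that for Gaussian $\epsilon$ the segment $\{\Theta+s\epsilon\}$ meets the kink set at only finitely many $s$ almost surely, and $h$ is piecewise polynomial-exponential. The other is to state the result for a smoothed ReLU and pass to the limit. I would also need $\Theta$ itself to sit away from the kinks, which follows from the construction's grid of biases $\Gamma$ being offset from the attainable prefix sums. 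Second, the expectations must be finite. The softmax and the outer $\phi$ keep $\mathcal{T}$ of at most polynomial growth in $\epsilon$ times bounded exponential factors. The Hessian entries are then dominated by a function with finite Gaussian moments, so the remainder term is integrable, and measurable selection of $s^*$ can be avoided by defining the remainder term as the difference $h(1)-\frac12\epsilon^T\nabla^2\hat{L}(f_\Theta)\epsilon$.
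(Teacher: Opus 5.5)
Your main line matches the paper's proof: a second-order Lagrange expansion of $\hat{L}$ about $\Theta$, the zeroth- and first-order terms removed by interpolation, $\mathbb{E}[\epsilon^T H\epsilon]=\sigma^2\mathrm{Tr}(H)$ (the paper's Hutchinson step) applied only to the fixed Hessian, and the remainder left as is. Restricting to the segment via $h(s)=\hat{L}(f_{\Theta+s\epsilon})$ is also a cleaner way than the paper's coordinatewise wording to see that $\zeta(\epsilon)=s^*\epsilon$.

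The step that would fail is your first repair of the ReLU non-smoothness. Reading $\nabla^2$ almost everywhere does not give a Lagrange remainder. Suppose the segment crosses a kink at $s_0$. Then $h'$ jumps by $2(\mathcal{T}-f)$ times the jump in $\tfrac{d}{ds}\mathcal{T}$. This is generically nonzero, because the residual is nonzero away from $\Theta$. The jump is a point mass in $h''$ that no a.e.\ Hessian sees.

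For instance, $h(s)=\big(s+(s-\tfrac12)_+\big)^2$ has $h(0)=h'(0)=0$ and $h(1)=\tfrac94$. Yet $h''\in\{2,8\}$ wherever it exists, so no $s^*$ satisfies $\tfrac12h''(s^*)=\tfrac94$.

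Smoothing does not rescue this either. The smoothed second derivatives converge to $2$ or $8$ away from $s=\tfrac12$. So the smoothed Lagrange points are forced to converge to the kink, where the ReLU Hessian is undefined.

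Consequently, the identity, with a genuine Hessian evaluated at $\Theta+\zeta(\epsilon)$, holds only on the event that the whole segment $\{\Theta+s\epsilon: s\in[0,1]\}$ stays in a single activation region. That is what the paper invokes: small $\sigma$, hence no neuron flips. But the paper, too, secures this event only with high probability, not almost surely.

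To make the statement exact, you have two options:
\begin{itemize}
\item split the expectation on that event and carry a separate tail term; or
\item define the remainder as the difference $h(1)-\tfrac12\epsilon^T\nabla^2\hat{L}(f_\Theta)\epsilon$, as in your last sentence.
\end{itemize}
With the second option, on the kink-crossing event the remainder is no longer a Hessian evaluated at any point. The later bounds that go through $\|\nabla^2\hat{L}(f_{\Theta+\zeta})\|$ then do not cover that event.
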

The proof is given in Theorem~\ref{average sharpness approximation appendix}, and follows from the Lagrange form of Taylor's theorem with Hutchinson's Trace Estimator applied to the fixed (unperturbed) Hessian $\nabla^2[\hat{L}(f_\Theta)]$. Since $\zeta(\epsilon)$ depends on $\epsilon$, Hutchinson cannot be applied directly to the full perturbed Hessian. Note that the same decomposition applies to the global (test) loss just as it applies to the training loss; we state it in terms of the training loss as this is the only way in which it will be used in downstream theorems. This bound simply states that the larger the sharpness, the larger the generalization error. Of course, the generalization error is also controlled by a free parameter, $\sigma;$ the larger the perturbation away from our exact construction, the larger the generalization error. Later on, we will contextualize $\sigma$ as part of a trade-off, where sharpness is but one of several factors controlling the generalization gap. It will be useful to bound the RHS. The first term is bounded by $\sigma^2 G_u(\omega,D_f)$ via \autoref{ExactTraceBounds}. For the second term, applying $\epsilon^T A \epsilon \leq \|A\|_{\mathrm{op}}\|\epsilon\|^2$ and taking expectations (using $\mathbb{E}[\|\epsilon\|^2]=\sigma^2|\Theta|$) gives a contribution controlled by the perturbed Hessian operator norm bound of Theorem~\ref{perturbedhessianbounds}, which is then combined with the unperturbed trace in \autoref{full perturbed hessian bound}, yielding:
$$\mathbb{E}_\epsilon\Big[\hat{L}(f_{\Theta+\epsilon})\Big] \leq \frac{\sigma^2}{2}\Big(2G_u(\omega, D_f) + P(\sigma;\omega,D_f,T,d)\Big)$$
Recall that the need to evaluate the hessian of the loss of the transformer at a perturbed point arises from the fact that we are using the second-order Taylor expansion to the sharpness using the Lagrange form of the remainder. While we could have simply written the trace of our exact construction plus an error term that is $O(\sigma^2),$ eschewing the exact remainder term, this would have lacked rigor. Instead, we provide both a theoretical and empirical analysis of this $P(\sigma;...) $ term.

Note that the component of the Hessian trace from the unperturbed Hessian described by \autoref{ExactTraceBounds}, , $G_u(\omega, D_f)$, intuitively does not depend on the size of the perturbations $\sigma;$ interestingly, nor does it depend on $d,T$ . The perturbation to the Hessian, $P(\sigma;\omega, D_f, T,d),$ ends up being quite complex, and we defer the complete analytic bound on this term to \autoref{full perturbed hessian bound} in the supplementary material. While mathematically tractable, $P(\sigma;...)$ has large constants and grows very rapidly in  $D_f,\omega, T,d$. Any polynomial dependency on T is especially unfavorable, given that no other part of our bound is $poly(T).$ Luckily, our empirical experiments suggest that the perturbation to the trace of the loss of the perturbed construction in reality is relatively small compared to our theoretical upper bound. \\To maintain the theoretical insights of our bound's dependency on the key complexity parameters $D_f, \omega$ while still ensuring it does not suffer from the pessimistic dependencies in the $P(\sigma)$ that  do not reflect reality, we define the ``semi-analytic'' bound. This bound keeps the same form of the unperturbed sharpness and the norm, but replaces the analytic bound on the $P(\sigma)$ term with an empirical estimate the worst-case perturbation $P(\sigma).$ This bound mitigates the issue of the analytic $P(\sigma)$ term making the bound impractical, while still expressing the balance between the norm term and the (unperturbed) sharpness term, modulated by the free parameter $\sigma,$ retaining the core feature of the PAC-Bayes approach.



\subsection{Step 2: PAC-Bayes}

Our construction provides an interpolator $\Theta$ with $0$ training loss for some particular values of $\|\Theta\|$ and $\mathbb{E}_{\epsilon}\big[Tr\big(\nabla^2[\hat{L}(f_{\Theta+\epsilon})]\big)\big]$. For some setting of the regularization parameters $\alpha,\beta$, one can thus find an interpolator $\hat{\Theta}$ for which the parameter norm and expected perturbed sharpness are both upper-bounded by those of our construction, $\Theta$ \footnote{We would like to add one important caveat, which is that our idealized learner may not find a solution that strictly dominates both the sharpness and parameter norms of our construction if ours is already on the Pareto Frontier. Aside from the empirical experiments in \ref{lowsharpnessvalidation} showing that this is not the case, we defer the reader to \ref{non-pareto} in the appendix for the strong theoretical case for why our construction is nowhere near the Pareto frontier.} :
    $\hat{L}(f_{\mathbf{\hat{\Theta}}})=0,$
    $\mathbb{E}_{\epsilon \sim \mathcal{N}(0,\sigma^2)^n}\Big[Tr\Big(\nabla^2\big[\hat{L}(f_{\mathbf{\hat{\Theta}}+\epsilon})\big]\Big)\Big]\leq \mathbb{E}_{\epsilon \sim \mathcal{N}(0,\sigma^2)^n}\Big[Tr\Big(\nabla^2\big[\hat{L}(f_{\mathbf{\Theta}+\epsilon})\big]\Big)\Big],$
    $\|\hat{\Theta}\|\leq \|\Theta\|$
Thus, our construction provides information about the outcome of this general learner even if it finds an entirely different construction. In particular, since $\hat{\Theta}$ is an interpolator ($\hat{L}(f_{\hat{\Theta}})=0$), these two assumptions together imply domination of the expected perturbed empirical loss via the Taylor expansion $\mathbb{E}_\epsilon[\hat{L}(f_{\hat{\Theta}+\epsilon})] \approx \frac{\sigma^2}{2}\mathbb{E}_\epsilon[Tr(\nabla^2[\hat{L}(f_{\hat{\Theta}+\epsilon})])]$, which is the quantity that enters the PAC-Bayes bound. 
To show that $\hat{\Theta}$ has good generalization properties, we can leverage the machinery of PAC-Bayes theory. PAC-Bayes theory considers the weights learned during training to be a data-dependent distribution (referred to as the "posterior", denoted by $\mathcal{Q}$), which is compared against the initial distribution of weights (referred to as the "prior", denoted by $\mathcal{P}$) using the KL divergence to bound the generalization gap. Our specific PAC-Bayes bound builds upon a variant due to \cite{alquier2021pacbayes}. Our approach is to use the assumed low-sharpness interpolator of our construction as the center of the posterior distribution. This version of the PAC-Bayes bound can be expressed in such a way that it includes a term representing the parameter sharpness on the training set. We further adapt our bound to the case of a quadratic loss, using the assumption that the test losses are sub-gaussian. 
For a detailed derivation of our Oracle PAC-Bayes bound, see \autoref{pac-bayes discussion} in the appendix. \footnote{We note that our PAC-Bayes bound is of the type where the bound on the sharpness is not data-dependent, but rather dependent on other assumptions about the complexity of the target function and the learning algorithm. These PAC-Bayes bounds are referred to by e.g. \citet{alquier2021pacbayes} as ``Oracle PAC-Bayes Bounds''}

Our result also applies to Chain-of-Thought (CoT), providing another perspective on its effectiveness: For PARITY, there is a CoT using a function of degree 2; we get a much better generalization bound for the CoT than for a transformer performing it without CoT. See App.~\ref{sec:cot} for more.

%

\begin{figure}[tbh]
    \includegraphics[width=1\columnwidth]{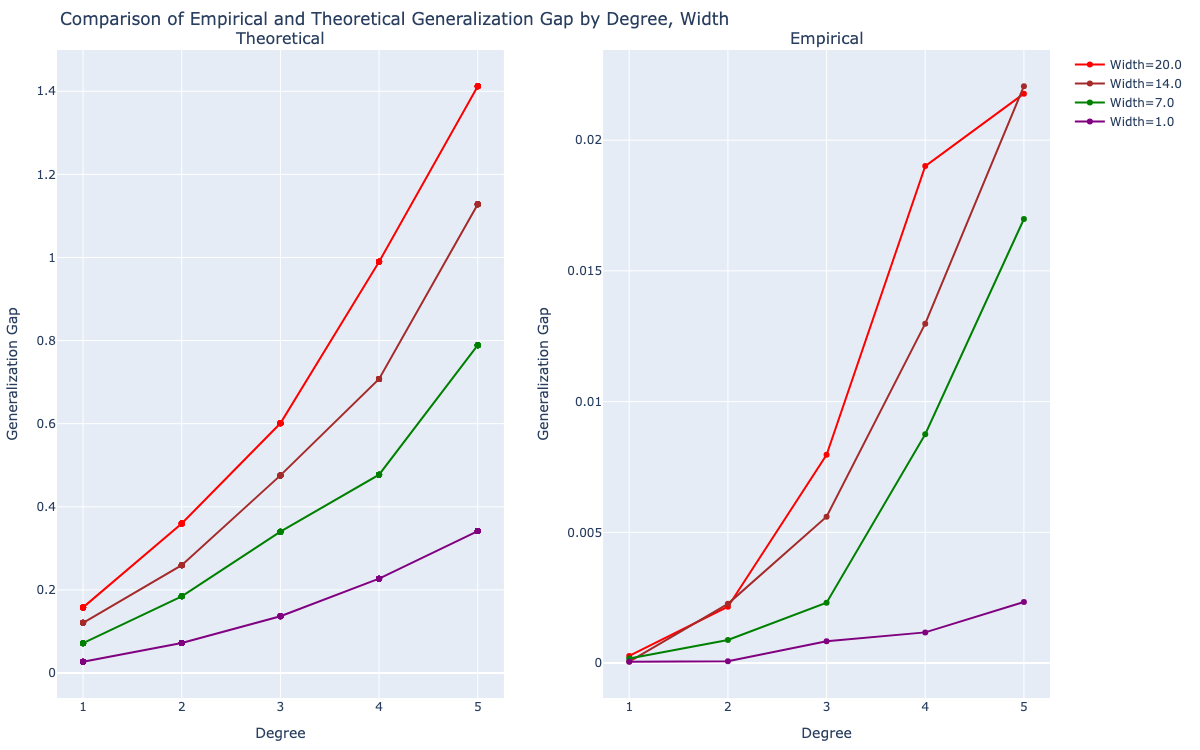}
    \caption{(left) A plot showing our semi-analytic generalization bound for $m=8192.$ Note that the free parameter $\sigma$ in our bound, representing the magnitude of the parameter perturbation, is optimized separately for each degree a sparsity.  (right) A plot showing the average empirical generalization gap over a sample of 11 functions from each Degree, Sparsity class, also for $m=8192.$  We note that, as predicted by our generalization bound, the gap increases super-linearly with degree, and the rate of increase increases with larger sparsity.}
    \label{fig:generalizationgap}
\end{figure}


\section{Experiments}

\subsection{Generalization Gap}
As a first step in instantiating our bound, we algorithmically picked our values for $\Sigma, m$ according to the procedure described in \autoref{experimentaldetails}. With the bound of \autoref{PAC-Bayes Bound} instantiated, we then had to pick the value to be used for $\sigma.$ To do so for the semi-analytic bound where we only had the worst-empirical case perturbations on a discrete mesh, we optimized the bound over a discrete mesh of $\sigma$ for each $\omega, D_f$ . Our mesh contained 20 evenly spaced values of $\sigma$ ranging from .01 to .00001. For our fully analytic bound, we were able to optimize in continuous space using the simple, default solver provided by the SciPy package in Python, which leverages BFGS.

With our bound instantiated, we obtained the predictions that an increase in both of the complexity parameters $D_f,$ $\omega$ should increase the generalization gap super-linearly, and that since they are multiplied together in our bound, the increasing one of the complexity parameters should increase the rate of increase for the other. We then validated these predictions empirically in the controlled regime described below.

For our main experiments for calculating the empirical generalization gap, we trained a simple transformer with 2 layers and 1 head, and without layernorm as in our construction. Due to the difficulty in generating boolean functions of arbitrary spectral sparsity and degree, we opted to relax this requirement to functions on a boolean domain. To be precise, we relax only the \emph{codomain}: rather than restricting to Boolean outputs, we sample real-valued targets on $\{0,1\}^T$ via sparse Fourier--Walsh expansions; degree and sparsity are defined exactly as in \ref{eq:unique-fourier-sum}. Note that our above theoretical analysis is valid for any function with a Fourier-Walsh spectrum with all-positive coefficients and sparsity no greater than the sequence length, T.

In order to simplify both our analysis and our experiments, we restricted the class of functions to those with a constant degree for all components.  In order to generate such random functions, we generate the coefficients according to a standard normal distribution, take their absolute values to ensure all-positive coefficients (matching the theoretical assumption), and then normalize so that they are centered and with variance 1. For the Fourier components of degree $D_f$, we select a random subset of size $D_f$ from our $T$ input dimensions by selecting the initial $\omega$ elements of a permutation of all $D_f$ sized combinations of $[1...T]$. 

\subsection{Validating Low-Sharpness Interpolator Assumption}
\label{lowsharpnessvalidation}
In order to validate our approach, we hard-coded a transformer according to our exact construction, and compared its sharpness (as measured by the trace of the loss Hessian) to that of our learned solutions for each degree and sparsity. \autoref{fig:sharpness comparison} and \autoref{fig:norm comparison}  show that both the sharpness and Frobenius norm of the hard-coded transformer were significantly higher than that of the learned constructions, providing empirical support for the plausibility of our low-sharpness interpolator assumption in the tested regime. This is a key assumption in our methodology. Note that the we evaluated the exact same functions as in the main generalization gap experiments: we tested 11 randomly generated functions with positive Fourier Coefficients,  degrees in $\{1,2,3,4,5\},$ and widths in $\{1,7,14,20\}.$

\begin{figure}[!htbp]
    \centering
    \begin{subfigure}{\columnwidth}
        \includegraphics[width=\columnwidth]{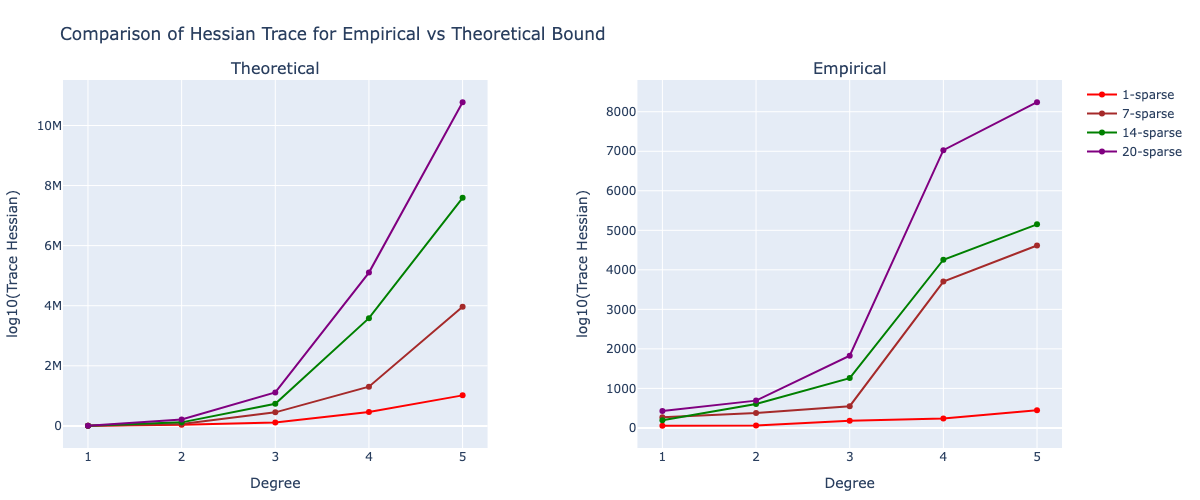}
        \caption{Left: the sharpness, as measured by the trace of the loss Hessian measured on the training set, of our exact construction for each degree and sparsity. Right: the sharpness of the learned solutions for each degree and sparsity. Note that the sharpness of our construction indeed upper bounds the sharpness of the learned solutions at each degree and sparsity, by roughly two orders of magnitude. Interestingly, the increase in sharpness for degree 5 functions is not as steep as we would expect relative to our construction, suggesting that the structure of the solution changes at degree 5. We note that degree 5 were the highest degrees we were able to get to converge, and took many times longer than the lower degree functions; perhaps the longer time to converge explains this behavior.}
        \label{fig:sharpness comparison}
    \end{subfigure}

    \vspace{0.5em}
    \begin{subfigure}{\columnwidth}
        \includegraphics[width=\columnwidth]{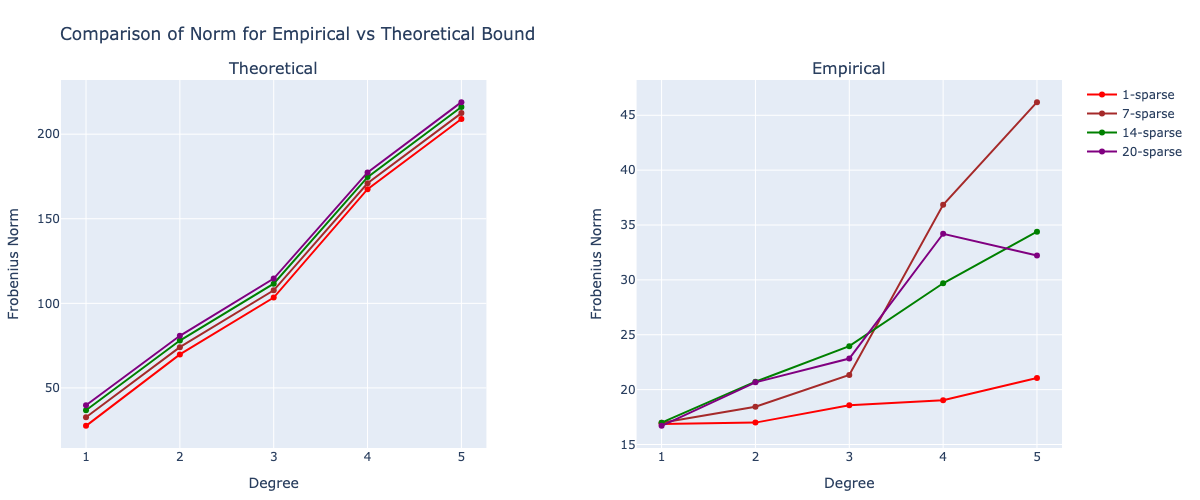}
        \caption{Left: the Frobenius norm of our exact constructions for each degree and sparsity. Right: the Frobenius norm of the learned solutions for each degree and sparsity. Note that the norm of our construction indeed upper bounds the norm of the learned solutions for each degree and sparsity, by around two orders of magnitude. Note that the models trained here are the same as that of both \autoref{fig:sharpness comparison} and \autoref{fig:generalizationgap}, with 11 samples from each (deg,width) function class, using 8192 samples for all models.}
        \label{fig:norm comparison}
    \end{subfigure}
    \caption{Sharpness and Frobenius norm comparisons between the exact construction and the learned solutions.}
    \label{fig:sharpness norm comparison}
\end{figure}

\subsection{Assessing the True Impact of Perturbations on Sharpness of our construction}
\label{perturbation analysis}
One of the main limitations of our approach is the difficulty in bounding the perturbation term in our bound. Since all of our perturbation analysis considered an isotropic perturbation across all parameters and used worst-case analysis, we might wonder whether this upper bound is overly pessimistic, or whether it even closely resembles the true impact of perturbations on the sharpness of the construction. 

To answer this question, we again make use of our hardcoded transformer that exactly reflects our mathematical construction, and perform a perturbation analysis. As in our analysis above, we assume that random projections are not being used due to the relatively small $T,$ instead using a standard one-hot positional encoding, which results in a total hidden dimension of $T+2.$ We then perform an isotropic perturbation of the weights and calculate the trace of the loss Hessian over the full training set. We perform this calculation over the same set of $\omega, D_f$ values in our other experiments, as well as over a selection of sequence lengths, $T=\{20,30,40,50,60\}$, so that we can see how the perturbed sharpness changes with sequence length. Finally, we repeat the entire calculation for 10 different random functions in each class. To estimate the empirical worst-case perturbation for each $(\sigma, \omega, D_f, T)$ at the $90^{th}$ percentile level, we then take the maximum over the 10 randomly generated functions in each group. Several plots showing the results are shown below.



\begin{figure*}[tbh]
    \includegraphics[width=\textwidth]{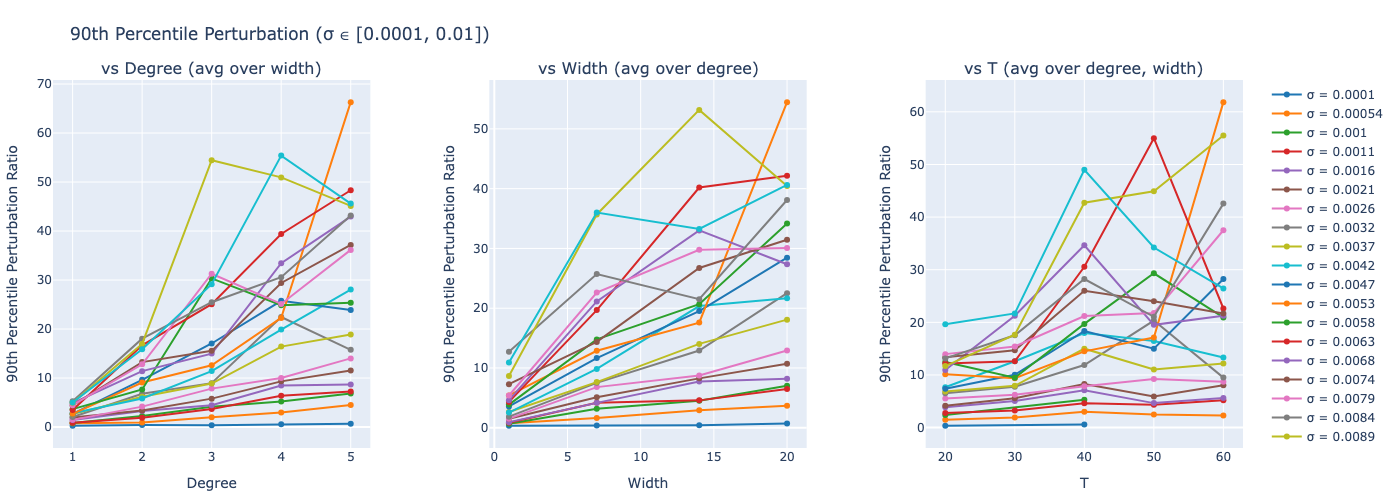}
    \caption{This plot shows the empirical $90^{th}-$percentile perturbation of the sharpness (the trace of the loss Hessian), and how it depends on the degree $D_f$ and sparsity $\omega$ of the target function it expresses, as well as the sequence length $T.$ each line is a different magnitude of the perturbation, and naturally the size of the perturbation always increases as $\sigma$ increases. While qualitatively similar to our analytic bound on $P(\sigma) $ in that the sharpness increases with $\omega, D_f,T,$ the magnitude of the perturbations are significantly smaller, and increase much more slowly, than our analytic bound would suggest.   }
    \label{fig:perturbations}
\end{figure*}

\begin{figure*}[tbh]
    \includegraphics[width=\textwidth]{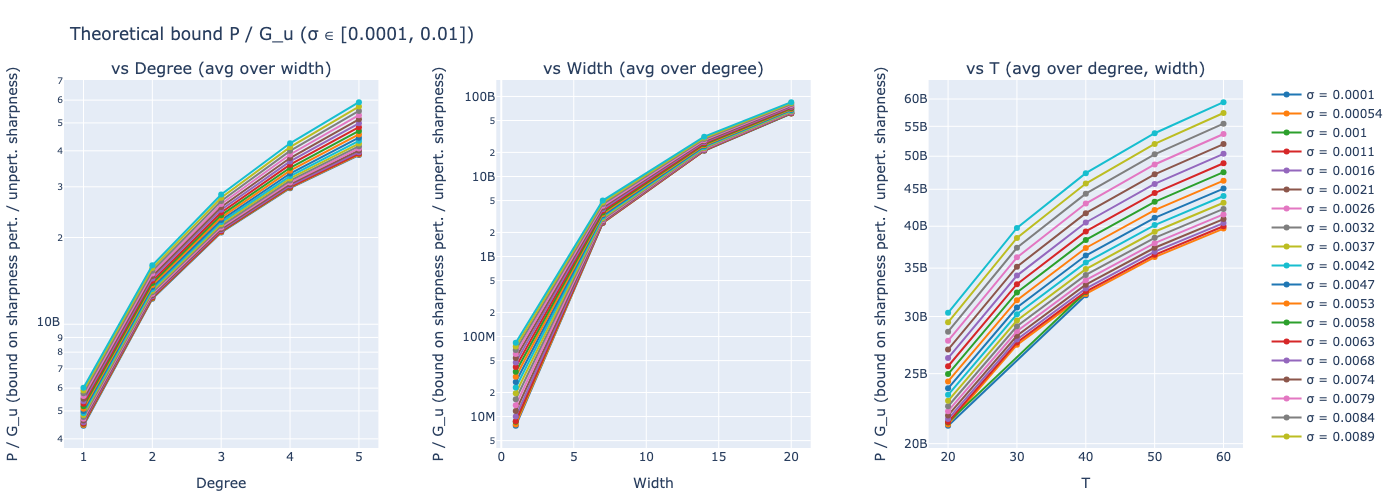}
    \caption{This plot shows our analytic bound on the perturbation to the sharpness over the same grid of $D_f. \omega, T$ as \autoref{fig:perturbations} above . Note that this plot is necessarily on a log-scale, due to the magnitude being astronomically larger than the empirical perturbation to the sharpness.  }
    \label{fig:perturbations-theoretical}
\end{figure*}

\subsection{Mechanistic Interpretability}\label{sec:mechint}

The architecture used in this section is simplified for interpretability and is not intended to instantiate the full construction analyzed in the theory; these experiments are illustrative rather than direct quantitative validations of the PAC–Bayes bound. While our bound does not directly rely  on the assumption that anything resembling our construction is actually learned in practice, it would nonetheless provide additional supporting evidence for the relevancy of our constructive approach. We conducted experiments with small transformer models closely matching the hyperparameters of our construction without random projections, designed with fixed embedding sizes of \( T+2 \), where the first \( T \) channels correspond to hard-coded one-hot positional embeddings, and the remaining channel represents one-hot encoded bit values. Additionally, we constrained the multi-layer perceptron (MLP) hidden dimension to 32 to ensure a compact model size matching our construction.  In order to be able to better visualize the behavior of the MLP, our mechinterp experiments projected down to a single dimension after the attention sub-layer, so that the MLP matrices are all $4(D_f+1)\times1$ or $1\times 4(D_f+1)$ vectors. Since this would be incompatible with a second attention layer, we replace the second layer with a simple $(T+1)\times 1$ projection that performs the final linear combination, rather than using a position-aware attention mechanism. The reason we replaced the attention based recombination stage with a linear layer was precisely to be able to make the MLP function interpretable, and see if we found any oscillations.  The model contained just over 1000 parameters total. We observed that in most cases, $\bar{W}^{(1)}$ contained rows corresponding to the Fourier components of its learned function, i.e. depicting bright spots in columns corresponding to the bits in a Fourier component. For example, in \autoref{fig:mechinterp}, one can identify rows corresponding clearly to the components $x_9x_{14}x_{17}$  and $x_3x_8x_{11}$, while the third component, with the smallest coefficient, is less identifiable. 

\begin{figure}[H]
    \includegraphics[width=1\columnwidth]{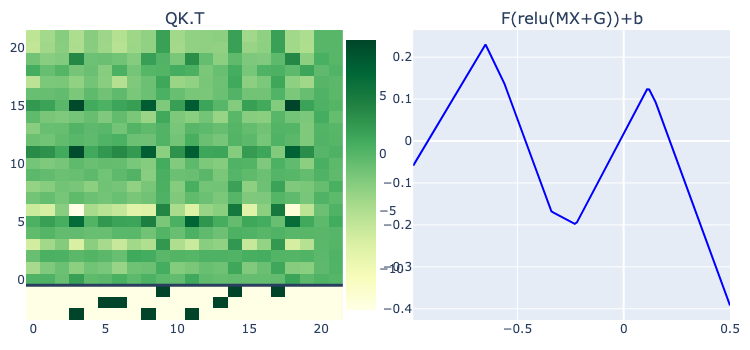}
    \caption{Left: A plot showing the combined attention matrix W for a function learned with an architecture matching our construction. Right: The learned 1-dimensional MLP function, which appears to exhibit cyclical behavior. While these aspects of this learned solution are similar to our construction, we emphasize that the learned solutions are often qualitatively different than our construction. This experiment is merely meant to support the feasibility of our construction -- not to argue that it is the norm. Close matches like this are rare. }
    \label{fig:mechinterp}
\end{figure}



\section{Result for Chain of Thought}\label{sec:cot}
We now apply our theoretical framework to understand the benefit of chain-of-thought for computing high-degree functions. It is well-documented that computing intermediate steps substantially boosts transformers' ability to learn to compute high-degree functions such as Parity \citep[e.g.][]{anil2022exploring, hahn-rofin-2024-sensitive, abbe2023generalization}.
As an explanation for this benefit, we now use our techniques to derive a favorable learning bound for Parity in the presence of chain-of-thought:
\begin{theorem}
\label{chain of thought}
Let $T \in \mathbb{N}$. Let $f_1, \dots, f_T$ where $f_i(x_1\dots x_T f_1\dots f_{i-1}) = \bigoplus_{j\leq i} x_j$. Let $\widehat{\Theta_{CoT}}(\alpha,\beta)$ be the solution returned by the general learning procedure on the training set of size $m$ jointly consisting of these functions using regularization parameters $\alpha, \beta$, and let $\delta_{CoT}(\alpha,\beta)$ be the expected final error for a chain of $T$ auto-regressive steps using the transformer parametrized by $\widehat{\Theta_{CoT}}$ to solve the Parity Task of length $T.$ Then there exist regularization parameters $\alpha_{CoT},\beta_{CoT}$ such that
$$\delta_{CoT}(\alpha_{CoT},\beta_{CoT}) \leq  4Te^{-\frac{m}{8\Sigma^2}}e^{\frac{m\sigma^2}{4\Sigma^2}\Big(2G_u(1,2) + P(\sigma,1, 2,T)\Big)+\frac{ L(1, 2,T) }{2\sigma^2}} $$
\end{theorem}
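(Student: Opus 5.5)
Each CoT step $f_i$ is a function of degree $2$ and sparsity $1$ on the extended sequence: $f_i = x_i \oplus f_{i-1}$, where $f_{i-1}$ is the previous token and $f_1 = x_1$ (degree $1$, which we pad to the degree-$2$ template by treating an always-zero slot as the second bit). So the whole CoT family is learned by a \emph{single} transformer whose target at every position is a $2$-parity. The plan is to invoke Theorem~\ref{PAC-Bayes Bound} with $\omega=1$ and $D_f=2$, convert its expected-loss guarantee into a per-step error probability, and union bound over the $T$ autoregressive steps.

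\textbf{Step 1: a single construction.} Build one construction $\Theta_{CoT}$ handling all $T$ steps. In the first attention layer, the positional query at output position $i$ attends to positions $i$ (the bit $x_i$) and $T+i-1$ (the token $f_{i-1}$). The MLP computes the mod-$2$ map on the normalized sum $\frac12(x_i+f_{i-1})\in\{0,\frac12,1\}$. By Theorem~\ref{ExactTraceBounds} its unperturbed sharpness is at most $2G_u(1,2)$. Its squared norm is some $L(1,2,T)$, which is $O(8+\log(T)^2\cdot 2)$ up to positional-encoding overhead. The Hessian perturbation term is $P(\sigma,1,2,T)$, as in Theorem~\ref{average sharpness approximation}.

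\textbf{Step 2: choosing the learner.} Choose $\alpha_{CoT},\beta_{CoT}$ so that the idealized learner returns an interpolator dominated by $\Theta_{CoT}$ in both norm and expected perturbed sharpness. This is exactly the domination hypothesis of Theorem~\ref{PAC-Bayes Bound}, with existence justified as in the remark there and Appendix~\ref{non-pareto}.

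\textbf{Step 3: a high-probability bound on each step.} Applying Theorem~\ref{PAC-Bayes Bound} gives, with probability $1-\delta$,
\[
\mathbb{E}_\epsilon[L]\le \sigma^2(2G_u+P)+2\sqrt{\tfrac{\Sigma^2}{2m}\big(\tfrac{L}{2\sigma^2}+\ln\tfrac1\delta\big)}.
\]
Rather than keep this form, go back to the underlying Alquier-type inequality, in which the exponential moment appears before taking logarithms. With Catoni parameter $\lambda=m/(2\Sigma^2)$, the failure probability for exceeding a fixed error threshold is bounded by
\[
\exp\Big(-\lambda\cdot \text{threshold}+\lambda\,\mathbb{E}_\epsilon\hat L+\tfrac{\lambda^2\Sigma^2}{2m}+\mathrm{KL}\Big),\qquad \mathrm{KL}\le \frac{L(1,2,T)}{2\sigma^2}.
\]
A wrong step on Boolean outputs means a squared loss of order $1$; taking the threshold to be $1/4$ and inserting $\mathbb{E}_\epsilon\hat L\le \frac{\sigma^2}{2}(2G_u+P)$ yields precisely the exponents $-\frac{m}{8\Sigma^2}$, $\frac{m\sigma^2}{4\Sigma^2}(2G_u+P)$ and $\frac{L}{2\sigma^2}$. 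Markov's inequality together with the sub-Gaussian tail then gives the per-step error probability, with a factor $4$ coming from the threshold conversion (and/or the two-sided tail).

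\textbf{Step 4: the union bound.} The autoregressive chain is correct if every step is correct given correct previous tokens, since a correct prefix keeps step $i$ on its training distribution. Summing over $i\le T$ gives the factor $T$.

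The main obstacle is Step 3: the stated bound is in exponential "deviation" form, not the square-root form of Theorem~\ref{PAC-Bayes Bound}. I must redo the PAC-Bayes derivation, fixing $\lambda$ rather than optimizing it, and track constants so that exactly $4Te^{-m/8\Sigma^2}$ appears. A secondary subtlety is that the posterior is the perturbed $\hat\Theta$, so the error is for the Gibbs (perturbed) predictor. I would either define $\delta_{CoT}$ accordingly or pass to the deterministic predictor via a margin argument, which is where the factor $4$ may be absorbed.
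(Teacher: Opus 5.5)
Your outline has the same structure as the paper's proof. Both use an $(\omega,D_f)=(1,2)$ construction at every step. The paper realizes it with period-$T$ cyclic positional encodings, lets the CLS slot supply the zero partner for $f_1$, and uses a simulation argument to inherit $G_u(1,2)$ and $P(\sigma,1,2,T)$. Both then assume domination to fix $\alpha_{CoT},\beta_{CoT}$, apply a Markov factor $4$, and take a union bound for the factor $T$.

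You differ only in Step 3. The paper starts from the optimized square-root bound. Its intermediate expression $4e^{-\frac{m}{8\Sigma^2}(1-\sigma^2(2G_u+P))^2+\frac{L}{2\sigma^2}}$ is $4$ times the confidence level at which that bound equals $\tfrac12$. It then expands the square and discards $-\frac{m\sigma^4}{8\Sigma^2}(2G_u+P)^2$. Inverting Alquier's inequality at a fixed $\lambda$, as you propose, reaches the final expression directly once the threshold is corrected (see below). Dropping that quartic term is exactly the cost of fixing $\lambda=m/(2\Sigma^2)$ instead of optimizing it.

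\textbf{Step 3 does not produce the claimed exponent.} With $\lambda=m/(2\Sigma^2)$:
\begin{itemize}
\item the term $\lambda^2\Sigma^2/(2m)$ in your own display equals $+\frac{m}{8\Sigma^2}$;
\item with threshold $\tfrac14$, the term $-\lambda\cdot\tfrac14$ equals $-\frac{m}{8\Sigma^2}$.
\end{itemize}
These cancel, leaving the nonnegative exponent $\frac{m\sigma^2}{4\Sigma^2}(2G_u+P)+\frac{L}{2\sigma^2}$, so the bound is vacuous. No choice of $\lambda$ rescues threshold $\tfrac14$: $\min_\lambda\big(-\lambda t+\frac{\lambda^2\Sigma^2}{2m}\big)=-\frac{t^2m}{2\Sigma^2}$, which is only $-\frac{m}{32\Sigma^2}$ at $t=\tfrac14$. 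The stated constants need threshold $t=\tfrac12$ on the posterior-averaged risk. Then $-\frac{m}{4\Sigma^2}+\frac{m}{8\Sigma^2}=-\frac{m}{8\Sigma^2}$, and your $\lambda$ is exactly the optimal one. The $\tfrac14$ is the pointwise squared-loss threshold for a rounding error. It belongs to the Markov step, not to the PAC-Bayes inversion.

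\textbf{The conversion to an error probability is not a derivation.} The inverted exponential bounds the probability, over the draw of the training set, that the risk exceeds $\tfrac12$. It is not the probability that a test step errs. Markov applied on the good event gives only $\Pr[\text{loss}\ge\tfrac14]\le 4\cdot\tfrac12=2$. Your uncertainty about where the factor $4$ comes from reflects this: there is no valid bridge. The paper reaches the displayed bound by multiplying the inverted confidence by the Markov factor, which identifies the two quantities. Matching the statement means adopting that identification, not deriving it.

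\textbf{The norm in Step 1 needs an argument.} A single $W^{(1)}$ must route all $T$ query positions. Naively that costs $\Theta(T\log^2 T)$ in squared norm, versus $O(\log^2T)$ in $L(1,2,T)$. So the `positional-encoding overhead' you set aside must be controlled before you may put $L(1,2,T)$ in the KL term.
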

The proof is deferred to \autoref{chain of thought appendix} in the appendix. It follows a straightforward application of our generalization bound for $\omega=1$ and $D_f=2,T,$ and uses the fact that all of the functions $G_u(), P(), $ and $L()$ are superlinear in $T.$   We defer the reader to \cite{cabannes2024iteration} for a mechanistic interpretability study of similar, simple``iteration heads'' in transformers. In contrast, when learning to compute Parity without intermediate steps, a substantially less favorable bound remains:
\begin{theorem}
Let $\widehat{\Theta_{OP}}(\alpha, \beta)$ be the solution returned by the general learning procedure on the training set of size $m$ consisting of only the inputs and outputs of the complete Parity task of length $T,$ i.e. the functions $f(x_1\dots x_T)=\bigoplus_{j\leq T} x_j$.  Let $\delta_{OP}(\alpha, \beta)$ be the expected error for a single pass of a transformer parameterized by $\widehat{\Theta_{OP}}(\alpha, \beta)$ on the parity task of length $T.$ Then there exist regularization parameters $\alpha_{OP},\beta_{OP}$ such that
$$\delta_{OP} \leq  4e^{-\frac{m}{8\Sigma^2}}e^{\frac{m\sigma^2}{4\Sigma^2}\Big(2G_u(1,T) + P(\sigma,1, T,T)\Big)+\frac{ L(1, T,T) }{2\sigma^2}} $$
$$\leq  4e^{-\frac{m}{8\Sigma^2}}e^{\frac{m\sigma^2T}{4\Sigma^2}\Big(2G_u(1,2) + P(\sigma,1, 2,T)\Big)+\frac{T L(1, 2,T) }{2\sigma^2}} $$
In other words, our bound on the error for Parity increases exponentially with length when using the one-pass approach, whereas using the CoT approach, the error increases only linearly with T.
\end{theorem}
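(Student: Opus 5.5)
The plan is to specialize the machinery behind Theorem~\ref{PAC-Bayes Bound} to the one-pass Parity target, i.e.\ $\omega=1$ and $D_f=T$, and then convert the resulting bound on expected loss into a bound on the error probability.

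\textbf{Step 1: existence of a dominated learner.} The one-pass target $f(x)=\bigoplus_{j\le T}x_j$ is a single Fourier component of degree $T$. Up to the sign/offset convention for $\chi_{S}$, our construction $\Theta$ implements it with $\omega=1$ and $D_f=T$. As in Step~2 of Section~\ref{sec:theoretical-results}, I pick $\alpha_{OP},\beta_{OP}$ such that the idealized learner returns an interpolator $\widehat{\Theta_{OP}}$ whose norm and expected perturbed sharpness are dominated by those of $\Theta$. Appendix~\ref{non-pareto} guarantees such a dominated interpolator exists, so the domination hypotheses of Theorem~\ref{PAC-Bayes Bound} hold.

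\textbf{Step 2: exponential-tail PAC-Bayes bound.} The stated inequality is an exponential tail in $m$, not the $\sqrt{\cdot/m}$ form, so I would rerun the Alquier-type argument of Appendix~\ref{pac-bayes discussion} without solving for $\delta$:
\begin{enumerate}
\item Fix a threshold (the error event for a $\{0,1\}$-valued output corresponds to squared-loss deviation at least $1/2$ after rounding).
\item Apply the sub-Gaussian Chernoff/Donsker--Varadhan step with $t$ chosen proportional to $m/\Sigma^2$, which produces the factor $e^{-m/(8\Sigma^2)}$.
\item Bound the expected perturbed training loss by $\frac{\sigma^2}{2}\bigl(2G_u+P\bigr)$ using Theorem~\ref{average sharpness approximation} together with Theorem~\ref{ExactTraceBounds}. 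After multiplying by $t\propto m/(2\Sigma^2)$ this gives the $\frac{m\sigma^2}{4\Sigma^2}(2G_u+P)$ exponent.
\item Write the KL term as $L(\omega,D_f,T)/(2\sigma^2)$, where $L$ is the norm bound $O(D_f^3+\log(T)^2\omega D_f)$.
\item Control the passage from the perturbed predictor to the deterministic one, e.g.\ by a Markov/symmetrization step accounting for the factor $4$.
\end{enumerate}
Substituting $\omega=1$, $D_f=T$ gives the first displayed inequality.

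\textbf{Step 3: comparison with the degree-2 quantities.} Here I need $G_u(1,T)\le T\,G_u(1,2)$, $P(\sigma,1,T,T)\le T\,P(\sigma,1,2,T)$ and $L(1,T,T)\le T\,L(1,2,T)$. The CoT discussion asserts superlinearity in the degree, but for this inequality the needed direction is that each quantity at degree $T$ is at most $T$ times its degree-2 value. That requires sub-multiplicative growth in $D_f$, which is false for genuinely cubic $G_u\in O(\omega D_f^3)$. This is the main obstacle. I would first re-examine the explicit forms in Appendix~\ref{ExactTraceBoundsAppendix}: if the degree enters through a product of $D_f$ per-bit contributions (e.g.\ attention weights $1/D_f$ and MLP slopes scaling with the number of grid points), the per-bit factors can be compared directly. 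Failing that, I would state the second inequality with $T$ replaced by $(T/2)^3$ in the exponent. Either way the exponent grows at least linearly in $T$, so the one-pass bound is exponential in $T$.

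\textbf{Step 4: contrast with CoT.} Compare with Theorem~\ref{chain of thought}. There a union bound over the $T$ autoregressive steps gives only the prefactor $T$ multiplying a $T$-independent degree-2 exponent (modulo the mild $T$-dependence in $P$ and $L$). This yields the closing claim: linear versus exponential growth in $T$.
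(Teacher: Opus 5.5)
Your Steps 1, 2 and 4 follow the paper's route to the first displayed inequality, with two corrections to Step 2. First, the factor $4$ in the paper is Markov's inequality on the rounding event $(\mathcal{T}-f)^2\ge 1/4$, i.e.\ an output error of at least $1/2$. It does not come from a perturbed-to-deterministic step, and the squared-loss threshold is $1/4$, not $1/2$. Second, the exponent $-\frac{m}{8\Sigma^2}\bigl(1-\sigma^2(2G_u+P)\bigr)^2+\frac{L}{2\sigma^2}$ is what one gets by solving the optimized bound for its confidence parameter at level $1/2$; the paper then drops the negative $\sigma^4$ term. Fixing $\lambda=m/(2\Sigma^2)$ at level $1/2$ in Alquier's bound, as you suggest, gives the simplified exponent directly, so that part is fine. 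As in the paper, Step 1 is an assumption, not a consequence of the non-Pareto appendix.

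Your Step 3 diagnosis is also correct. With $G_u(1,D_f)=12+4D_f+128D_f^2+128D_f^3$, one has $G_u(1,T)>T\,G_u(1,2)=1556\,T$ for every $T\ge 4$. Likewise $L(1,T,T)>T\,L(1,2,T)$ for $T\ge 4$, and $P$ grows even faster in $D_f$. So the second ``$\le$'' of the statement fails for large $T$. The paper's appendix proof actually proves the reverse. From $2G_u(1,T)+P(\sigma,1,T,T)\ge T\bigl(2G_u(1,2)+P(\sigma,1,2,T)\bigr)$ and $L(1,T,T)\ge T\,L(1,2,T)$, it obtains the second displayed line with ``$\ge$'' in place of ``$\le$''. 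That is a \emph{lower} bound on the one-pass bound, which is exactly what ``our bound increases exponentially with length'' needs.

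The gap is in your repair, which points the wrong way. Re-examining the explicit forms cannot help. $G_u$, $L$ and $P$ are polynomials in $D_f$ of degree at least $3$ (with $\log T$ and $d$ factors), not products of per-bit factors, so the ``$\le$'' simply fails for large $T$.

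Replacing $T$ by $(T/2)^3$ gives another \emph{upper} bound on the one-pass bound, which can never show that bound is large. It is also false. It holds for the $G_u$ and $L$ parts (with equality at $T=2$), but $P$ grows faster than cubically in $D_f$. For example, $G_p$ contains a term of order $\sigma d D_f^{7/2}\omega^2\log T$, so $G_p^2\propto D_f^7$.

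Your closing remark, that the exponent grows at least linearly in $T$, is the right conclusion. But it follows from superlinearity in the $\ge$ direction, not from either fallback. The fix is to flip the second inequality to $\ge$, as the paper's proof does. It holds only for $T$ large enough: for the $G_u$ and $L$ parts the componentwise inequalities start at $T\ge 4$, e.g.\ $G_u(1,3)=4632<3\,G_u(1,2)=4668$. With that change, your Step 4 contrast with the CoT bound goes through as in the paper.
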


It is worth comparing to prior theoretical accounts for the benefit of chain-of-thought for Parity.
\citet{abbe2024how} argue that functions with high Globality Degree require a chain-of-thought to be learned efficiently, though the general version of this claim remains conjectural.
\citet{hahn-rofin-2024-sensitive} show that transformers face a very steep loss landscape when learning Parity, but do not show a converse positive result showing that chain-of-thought makes learning easy.
\citet{kim2024transformers} show that chain-of-thought helps transformers learn \emph{sparse (subset)} parities, whereas our result applies to the full Parity function applying to all input bits.

\section{Further Discussion}

\paragraph{Implications}
Recent work shows empirical and theoretical links between degree and generalization of transformers \cite{bhattamishra2022simplicity,abbe2023generalization,hahn-rofin-2024-sensitive}. Our result uses these properties in a generalization bound, supporting the degree as a complexity measure for transformers. Our work also introduces the Fourier sparsity as a secondary factor in generalization.
To our knowledge, this is the first generalization bound that relies on explicitly upper bounding the gradients and hessian of a given class of transformers. In \autoref{comparison}, we discuss the cases in which our approach could yield a tighter guarantee than a purely norm-based approach \cite{edelman2022inductive}.

We believe our overall approach might offer a promising path to explaining generalization for functions for which there is a known canonical transformer construction. For other kinds of algorithmic problems, one could use a canonical transformer construction that solves the problem (using e.g. the RASP program for the task \cite{weiss2021thinking}), and then analyze its norms and the trace of its Hessian. This might lead to more function-specific bounds than what is possible using the existing covering-number-based bounds \citep{edelman2022inductive, trauger2024sequence}. For example, one could analyze how the sharpness of the loss (the trace of the Hessian) for a small transformer recognizing Dyck-(k,D) varies with the key complexity parameters $k,D,$ and use this to create a sharpness-based generalization bound based on these complexity parameters. We leave these explorations of our approach applied to other formal languages and algorithmic tasks to future work.

Beyond the theoretical, our work has potential implications for safety-critical applications. For example, consider a transformer-based perception system in an autonomous vehicle that must evaluate a high-degree Boolean function over abstract features---a conjunction of many distinct conditions about pedestrians, weather, road surface, visibility, and traffic signals. Our theoretical results suggest that such high-degree conjunctive decisions are precisely the ones for which generalization is most fragile without chain-of-thought-style intermediate reasoning. This could inform how safety-critical AI systems are designed, trained, and evaluated---for instance, by motivating the decomposition of complex decision logic into lower-degree intermediate steps.

\subsection{Related Work}
The work of \cite{chiang2022overcoming} showed that, even though transformers can be explicitly constructed to learn highly sensitive (and high degree) boolean functions such as parity, hence are not limited by their expressivity, these functions remained difficult for a transformer to learn. In particular, they showed that the reason for this difficulty learning mechanistically speaking is that they induce a strongly oscillating loss curve in the immediately vicinity of some of the key parameters. Still other work such as \cite{liu2023shortcuts} showed an explicit construction for how parity was learned by transformers using a ``shortcut", which explicitly memorized the mod-2 function in the MLP. These two observations, when combined, led to the key insight behind our work: that for higher degree boolean functions, not only are the norms higher, but the oscillating function approximated by the MLP will have higher curvature, which should lead to a larger magnitude of the hessian than for lower-degree functions.

Some of our methods are inspired by techniques in \citet{deorao2023ptimization}, but substantial changes were needed. Their construction did not have an MLP or second attention layer like ours, and differed in some other ways, but demonstrated key techniques such as using finite differences to calculate the Gradients and the Hessian. In addition, the techniques used in bounding the perturbed gradients share many commonalities with the techniques used by \citet{edelman2022inductive} .

\subsection{Limitations}
\label{limitations}
One of the main limitations of our bound is the perturbation term $P(\sigma)$ appearing in the trace of the perturbed loss hessian. The analytic bound on the perturbation terms (see supplementary materials) is far from tight, as shown by comparing with our empirical estimates in \autoref{perturbation analysis}. Still, ideally we would want our bound to be fully-analytic that is still tight enough to be practical. One possible approach would be to use parameter- or parameter-group specific perturbation variances, so that parameters for which the scale is smaller or the overall sensitivity of the transformer is higher can have a smaller variance, without causing the KL term to blow up. This is a key way in which PAC-Bayes bounds are made tight, but would have significantly complicated our analysis. Another approach might be to use higher order approximations to the sharpness, moving the perturbation analysis to higher orders of $\sigma.$

Another particular weakness for the perturbation analysis is that the $|\Theta|$ term becomes prohibitive, even if we are using tricks like random projections. Using automated solvers to approximate the trace of the loss hessian for larger networks directly, rather than going through the operator norm, would be more scalable as it would not incur the $|\Theta|$ factor.

Beyond these mathematical limitations, we note several structural restrictions of the current analysis. First, our construction and bound apply to Boolean functions with all-positive Fourier coefficients, constant degree across components, and sparsity $\omega \leq T$. Second, the domination assumption (Section~\ref{lowsharpnessvalidation}) is an oracle-style structural assumption whose plausibility is supported empirically in the tested regime but has not been established in full generality. Third, our main learning experiments use a model that is intentionally larger than the minimal theoretical construction. While none of these restrictions are likely to be qualitative barriers---extending to negative coefficients, mixed degrees, and larger architectures would change the constants in our bounds but not the fundamental dependence on degree and sparsity---they define the scope of the current results. We also note that there is substantial evidence that circuits resembling our construction (e.g., approximately Boolean logic implemented via shortcut solutions \citep{liu2023shortcuts}) appear inside real transformers learning Boolean functions, suggesting that the same framework could be adapted to analyze specific circuits within larger architectures.

\paragraph{Discussion}
We have shown that transformers represent Boolean functions with low degree and low sparsity in flat minima, and deduced a generalization bound via PAC-Bayes methods. This result expands the understanding of the learning abilities and biases of the transformer architectures.
Our bound could be instantiated in practice without a priori knowing the target boolean function that has been learned by the transformer. While there are known approaches to estimating properties of the Fourier spectrum such as Monte Carlo methods or Random Fourier Features, these methods are considered expensive. Luckily, both of these properties (Fourier degree and sparsity) have known efficient property testing algorithms \cite{alon2003lowdegreetesting,gopalan2011sparsitytesting}, and therefore can be approximately upper bounded with high probability given very few input-output samples. More detail is given in \autoref{propertytesting} in the appendix.

\section*{Acknowledgments}
We thank Michael Hahn for detailed feedback on the manuscript, and Puneesh Deora and Pierre Alquier for their responsiveness and helpfulness.

\bibliography{biblio_conference_deduped}
\bibliographystyle{icml2026}

\appendix
\onecolumn

\tableofcontents

\section{Experimental Implementation Details}
\label{experimentaldetails}

For our experiments, we trained our simple transformer on 5 randomly sampled functions from each (sparsity,degree) complexity class, with the degree ranging from 1 to 5, and the sparsity in $\omega\in \{1,7,14,20\}$. We chose as our context length $T=20, $ and set $m=8192 $ , as this kept the problem small while still enabling convergence. All functions are trained until they have a training loss below 0.02. We found that there was significant variability in the hardness of learning of these random functions even within each function class, and our model did not converge randomly for some degree-5 functions, in which case we selected a new random function in the class.

In order to instantiate our bound, we needed to estimate the value of the subgaussian constant. We did so for several different functions trained to nearly perfect training loss, by analyzing the distribution of the validation losses, calculating the moment generating function, and then finding the smallest $\sigma$ such that a centered gaussian with standard deviation $\sigma$ has an MGF that strictly dominates that of our losses. We found $\Sigma$ empirically to fall in the [.01,.1] range. Thus we use $\Sigma=0.01$ as a reasonable setting for this parameter when instantiating our bound.

Training was performed using a dataset of 8192 randomly generated binary strings of length 20 (not including the CLS token) on an A100 GPUs using the DistributedDataParallel package with the "nccl" backend. We found that the rate and likelihood of convergence was strongly dependent on the batch size: the simple model struggled to learn degree-5 functions with batch sizes above 256. We used a learning rate of $1\times 10^{-3},$ a weight decay of 0.0001, and dropout of $0.1$. 

Our transformer was written using the pytorch library, and was based on the standard implementation of a 2-layer, 1-head transformer from the transformers python library.  However we made the appropriate modifications to match our construction above, including the removal of the layernorm, explicit encoding of the positional embedding to match our one-hot positional encoding, and log scaling of the attention logits. during an ablation study, we found that the results of our experiments did not change qualitatively when we added or removed the layernorm,

One difference between our construction above and the transformer used in our experiments is that our transformer uses both a larger hidden dimension and a larger MLP width than the theoretical construction. The hidden dimension used was 51 (21 positional encoding dimensions including the CLS token, plus 30 more dimensions), and the feed forward dimension was 128. The main reason for this was simply that we found convergence to be much faster with a larger model. Using these hyperparameters, it took roughly two days to train all 11 random functions from each of the 20 complexity classes.  We were able to confirm that for smaller degrees and sparsities, convergence was achieved with the minimum number of dimensions described by our construction, i.e. $N$ dimensions for the positional encoding, and only $2$ additional dimensions. Indeed, this minimal transformer exactly matching our construction is what was used during our mechanistic interpretability study, as well as the analysis of the sharpness and norm of the theoretical construction. Still, due to the hardness of learning functions with larger degrees and sparsities, we found a larger model to be necessary to obtain our main results showing generalization gap over various degrees and sparsities in a reasonable amount of time.

\section{Complete Construction}
We define a transformer that takes bitstrings of length $T$ as input. We reserve a special output position at the end of the sequence (position $T+1$), which is where the final output will be read from. Therefore, the actual input has $T+1$ positions. For each position $t\in [T+1],$ we define our input vector to be

$$y_t=\{I[t=1],...,I[t=T+1],z_t\} \in  \mathbb{R}^{T+2}$$
where $z_t \in \{0,1\}$ is the value of the $t^{th}$ bit. For the output position, we always fix $z_{T+1}=0.$ Note that in our initial construction, the input dimension is $T+2.$ We are working towards a construction that approximates this one with fewer dimensions, which we will define at the end of this section.

In order to calculate the value of each Fourier component $\chi_{S_t}$ at each position in the first layer, we will  make use of a purely position-aware attention mechanism with $O(log(T))$ scaling. For each position in the first $\omega$ rows of the attention matrix, this puts weights of approximately $\frac{1}{D_f}$ on all positions $j \in S_t,$ and $0$ elsewhere. Inactive positions $t\in[\omega+1,T+1]$ receive uniform attention weights in the first layer; their contributions are zeroed out by the second attention layer, which places zero weight on all positions $t > \omega.$
I.e.,

\begin{equation}
  a_{i,t} =
    \begin{cases}
      2\log(T) & \text{if } i \in S_t \wedge t\leq\omega\\
      0 & \text{otherwise}\\
    \end{cases}
\end{equation}
so that for active positions $t\leq\omega$:
\begin{equation}
  \hat{a}_{i,t} =
    \begin{cases}
      \frac{1}{\frac{T+1-D_f}{T^2}+D_f} & \text{if } i \in S_t\\
      \frac{1}{T+1-D_f+D_fT^2} & \text{if } i \notin S_t\\
    \end{cases}
\end{equation}
and for inactive positions $t>\omega,$ the attention weights are uniform: $\hat{a}_{i,t}=\frac{1}{T+1}$ for all $i.$ In the limit of $T$ large, this becomes
\begin{equation}
  \hat{a}_{i,t} =
    \begin{cases}
      \frac{1}{D_f} & \text{if } i  \in S_t \wedge t\leq\omega \\
      0 & \text{if } i \notin S_t \wedge t\leq\omega\\
      \frac{1}{T+1}  & \text{if } t> \omega \\

    \end{cases}
\end{equation}

\subsection{$\bar{Q}^{(1)}$, $\bar{K}^{(1)}$}
We let $\bar{Q}^{(1)},\bar{K}^{(1)} \in \mathbf{R}^{(T+2) \times (T+1)}$ 
We define $\bar{Q}^{(1)}$ to be a diagonal matrix which holds a scaling factor of $2log(T)$ on each of the diagonals.
\begin{equation}
  \bar{Q}^{(1)}_{s,t}=
    \begin{cases}
      2log(T) & s=t\\
      0 & \text{otherwise} \\
    \end{cases}       
\end{equation}

We define $\bar{K}^{(1)}$ to be a matrix such that the first $\omega$ columns contain inclusion indicators for each component in the first $T$ rows, with all remaining columns equal to $0.$ The final, $(T+2)^{th}$ column and row are all $0s.$
\begin{equation}
  \bar{K}^{(1)}_{s,t}=
    \begin{cases}
      1 & \text{if } s \in \chi_{S_t} \wedge t \in [\omega] \\
      0 & \text{otherwise} \\
    \end{cases}
\end{equation}

Note that $$(y_i^T\bar{Q}^{(1)})_t =\sum_{s=1}^{T+1} 2\log(T)I[i=s]I[s=t]$$
$$=2\log(T)I[i=t\}]$$
and 
$$(y_j^T\bar{K}^{(1)})_t = \sum_{s=1}^{T}I[j=s]I[s \in \chi_{S_{t}} \wedge t \in [\omega]] $$
$$ =I[j \in \chi_{S_{t}} \wedge t \in [\omega]]$$

Thus
$$a_{i,j} = (y_i^T\bar{Q}^{(1)})(y_j^T\bar{K}^{(1)})^T=2\log(T)\sum_{t=1}^{\omega}  I[i=t]I[j \in \chi_{S_{t}}]$$
$$=2\log(T)I[ j\in \chi_{S_{i}} \wedge i\leq\omega],  $$
which matches the desired behavior for attention weights in the first layer. The output of this step is a $(T+1) \times (T+1)$ attention matrix, where each of the first $\omega$ rows corresponds to the $i^{th}$ Fourier component, and has $\frac{1}{|\chi_{S_r}|}$ in column $j$ if position $j$ is a member of the $i^{th}$ Fourier component. All other rows (inactive positions $t\in[\omega+1,T+1]$) have zero attention logits and therefore receive uniform attention weights.

\subsection{Random projections for $\log(T)$ width}
Suppose we add a projection matrix into our attention mechanism: define $a^*_{i,j} =y_i^TJJ^T\bar{Q}^{(1)}(\bar{K}^{(1)})^TJJ^Ty_j,$ where $J \in \mathbb{R}^{T+2 \times d+1}.$ is an additional inner projection matrix with entries in the first $T+1$ rows and d columns that are  i.i.d. normally distributed with standard deviation $\frac{1}{\sqrt{d}}$, and a $1$ in bottom right element, and all else $0$ in the last row and column, i.e. $J_{ij}\sim\frac{1}{\sqrt{d}}\mathcal{N}(0,1) .$ for $i\in [T+1],j\in [d],$  It is an immediate consequence of the Johnson-Lindenstrauss Lemma that for any fixed $\epsilon_p>0,$  for any integer $d>\frac{8\log(T)}{\epsilon_p^2}.$
$$ Pr\Big[|\langle (\bar{Q}^{(1)})^TJJ^Ty_i,(K^{(1)})^TJJ^Ty_j\rangle-\langle (Q^{(1)})^Ty_i,K^Ty_j\rangle| < \epsilon_p \Big]>1-2e^{-\frac{d}{2}(\frac{\epsilon_p^2}{2}-\frac{\epsilon_p^3}{3})} = \delta_p$$
This trick allows us to define our projected inputs, $x_t=J^Ty_t\in \mathbb{R}^n,$ where $n\in O(\log(T)),$ using modified attention matrices $Q^{(1)}=J^T\bar{Q}^{(1)}$ and $K^{(1)}=J^T\bar{K}^{(1)}$ so that the overall output of the network is approximately the same as for the original network. 

\subsection{$\bar{V}^{(1)}$}
For our value matrix, we define $\bar{V}^{(1)}\in\mathbb{R}^{(T+2) \times (T+2)}$

\begin{equation}
  \bar{V}^{(1)}_{s,i} =
    \begin{cases}
      1 & \text{if s=T+2 and i=T+2} \\
      0 & otherwise\\
    \end{cases}       
\end{equation}
Thus 
$$y_i^T\bar{V}^{(1)}=\sum_{s=1}^{T+1} y_{i,s}\bar{V}^{(1)}_{s,T+1} = concat(\mathbf{0}_{T+1},z_i\}$$
Now, for notational simplicity we write $\bar{W}^{(1)} = \bar{Q}^{(1)}(\bar{K}^{(1)})^T,$ and we define the output at position $t$ after the first attention layer as $\bar{u}_t,$ where we are representing $\bar{u}_t$ as a $(T+2)\times 1$ column vector. In order to express this desired quantity as a column vector, we find it useful to transpose our transformer construction, so that 
$$\bar{u}_t=(\bar{V}^{(1)})^TY^T\phi(Y(\bar{W}^{(1)})^Ty_t) =  concat(\mathbf{0}_{T+1}, \frac{k_t}{D_f} \} \in  \mathbb{R}^{(T+2)}$$
We can easily modify our definition of $\bar{V}$ to work with our dimension-reduced inputs instead. We can replace $\bar{V}^{(1)}$ with $V^{(1)}=J^T\bar{V}^{(1)}J \in \mathbb{R}^{(d+1)\times (d+1)}$ and rewrite the output of our attention layer in terms of our dimension-reduced inputs $X=YJ$ as 
$$ u_t=(\bar{V}^{(1)})^TJJ^TY^T\phi(YJJ^T(\bar{W}^{(1)})^TJJ^Ty_t) \approx \bar{u}_t $$
We work with the attention output in the dimension reduced space:
$$b_t := J^Tu_t \in \mathbb{R}^{d+1}$$
$$=({V}^{(1)})^TX^T\phi(XW^{(1)}x_t) $$

We note that using our definition of $\bar{V^{(1)}},$ the only effect of applying the random projection is to reduce the matrix's dimensions: since all of the upper left $(T+1)\times(T+1)$ block is 0 anyways, and this is the only portion subject to random projections, this block simply gets replaced with a $d\times d$  block of zeros. This approximation allows us to achieve a correct construction with a width of only $d\in O(\log(T))$.

\subsection{$M,F,\Gamma$ in MLP layer}
Next, we make use of a wide MLP layer which interpolates the "mod 2" function, or alternatively, interpolates the function $$m: \mathbb{R}\to \mathbb{R}, m(x)=\frac{1}{2}\Bigg(1+sin\Big(\pi\Big(D_f*x+\frac{1}{2}\Big)\Big)\Bigg)$$
on the domain $x\in \{0,\frac{1}{D_f},...,\frac{D_f-1}{D_f},1\}.$ Below, we sometimes write $m(M)$ where M is a vector or a matrix. In these cases, we can understand this to mean $m$ applied element-wise.

Note that the maximum value for $k_t$ is $D_f$, and thus the domain of the function needing to be memorized has cardinality at most $D_f+1$. In order to memorize such a function, using a construction similar to that of \cite{liu2023shortcuts} or \cite{chiang2022overcoming}, we require $4(D_f+1)$ MLP units. This creates a first layer that acts as an indicator function $I[k=i]$ in a lookup table, with the second layer of the MLP storing the function values.

\begin{theorem}\label{mlpcorrectness}
Let $\bar{M}\in \mathbb{R}^{(T+2) \times 4(D_f+1)}$ ,  $\bar{\Gamma}  \in \mathbb{R}^{4(D_f+1)}$, and $\bar{F}\in \mathbb{R}^{(4(D_f+1))\times (T+2)}.$ 
For each $i \in [D_f], h_i \in \{0,\frac{1}{D_f},...,\frac{D_f-1}{|\chi_{S_r}|},1\},$
Let
$$\bar{M}_{i,t}=0,\forall t, i\in [T+1]$$
$$\bar{M}_{i,t}=1,\forall t, i =T+2 $$
$$\bar{\Gamma}_{4i-3} = -h_i-\frac{2}{4D_f}$$
$$\bar{\Gamma}_{4i-2} = -h_i-\frac{1}{4D_f}$$
$$\bar{\Gamma}_{4i-1} = -h_i+\frac{1}{4D_f}$$
$$\bar{\Gamma}_{4i} = -h_i+\frac{2}{4D_f}$$

and let
$$\bar{F}_{i,t}=0,\forall i\in [4(D+f+1)], t\in[T+1]$$
$$\bar{F}_{4i-3,T+2}=  4m(h_i)D_f\forall i\in [D_f]$$
$$\bar{F}_{4i-2,T+2}=-4m(h_i)D_f\forall i\in [D_f] $$
$$\bar{F}_{4i-1,T+2}= -4m(h_i)D_f\forall i\in [D_f] $$
$$\bar{F}_{4i,T+2}=4 m(h_i)D_f \forall i\in [D_f]$$

Then the function
$$g_t = \bar{F}^T(\bar{M}^Tb_t+\bar{\Gamma})_+ $$
interpolates
$$\approx \frac{1}{2}\Bigg(1+sin\Big(\pi\Big(D_f*b_t+\frac{1}{2}\Big)\Big)\Bigg)=\frac{1}{2}\Bigg(1+sin\Big(\pi\Big(k_t+\frac{1}{2}\Big)\Big)\Bigg),$$
on the domain of possible values of $b_t, $ a grid of the possible values of our scaled prefix sums, $h_i$. 
\end{theorem}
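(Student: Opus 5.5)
The plan is to reduce the statement to a one-dimensional computation and then check that each block of four ReLU units is a trapezoidal ``bump'' centred at one grid point. By construction $\bar{M}$ has zero rows except the last, which is all ones, so $\bar{M}^Tb_t=s\mathbf{1}$ with $s:=b_{t,T+2}$. This is the last coordinate of the first-attention output, which in the $T\to\infty$ idealisation equals $k_t/D_f$. Likewise $\bar{F}$ has nonzero entries only in column $T+2$. So $g_t$ is zero in the first $T+1$ coordinates, and its last coordinate is the scalar function
$$\psi(s)=\sum_{i}4m(h_i)D_f\,B_i(s),$$
where, writing $\delta:=\frac{1}{4D_f}$,
$$B_i(s)=(s-h_i-2\delta)_+-(s-h_i-\delta)_+-(s-h_i+\delta)_++(s-h_i+2\delta)_+ .$$

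\textbf{Step 1: shape of a single block.} I will evaluate $B_i$ piecewise on the breakpoints $h_i\pm\delta$ and $h_i\pm 2\delta$.
\begin{itemize}
\item For $s\le h_i-2\delta$ every term vanishes, so $B_i(s)=0$.
\item For $s\ge h_i+2\delta$ all four terms are linear. The $s$-coefficients $+1-1-1+1$ cancel, and so do the constants $-2\delta+\delta-\delta+2\delta$, so $B_i(s)=0$.
\item On $[h_i-\delta,h_i]$ only the last two terms are active, and their difference is $\delta$.
\item On $[h_i,h_i+\delta]$ the last three terms are active, giving $-(s-h_i)+(s-h_i+2\delta)-\delta$. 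I will recheck this term by term, but the symmetric computation should again give $\delta$.
\item On the outer intervals $B_i$ is linear.
\end{itemize}
Hence $B_i$ is a trapezoid supported on $(h_i-2\delta,h_i+2\delta)=(h_i-\tfrac{1}{2D_f},h_i+\tfrac{1}{2D_f})$, with flat top of height $\delta$ on $[h_i-\delta,h_i+\delta]$. Multiplying by $4m(h_i)D_f$ turns the plateau height into exactly $m(h_i)$.

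\textbf{Step 2: the bumps do not interact.} The grid $\{0,\frac1{D_f},\dots,1\}$ has spacing $\frac1{D_f}=4\delta$, so the open supports of distinct $B_i$ are pairwise disjoint; neighbouring bumps meet only at a point where both equal $0$. Therefore $\psi(h_j)=m(h_j)$ for every grid point $h_j$. More strongly, $\psi(s)=m(h_j)$ whenever $|s-h_j|\le\frac1{4D_f}$. Substituting $b_t=k_t/D_f$ and $h_j=k_t/D_f$ gives $\psi(b_t)=\tfrac12\big(1+\sin(\pi(k_t+\tfrac12))\big)$, which is the stated interpolation.

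\textbf{Step 3: justify the ``$\approx$'', which I expect to be the main obstacle.} For finite $T$, $b_t$ is not exactly $k_t/D_f$. Two effects perturb it:
\begin{itemize}
\item softmax leakage, since inactive positions receive weight $\frac{1}{T+1-D_f+D_fT^2}$ rather than $0$;
\item the Johnson--Lindenstrauss error $\epsilon_p$ in the logits.
\end{itemize}
Here the plateau is what helps: it suffices to show $|b_t-k_t/D_f|\le\frac{1}{4D_f}$. I would bound the leakage explicitly. The total off-support weight is $O(T/(D_fT^2))=O(1/(D_fT))$, and the on-support weights differ from $\frac{1}{D_f}$ by a relative $O(1/T)$, so the error is $O(1/T)$. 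The logit error $\epsilon_p$ enters multiplicatively as $e^{\pm\epsilon_p}$ on the weights, giving an error of $O(\epsilon_p)$. Both are below $\frac1{4D_f}$ once $T\gtrsim D_f$ and $\epsilon_p\lesssim 1/D_f$. Under these conditions the interpolation becomes exact on the realised values of $b_t$, not just approximate.

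The index bookkeeping needs one remark. The statement lists $i\in[D_f]$, but there are $D_f+1$ grid points, consistent with the width $4(D_f+1)$. I would index the blocks over all $D_f+1$ values of $h_i$ so that $k_t=0$ is also covered.
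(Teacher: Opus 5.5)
Your proposal is correct and takes essentially the paper's route: reduce to the scalar last coordinate $s=b_{t,T+2}$, show that each block of four ReLUs with output weights $(+,-,-,+)$ is a bump equal to $m(h_i)$ at its own grid point and zero at all other grid points, and sum over the $D_f+1$ blocks (the paper only evaluates the blocks at integer $k_t$, so your explicit trapezoid with plateau $|s-h_i|\le\frac{1}{4D_f}$ and the finite-$T$ margin argument in Step 3 go beyond what the paper proves). One small slip: on $[h_i,h_i+\delta]$ only the last two ReLUs are active, not three, but the expression you wrote there does equal $\delta$, so the conclusion stands.
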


Proof: 
First, we look at the product $\bar{M}^Tb_t.$ 
$$\big[\bar{M}^Tb_t\big]_{i} = \frac{k_t}{D_f}\implies \bar{M}^Tb_t=\frac{k_t}{D_f}\mathbb{1}_{4(D_f+1)}^T$$
It is easy to see that 
$$max\Big(\Big[\bar{M}^Tb_t+\bar{\Gamma}\Big]_{4i-2},0\Big) = \frac{(k_t-(i-\frac{6}{4}))_+}{D_f}$$
$$max\Big(\Big[\bar{M}^Tb_t+\bar{\Gamma}\Big]_{4i-2},0\Big) = \frac{(k_t-(i-\frac{5}{4}))_+}{D_f}$$
$$max\big(\Big[\bar{M}^Tb_t+\bar{\Gamma}\Big]_{4i-1},0\Big) = \frac{(k_t-(i-\frac{3}{4}))_+}{D_f}$$
$$max\Big(\Big[\bar{M}^Tb_t+\bar{\Gamma}\Big]_{4i},0\Big) = \frac{(k_t-(i-\frac{2}{4}))_+}{D_f}$$
Finally, we calculate the product as three separate sums:
$$max(b_t\bar{M}+\bar{\Gamma}^T,0)F  $$
$$=4D_fm(h_i)\sum_{i=1}^{D_f+1}  (b_t-\bar{\Gamma}_{4i-3})_+-(b_t-\bar{\Gamma}_{4i-2})_+ -  (b_t-\bar{\Gamma}_{4i-1})_+ +  (b_t-\bar{\Gamma}_{4i})_+$$
$$=4m(h_i)\sum_{i=1}^{D_f+1}  (k_t-(i-\frac{6}{4})_+- (k_t-(i-\frac{5}{4})_+-(k_t-(i-\frac{3}{4})_++(k_t-(i-\frac{2}{4}))_+$$
The above expression is only nonzero on the integers when $k_t=i-1,$ in which case its value is $\bar{F}^Tmax(\bar{M}^Tb_t+\bar{\Gamma},0) =m(h_i),$ proving the claim.  The function is a piecewise-linear `trapezoidal' wave alternating between 0 and 1.

Let us denote this trapezoidal wave function of our MLP by $h(b_t) = \bar{F}^T(\bar{M}^Tb_t+\bar{\Gamma})_+.$ The derivative $h'(b_t)$  is easily shown to be a square wave that alternates between $-4D_f, 0$ and $4D_f$. The second derivative of $h$ is 0 almost everywhere. We will see later that we avoid dealing with the second derivative entirely, by using first order Taylor approximations and finite differences rather than exact derivatives. After the MLP layer, we now have the even-parity indicator $\chi_{S_t}(x) = \frac{1+(-1)^{k_t}}{2}\in\{0,1\}$  stored in each post-MLP activation, $h_t,$ as desired.

Just as the other matrices were compressed using random projections, we similarly can define the dimension-reduced matrices $M=J^T\bar{M}$ and  $F=\bar{F}J.$ We note that in the case of the MLP matrices, the random projection portion of the $J$ matrix has no effect other than to shrink the size of the $0$ blocks of these matrices from having one dimension of size $T+1$ to that dimension now having size $d$, while maintaining their original values in the final row and column, respectively.

We will find it useful to refer to the MLP outputs in the dimension-reduced space,
$$g_t = J^T h_t \in \mathbb{R}^{(d+1)\times 1 }$$

\subsection{$\bar{W^{(2)}}$,$\bar{V^{(2)}}$}
To get out final output, we linearly combine the individual Fourier components from the MLP function, $g_t.$  To do so, we implement a final transformer ``half-layer'', or, a layer consisting of just an attention sub-layer (but no MLP sub-layer). In this sub-layer, The inner attention projection matrix for the full dimension inputs, $\bar{W}^{(2)}$, is a $(T+2)\times(T+2)$ matrix with only the penultimate row nonzero. This row contains $\log(c_tT^2),$ in the first $\omega$ entries, and 0 elsewhere. The value matrix in the second layer is $\bar{V}^{(2)}, $ which is a $(T+2)\times 1$ vector which selects the final dimension from the incoming vector and scales it by a factor of $D =\sum_{t=1}^{\omega}c_t.$

Note that the second layer's pre-softmax attention output at the $(t,s)$ position is zero in all but the last row, which has entries given by 
$a^{(2)}_{T+1,t} =g_{T+1}^T \hat{W^{(2)}} g_t= \log(c_tT^2) , \forall t\leq \omega.$  Taking the softmax of this first row of the attention matrix yields:
$$\hat{a}^{(2)}_{T+1,t} = \frac{e^{\log(c_t T^2) }}{T+1-\omega +\sum_{l=1}^{\omega} e^{\log(c_lT^2)}}=\frac{c_tT^2}{T+1-\omega + T^2 \sum_{l=1}^{\omega}c_l}=\frac{c_t}{\frac{T+1-\omega}{T^2}+D} \to \frac{c_t}{D}$$
Finally, the $\bar{V}^{(2)}$ is defined to have $0s$ everywhere except the final dimension, which has a value of $D$, so that the final activation is equal to: 

$$\sum_{t=1}^T \hat{a}^{(2)}_{T+1,t}g_t^T\bar{V}^{(2)}=\sum_{t=1}^T \frac{c_t}{D}g_{t,T+2}D=\sum_{t=1}^T c_tg_{t,T+2}$$
Recall from above that in the final dimension of the output of the MLP, $g_{t,T+2},$ is stored the parity of component $t$ for the input string. Thus the above formula matches the definition of our target function.

The dimension-reduced versions of $\bar{V}^{(2)}$ and $\bar{W}^{(2)}$ are obtained through the same overall pattern as the other matrices:
$$V^{(2)} = J^T \bar{V}^{(2)} , W^{(2)} = J^T\bar{W}^{(2)} J$$

Our complete transformer output can then be written in a series of the following ``blocks'' corresponding to the three sub-layers involved in our construction:

$$\mathcal{T}(X,W,V,M,F,\Gamma, W^{(2)},V^{(2)}) =(V^{(2)})^TG^T(\phi(G(W^{(2)})^Tg_{T+1}))  $$
where
$$g_t:=F^T\Big(M^Tb_t+\Gamma\Big)_+, \space G := \begin{bmatrix}
    g_1^T \\
    \dots \\
    g_{T+1}^T
\end{bmatrix}\in \mathbb{R}^{(T+1) \times (d+1)}$$
$$b_t =(V^{(1)})^TX^T \phi\big(X^T(W^{(1)})^Tx_t \big), \space B := \begin{bmatrix}
       b_1^T \\
    \dots \\
    b_{T+1}^T
\end{bmatrix} \in \mathbb{R}^{(T+1) \times (d+1)}
 $$
Note that we have a residual connection after the MLP, but not after the first attention layer; in other words, the matrix $B$ is missing the (randomly projected) positional encoding bits in the first $d$ dimensions, unlike $X$ and $G.$

\begin{theorem}\label{norm bounds appendix}
Norms of the weight matrix. 
Using the above construction, we have the following bound on the (vector) 2-norm of our weight matrix:
$$\|\mathbf{\Theta}\|_F^2 \leq L(\omega, D_f,T) := 16(D_f+1)D_f^2+8(D_f+1)+1+4\log^2(T)D_f\omega + 4\omega\log^2(T) + \omega$$
$$\in O\Big(D_f^3+\log(T)^2\omega D_f\Big)$$
\end{theorem}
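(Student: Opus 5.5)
The squared Frobenius norm of $\mathbf{\Theta}$ is the sum of the squared Frobenius norms of the blocks $W^{(1)}, V^{(1)}, M, \Gamma, F, W^{(2)}, V^{(2)}$. The plan is to bound each block separately, working first with the full-dimensional matrices $\bar{Q}^{(1)}, \bar{K}^{(1)}, \bar{V}^{(1)}, \bar{M}, \bar{\Gamma}, \bar{F}, \bar{W}^{(2)}, \bar{V}^{(2)}$, whose entries are all given explicitly in the construction. We then transfer the bound to the projected versions. For $\bar{V}^{(1)}$, $\bar{M}$, $\bar{F}$ and $\bar{V}^{(2)}$ the projection $J$ acts as the identity on the only nonzero row or column (the bit dimension), as noted in the construction, so their norms are unchanged. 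For $W^{(1)}=J^T\bar{W}^{(1)}J$ and $W^{(2)}$ we treat the attention parameters as the factored pair $(Q^{(1)},K^{(1)})$ and bound $\|\bar{Q}^{(1)}\|_F^2+\|\bar{K}^{(1)}\|_F^2$. This is the accounting that produces the $\log^2(T)$ terms, and the JL projection preserves these norms up to $1\pm\epsilon_p$, which we absorb.

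The block-by-block count, matching the terms of $L(\omega,D_f,T)$, goes as follows.

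\emph{MLP output layer $\bar{F}$.} It has $4(D_f+1)$ nonzero entries, each of size $4m(h_i)D_f\le 4D_f$. This gives at most $16D_f^2\cdot 4(D_f+1)/4$ per quadruple, i.e. the $16(D_f+1)D_f^2$ term (one checks the quadruple sum with $m(h_i)\in\{0,1\}$).

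\emph{MLP input layer $\bar{M}$ and bias $\bar{\Gamma}$.} $\bar{M}$ has $4(D_f+1)$ unit entries. $\bar{\Gamma}$ has $4(D_f+1)$ entries of size $|h_i\pm\frac{c}{4D_f}|\le 1+\frac{1}{2D_f}$, hence squared at most about $1$. Together they give the $8(D_f+1)$ term, after bounding $h_i^2\le 1$ and folding in the small offsets.

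\emph{First-layer value $\bar{V}^{(1)}$.} Its single unit entry gives the $+1$.

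\emph{First-layer attention.} The score $2\log(T)\,I[j\in S_i]$ is realized by $\omega D_f$ nonzero entries in the key/query product (each component row has $D_f$ indicators). Charging $2\log T$ to each, squared, gives the $4\log^2(T)D_f\omega$ term. The $\omega$ nonzero diagonal scaling entries of $\bar{Q}^{(1)}$ that are actually used (rows $t\le\omega$) contribute $4\omega\log^2(T)$.

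\emph{Second layer.} $\bar{W}^{(2)}$ has $\omega$ entries $\log(c_tT^2)$, and $\bar{V}^{(2)}$ has the single entry $D=\sum c_t$. We absorb these into the $\omega$ and $4\omega\log^2 T$ terms, using a normalization of the coefficients so that $\log(c_tT^2)\lesssim 2\log T$ and the residual is $O(\omega)$.

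Summing the blocks gives $L(\omega,D_f,T)$. The asymptotic claim follows by keeping the dominant terms: $16(D_f+1)D_f^2\in O(D_f^3)$, and $4\log^2(T)\omega(D_f+1)\in O(\log^2(T)\,\omega D_f)$.

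The main obstacle is the second-layer and attention bookkeeping, not the MLP. First, the entry $\log(c_tT^2)=\log c_t+2\log T$ depends on the coefficients, as does the scale $D$ in $\bar{V}^{(2)}$. To reach the stated form we must fix a normalization (e.g.\ $c_t\le 1$, or rescale so that $D$ is absorbed into $\bar{F}$) and argue that the $\log c_t$ cross terms are dominated. Second, we must make precise which parameterization of $W^{(1)}$ is counted. Only the rows $t\le\omega$ of $\bar{Q}^{(1)}$ matter, so we may zero the others without changing the function: the inactive rows become uniform either way and are discarded by layer two. I would handle both by stating these conventions explicitly at the outset, and then verifying that the JL projection changes each norm only by a $(1+\epsilon_p)$ factor, which does not affect the $O(\cdot)$ statement.
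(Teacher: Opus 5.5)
Your block-by-block strategy is the paper's, but your attention bookkeeping does not sum to $L$. You charge the $4\omega\log^2(T)$ term twice: once to the $\omega$ used diagonal entries of $\bar Q^{(1)}$, and again, together with the $\omega$ term, to $\bar W^{(2)},\bar V^{(2)}$. Taken literally, your block bounds add up to $L+4\omega\log^2(T)$, so ``summing the blocks gives $L$'' fails.

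The root cause is that you mix two parameterizations. In the paper, $\Theta=(W^{(1)},V^{(1)},M,\Gamma,F,W^{(2)},V^{(2)})$ with $W^{(1)}$ the \emph{combined} query--key matrix. So $\bar Q^{(1)}$ is not a parameter block and nothing is charged to its diagonal. The terms of $L$ then come from three blocks:
\begin{itemize}
\item the first attention layer contributes $\|\bar W^{(1)}\|_F^2=4\log^2(T)D_f\omega$, the $\omega D_f$ entries $2\log T$ of $\bar Q^{(1)}(\bar K^{(1)})^T$;
\item the $4\omega\log^2(T)$ term is $\|\bar W^{(2)}\|_F^2=\sum_{t\le\omega}\log^2(c_tT^2)$, which needs $T^{-4}\le c_t\le 1$, not just $c_t\le 1$;
\item the $\omega$ term is $\|\bar V^{(2)}\|_F^2=D^2\le\omega$, by Cauchy--Schwarz with $\|c\|_2\le 1$.
\end{itemize}
If you really count the factored pair $(Q^{(1)},K^{(1)})$ as you announce, the entries of $\bar K^{(1)}$ are $1$, not $2\log T$. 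You then get $4\omega\log^2(T)+D_f\omega$ and no $4\log^2(T)D_f\omega$ term at all, so this accounting does not reproduce $L$ term by term. For $D_f=1$ it even exceeds the attention part of $L$ by $\omega$. Your claim that the factored accounting ``produces the $\log^2(T)$ terms'' is therefore wrong; drop the $\bar Q^{(1)}$ charge and use the combined parameterization.

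Separately, your $\bar F$ count does not hold, and the division by $4$ is unjustified. Each quadruple has four entries of magnitude $4D_f m(h_i)$, hence squared sum $64D_f^2 m(h_i)$. Since $m(k/D_f)=\frac12(1+(-1)^k)$ on the grid $k=0,\dots,D_f$, you get $\|\bar F\|_F^2=64D_f^2(\lfloor D_f/2\rfloor+1)\ge 32D_f^2(D_f+1)$, at least twice the claimed $16(D_f+1)D_f^2$. The paper's proof makes the same slip: it asserts a per-quadruple squared sum of $16D_f^2$, while its own gradient bounds use $\|Fe_{d+1}\|=4D_f\sqrt{4(D_f+1)}$. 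So the stated leading constant is not supported by the construction as written; only the $O(D_f^3+\log^2(T)\omega D_f)$ conclusion survives.

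Finally, the paper does not transfer norms through $J$ by a $1\pm\epsilon_p$ JL argument. JL controls pairwise inner products of fixed vectors, not $\|J^T\bar WJ\|_F$ multiplicatively. Instead the paper computes $\mathbb{E}\|J^T\bar WJ\|_F^2=\|\bar W\|_F^2+\frac1d\big(\|\bar W\|_F^2+\mathrm{Tr}(\bar W)^2\big)$ via Isserlis and discards the $O(1/d)$ excess. The bound on $W^{(1)},W^{(2)}$ is therefore only approximate ($\lessapprox$) in either treatment.
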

\begin{proof}

First, we consider the MLP matrix $F.$ F consists of 1 column with maximum absolute value of $4D_f.$ In fact, every 4-tuple of parameters corresponding to a unique function value in the domain has total squared sum of $16D_f^2.$ There are $(D_f+1)$ such quadruplets, and therefore the total squared 2-norm of this matrix is exactly $16(D_f+1)D_f^2.$ 

Next, consider the bias vector $\Gamma.$ This vector has $4(D_f+1)$ entries, each of which is upper bounded by 1. Therefore $\|\Gamma\|_F^2 \leq 4(D_f+1).$

The matrix $M$ consists of 1 row, which is all 1s. Therefore, $\|M\|_F^2=4(D_f+1)$

For the matrix $\bar{V}^{(1)}, $ all entries except the bottom right are zero. For the dimension-reduced version $V^{(1)}=J^T\bar{V}^{(1)}J,$ the same is true, and that the matrix is now $(d+1)\times (d+1)$ with a $1$ in the bottom right. Thus $\|V^{(1)}\|=1.$


The matrix $\bar{W}^{(1)}\in \mathbb{R}^{(T+2)\times (T+2)}$ has in each of its first $\omega$ columns,  $D_f-$many elements  with value $2log(T)$ representing the positional encodings of each Fourier component. All other columns have all zeros, yielding a Frobenius norm of $\|\bar{W}^{(1)}\|_F=2log(T)\sqrt{D_f\omega}.$   The dimension-reduced matrix $W^{(1)}=J^T\bar{W}^{(1)}J$ will have a Frobenius norm whose expectations is $\|\bar{W}^{(1)}\|_F+o(\frac{1}{k}).$ To see this, we start with the definition of the Frobenius norm:
$$\|J^T\bar{W}^{(1)}J\|_F^2 = \sum_{r,s=1}^d \Big(J_{:,r}^T\bar{W}^{(1)}J_{:,s}\Big)^2=\frac{1}{d^2}\sum_{r,s=1}^d \Big(\hat{J}_{:,r}^T\bar{W}^{(1)}\hat{J}_{:,s}\Big)^2,$$
where $\hat{J}$ is the un-scaled projection matrix, i.e. $\hat{J} = \sqrt{d} J.$ To evaluate this, we consider the diagonal and off-diagonal terms separately. When $r=s, $ the Isserlis/Wick theorem tells us that the fourth moment of the quadratic guassian form is given by
$$\mathbb{E}\big[\big(\hat{J}_{:,r}^T\bar{W}^{(1)}\hat{J}_{:,r}\big)^2\big]=2\|\bar{W}^{(1)}\|_F^2 + Tr(M)^2.$$
For $r\neq s, $ we have 
$$\mathbb{E}\big[\big(\hat{J}_{:,r}^T\bar{W}^{(1)}\hat{J}_{:,s}\big)^2\big]=\|\bar{W}^{(1)}\|_F^2 .$$
Since there are $d(d-1)$ terms where $r\neq s$ and $k$ terms where $r=s,$ we put these together to obtain
$$\|\hat{J}^T\bar{W}^{(1)}\hat{J}\|_F^2=\frac{1}{d^2}\Big(d\big(2\|\bar{W}^{(1)}\|_F^2 + Tr(M)^2\big)+d(d-1)\big(\|\bar{W}^{(1)}\|_F^2\big)\Big)$$
$$=\|\bar{W}^{(1)}\|_F^2+\frac{1}{d}\Big(\|\bar{W}^{(1)}\|_F^2 + Tr(\bar{W}^{(1)})^2\Big).$$
Thus, for $d$ large, we can approximate
$$\|W^{(1)}\|_F^2=\|J^T\bar{W}^{(1)}J\|_F^2 \approx \|\bar{W}^{(1)}\|_F^2 =4log^2(T)D_f\omega$$

We can take a similar approach to bounding $\|W^{(2)}\|^2_F.$ By the same arguments, this is well approximated by $\|\bar{W}^{(2)}\|^2_F.$  The matrix $\bar{W}^{(2)}$ has all $0s$ except for $\omega $ entries of $\log(c_t T^2)$ in the $T+1^{th}$ (out of $T+2$) column. Thus the squared Frobenius norm of this matrix is bounded as 
$$\|W^{(2)}\|_F^2\lessapprox \sum_{t=1}^\omega log^2(c_t T^2) \leq 4\omega log^2( T)$$
Finally, we bound $\|V^{(2)}\|_F^2.$ Note that, like $\bar{V}^{(2)},$ $V^{(2)}$ is zero in all but the final dimension, on which $J^T$ acts as an identity map. Since the only nonzero entry of $\bar{V}^{(2)}$ has value $D \leq \omega,$ we have
$$\|V^{(2)}\|_F^2 \leq \omega$$

Adding these together, we have that 
$$\|\mathbf{\Theta}\|^2 \leq $$
$$\underbrace{16(D_f+1)D_f^2}_{\|F\|_F^2}+\underbrace{4(D_f+1)}_{\|\Gamma\|_F^2} +\underbrace{4(D_f+1)}_{\|M\|_F^2 + }+\underbrace{1}_{\|V^{(1)}\|_F^2}+\underbrace{4log^2(T)(D_f\omega )}_{\|W^{(1)}\|}+\underbrace{4\omega log^2(T)}_{\|W^{(2)}\|_F^2}+\underbrace{\omega}_{\|V^{(2)}\|_F^2}$$

$$\approx 16D_f^3 +4log^2(T)(D_f\omega )$$

$$\in O\Big(D_f^3+\log(T)^2\omega D_f\Big)$$
  \end{proof}

\section{Bounds on Norms of Parameters and Intermediate Variables}

\begin{theorem} \label{parameterbounds}
Let $\mathcal{T}(X,\Theta): \mathbb{R}^{(T+1) \times (d+1)} \to \mathbb{R}$ be the 1-layer transformer as defined above. Let $f: \{0,1\}^T\to \mathbb{R}$ be a function with a Fourier spectrum as described above: with maximum degree $D_f,$ and with a sparsity $\omega$ at most $T$. Then we have the following bounds on the input and parameter norms: 
 
$$\|X\|_{2,\infty} \lessapprox  \sqrt{2},$$
$$\|XV^{(1)}\|_{2,\infty} \lessapprox 1.$$
$$\|XV^{(1)}e_{d+1}\|_{\infty} \lessapprox 1.$$
$$\|G\|_{2,\infty} \lessapprox \sqrt{2}$$
$$\|G\|_{1,\infty} \lessapprox 1$$
$$\|g_{T+1}\| \lessapprox 1$$
$$\|GV^{(2)}\|_{\infty} \lessapprox \sqrt{\omega}$$
$$\|GV^{(2)}\| \lessapprox \omega$$
$$\|V^{(2)}\|\lessapprox \sqrt{\omega}$$
$$\|B\|_{2,\infty} \lessapprox  1$$
$$\|B\|_{1,\infty}  \lessapprox 1.$$
$$\|(W^{(2)})^Tg_{T+1}\|\lessapprox 2\sqrt{\omega} \log(T)$$
$$\|G(W^{(2)})^T\|_{2,\infty}\lessapprox 2log(T) $$
$$\|W^{(2)}\|_{2} \lessapprox 2\sqrt{\omega}\log(T)$$
\end{theorem}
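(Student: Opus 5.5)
The plan is to verify each inequality by direct inspection of the explicit construction, working first in the unprojected coordinates, where every matrix is sparse and explicit, and then transferring to the projected space. For the transfer, recall that every object is either of the form $J^T\bar{(\cdot)}$ or $J^T\bar{(\cdot)}J$. The Johnson--Lindenstrauss statement from the random-projection subsection shows that for $d>8\log(T)/\epsilon_p^2$ all relevant inner products and norms are preserved up to $\epsilon_p$ with high probability. The final coordinate is untouched by $J$, since $J$ has a $1$ in the bottom-right entry and zeros elsewhere in that row and column. This is exactly why every bound is stated with $\lessapprox$: each unprojected identity becomes an approximate one after projection.

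\textbf{Inputs and first layer.} Each row $y_t$ has one positional indicator and a bit $z_t\in\{0,1\}$, so $\|y_t\|^2\le 2$. Since $\|J^Ty_t\|^2\approx\|y_t\|^2$ by JL, this gives $\|X\|_{2,\infty}\lessapprox\sqrt2$. The matrix $\bar V^{(1)}$ retains only the bit coordinate, so $y_t^T\bar V^{(1)}=z_te_{T+2}^T$; this yields the bounds on $XV^{(1)}$ in $\ell_2$, $\ell_\infty$ and the $e_{d+1}$ column. Each $b_t$ is a convex combination (softmax weights) of these rows. Hence $b_t$ is supported on the last coordinate with value $k_t/D_f\in[0,1]$ (or an average of bits for inactive rows), which gives $\|B\|_{2,\infty},\|B\|_{1,\infty}\lessapprox1$.

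\textbf{MLP and second layer.} By Theorem~\ref{mlpcorrectness} the MLP output at the final coordinate is $m(h_i)\in[0,1]$, or the trapezoid value in $[0,1]$. With the residual connection, $g_t$ equals (the projection of) $y_t$ plus a scalar in $[0,1]$ in the last coordinate. This yields $\|G\|_{2,\infty}\lessapprox\sqrt2$. I read $\|G\|_{1,\infty}\lessapprox1$ and $\|g_{T+1}\|\lessapprox1$ as bounds on the relevant (non-positional) part, or on the output position, where $z_{T+1}=0$ and the positional part has unit norm. I would flag this interpretation explicitly. $V^{(2)}$ has a single nonzero entry $D=\sum c_t$. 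Using the normalization of the coefficients, $D\le\sqrt\omega\|c\|_2\lesssim\sqrt\omega$, so $\|V^{(2)}\|\lessapprox\sqrt\omega$. Then $GV^{(2)}$ has entries $D\,g_{t,\text{last}}$, each at most $\sqrt\omega$ in absolute value. Summing over at most $\omega$ active positions (inactive ones are killed in the second-layer weighting) gives the $\ell_2$ bound $\omega$. For $W^{(2)}$, only one row is nonzero, holding the entries $\log(c_tT^2)\le2\log T$ on $\omega$ coordinates. Therefore $\|W^{(2)}\|_2=\|W^{(2)}\|_F\lessapprox2\sqrt\omega\log T$. Multiplying by $g_{T+1}$, whose selected positional coordinate equals $1$, gives $\|(W^{(2)})^Tg_{T+1}\|\lessapprox2\sqrt\omega\log T$. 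For each row $g_t$, $g_t^T(W^{(2)})^T$ picks at most one entry of size $\le2\log T$, which gives $\|G(W^{(2)})^T\|_{2,\infty}\lessapprox2\log T$.

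\textbf{Main obstacle.} I expect the hardest step to be rigorously controlling the cross terms introduced by $J$, for instance $x_t^TJ^T$-images of distinct positional vectors, which are only approximately orthogonal. These cross terms affect the row-selection arguments for $W^{(2)}$ and the claim that $b_t$ lives only in the last coordinate. My plan is to apply the JL event uniformly over all $O(T^2)$ pairs via a union bound. This bounds every spurious inner product by $\epsilon_p$, so each bound above picks up at most an additive $O(\epsilon_p\cdot\text{poly}(\omega,\log T))$ error. That error is absorbed into $\lessapprox$ once $d$ is chosen logarithmically large.
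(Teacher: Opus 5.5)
\textbf{Overall.} Your route is the paper's. You read each quantity off the sparse unprojected construction and transfer it through $J$. (The paper uses chi-square concentration of the rows of $J$ for row norms, a Gaussian-maximum estimate for entrywise bounds, and Johnson--Lindenstrauss for the rest.) Your remark that $J$ fixes the last coordinate is sharper than the paper's JL step here. It gives $V^{(1)}=J^T\bar V^{(1)}J=e_{d+1}e_{d+1}^T$ and $XV^{(1)}=z\,e_{d+1}^T$ exactly, with $z=(z_1,\dots,z_{T+1})^T$. Combined with your convex-combination argument, the $XV^{(1)}$ and $B$ bounds then hold with no $\epsilon_p$ loss. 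Two items, however, are not established.

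\textbf{First gap: $\|G\|_{1,\infty}\lessapprox 1$.} In the paper, $\|\cdot\|_{1,\infty}$ means the largest absolute \emph{entry} of the whole matrix. That is how it is used later: the Hessian estimates bound $|G_{t,i}G_{T+1,k}|$ by $\|G\|_{1,\infty}^2$ for all coordinates $i,k$, positional ones included. Your ``non-positional part'' reading leaves out exactly the projected positional block $(J_{t,j})_{j\le d}$. The missing step is that the maximum of these $(T+1)d$ Gaussians of variance $1/d$ is about $\sqrt{2\log((T+1)d)/d}=O(\epsilon_p)$ when $d>8\log T/\epsilon_p^2$. The last column, with entries in $[0,1]$, therefore dominates. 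Under a row-$\ell_1$ reading the bound would be false, since a projected positional row has $\ell_1$ norm about $\sqrt{2d/\pi}$.

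\textbf{Second gap: $\|GV^{(2)}\|\lessapprox\omega$.} Your justification fails here. Since $V^{(2)}=De_{d+1}$, the vector $GV^{(2)}=D\,G_{:,d+1}$ carries no attention weights, so the second-layer weighting cannot ``kill'' any of its entries. The paper gets $\omega$ from Cauchy--Schwarz plus the assertion that $G_{:,d+1}$ has only $\omega$ nonzero entries. But you yourself note that an inactive row has $b_{t,d+1}$ equal to an average of bits, hence a trapezoid MLP value in $[0,1]$ that need not vanish. The unweighted norm can then be of order $D\sqrt T$. You must either show the MLP output vanishes off $[\omega]$, or state the bound only for the active rows. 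The latter suffices downstream: the $2$-norm is only used to bound $e_t^T\phi'^{(2)}_{T+1}GV^{(2)}$, and the construction treats $\phi'^{(2)}_{T+1}$ as supported on $[\omega]\times[\omega]$.

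\textbf{Related issues.} The same conflation enters your $\|g_{T+1}\|$ step. The last coordinate of $g_{T+1}$ is the MLP output at the (inactive) output position, not $z_{T+1}$; the paper makes the same identification. Also, $g_t$ must be the \emph{positional} part of $y_t$ plus the MLP value. Adding all of $y_t$ puts up to $2$ in the last coordinate and gives $\sqrt5$, not $\sqrt2$.

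Finally, your cross-term error $O(\epsilon_p\,\mathrm{poly}(\omega,\log T))$ is not absorbed by $d=O(\log T)$. For example, the row of $G(W^{(2)})^T$ at an inactive position $t$ has norm about $|\sum_{s\le\omega}\log(c_sT^2)\langle J_{t,:},J_{s,:}\rangle|$, typically of size $2\log T\sqrt{\omega/d}$. Keeping it below $2\log T$ uniformly in $t$ needs $d\gtrsim\omega\log T$. The paper hides the same issue by treating $\epsilon_p$ as arbitrarily small.
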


\begin{proof}
$\|X\|_{2,\infty}$ is the maximum row 2-norm. $X=YJ$ is a $((T+1) \times (d+1))$ matrix where each entry in the $t^{th}$ row is a gaussian R.V. with standard deviation $\frac{1}{\sqrt{d}},$ with the exception of the final column, which stores the bit value. Therefore the 2-norm of each row of $YJ$ is upper bounded by
$$\|Y_{t,:}J\| = \sqrt{z_t^2 + \sum_{j=1}^d \Big(\sum_{i=1}^dY_{t,i}J_{i,j}}\Big)^2=\sqrt{z_t^2 + \sum_{j=1}^d J_{t,j}^2}$$
$$\leq \sqrt{1 + \sum_{j=1}^d J_{t,j}^2}$$
Note that $\sum_{j=1}^d J_{t,j}^2$ is a chi-squared R.V. with $d$ degrees of freedom. By the standard chi-squared tail bound \citep{laurentmassart2000} (see also \autoref{chidistributedboundlemma} in the supplementary material), with probability $1-\delta, $ such an R.V. is upper bounded as

$$\|\epsilon_{t,:}\|\leq \frac{1}{\sqrt{d}} \sqrt{d+2\sqrt{dlog(\frac{1}{\delta})}+2log(\frac{1}{\delta})}$$

To upper bound the maximum of $T$ such variables, we apply a union bound:
$$\max_{t \in T}\|\epsilon_{t,:}\|\leq \frac{1}{d} \Big(d+2\sqrt{dlog(\frac{T}{\delta})}+2log(\frac{T}{\delta})\Big)$$
Note that per our definition of $d,$ $d>\frac{8log(T)}{\epsilon_p^2}.$ Therefore, with probability at least $1-\delta,$
$$\max_{t \in T}\|\epsilon_{t,:}\|\leq 1+2\epsilon_p\sqrt{\frac{\log(\frac{T}{\delta})}{8 \log(T)}}+2\epsilon_p^2 \frac{\log(\frac{T}{\delta})}{8log(T)}$$
$$= 1+\frac{\epsilon_p}{\sqrt{2}}\sqrt{1-\frac{\delta}{ \log(T)}}+\frac{\epsilon_p^2}{4} \Big(1-\frac{\delta}{\log(T)}\Big)$$
$$\lessapprox 1+\frac{\epsilon_p}{\sqrt{2}}+\frac{\epsilon_p^2}{4}$$
Since we can make $\epsilon_p$ arbitrarily small, the above expression approach 1, and then we can approximate $\|X\|_{2,\infty} \lessapprox \sqrt{1+1} = \sqrt{2}$ (in the case that the bit value $z_t$ is 1).  To approximately upper bound $\|X\|_{1,\infty},$ the maximum element of $YJ,$ we simply consider the composition of the matrix $X,$ which has the upper left $(T\times d) $ sub-matrix as gaussian random variables with standard deviation $\frac{1}{\sqrt{d}},$ and then a final column of the bit values. Since $d\in O(\log(T)),$ the probability of all terms being less than 1 is overwhelmingly likely. The expectation of the maximum of m random variables is known to be $\sigma\sqrt{2log(m)} + o(1)$ With $\sigma=\frac{1}{\log(T)}$ and $m=Tlog(T),$ this becomes $\frac{1}{\log(T)\sqrt{2log(Tlog(T))}}\to 0$ for large $T.$ Thus the expectation is in $o(1).$ Therefore we approximate $\|X\|_{1,\infty} \lessapprox 1.$

To bound $\|XV^{(1)}\|_{2,\infty},$ note that $Y\bar{V}^{(1)}$ is a matrix similar to Y, in that the final column contains all of the bit values of $Y$, but all previous columns which held the positional encodings are $0$. Therefore, $Y\bar{V}^{(1)}$ could have a maximum row 2-norm of $1.$. The matrix $XV^{(1)}=YJJ^T\bar{V}^{(1)}$ approximates $Y\bar{V}^{(1)},$ since the error from our random projections approximation $\epsilon_p$ can be made arbitrarily small. In particular, 
$$\|XV^{(1)}\|_{2,\infty}=\|YJJ^T\bar{V}^{(1)}\|_{2,\infty}\leq\|Y\bar{V}^{(1)}\|_{2,\infty}+\|YJJ^T\bar{V}^{(1)}-Y\bar{V}^{(1)}\|_{2,\infty} $$
$$\leq \|Y\bar{V}^{(1)}\|_{2,\infty}+\|Y\||_{2,\infty}|JJ^T\bar{V}^{(1)}-\bar{V}^{(1)}\|_2\leq\|Y\bar{V}^{(1)}\|_{2,\infty}+\epsilon_p\|Y\|_{2,\infty}\|\bar{V}^{(1)}\|_2,$$
$$\leq \|Y\bar{V}^{(1)}\|_{2,\infty}+\sqrt{2}\epsilon_p$$
where the second to last inequality follows from JLL, and the last step follows from our bound on $\|Y\|_{2,\infty},$ and the fact that $\bar{V}^{(1)}$ has only a single nonzero value, which is 1, and therefore its maximum singular value can be no larger than 1. Thus,
$$\|XV\|_{2,\infty} \lessapprox 1$$
Similarly, we consider the $\|XV^{(1)}e_{d+1}\|_{\infty},$ which is approximated by $\|Y\bar{V}^{(1)}e_{d+1}\|_{\infty}$ due to JLL. the maximum value in the final column of $\bar{Y}V^{(1)},$ as described above, is 1. Thus 
$$\|XV^{(1)}e_{d+1}\|_{\infty}\lessapprox 1$$
We now consider the norms relating to our MLP output matrix, $G\in\mathbb{R}^{(T+1)\times(d+1)}.$  Let  $\bar{H}\in\mathbb{R}^{(T+1)\times (T+2)}$ be the full-dimension MLP output matrix using the full-dimensional parameter matrices. The maximum column $2-$norm of $\bar{H}$ is at most $\sqrt{T+1},$ since there could be $1s$ in any element of the final column. The maximum row 2-norm is $\sqrt{2}$ because there are at most 2 nonzero bits per row. The output of the dimension reduced transformer $H\in\mathbb{R}^{(T+1)\times (T+2)}$ can be made arbitrarily close to $\bar{H},$ and thus we have $$\|H\|_{1,2} \lessapprox \sqrt{T+1}$$
$$\|H\|_{2,\infty}\lessapprox \sqrt{2}$$ To bound these norms for the dimension-reduced MLP output matrix $G,$ we note that $G=HJ.$ and that the overall structure of $H$ is exactly the same as that of $Y$ above: each row has two possible nonzero elements: one for the positional encoding, and another variable bit value in the $T+2^{th}$ column. therefore we can apply the exact same arguments to conclude that 
$$\|G\|_{2,\infty}=\|HJ\|_{2,\infty}\lessapprox \|H\|_{2,\infty} = \sqrt{2}$$
We can apply this argument once more to approximately upper bound the maximum element of $G:$
$$\|G\|_{1,\infty}  =  \|HJ\|_{1,\infty} \lessapprox \|H\|_{1,\infty} = 1.$$
'
We will also find it useful to bound the 2-norm of the final post-MLP activation, $\|g_{T+1}\|.$ Note that the $d+1^{th}$ dimension for the output position $T+1$ is always $0$ (since the bit value for this position is fixed at $0$). Therefore $\|g_{T+1}\|$ is a chi-distributed variable with $d$ degrees of freedom and standard deviation $\frac{1}{\sqrt{d}}$. Thus with high probability, for large $d$ it becomes tightly concentrated around $1.$ We thus approximate $\|g_{T+1}\| \lessapprox 1.$

We consider the maximum row 2-norm of the matrix $GV^{(2)}.$ We can argue the same way as above in our bound on $\|XV^{(1)}\|_{2,\infty} ,$  that $\|GV^{(2)}\|_{\infty} =\|HJJ^T\bar{V}^{(2)}\|_{\infty}$ approximates $\|H\bar{V}^{(2)}\|_{\infty}$ arbitrarily well. Since $H\bar{V}^{(2)}$ is a vector containing a copy of the final column of $H,$ which are the MLP outputs in $\{0,1\},$ but scaled by a factor of $D,$ we can conclude that $\|H\bar{V}^{(2)}\|_{\infty} \leq D$. Therefore we conclude that 
$$\|GV^{(2)}\|_{\infty} \lessapprox D $$
Now, note that $$D=\sum_{t=1}^{\omega} c_t \leq \sum_{t=1}^{\omega} |c_t|\leq \sqrt{\omega} $$
Thus 
$$\|GV^{(2)}\|_{\infty} \lessapprox\sqrt{\omega} $$
Noting that the final column of $G$ is nonzero for only $\omega$ entries, we use Cauchy-Schwarz to conclude that 
$$\|GV^{(2)}\| \lessapprox \omega $$
Bounding $\|V^{(2)}\|$ is trivial, since $\|V^{(2)}\|=\|J^T\bar{V}^{(2)}\|\leq D\|J_{T+2,:}\| \leq D,$ where we have used the fact that the final, $T+2^{th}$ column of $J$ is $e_{d+1}^T,$ which has norm 1. Thus $\|V^{(2)}\|\leq D\leq \sqrt{\omega}.$

Note that the matrix $U$ has a similar overall structure as both $Y$ and $H$ above, except that due to the lack of residual connection after the first attention layer, $B$ does not contain any positional encodings. Following the similar arguments with this assumption, we can bound 
$$\|B\|_{2,\infty}=\|UJ\|_{2,\infty}\lessapprox \|U\|_{2,\infty} = 1$$
$$\|B\|_{1,\infty}  =  \|UJ\|_{1,\infty} \lessapprox \|U\|_{1,\infty} = 1.$$

Similarly, JLL says that $\|G(W^{(2)})^T\|_{2,\infty}$ approximates $\|H(\bar{W}^{(2)})^T\|_{2,\infty}$ arbitrarily well. Since the first $T$ columns and final column of $(\bar{W}^{(2)})^T$ are $0s,$ the same is true of $H(\bar{W}^{(2)})^T.$ In the penultimate column of this matrix, we have the vector $(\log(c_1T^2),\dots,\log(c_{\omega}T^2) ,0,\dots,0).$ Each row therefore has a 2-norm that is bounded by $\log(T^2)=2log(T).$ Thus $$\|G(W^{(2)})^T\|_{2,\infty}\lessapprox 2log(T)$$Following similar arguments, it follows from JLL that   $\|(W^{(2)})^Tg_{T+1}\|$ approximates $\|(\bar{W}^{(2)})^Th_{T+1}\|$ arbitrarily well. Recall that the matrix $(\bar{W}^{(2)})^T$ is only nonzero in the penultimate column, which has $\log(c_t T^2) $ in its first $\omega$ entries. $h_{T+1}$ is only nonzero in its positional encoding bit, which is the penultimate bit, and therefore the vector $(W^{(2)})^Tg_{T+1}=(\log(c_1T^2),\dots,\log(c_{\omega}T^2) ,0,\dots,0).$ It follows that $\|(W^{(2)})^Tg_{T+1}\|\leq \sqrt{\sum_{t=1}^{\omega} \log(c_t T^2)^2}\leq\sqrt{\sum_{t=1}^{\omega} 4log( T)^2}\leq 2\sqrt{\omega} \log(T)$

$\|W^{(2)}\|_{2}=\|J^TW^{(2)}J\|_{2},$ the operator norm of $W^{(2)},$ approximates $\|\bar{W}^{(2)}\|_{2}.$ Recall that $\bar{W}^{(2)}$ is nonzero only in its penultimate row, which has $\log(c_t T^2)$ in the first $\omega$ columns. Let $l:=(\log(c_1T^2),\dots,\log(c_{\omega}T^2),0,\dots,0)$ be the vector representing this nonzero row. it is easy to show that $(\bar{W}^{(2)})^T\bar{W}^{(2)} = ll^T.$ There is one nonzero eigenvalue of this rank-1 outer product, which is $\|l\|^2.$ Thus the maximum singular value is $\|l\|.$ We can easily bound this using Cauchy-Schwarz to conclude that $\|\bar{W}^{(2)}\|_{2} \leq 2\sqrt{\omega}\log(T).$
\end{proof}
\section{Theoretical Reasons why our Construction is not Pareto Optimal}
\label{non-pareto}
The low--sharpness interpolator assumption implicitly relies on the explicit construction not being Pareto--optimal with respect to (parameter norm, sharpness). If the construction were Pareto--optimal, one could not guarantee the existence of an interpolator with both strictly smaller norm and strictly smaller sharpness.

There is strong reason to believe that our construction is far from Pareto--optimal. Empirically (Figs.~\ref{fig:norm comparison}, \ref{fig:sharpness comparison}), a transformer with the same architecture trained on the same target functions achieves substantially lower norm and lower sharpness than the explicit construction. Beyond this empirical evidence, the construction admits strict analytic improvements.

\paragraph{(1) Structural domination via rescaling.}
Recall that the MLP block is given by
\[
g_t = F^T (M^T b_t + \Gamma)_+ .
\]
For any scalar $s>0$, define the reparameterization
\[
M^{(s)} = sM, 
\qquad 
\Gamma^{(s)} = s\Gamma,
\qquad 
F^{(s)} = \frac{1}{s}F,
\]
with all other matrices unchanged.

Since ReLU is positively homogeneous, $(su)_+ = s(u)_+$, we obtain
\[
(F^{(s)})^T\big((M^{(s)})^T b_t + \Gamma^{(s)}\big)_+
=
F^T(M^T b_t + \Gamma)_+,
\]
and therefore $\mathcal T(X,\Theta^{(s)}) = \mathcal T(X,\Theta)$ for all inputs $X$.

Under this reparameterization, the MLP contribution to the squared Frobenius norm becomes
\[
\|M^{(s)}\|_F^2 + \|\Gamma^{(s)}\|_F^2 + \|F^{(s)}\|_F^2
=
s^2 A + \frac{1}{s^2} B,
\]
where $A=\|M\|_F^2+\|\Gamma\|_F^2$ and $B=\|F\|_F^2$.  
The function $s^2A + B/s^2$ is strictly convex and minimized when the two terms are equal. Unless the original construction already satisfies $A=B$, there exists $s\neq 1$ that strictly decreases the parameter norm.

The same scaling applies to the gradients:
\[
\nabla_M \mapsto \frac{1}{s}\nabla_M,
\qquad
\nabla_\Gamma \mapsto \frac{1}{s}\nabla_\Gamma,
\qquad
\nabla_F \mapsto s\,\nabla_F.
\]
Hence the MLP contribution to the squared gradient norm becomes
\[
\frac{1}{s^2}G + s^2 H,
\]
where $G=\|\nabla_M\|_F^2+\|\nabla_\Gamma\|_F^2$ and 
$H=\|\nabla_F\|_F^2$.  
Again this expression is minimized when the two terms are balanced.  
In our construction, both the parameter magnitudes and the gradient bounds indicate that the second--layer MLP matrix contributes more heavily than the first--layer matrices. A single choice of $s>1$ therefore rebalances both quantities simultaneously, yielding a strict reduction in both norm and sharpness while leaving the realized function unchanged.

\paragraph{(2) Interpolators need only fit the training set.}
More fundamentally, the PAC--Bayes argument requires only an interpolator on the training set, not a globally exact representation of the target function. The explicit construction represents the function for all $x\in\{0,1\}^T$, including inputs such as the all--zeros or all--ones vectors. For training sets of size polynomial in $T$, such inputs occur with exponentially small probability.

Consequently, one may remove or attenuate the corresponding components of the MLP without affecting the training loss with high probability. This produces an interpolator that matches the training data exactly while strictly reducing the parameter norm and preserving sharpness on the training set.

Taken together with the empirical results, these structural and probabilistic considerations strongly suggest that the explicit construction is not Pareto--optimal in $(\text{norm},\text{sharpness})$ space, supporting the plausibility of the low--sharpness interpolator assumption.

\section{PAC-Bayes Bound Derivation}
\subsection{Average Sharpness Approximation}
\begin{theorem}\label{average sharpness approximation appendix}
Suppose our transformer $\mathcal{T}(X,\Theta+\epsilon)$ approximately represents $f$ using a parameter set $\Theta+\epsilon$, where $epsilon$ a normal perturbation $\epsilon\in\mathcal{N}(0,\sigma^2)^{|\Theta|},$ where $|\Theta|$ is the parameter count. Define $\hat{L}(f_{\mathbf{\Theta}})$ to be the empirical (training) average quadratic loss of the transformer with parameters $\Theta$ evaluated against the target function $f.$
The expected empirical loss under $\epsilon$ decomposes exactly as:
\begin{align*}
    \mathbb{E}_{\epsilon\sim \mathcal{N}(0,\sigma^2)^n}\Big[\hat{L}(f_{\Theta+\epsilon})\Big] =
    \frac{\sigma^2}{2}Tr\Big(\nabla^2 \big[\hat{L}(f_{\Theta})\big]\Big)
    + \frac{1}{2}\mathbb{E}_{\epsilon}\Big[\epsilon^T\Big(\nabla^2\big[\hat{L}(f_{\Theta+\zeta(\epsilon)})\big]
    - \nabla^2\big[\hat{L}(f_{\Theta})\big]\Big)\epsilon\Big]
\end{align*}
where $\zeta(\epsilon)\in \mathbb{R}^n$ are the ``remainder perturbations'', i.e. $\zeta_i(\epsilon) \in [0,\epsilon_i]$ is the Lagrange remainder point making the second-order Taylor expansion exact, conditional on each Gaussian perturbation $\epsilon_i$. Note that the classical Taylor expansion and Hessian are well-defined here because $\sigma$ is chosen small enough that no ReLU activations change sign under perturbation with high probability (see Appendix~\ref{mlpperturbationlemma}), so the network remains within a single linear region where it is smooth.
\end{theorem}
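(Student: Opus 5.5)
The plan is a second-order Taylor expansion of the empirical loss around the construction $\Theta$, with the Lagrange remainder, followed by taking expectations over the Gaussian perturbation.

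\textbf{Step 1: smoothness along the perturbation path.} Fix a realization of $\epsilon$. Consider the scalar function $\phi(s)=\hat{L}(f_{\Theta+s\epsilon})$ on $s\in[0,1]$. By the remark in the statement (and the MLP perturbation lemma it cites), $\sigma$ is small enough that with high probability no ReLU changes sign along the segment. On that event the network is a composition of softmax, linear maps and a fixed linear piece of the MLP, so $\phi$ is $C^2$. I will state the identity on this event. The complementary event can either be absorbed into the expectation, which is where the $P(\sigma)$ analysis is needed anyway, or handled by noting that it has vanishing probability.

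\textbf{Step 2: Taylor expansion.} Taylor's theorem with the Lagrange remainder gives some $\tau\in[0,1]$ with
\[
\phi(1)=\phi(0)+\phi'(0)+\tfrac12\phi''(\tau).
\]
Write $\zeta(\epsilon)=\tau\epsilon$, so that each coordinate satisfies $\zeta_i\in[0,\epsilon_i]$ (an interval read with either orientation). This yields
\[
\hat{L}(f_{\Theta+\epsilon})=\hat{L}(f_\Theta)+\nabla\hat{L}(f_\Theta)^T\epsilon+\tfrac12\epsilon^T\nabla^2\hat{L}(f_{\Theta+\zeta(\epsilon)})\epsilon .
\]
Since $\Theta$ interpolates the training set exactly, $\hat{L}(f_\Theta)=0$. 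Because the loss is a sum of squared residuals that all vanish at $\Theta$, the gradient $\nabla\hat{L}(f_\Theta)=\tfrac{2}{m}\sum_i(\mathcal{T}-f)\nabla\mathcal{T}$ is also zero. This holds up to the large-$T$ approximation in the construction; I would note that if $\Theta$ is only approximately exact, the linear term still vanishes in expectation because $\mathbb{E}[\epsilon]=0$.

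\textbf{Step 3: add and subtract the unperturbed Hessian, then take expectations.} Rewrite the quadratic term as
\[
\tfrac12\epsilon^T\nabla^2\hat{L}(f_\Theta)\epsilon+\tfrac12\epsilon^T\big(\nabla^2\hat{L}(f_{\Theta+\zeta(\epsilon)})-\nabla^2\hat{L}(f_\Theta)\big)\epsilon .
\]
The matrix in the first term is deterministic. Hutchinson's identity $\mathbb{E}[\epsilon^TA\epsilon]=\sigma^2\,Tr(A)$ for $\epsilon\sim\mathcal{N}(0,\sigma^2 I)$ therefore turns it into $\frac{\sigma^2}{2}Tr(\nabla^2\hat{L}(f_\Theta))$. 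The second term is kept inside the expectation, since $\zeta$ depends on $\epsilon$ and Hutchinson cannot be applied to it. This gives exactly the claimed decomposition.

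\textbf{Main obstacle.} The delicate point is the measurability and integrability of the remainder term. Choosing $\tau$ measurably in $\epsilon$ is not an issue, because the remainder can equivalently be written as $\phi(1)-\phi(0)-\phi'(0)$, which is manifestly measurable. What needs care is that this expectation is finite and that the ReLU-sign event is respected. I would justify finiteness by Gaussian tails together with the polynomial growth of $\hat{L}$ in the parameters on bounded inputs.
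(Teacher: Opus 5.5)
Your proposal is correct and follows the paper's proof essentially step for step: a Lagrange-form second-order Taylor expansion about the interpolating construction, vanishing of the zeroth- and first-order terms, adding and subtracting the unperturbed Hessian, and applying Hutchinson's identity $\mathbb{E}[\epsilon^T A\epsilon]=\sigma^2\,Tr(A)$ only to the deterministic part. Your execution is slightly more careful than the paper's in three places: the segment parametrization $\zeta(\epsilon)=\tau\epsilon$ is the correct form of the multivariate Lagrange remainder, you obtain $\nabla\hat{L}(f_\Theta)=0$ from the vanishing residuals rather than by assuming a local minimum, and you explicitly flag the ReLU-flip event and integrability, which the paper leaves implicit. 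On the flip event the Lagrange form with a pointwise Hessian can genuinely fail, but that caveat comes from the statement itself (which asserts an exact identity while only guaranteeing no flips with high probability) and is not a defect of your argument.
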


\begin{proof}
We start off by taking a second-order Taylor expansion of $L(f_{\mathbf{\Theta+\epsilon}})$ with the Lagrange form of the remainder:
$$\hat{L}(f_{\Theta+\epsilon})-\hat{L}(f_{\Theta}) = \nabla \hat{L}(f_{\Theta})\epsilon+\frac{1}{2}\epsilon^T\nabla^2 \big[\hat{L}(f_{\Theta+\zeta})\big]\epsilon ,$$
where $\zeta(\epsilon)\in \mathbb{R}^n$ are the "remainder perturbations", i.e. $\zeta_i \in [0,\epsilon_i]$ is some small perturbation of the parameters around the true minimum that makes the Taylor expansion exact, conditional on each Gaussian perturbation $\epsilon_i$. Since the unperturbed transformer exactly represents the target function, we have $\hat{L}(f_{\mathbf{\Theta}})=0$ Assuming our function has attained a local minimum, we can set the first term on the RHS above to 0. Therefore, we can simplify the equation to:
$$\hat{L}(f_{\Theta+\epsilon})=\frac{1}{2}\epsilon^T\nabla^2 \big[\hat{L}(f_{\Theta+\zeta})\big]\epsilon $$
We now add and subtract the Hessian at the unperturbed transformer:
$$\hat{L}(f_{\Theta+\epsilon})=\frac{1}{2}\epsilon^T\nabla^2 \big[\hat{L}(f_{\Theta})\big]\epsilon+\frac{1}{2}\epsilon^T\Big(\nabla^2 \big[\hat{L}(f_{\Theta+\zeta})\big]\ -\nabla^2 \big[\hat{L}(f_{\Theta})\big]\Big)\epsilon $$
Taking the expectation of both sides with respect to $\epsilon,$ we get
$$\mathbb{E}_{\epsilon \sim \mathcal{N}(0,\sigma^2)^{|\Theta|}}\Big[\hat{L}(f_{\Theta+\epsilon})\Big]$$
$$=\frac{1}{2}\mathbb{E}_{\epsilon \sim \mathcal{N}(0,\sigma^2)^{|\Theta|}}\Big[\epsilon^T\nabla^2 \big[\hat{L}(f_{\Theta})\big]\epsilon\Big]+\frac{1}{2}\mathbb{E}_{\epsilon \sim \mathcal{N}(0,\sigma^2)^{|\Theta|}}\Big[\epsilon^T\Big(\nabla^2 \big[\hat{L}(f_{\Theta+\zeta})\big]\ -\nabla^2 \big[\hat{L}(f_{\Theta})\big]\Big)\epsilon\Big]$$
Since $\nabla^2[\hat{L}(f_\Theta)]$ is a fixed matrix independent of $\epsilon$, Hutchinson's Trace Estimator applies to the first term: $\frac{1}{2}\mathbb{E}_\epsilon[\epsilon^T\nabla^2[\hat{L}(f_\Theta)]\epsilon] = \frac{\sigma^2}{2}Tr\Big(\nabla^2[\hat{L}(f_\Theta)]\Big)$. The second term cannot be further simplified via Hutchinson since $\zeta(\epsilon)$ depends on $\epsilon$. This yields:
$$\mathbb{E}_{\epsilon \sim \mathcal{N}(0,\sigma^2)^{|\Theta|}}\Big[\hat{L}(f_{\Theta+\epsilon})\Big] = \frac{\sigma^2}{2}Tr\Big(\nabla^2\big[\hat{L}(f_{\Theta})\big]\Big) + \frac{1}{2}\mathbb{E}_{\epsilon}\Big[\epsilon^T\Big(\nabla^2\big[\hat{L}(f_{\Theta+\zeta(\epsilon)})\big] - \nabla^2\big[\hat{L}(f_{\Theta})\big]\Big)\epsilon\Big]$$
\end{proof}

\subsection{Details of PAC-Bayes Bound Derivation}
\label{pac-bayes discussion}
Recall that $L_f(X,\Theta)$ is the  loss for our transformer learning the function $f$ on input $X.$ Define $L(f_{\mathbf{\Theta}})$ to be the global loss, while $\hat{L}(f_{\mathbf{\Theta}})$ is the empirical loss on our training set of size $m.$ Importantly, the PAC-Bayes bounds we invoke hold simultaneously for all posteriors $\mathcal{Q}$ with high probability over the training sample, so they remain valid when $\mathcal{Q}$ is chosen data-dependently---as in our case, where the posterior is a Gaussian centered at the learned interpolator $\hat{\Theta}$ \citep{alquier2021pacbayes}.

The original bound of Macallester (1998) claimed that starting with any data-independent prior distribution $P$ over hypotheses, and a data-dependent (specifically, the outcome of training a neural network) posterior distribution $Q,$ for any $\delta>0$, the following holds with probability at least $1-\delta:$

$$\mathbb{E}_{\Theta \sim Q}[{L}(f_{\Theta})] \leq \mathbb{E}_{\Theta\sim Q}[\hat{L}(f_{\Theta})] + \sqrt{\frac{KL(Q\|P)+ln(\frac{m}{\delta})}{2(m-1)}}$$

Neyshabur et al.\ (2017) instantiate McAllester's bound with a Gaussian posterior $Q = \mathcal{N}(\Theta, \sigma^2 I)$, writing $f_{\Theta+\epsilon}$ for the posterior distribution of predictors where $\epsilon \sim \mathcal{N}(0,\sigma^2)^n.$ Using $KL(Q\|P) = \|\Theta\|^2/(2\sigma^2)$ and decomposing the test loss into training loss plus sharpness gap gives:
$$\mathbb{E}_{\epsilon \sim \mathcal{N}(0,\sigma^2)^n}\big[L(f_{\Theta+\epsilon})\big] \leq \hat{L}(f_{\Theta}) + \underbrace{\mathbb{E}_{\epsilon \sim \mathcal{N}(0,\sigma^2)^n}\big[\hat{L}(f_{\Theta+\epsilon})\big]-\hat{L}(f_{\Theta})}_{sharpness} +\sqrt{\frac{1}{m}\Big(\underbrace{\frac{\|\Theta\|^2}{2\sigma^2}}_{norm}+\ln\frac{2m}{\delta}\Big)}$$
where $0<\delta<1$ is our probability of the bound being violated, $m$ is the number of training points, and $\sigma$ is the standard deviation of our parameter perturbation (assumed to be equal for all parameters in this version  of the bound). Note that the prior distribution $P$ is assumed to be a centered Gaussian with the same noise variance as the posterior distribution $Q$ around its mean, $\Theta$.

Our case is slightly different from this, as we are not using a binary loss bounded within $[0,1]$.Rather, we consider functions $f$ on a boolean domain, but with a real-valued range, and a natural choice of our loss is the quadratic loss, i.e. $L_f(X,\Theta)=(T(\Theta,X)-f(X))^2.$ Several variants of the original PAC-Bayes bounds by McAllester have been derived that allow us to adapt Neyshabur's perturbation bound to our present situation.

\cite{alquier2021pacbayes} provide a bound intended for losses that are possibly unbounded, but have a tail distribution that is sub-gaussian. Mathematically, we assume that for some positive constant $\Sigma,$ our loss function satisfies:
$$\mathbb{E}[e^{t[l(\Theta,x,f)-\mathbb{E}[l(\Theta,x,f)]]}]\leq e^{\frac{\Sigma^2 t^2}{2}}.$$
Then with probability at least $1-\delta, $ the following bound holds:

$$\mathbb{E}_{\Theta \sim Q}[{L}(f_{\Theta})] \leq \mathbb{E}_{\Theta \sim Q}[\hat{L}(f_{\Theta})] + \frac{\lambda \Sigma^2}{2m}+\frac{KL(Q\|P)+\ln\frac{1}{\delta}}{\lambda}$$
In order to estimate the sub-gaussian constant for a trained model, we calculate the MGF of our validation loss distribution, and find the lowest $\Sigma$ such that the Gaussian MGF still dominates the true loss MGF. In our case, we found that for models trained to a loss of $0.01$ on functions of various degrees and sparsities, $\Sigma$ generally falls in the range $[0.01,0.1]$. Instantiating with our Gaussian perturbation posterior $Q = \mathcal{N}(\Theta,\sigma^2 I)$ and prior $P = \mathcal{N}(0,\sigma^2 I)$, so that $KL(Q\|P)=\frac{\|\Theta\|^2}{2\sigma^2}$:
$$\mathbb{E}_{\epsilon \sim \mathcal{N}(0,\sigma^2)^n}\Big[L(f_{\Theta+\epsilon})\Big] \leq \mathbb{E}_{\epsilon \sim \mathcal{N}(0,\sigma^2)^n}\Big[\hat{L}(f_{\Theta+\epsilon})\Big] + \frac{\lambda \Sigma^2}{2m}+\frac{\frac{\|\Theta\|^2}{2\sigma^2}+\ln\frac{1}{\delta}}{\lambda}$$
Applying Theorem~\ref{average sharpness approximation appendix} to expand $\mathbb{E}_\epsilon\big[\hat{L}(f_{\Theta+\epsilon})\big]$:
$$\mathbb{E}_{\epsilon \sim \mathcal{N}(0,\sigma^2)^n}\Big[L(f_{\Theta+\epsilon})\Big] \leq \frac{\sigma^2}{2}Tr\Big(\nabla^2\big[\hat{L}(f_{\Theta})\big]\Big) + \frac{1}{2}\mathbb{E}_{\epsilon}\Big[\epsilon^T\Big(\nabla^2\big[\hat{L}(f_{\Theta+\zeta(\epsilon)})\big] - \nabla^2\big[\hat{L}(f_{\Theta})\big]\Big)\epsilon\Big] + \frac{\lambda \Sigma^2}{2m}+\frac{\frac{\|\Theta\|^2}{2\sigma^2}+\ln\frac{1}{\delta}}{\lambda}$$
Bounding $\|\Theta\|_F^2 \leq L(\omega,D_f,T)$ via Theorem~\ref{norm bounds appendix} then gives us the generalization bound:
$$\mathbb{E}_{\epsilon \sim \mathcal{N}(0,\sigma^2)}[{L}(f_{\Theta+\epsilon})] \leq \sigma^2\Big(G_u(\omega,D_f) + \frac{1}{2}P(\sigma)\Big)   + \frac{\lambda \Sigma^2}{2m}+\frac{\frac{L(\omega, D_f,T)}{2\sigma^2}+ln\frac{1}{\delta}}{\lambda}$$

minimizing the RHS with respect to $\lambda, $ we obtain 
$$\frac{\Sigma^2}{2m}=\frac{\frac{L(\omega, D_f,T)}{2\sigma^2}+ln\frac{1}{\delta}}{\lambda^2}\implies \lambda =\sqrt{ \frac{2m}{\Sigma^2}\Big(\frac{L(\omega, D_f,T)}{2\sigma^2}+ln\frac{1}{\delta}\Big)}$$
$$\implies \frac{\lambda \Sigma^2}{2m}+\frac{\frac{L(\omega, D_f,T)}{2\sigma^2}+ln\frac{1}{\delta}}{\lambda} =2\sqrt{ \frac{\Sigma^2}{2m}\Big(\frac{L(\omega, D_f,T)}{2\sigma^2}+ln\frac{1}{\delta}\Big)}$$
Hence our bound becomes:

$$\implies\mathbb{E}_{\epsilon \sim \mathcal{N}(0,\sigma^2)}[{L}(f_{\Theta+\epsilon})] \leq \sigma^2\Big(G_u(\omega,D_f) + \frac{1}{2}P(\sigma)\Big)  + 2\sqrt{ \frac{\Sigma^2}{2m}\Big(\frac{ L(\omega, D_f,T)}{2\sigma^2}+ln\frac{1}{\delta}\Big)},$$
The natural next step in simplifying our bound is to optimize over $\sigma.$  If we were to momentarily ignore the perturbation term $P(\sigma)$ and the $ln\frac{1}{\delta}$ term, we would end up with the following formula for the optimal $\sigma^*:$
$$\sigma^*=\Big(\frac{\Sigma^2L(\omega, D_f,T)}{4mG_u(\omega,D_f)^2 }\Big)^{\frac{1}{6}}$$
If we use the full bound including the perturbation term, the optimal $\sigma*$ ends up being smaller than without the perturbation term included. We return to the analysis of the optimization of $\sigma$ and tightness of the full bound after a brief digression that continues with the analysis of this truncated bound in order to contextualize our overall approach against the behavior of previous bounds for transformers on boolean domains. 

\begin{figure*}[tbh]
    \includegraphics[width=\textwidth]{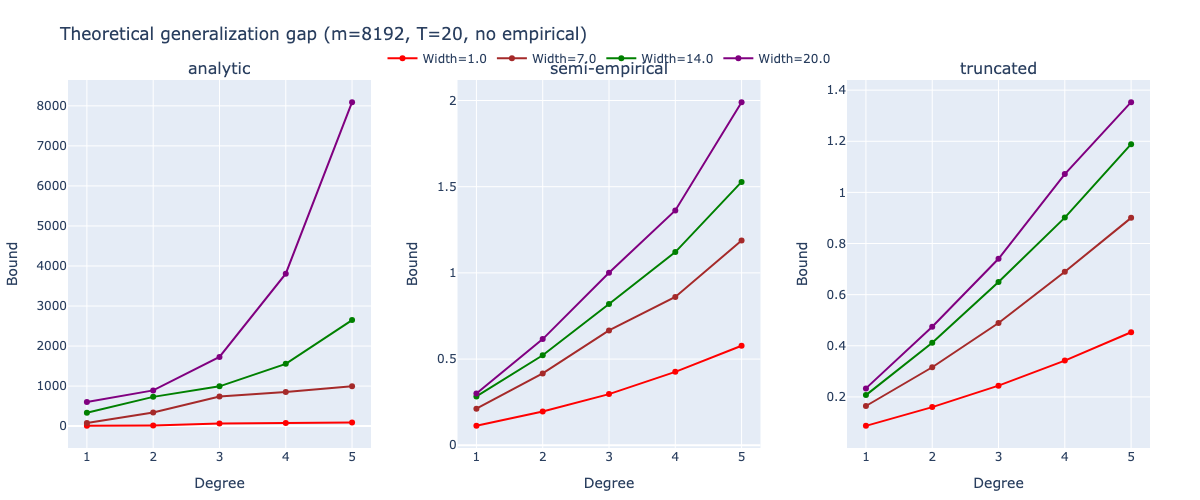}
    \caption{This plot shows a comparison of the bound with analytic worst-case bound on $P(\sigma)$ (left), the bound with the perturbed sharpness term $P(\sigma)$ simply truncated (middle), and the bound with the empirical worst case estimate of $P(\sigma)$ (right). Note that the fully analytic bound is much looser than the semi-analytic and truncated bounds, which are relatively close to each other. Note also that the fully analytic bound introduces a stronger dependency on both the degree $D_f$  and width $\omega$ than the other bounds, indicating that these strong dependencies are likely an artifact of our worst-case analysis of $P(\sigma).$ All bounds are optimized over $\sigma$ separately for each $(D_f,\omega)$ combination.   }
    \label{fig:boundcomparison}
\end{figure*}

\subsection{Comparison with Norm-Based Bound of Edelman et. al. (2022)}
\label{comparison}
If we were to plug the optimal $\sigma$  into our bound (still excluding the perturbation term), it would yield an optimized bound that grows like $O(m^{-\frac{1}{3}})$.  If we compare this to a purely norm-based capacity bound such as in \cite{edelman2022inductive} for sparse boolean functions, our norm term alone shares the $O(m^{-\frac{1}{2}})$ dependency. However, the sharpness term makes our bound $O(m^{-\frac{1}{3}}) $ overall, and it is therefore asymptotically dominated by the norm bound. As with many PAC-Bayes style bounds, the hope is that the \textit{local }flatness of the loss landscape around a learned minimum (low $G_u$) can be exploited in order to obtain relatively small sharpness constants even for large $\sigma,$ keeping both the sharpness and norm terms numerically small for small $m$. While offering a strong baseline, norm-based bounds depend exclusively on global complexity parameters such as parameter norms, and are independent of the curvature of the loss landscape. 

We now instantiate our semi-analytic generalization bound and compare it directly with the Edelman-style covering-number bound under identical architectural assumptions.  We consider a target function on $T$ bits that is a
linear combination of a tiling of $2$-parities.  In this case
\[
D_f = 2, \qquad \omega = \frac{T}{2}.
\]
This function is not Boolean-valued, but both our PAC-Bayes analysis and the norm-based analysis of
\cite{edelman2022inductive} apply unchanged.  Throughout this subsection we fix
\[
T = 20, 
\qquad 
\Sigma = 0.01,
\qquad
m = 10^6,
\qquad
\delta = 0.05,
\]
and we use the same architectural budgets for both bounds: a $1$-head, $2$-layer transformer without layer
normalization or random projections, so that the total hidden dimension including the $T+1$ positional encoding dimensions and the aggregation dimension is $T+2$.

\subsubsection{Our PAC-Bayes Bound}
As shown above, the relatively small size of the perturbation term relative to the unperturbed sharpness means that the semi-analytic bound remains close to the bound with the $P(\sigma)$ term simply truncated. Thus, for expository purposes and in order to provide a fully analytic analysis, we presented analyze the bound with the $P(\sigma)$ term truncated. With the above hyperparameter settings, our truncated PAC-Bayes bound specializes to
\begin{equation}
\label{eq:ours_main}
\mathrm{gap}_{\mathrm{ours}}(\sigma,m)
=
\underbrace{\sigma^2\, \big(G_u(\omega,D_f)\big)}_{\text{Sharpness term}}
+
\underbrace{
2\sqrt{\frac{\Sigma^2}{2m}
\left(
\frac{L(\omega,D_f,T)}{2\sigma^2}
+
\ln\frac{1}{\delta}
\right)}
}_{\text{Norm term}}.
\end{equation}
The sharpness coefficient is
\[
G_u(\omega,D_f)
=
4 + 4\omega\big(2 + D_f + 32D_f^2 + 32D_f^3\big).
\]
For $D_f=2$ and $\omega=T/2=10$,
\[
G_u = 4 + 4\cdot 10 (2+2+128+256) = 15524.
\]

The norm proxy appearing in~\eqref{eq:ours_main} is
\[
L(\omega,D_f,T)
=
16(D_f+1)D_f^2 + 8(D_f+1) + 1
+ 4\ln^2(T)D_f\omega + 4\omega\ln^2(T) + \omega,
\]
and for $(D_f,\omega,T)=(2,10,20)$ this evaluates to
\[
L \approx 1303.93
\]

Substituting these values into~\eqref{eq:ours_main} yields the explicit one-dimensional objective
\[
\mathrm{gap}_{\mathrm{ours}}(\sigma,10^6)
=
15524\,\sigma^2
+
2\sqrt{
\frac{10^{-4}}{2\cdot 10^6}
\Big(
\frac{ 1303.93}{2\sigma^2} + \ln 20
\Big)}.
\]

Optimizing over $\sigma>0$ numerically gives
\[
\sigma^\star \approx 2.27\times 10^{-3}.
\]
Evaluating each term at $\sigma^\star$:
\[
\text{Sharpness} = 15524(\sigma^\star)^2 \approx 0.080,
\qquad
\text{Norm} \approx 0.159.
\]
Thus our truncated bound yields
\begin{equation}
\label{eq:ours_final_num}
\mathrm{gap}_{\mathrm{truncated}}(m=10^6)
\approx
0.239.
\end{equation}
If we instead include our empirical worst-case estimate of the perturbation term in order to obtain our full semi-analytic bound, without even re-optimizing $\sigma,$ we obtain

\begin{equation}
\label{eq:ours_semi}
\mathrm{gap}_{\mathrm{semi-analytic}}(m=10^6)
\approx
0.327.
\end{equation}

\subsubsection{Edelman et. al. for our Construction }
In Theorem~\ref{norm bounds appendix}, we obtained explicit Frobenius norm bounds for each parameter
matrix in our construction.  To compare with the norm-based covering-number analysis of 
\cite{edelman2022inductive}, we convert each Frobenius bound to a $(2,1)$ bound using
\[
\|W\|_{2,1}
=
\sum_{j=1}^c \|W_{:,j}\|_2
\;\le\;
\sqrt{c}\,\|W\|_F,
\qquad
W\in\mathbb{R}^{r\times c}.
\]

From Theorem~\ref{norm bounds appendix} and the structure of the construction, we obtain:
\begin{align*}
\|F\|_{2,1} &= 4D_f\sqrt{D_f+1},\\
\|\Gamma\|_{2,1} &\le 2\sqrt{D_f+1},\\
\|M\|_{2,1} &= 4(D_f+1),\\
\|V^{(1)}\|_{2,1} &= 1,\\
\|W^{(1)}\|_{2,1}
&\le 
\sqrt{T+2}\cdot 2\log(T)\sqrt{D_f\omega + (T+1-\omega)},\\
\|W^{(2)}\|_{2,1}
&\le
\sqrt{T+2}\,\sqrt{2\omega\log(T)},\\
\|V^{(2)}\|_{2,1} &\le \sqrt{\omega}.
\end{align*}

For $(D_f,\omega,T,d) = (2,10,20,22)$, these yield
\[
C_{21}(20,22)
=
\|F\|_{2,1}
+\|\Gamma\|_{2,1}
+\|M\|_{2,1}
+\|V^{(1)}\|_{2,1}
+\|W^{(1)}\|_{2,1}
+\|W^{(2)}\|_{2,1}
+\|V^{(2)}\|_{2,1}
\approx 226.26.
\]

\subsubsection{Edelman-Style Covering-Number Bound}

The generalization bound of \cite{edelman2022inductive}, when instantiated with our architectural
hyperparameters and $(2,1)$-norm bounds, takes the form
\begin{equation}
\label{eq:edelman_main}
\mathrm{gap}_{\mathrm{Edelman}}(m)
\;\lesssim\;
C_{21}(T,d)\,
\sqrt{\frac{\ln(d\,m\,T)}{m}}.
\end{equation}
For $T=20,\ d=22,\ m=10^6$,
\[
\sqrt{\frac{\ln(d\,m\,T)}{m}}
=
\sqrt{\frac{\ln(4.4\times 10^8)}{10^6}}
\approx 4.46\times 10^{-3},
\]
and therefore
\begin{equation}
\label{eq:edelman_final_num}
\mathrm{gap}_{\mathrm{Edelman}}(m=10^6)
\approx
226.26 \times 4.46\times 10^{-3}
\approx 1.01.
\end{equation}

\subsubsection{Summary of the Comparison}

For the $T=20$ tiling of $2$-parities described above, and using identical architectural and norm
budgets, we obtain:

\[
\mathrm{gap}_{\mathrm{ours}}(m=10^6)
\approx 0.327,
\qquad
\mathrm{gap}_{\mathrm{Edelman}}(m=10^6)
\approx 1.01.
\]

Thus, in this concrete low-degree, width-$\omega=T/2$ example, our PAC-Bayes flatness-based bound is
numerically tighter than the Edelman-style covering-number bound, despite the latter using the same
model class and norm budgets. Note that this choice of target function intentionally leverages another advantage of our bound: that it naturally applies for functions with low-degree but non-sparse Fourier spectra.

As mentioned above, including the perturbation term introduces a significant numerical penalty in our bound, driving the optimal $\sigma$ lower, making our bound much less competitive. For target functions of high sharpness, the constants may be so large that there may not be any $m$ below which our bound has an advantage. If our perturbation analysis were tightened --  for example by using perturbations with variances that are parameter-specific -- we might be able to overcome this issue.

\section{Bounding the Trace of the Perturbed Loss Hessian}
\begin{theorem}
\label{ExactTraceBoundsAppendix}
    Let $f$ be a target boolean function of sparsity $\omega\leq T$ and maximum degree $D_f$, and let $\mathcal{T}(X,\Theta)$ be a transformer of context length T implementing it exactly according to our construction. Let $L_f(X,\Theta)=(T(\Theta,X)-f(X))^2$ be the unbounded, quadratic loss for our transformer learning the function $f.$  Then under the quadratic loss, the trace of the loss Hessian is given by:
$$Tr\Big(\nabla^2 L_f(X,\Theta)\Big)=2\| \nabla \mathcal{T}(X,\Theta)\|^2 \leq 2G_u(\omega, D_f)\in O(\omega D_f^3)$$
\end{theorem}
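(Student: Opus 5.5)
The first step is the Gauss--Newton identity for the squared loss. Writing $r(X,\Theta)=\mathcal{T}(X,\Theta)-f(X)$, the chain rule gives
\[
\nabla^2 L_f = 2\,\nabla\mathcal{T}\,\nabla\mathcal{T}^T + 2\,r\,\nabla^2\mathcal{T} .
\]
Because the construction interpolates $f$ exactly, we have $r=0$ (in the large-$T$ / small-$\epsilon_p$ limit used throughout). Taking the trace then yields $Tr(\nabla^2 L_f)=2\|\nabla\mathcal{T}\|^2$. The only subtlety is that $\nabla^2\mathcal{T}$ must exist. At the construction, every MLP pre-activation on the grid $k_t\in\{0,\dots,D_f\}$ sits a distance at least $\frac{1}{4D_f}$ from its breakpoint (by the choice of $\bar\Gamma$), so $\mathcal{T}$ is locally smooth. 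Even without this, the residual factor kills the term.

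The remaining work is bounding $\|\nabla\mathcal{T}\|^2$ uniformly in $X$. I will split it blockwise over $\Theta=(W^{(1)},V^{(1)},M,\Gamma,F,W^{(2)},V^{(2)})$ and backpropagate using the norm bounds of Theorem~\ref{parameterbounds}.

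\textbf{Second attention layer.} The output is $V^{(2)T}G^T\phi(\cdot)$, so $\partial\mathcal{T}/\partial V^{(2)}=G^T\hat a^{(2)}$. Its norm is at most $\|G\|_{2,\infty}\lesssim\sqrt2$, a constant. For $W^{(2)}$, the softmax Jacobian $\mathrm{diag}(\hat a)-\hat a\hat a^T$ appears. Since $\hat a^{(2)}_{T+1,t}\approx c_t/D$, combining $\|GV^{(2)}\|_\infty\lesssim\sqrt\omega$ with $\|g_{T+1}\|\lesssim 1$ and $\|G\|_{2,\infty}$ should give an $O(\omega)$ contribution.

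\textbf{MLP layer.} We have $\partial\mathcal{T}/\partial g_t=\hat a^{(2)}_t V^{(2)}+(\text{attention-score terms})$, which is of size about $c_t$ on the active rows. The derivatives with respect to $F$ are the ReLU activations, each of order $1/D_f$, multiplied by this upstream factor. The derivatives with respect to $M$ and $\Gamma$ pick up $F$'s entries of size $4D_f$ on the active units. Summed over the $4(D_f+1)$ units and the $\omega$ active positions, this gives roughly $\omega\cdot D_f\cdot D_f^2$. This is the source of the $32D_f^3$ term.

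\textbf{First attention layer.} The chain now passes through $h'(b_t)$, whose magnitude is $4D_f$. The derivatives with respect to $V^{(1)}$ and $W^{(1)}$ are therefore scaled by $4D_f$ times $\|XV^{(1)}\|_{2,\infty}\lesssim 1$. The softmax Jacobian on a row concentrated on $D_f$ positions with weight $1/D_f$ each contributes $O(1)$ spread over those positions. Squaring gives $O(\omega D_f^2)$ terms and lower-order $O(\omega D_f)$ terms.

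Summing all blocks, I aim for the explicit constant
\[
G_u=4+4\omega\bigl(2+D_f+32D_f^2+32D_f^3\bigr)\in O(\omega D_f^3),
\]
which is uniform in $X$. The global statement then follows by linearity of the trace under $\mathbb{E}_X$.

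\textbf{Main obstacle.} The hardest part is the MLP/first-layer chain, because $h$ is piecewise linear. I plan to handle it as the paper suggests, with finite differences / first-order expansions within a linear region. The key fact is that $|h'|\le 4D_f$ on the region containing the grid points. I must also check that the exponentially small softmax leakage (the $T^{-2}$ terms) and the Johnson--Lindenstrauss error $\epsilon_p$ contribute only vanishing corrections. I will first treat everything in the full-dimensional $\bar{\Theta}$ and only then transfer to the projected matrices via the JL approximations already used in Theorem~\ref{parameterbounds}.
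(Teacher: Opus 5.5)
Your overall route is the paper's: the Gauss--Newton identity with zero residual reduces the trace to $2\|\nabla\mathcal{T}\|^2$, which is then bounded block by block using Theorem~\ref{parameterbounds}. The gap is in your first-attention-layer block. Every dependence of $\mathcal{T}$ on $W^{(1)}$ and $V^{(1)}$ passes through the scalar $g'_t=\partial G_{t,d+1}/\partial B_{t,d+1}$, which equals the slope of the trapezoidal wave at $k_t/D_f$. In the construction, each grid value $k_t/D_f$ sits at the centre of a flat top of that wave. On $(h_j-\frac{1}{4D_f},h_j+\frac{1}{4D_f})$ the active units' $F$-weights ($+,-,-,+$ for each lower group, and $-,+$ for group $j$) sum to zero, so $g'_t=0$ there. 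Hence $\nabla_{W^{(1)}}\mathcal{T}=\nabla_{V^{(1)}}\mathcal{T}=0$ identically. This is exactly what the paper uses, and it is also what later removes 13 cross-derivative blocks from the Hessian analysis. Your own smoothness remark, that each grid value lies at distance at least $\frac{1}{4D_f}$ from every breakpoint, already places it on that flat piece. Yet you then assign $|h'|=4D_f$ at those points. Treating $|h'|\le 4D_f$ as an upper bound is still valid and keeps $O(\omega D_f^3)$. However, it adds spurious terms of order $\omega D_f^2$, roughly $(16D_f\sqrt{\omega})^2$ from $W^{(1)}$ and $(4\sqrt{2}D_f\sqrt{\omega})^2$ from $V^{(1)}$, so you cannot reach the stated $G_u$. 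In the paper, the $128\omega D_f^2$ in $G_u$ does not come from the first layer at all. It comes from the $(D_f+1)$ factor in $\|Fe_{d+1}\|^2=64D_f^2(D_f+1)$ inside the $M$ and $\Gamma$ blocks.

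A smaller error points the wrong way for an upper bound. In the $F$ block, the ReLU activations $(M^Tb_t+\Gamma)_+$ are not of order $1/D_f$. At $k_t=D_f$, the units attached to small $h_i$ have activation about $1-h_i$, so $\|(M^Tb_t+\Gamma)_+\|^2$ is of order $D_f$. The paper bounds it by $4(D_f+1)$, taking entries at most about $1$, and this is the source of the $4\omega D_f$ term in $G_u$. Your estimate would undercount this block. It does not change the leading order, since the $M$ and $\Gamma$ blocks dominate, but as written that step does not yield a valid upper bound.
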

\begin{proof}
    For a quadratic loss, the gradient of the loss is given by:
$$\nabla L_f(X,\Theta+\zeta) = 2(\mathcal{T}(X,\Theta)-f(X)) \nabla \mathcal{T}(X,\Theta)$$
The Hessian is then given by:
$$\nabla^2L_f(X,\Theta)=+2 \nabla \mathcal{T}(X,\Theta)\nabla \mathcal{T}(X,\Theta)^T+ 2(\mathcal{T}(X,\Theta)-f(X))\nabla^2 \mathcal{T}(X,\Theta)$$
The second term on the RHS vanishes when the test loss is 0, i.e. $f(X)=\mathcal{T}(X,\Theta).$ Therefore
$$Tr\Big(\nabla^2L_f(X,\Theta)\Big)=+2Tr\Big(\nabla \mathcal{T}(X,\Theta)\nabla \mathcal{T}(X,\Theta)^T\Big)=  2\Big\|\nabla \mathcal{T}(X,\Theta)\Big\|^2$$
The second equality follows from the fact that the argument to the trace function is a rank-1 outer product, and thus has only a single nonzero eigenvalue making the operator norm and the trace both equal to $\Big\|\nabla \mathcal{T}(X,\Theta)\Big\|^2.$ In \ref{unperturbed gradient bounds appendix},  we establish that 
$\|\nabla \mathcal{T}(X,\Theta)\|^2 \leq G_u(\omega, D_f),$ and thus 

$$Tr\Big(\nabla^2L_f(X,\Theta)\Big) \leq 2G_u(\omega, D_f)\in O(\omega D_f^3)$$

\end{proof}
As we can see from the theorem above, the key ingredient in bounding the unperturbed transformer's loss is the norm of the gradients. We do so in the following theorem.
\begin{theorem}
\label{unperturbed gradient bounds appendix}
Suppose our transformer construction with context length $T$ is exactly representing some function $f$ with maximum degree $D_f$ and sparsity $\omega.$ Recall that $D=\sum_{t=1}^{\omega}c_t\leq\sqrt{\omega}$ is the sum of Fourier coefficients.
Then the following bounds on the gradient norms hold uniformly for all $X$:

$$\|\nabla_{\Theta}\mathcal{T}(X,\Theta)\|^2_F \lessapprox  2 + 8D^2 + 64\omega D_f^3 + 4D_f\omega +64\omega D_f^3+2+128\omega D_f^2$$
$$=4+4\omega(2+ D_f + 32D_f^2 + 32 D_f^3)$$
$$\triangleq G_u(\omega,D_f)\in O(\omega D_f^3)$$

\end{theorem}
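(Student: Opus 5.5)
I would bound $\|\nabla_\Theta \mathcal{T}(X,\Theta)\|_F^2$ by splitting it into the seven parameter blocks, $V^{(2)}, W^{(2)}, F, \Gamma, M, V^{(1)}, W^{(1)}$. I would differentiate the explicit composition $\mathcal{T}=(V^{(2)})^TG^T\phi(G(W^{(2)})^Tg_{T+1})$ block by block with the chain rule, working from the output backwards. Each block's squared gradient norm is then a product of an upstream sensitivity and an activation norm, and each of these is controlled by Theorem~\ref{parameterbounds}. The per-block bounds are simply added; this produces the sum of terms in the statement.

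\textbf{Second attention layer.} The gradient with respect to $V^{(2)}$ is $G^T\phi(\cdot)$. This is a convex combination of rows of $G$, so its squared norm is at most $\|G\|_{2,\infty}^2\lessapprox 2$.

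For $W^{(2)}$ I would use the softmax Jacobian $\mathrm{diag}(\phi)-\phi\phi^T$. Its action on $GV^{(2)}$ is a covariance-type quantity bounded by $\|GV^{(2)}\|_\infty\lesssim D$. This is multiplied by the outer product of $g_{T+1}$ with the rows of $G$, and with $\|g_{T+1}\|\lessapprox 1$ it gives a term of order $8D^2$.

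\textbf{MLP.} The gradient reaching each $g_t$ is $\hat a^{(2)}_{T+1,t}V^{(2)}$ plus a softmax-correction term. Since $\hat a^{(2)}_{T+1,t}\approx c_t/D$ is supported on $t\le\omega$, the effective upstream weight on $g_t$ is $c_t$, up to a factor absorbed by $D\le\sqrt\omega$.

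\begin{itemize}
\item For $F$: the gradient is $\sum_t c_t (M^Tb_t+\Gamma)_+ e_{d+1}^T$. Each ReLU activation is $O(1)$ and at most a few of the $4(D_f+1)$ units are active at a given input. Summing over $\omega$ positions gives a term linear in $\omega$ times a power of $D_f$.
\item For $\Gamma$ and $M$: the gradients pass through $F$, whose nonzero entries have size $4D_f\,m(h_i)$, masked by the active-unit pattern. This gives per-position contributions $\lesssim 16D_f^2$ times the number of active units, i.e. of order $D_f^3$. The $M$-gradient additionally carries the factor $\|b_t\|\lessapprox 1$. These produce the two $64\omega D_f^3$ terms.
\end{itemize}

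\textbf{First attention layer.} The upstream gradient arriving at $b_t$ is $c_t\,h'(b_t)$, where $|h'|\le 4D_f$ by the trapezoidal-wave analysis of Theorem~\ref{mlpcorrectness}. The $V^{(1)}$-gradient is this times $X^T\phi(\cdot)$, which is bounded by $\|X\|_{2,\infty}$, and gives the $O(\omega D_f^2)$ term. The $W^{(1)}$-gradient adds the softmax Jacobian of the first layer. Each active row puts weight $\approx 1/D_f$ on $D_f$ positions, so the Jacobian's norm is $O(1/\sqrt{D_f})$, and it acts on values bounded by $\|XV^{(1)}e_{d+1}\|_\infty\lessapprox 1$. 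Combined with $\|x_t\|\lessapprox\sqrt2$, this gives the $128\omega D_f^2$ and $4D_f\omega$ terms.

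\textbf{Main obstacle.} The hard step is the MLP and first-layer terms, because $h$ is piecewise linear. The gradient must be evaluated at grid points $k_t/D_f$, where the slope choice is ambiguous. I would follow the paper's convention and use one-sided finite differences, bounding the slope by $4D_f$ and arguing that at most one trapezoid (four units) is active at each grid point. The other delicate point is to avoid picking up a factor of $T$ from the inactive positions $t>\omega$. I would handle this by using that their second-layer attention weight is $O(1/T^2)$ and that the approximation errors ($\lessapprox$) vanish in the random-projection and large-$T$ limits. Collecting the seven bounds and simplifying with $D^2\le\omega$ should give $G_u(\omega,D_f)=4+4\omega(2+D_f+32D_f^2+32D_f^3)$.
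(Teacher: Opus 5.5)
Your treatment of the $V^{(2)}$ and $W^{(2)}$ blocks matches the paper. The rest, however, depends on a misreading of the MLP, and that misreading produces what you call the main obstacle.

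\textbf{The grid points are not kinks.} Each trapezoid of Theorem~\ref{mlpcorrectness} is flat on $|b-h_i|\le\frac{1}{4D_f}$. The neighbouring trapezoids are supported within $\frac{1}{2D_f}$ of their own centres, which are $\frac{1}{D_f}$ away. So the MLP output is constant on a neighbourhood of every grid point. Hence $g_t'=\partial G_{t,d+1}/\partial B_{t,d+1}=0$ exactly, with no one-sided choice, and the paper concludes $\nabla_{V^{(1)}}\mathcal{T}=\nabla_{W^{(1)}}\mathcal{T}=0$.

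Charging these blocks with $|h'|\le 4D_f$ is a safe overestimate of order $\omega D_f^2$, so the order $O(\omega D_f^3)$ would survive. But your term-by-term matching to the displayed sum does not close. $F$ and $W^{(1)}$ both claim the $4D_f\omega$ term, and your own $V^{(1)}$ estimate, $(4\sqrt{2}\,D_f\sqrt{\omega})^2=32\omega D_f^2$, has no slot left.

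\textbf{Active units.} The same misreading underlies your claim that at most one trapezoid (four units) is active at a grid point. This is false. The units are ramps $(b-\theta)_+$ that stay on above their thresholds, so every unit with threshold below $b$ is active: about $4k_t+2$ of them at $b=k_t/D_f$, with pre-activations up to about $1$. The cancellation happens only after multiplication by $F$. Used for the $F$-block (``a few units, each $O(1)$''), your claim gives $O(\omega)$. That is too small, because $\|(M^Tb_t+\Gamma)_+\|^2=\Theta(D_f)$ near the top of the grid, and this is exactly where $4D_f\omega$ comes from. It also contradicts your own $D_f^3$ count for $M$ and $\Gamma$, which needs $\Theta(D_f)$ active units. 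The paper never counts active units; it uses $\|\mathrm{diag}(\cdot)Fe_{d+1}\|\le\|Fe_{d+1}\|\le 8D_f\sqrt{D_f+1}$, $\|(M^Tb_t+\Gamma)_+\|\le 2\sqrt{D_f+1}$, $\|b_t\|\lessapprox 1$ and $\sum_t|c_t|\le\sqrt{\omega}$.

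\textbf{The softmax-correction term.} You drop the softmax-correction part of $\nabla_{g_t}\mathcal{T}$ without justification, and this is the step that keeps $G_u$ free of $T$. Its norm is only bounded by about $4\omega\log T$ (Lemma~\ref{unperturbedmlpgradientlemma}), so keeping it would put $\log T$ factors into every MLP block. The paper's argument is that $W^{(2)}e_{d+1}=(W^{(2)})^Te_{d+1}=0$, because the second-layer attention is purely positional, so the correction has zero $(d+1)$-th coordinate. Since $F$ is supported on its last column, the MLP and first-layer parameters reach the output only through $G_{t,d+1}$. Therefore the upstream derivative is exactly $e_t^T\nabla_G\mathcal{T}\,e_{d+1}=c_t$. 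You need to establish this before replacing the upstream weight by $c_t$.
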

\begin{proof}
Recall that our complete transformer function is given by
  
$$\mathcal{T}(X,W^{(1)},V^{(1)},M,F,\Gamma, W^{(2)},V^{(2)}) =(V^{(2)})^TG^T(\phi(G(W^{(2)})^Tg_{T+1}))  $$
where
$$g_t:=F^T\Big(M^Tb_t+\Gamma\Big)_+, \space G := \begin{bmatrix}
    g_1^T \\
    \dots \\
    g_{T+1}^T
\end{bmatrix}\in \mathbb{R}^{(T+1) \times (d+1)}$$
$$b_t =( V^{(1)})^TX^T\phi(X(W^{(1)})^Tx_t ) , \space B := \begin{bmatrix}
       b_1^T \\
    \dots \\
    b_{T+1}^T
\end{bmatrix} \in \mathbb{R}^{(T+1) \times (d+1)}
 $$
Before we calculate all of the gradients, we first calculate an intermediate derivative that will be useful throughout the following sections, and will demonstrate some of the common techniques used in this paper. We first take the gradient of the transformer with respect to $G.$ Note that we can write $g_{T+1} = (e_{T+1}^TG)^T = G^Te_{T+1}.$ $G$ still contains the one-hot positional encodings in the first $d$ dimensions from the residual stream, but these are not dependent on any of the parameter matrices, and thus could not contribute to the gradient of the output with respect to of any of the parameter matrices. Rather, it is only the gradients with respect to $G_{t,d+1}$ that will play a role when calculating the derivatives using the chain rule. The same is true of the final dimension of the first attention sub-layer's outputs $B_{t,d+1}:$ we can think of these as bottlenecks in the dependency graph for the overall transformer construction.

\begin{figure}[tbh]
    \includegraphics[width=1\columnwidth]{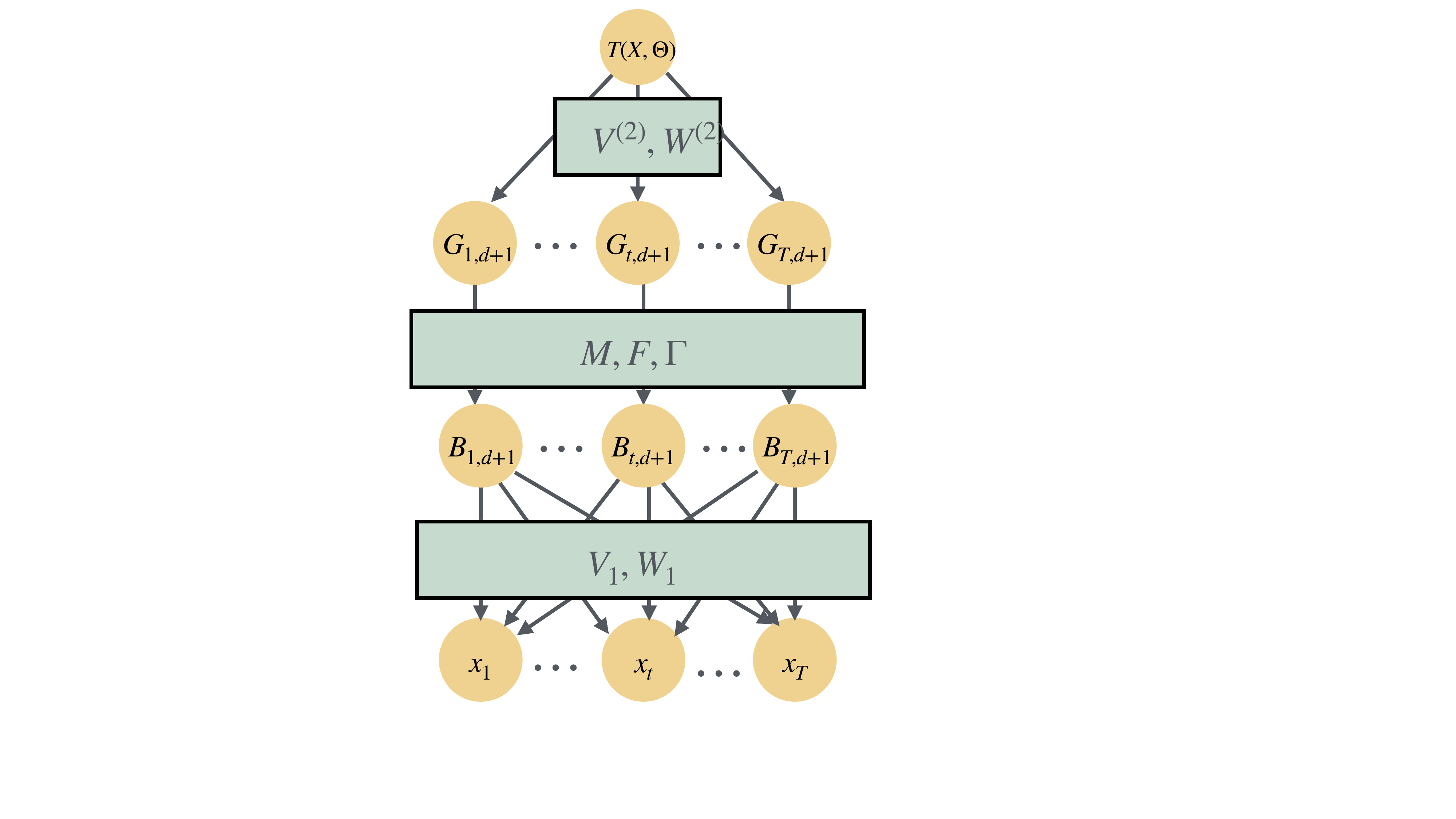}
    \caption{A diagram showing the high-level dependency graph of our 1.5-layer transformer construction. }
    \label{fig:causalgraph}
\end{figure}
    
    To give an example of how we can use these bottlenecks to create useful abstraction in the calculation of gradients, consider the gradient with respect to $W^{(1)}.$ 
    $$\nabla_{W_1} \mathcal{T}(X,\Theta) =  \sum_{t=1}^{T+1} \frac{\partial \mathcal{T}(X,\Theta)}{\partial G_{t,d+1}}\nabla_{W_1} G_{t,d+1} $$
Computing this derivative involves two tasks: first, calculating the derivative of the overall transformer output with respect to $G_{t,d+1}$ or $B_{t,d+1}$ downstream of the layer (inter-layer derivatives), and then the gradients of those key bottlenecks (scalars) with respect to the weight matrices in each layer (intra-layer gradients). To calculate the $\frac{\partial \mathcal{T}(X,\Theta)}{\partial G_{t,d+1}},$ we take the gradient with respect to the entire $G\in\mathbb{R}^{(T+1)\times (d+1)}$ matrix, and then only look at a specific row at a time, and only at the final column. This more general approach will allow us to analyze a perturbed version of our transformer, which might have have contributions from all dimensions. In these cases, we would have to consider the full Jacobian of $g_t $ with respect to $W^{(1)},$ $\mathcal{J}_{g_t}(W^{(1)}).$ 
$$\nabla_{W^{(1)}} \mathcal{T}(X,\Theta) =  \sum_{t=1}^{T+1}  \nabla^T_{g_t}\mathcal{T}(X,\Theta)\mathcal{J}_{g_t}(W^{(1)})$$
We begin with the following finite difference equation:
$$(V^{(2)})^T(G+\Delta)^T(\phi((G+\Delta)(W^{(2)})^T(G+\Delta)^Te_{T+1})$$
$$=(V^{(2)})^T\Delta^T\phi(G(W^{(2)})^Tg_{T+1})+(V^{(2)})^TG^T(\phi((G+\Delta)(W^{(2)})^T(G^T+\Delta^T )e_{T+1}) $$
First, we handle the term $$\phi\Big((G+\Delta)(W^{(2)})^T(G^T+\Delta^T )e_{T+1} \Big)=\phi\Big(G(W^{(2)})^T(G^T+\Delta^T )e_{T+1}  + \Delta (W^{(2)})^T(G^T+\Delta^T )e_{T+1}\Big)$$
$$=\phi(G(W^{(2)})^TG^Te_{T+1}+G(W^{(2)})^T\Delta^Te_{T+1} + \Delta (W^{(2)})^TG^Te_{T+1}+ \Delta (W^{(2)})^T\Delta^Te_{T+1} )$$
We can use a first-order Taylor approximation of $\phi$ to obtain:
$$\phi\Big((G+\Delta)(W^{(2)})^T(G^T+\Delta^T )e_{T+1} \Big) $$
$$= \phi(G(W^{(2)})^Tg_{T+1}) + \phi'(G(W^{(2)})^Tg_{T+1})(G(W^{(2)})^T\Delta^Te_{T+1} + \Delta (W^{(2)})^TG^Te_{T+1}) + O(\Delta^2)$$
Thus,
$$(V^{(2)})^T(G+\Delta)^T\phi\Big((G+\Delta)(W^{(2)})^T(G^T+\Delta^T )e_{T+1}\Big)- (V^{(2)})^TG^T\phi(G(W^{(2)})^TG^Te_{T+1})$$
$$=(V^{(2)})^T\Delta^T\phi(G(W^{(2)})^TG^Te_{T+1})$$
$$ + (V^{(2)})^TG^T\phi'(G(W^{(2)})^TG^Te_{T+1})(G(W^{(2)})^T\Delta^Te_{T+1} + \Delta (W^{(2)})^TG^Te_{T+1}) + O(\Delta^2)$$
We want to factor out the common factor of $\Delta^T$ (for reasons to be explained in the next paragraph), but note that there is an additive term of \[(V^{(2)})^TG^T\phi'(G(W^{(2)})^TG^Te_{T+1})\Delta (W^{(2)})^TG^Te_{T+1}\]. To bring out the $\Delta^T, $ we transpose this term (which is just a scalar), to get $e_{T+1}^TGW^{(2)}\Delta^T\phi'(G(W^{(2)})^TG^Te_{T+1})GV^{(2)}.$ Here we have used the well-known Jacobian of the softmax function, $    \phi'(x) = diag(\phi(x))-\phi(x)\phi(x)^T,$ to deduce that $\phi'(G(W^{(2)})^TG^Te_{T+1})$ is symmetric.) We can now rearrange some terms so that all of the $\Delta^T$ are on the right, and express them in terms of the trace operator:

$$=Tr\Bigg(\Big(\phi(G(W^{(2)})^TG^Te_{T+1})(V^{(2)})^T+e_{T+1}(V^{(2)})^TG^T\phi'(G(W^{(2)})^TG^Te_{T+1})G(W^{(2)})^T$$
$$+\phi'(G(W^{(2)})^TG^Te_{T+1})GV^{(2)}e_{T+1}^TGW^{(2)}\Big)\Delta^T\Bigg)$$
Now we can use the Taylor series for matrix derivatives, which says that 
$$f(A+\Delta) = f(A) +df(\Delta) + O(|\Delta|^2)=f(A) + Tr((\nabla_Af)\Delta^T)+O(|\Delta|^2). $$
We can extract the gradient from the above difference equation to conclude that 
$$\nabla_G \mathcal{T}(X,\Theta) = {\phi^{(2)}_{T+1}}(V^{(2)})^T+e_{T+1}(V^{(2)})^TG^T{\phi'^{(2)}_{T+1}}G(W^{(2)})^T+{\phi'^{(2)}_{T+1}}GV^{(2)}e_{T+1}^TGW^{(2)}$$
Where for brevity we have written ${\phi^{(2)}_{T+1}} := \phi(G(W^{(2)})^TG^Te_{T+1})$ for the post-softmax attention weights for the final position in the second attention layer. To take only the last column of the matrix $\nabla_G \mathcal{T}(X,\Theta)$, we right-multiply it by $e_{d+1}.$ To get only the $t^{th}$ row, we left-multiply by $e_t^T.$ The first term in this expression represents the change in the second attention layer's output under an infinitesimal change in the MLP output: if any of the MLP outputs change, we get a linear change in the final activation, with slope equal to the Fourier weight at that position, which are stored in the softmax scores, ${\phi^{(2)}_{T+1}}.$ The second and third terms represent changes in the final activation for the output position $T+1$ due to the shifting of attention scores. However, these terms can easily be see to be zero in the $d+1^{th}$ dimension: note that both the final column and final row of $W^{(2)}$ are both $0,$ and therefore $W^{(2)}e_{d+1}=(W^{(2)})^Te_{d+1} =  \mathbf{0}.$ Intuitively, this comes from the fact that the attention matrix in the second layer is purely position-aware. Thus, we have simply:
$$e_{t+1}^T\nabla_G \mathcal{T}(X,\Theta)e_{d+1} = c_t$$

For the gradient of the MLP output $G_{t,d+1}$ with respect to $W_1,$ we break the derivative down further, again using the chain rule:
$$\nabla_{W_1}G_{t,d+1} = \sum_{s=1}^{\omega}\frac{\partial G_{t,d+1}}{\partial B_{s,d+1}}\nabla_{W_1} B_{s,d+1}$$
However, since the MLP is applied element-wise,  the partial derivative is 0 unless $t=s, $ in which case the derivative is simply the slope of the MLP as a function of $B_{t,d+1}.$ In our construction, this is always in the range $[-4D_f,4D_f].$ More simply, we have 
$$\nabla_{W_1}G_{t,d+1} = g'_t\nabla_{W_1} B_{t,d+1},$$
Where we have used the shorthand $g'_t:=\frac{\partial G_{t,d+1}}{\partial B_{t,d+1}}.$ Note that since our MLP matrices $M$ and $F$ are only nonzero in their last row and column (respectively), only the final dimension of $B_{t,:}$ plays a role. When we consider perturbed versions of our construction, we will have to consider the full Jacobian of $g_t$ with respect to $b_t.$ Our total derivative would then be
$$\nabla_{W^{(1)}} \mathcal{T}(X,\Theta) =  \sum_{t=1}^{T+1}\sum_{i=1}^{d+1}  \nabla^T_{g_t}\mathcal{T}(X,\Theta)\Big[\mathcal{J}_{g_t}(b_t)\Big]_{:,i}\nabla_{W^{(1)}}b_{t,i}$$
For now, since we don't consider perturbations to our construction, we can write the final gradient in terms of these three basic constituents:
$$\nabla_{W_1} \mathcal{T}(X,\Theta) =  \sum_{t=1}^{\omega} \frac{\partial \mathcal{T}(X,\Theta)}{\partial G_{t,d+1}}g'_t\nabla_{W_1} B_{t,d+1} $$
As a sanity check, suppose our transformer implemented the final linear combination of Fourier components using a simple linear head, $C\in \mathbb{R}^{(T+1)\times (d+1)}$ instead. Mathematically, suppose that our transformer function were defined simply as 
$$\mathcal{T}(X,W,V,M,F,\Gamma,C)  = \bigg\langle\Big(M^Tb_t+\Gamma\Big)_+,C\bigg\rangle$$
where $C\in \mathbb{R}^{T\times (d+1)},$ with the Fourier coefficients $c_t$ in the last column, and $0s$ elsewhere. In this case, we would have$\nabla_{G}\mathcal{T}(X,\Theta) = C,$ and therefore
$$   \frac{\partial \mathcal{T}(X,\Theta)}{\partial G_{t,d+1}} = e_t^TCe_{d+1} =  c_t, \forall t\in [\omega].$$
This is the same as the gradient of our actual construction determined above, corroborating the fact that our additional attention layer is essentially performing the same task as a simple linear head (though with notably better properties when perturbations to the parameters are considered, as we demonstrate in \ref{perturbed gradient bounds appendix}.) 
To continue with our example, using our construction, we have
$$\Big[\nabla_{W^{(1)}} \mathcal{T}(X,\Theta)\Big]_{i,j} =\sum_{t=1}^T g'_t c_t \Big(e_{i}^T\nabla_{W^{(1)}} B_{t,d+1} e_{j}\Big)$$

We can already see that the different cross derivatives in the Hessian can be broken down into gradients of these major components of the derivative. The first component $g'_t $ only depends on the MLP matrices; the second term $c_t$ is a function only of the second layer attention matrices; the third term $e_{i}^T\nabla_{W^{(1)}} B_{t,d+1} e_{j}$ is the "intra-layer derivative", or the derivative of the attention output with respect to the specific matrix in question, $W^{(1)}.$

\subsection{Gradients}

\subsubsection{$\nabla_{V^{(2)}}\mathcal{T}(X,\Theta)$}
From basic rules of matrix calculus, we have
$$\nabla_{V^{(2)}}\mathcal{T}(X,\Theta)=G^T{\phi^{(2)}_{T+1}}$$
 We can use the matrix inequality $\|Av\|_2 \leq \|A\|_{1,2}\|v\|_1$ to obtain 
$$\|G^T{\phi^{(2)}_{T+1}}\| \leq \|G\|_{2,\infty}\|{\phi^{(2)}_{T+1}}\|_1 =\|G\|_{2,\infty}\leq \sqrt{2}$$

\subsubsection{$\nabla_{W^{(2)}}\mathcal{T}(X,\Theta)$}
To find the gradient $\nabla_{W^{(2)}} (V^{(2)})^TG^T\phi^{(2)}_{T+1},$ we write the following difference equations for the differential in the transformer function with respect to a small change in $W^{(2)}.$ 
$$(V^{(2)})^TG^T\phi(G(W^{(2)}+\Delta)^Tg_{T+1})$$
$$(V^{(2)})^TG^T\Big({\phi^{(2)}_{T+1}} + {\phi'^{(2)}_{T+1}}G\Delta^Tg_{T+1}\Big)+O(\Delta^2)=$$
$$(V^{(2)})^TG^T{\phi^{(2)}_{T+1}} +Tr\Big(g_{T+1}(V^{(2)})^TG^T{\phi'^{(2)}_{T+1}}G\Delta^T\Big)+O(\Delta^2)=$$
We use the Taylor series for matrix derivatives, from which we can recover the gradient from the above difference equations:
$$\nabla_{W^{(2)}}\mathcal{T}(X,\Theta)= g_{T+1}(V^{(2)})^TG^T{\phi'^{(2)}_{T+1}}G,$$
\subsubsection{$\nabla_{W^{(1)}} B_{t,i}:$}
Note that we have left the index $i$ free, despite the fact that in our exact construction, only the final dimension plays a role. This will become useful when analyzing the perturbed gradients in \ref{perturbed gradient bounds appendix}. Back to our present goal, we have already taken the gradient of our transformer function with respect to $G_{t,d+1},$ and the partial derivative of the same with respect to  $B_{t,d+1};$ what remains to calculate $\nabla_{W^{(1)}}\mathcal{T}(X,\Theta)$ is the gradient $\nabla_{W^{(1)}} B_{t,d+1}.$ To find it, we apply the method of finite differences. Using our definitions, we have
$$B_{t,i} =  e_{i}^T(V^{(1)})^TX^T\phi(X(W^{(1)}+\Delta)^T)x_t $$
$$= e_{i}^T(V^{(1)})^TX^T\big(\phi_t + \phi'_tX\Delta^Tx_t\big) + O(\Delta^2)$$
Therefore, 
$$ e_{i}^T(V^{(1)})^TX^T\phi(X(W^{(1)}+\Delta)^T)x_t -  e_{i}^T(V^{(1)})^TX^T\big(\phi(X(W^{(1)})^Tx_t) $$
$$= e_{i}^T(V^{(1)})^TX^T\phi'_tX\Delta^Tx_t=Tr\Big(x_t e_{i}^T(V^{(1)})^TX^T\phi'_tX\Delta^T\Big)$$
We can now extract the gradient to conclude that 
$$\nabla_{W_1} B_{t,i}=x_te_{i}^T(V^{(1)})^TX^T\phi'_tX$$

\subsubsection{$\nabla_{V^{(1)}} B_{t,i}:$}
The key gradient that remains to calculate $\nabla_{V^{(1)}} \mathcal{T}(X,\Theta)$ is to calculate $\nabla_{V_1} B_{t,d+1}. $ Note that 
$$B_{t,d+1} = e_{i}^T(V^{(1)})^TX^T\phi_t.$$
We now take finite differences:
$$e_{i}^T(V^{(1)}+\Delta)^TX^T\phi_t=e_{i}^TV_1^TX^T\phi_t+e_{i}^T\Delta^TX^T\phi_t$$
$$=e_{i}^T(V^{(1)})^TX^T\phi_t+Tr\Big(X^T\phi_te_{i}^T\Delta^T\Big)$$
Therefore, 
\begin{equation}\label{gradv}
\nabla_{V^{(1)}}B_{t,i}=\nabla_{V_1}(e_{i}^T(V^{(1)})^TX^T\phi_t)=X^T\phi_te_{i}^T
\end{equation}

\subsubsection{$\nabla_MG_{t,i}:$}
We now look at the gradient of $G_{t,i}$ with respect to the first matrix of the MLP, $M$. Recall that for each $t, $
$$G_{t,i} = e_{i}^TF^T(M^Tb_t + \Gamma)_+$$
Using the rules of matrix differentiation, we have
$$\nabla_MG_{t,i} =  \nabla_Me_{i}^T F^Tmax(M^Tb_t + \Gamma,0)=b_te_{i}^TF^Tdiag(I\big[M^Tb_t+\Gamma>0\big])$$

\subsubsection{$\nabla_{\Gamma}G_{t,i}$:}
The unique part of the dependency graph we need for this gradient is the gradient of $G_{t,i}$ with respect to $\Gamma$. Again, using basic matrix differentiation rules, we obtain
$$\nabla_{\Gamma} G_{t,i} = \nabla_{\Gamma} e_{i}^T F^T(M^Tb_t + \Gamma)_+=diag(I\big[M^Tb_t+\Gamma\geq0\big])Fe_{i}$$
\subsubsection{$\nabla_FG_{t,i}$: }
Finally, using the fact that the matrix derivative with respect to matrix $M$ of $x^TMy$ for vectors $x,y$ is given by $xy^T,$ we have 
$$\nabla_{F} G_{t,i} = \nabla_F e_{i}^T  F^T(M^Tb_t+\Gamma)_+ =\nabla_F (M^Tb_t+\Gamma)_+^TFe_{t,i}= \big(M^Tb_t + \Gamma\big)_+e_{i}^T$$
In summary, we have the following for the total gradient with respect to each of our parameter matrices:

$$\nabla_{V^{(2)}} \mathcal{T}(X,\Theta) = G^T{\phi^{(2)}_{T+1}} $$
$$\nabla_{W^{(2)}}\mathcal{T}(X,\Theta)=g_{T+1}(V^{(2)})^TG^T{\phi'^{(2)}_{T+1}}G$$
$$\nabla_{M} \mathcal{T}(X,\Theta) = \sum_{t=1}^{\omega} c_tb_te_{d+1}^TF^Tdiag(I\big[M^Tb_t+\Gamma>0\big])$$
$$\nabla_{F} \mathcal{T}(X,\Theta) = \sum_{t=1}^{\omega} c_t\big(M^Tb_t + \Gamma\big)_+e_{d+1}^T $$
$$\nabla_{\Gamma} \mathcal{T}(X,\Theta) = \sum_{t=1}^{\omega} c_tdiag(I\big[M^Tb_t+\Gamma\geq0\big])Fe_{d+1}$$
$$\nabla_{V^{(1)}} \mathcal{T}(X,\Theta) = \sum_{t=1}^{\omega} c_tg'_tX^T\phi_te_{d+1}^T$$
$$\nabla_{W^{(1)}} \mathcal{T}(X,\Theta) = \sum_{t=1}^{\omega} c_tg'_tx_te_{d+1}^TV_1^TX^T\phi'_tX $$
Recall that in our construction, the slope of the MLP output with respect to small changes around the points in our 1-D grid is $0$, and thus we have $g'_t = \Big|\frac{\partial G_{t,d+1}}{\partial B_{t,d+1}}\Big|=0.$ Thus we can make the simplification that 
$$\nabla_{V^{(1)}} \mathcal{T}(X,\Theta) = \mathbf{0}_{(d+1)\times(d+1)} $$
$$\nabla_{W^{(1)}} \mathcal{T}(X,\Theta) = \mathbf{0}_{(d+1)\times(d+1)} $$

We now state a useful lemma due to \citet{deorao2023ptimization}:
\begin{lemma}
\label{deoralemma}
Given an input matrix $X \in \mathbb{R}^{T\times d} $ a post-softmax Jacobian matrix $\phi'_t \in \mathbb{R}^{T\times T},$ and any vector $u_t\in \mathbb{R}^{d},$ the following bound holds:
$$\Big\| X^T \phi'_t Xu_t\Big\| =\Big\| u_t^TX^T \phi'_t X\Big\|\leq 2\|Xu_t\|_{\infty} \|X\|_{2,\infty}$$
The proof of this lemma is due to \cite{deorao2023ptimization} and we defer the reader to the proof of Eq. 42 in that manuscript for further details.
\end{lemma}

\subsection{Gradient Norms}

\subsubsection{$\|\nabla_{V^{(2)}}\mathcal{T}(X,\Theta)\|_F$}
We can use the matrix inequality $\|Av\|_2 \leq \|A\|_{1,2}\|v\|_1$ to obtain 
$$\|G^T{\phi^{(2)}_{T+1}}\|_F \leq \|G\|_{2,\infty}\|{\phi^{(2)}_{T+1}}\|_1 =\|G\|_{2,\infty}$$
Applying our bound on $\|G\|_{2,\infty} $ from \ref{parameterbounds}, we obtain:
$$\|\nabla_{V^{(2)}}\mathcal{T}(X,\Theta)\|_F\lessapprox \sqrt{2}$$
\subsubsection{$\|\nabla_{W^{(2)}}\mathcal{T}(X,\Theta)\|_F$}
We calculate the norm of  $\nabla_{W^{(2)}} \mathcal{T}(X,\Theta)$ by noting that the entire gradient term is a rank-1 outer product, and since the Frobenius norm of an outer product is the product of 2-norms, we have 
$$\|g_{T+1}(V^{(2)})^TG^T{\phi'^{(2)}_{T+1}}G\|_F = \|g_{T+1}\|\|(V^{(2)})^TG^T{\phi'^{(2)}_{T+1}}G\|$$
By \ref{deoralemma}, we can  bound the second multiplicative term as:
$$\|(V^{(2)})^TG^T\phi'_TG\|\leq 2\|(V^{(2)})^TG^T\|_{\infty} \|G\|_{2,\infty}$$
In \ref{parameterbounds}, we show that $\|GV^{(2)}\|_{\infty} \lessapprox D$ and $ \|G\|_{2,\infty}\lessapprox \sqrt{2}.$ It follows that
$$\|\nabla_{W^{(2)}}\mathcal{T}(X,\Theta)\|_F\lessapprox 2\sqrt{2}D$$

\subsubsection{$\|\nabla_{M}\mathcal{T}(X,\Theta)\|_F$}
We start with the triangle inequality:
$$\|\nabla_{M}\mathcal{T}(X,\Theta)\|_F \leq \sum_{t=1}^{\omega} \Big\|c_tb_te_{d+1}^TF^Tdiag(I\big[M^Tb_t+\Gamma>0\big])\Big\|_F$$
$$= \sum_{t=1}^{\omega} |c_t|\big\|b_t\big\|\Big\|diag(I\big[M^Tb_t+\Gamma>0\big])Fe_{d+1}\Big\| $$
In \ref{parameterbounds}, we show that $\|b_t\|\leq 1.$ Additionally we have $$|\Big\|diag(I\big[M^Tb_t+\Gamma>0\big])Fe_{d+1}\Big\| \leq \big\|Fe_{d+1}\big\|=4D_f\sqrt{4(D_f+1)}$$
We can now apply Cauchy-Schwarz to conclude:
$$\|\nabla_{M}\mathcal{T}(X,\Theta)\|_F \leq 8D_f \sqrt{\omega(D_f+1)} \approx 8D_f\sqrt{\omega D_f}$$
We will find it useful for future reference to state the norm of the intra-layer gradient term by itself. The intra-layer gradient is given by 
$$\nabla_{M}G_{t,i}=b_te_{i}^TF^Tdiag(I\big[M^Tb_t+\Gamma>0\big]).$$
Note that in the above calculation we restricted the sum to $t\in [\omega]$ because the inter-layer gradient $c_t = e_t^T\phi^{(2)}_{T+1}(V^{(2)})^T=0$ for $t>\omega,$ since the second attention layer places zero weight on all positions $t > \omega.$ In addition, the intra-layer gradient for positions $t>\omega$ is bounded by $O(D_f^{3/2})$ via the 1-Lipschitz property of the ReLU MLP, but since these terms are always multiplied by $c_t=0$ in the total gradient, they do not contribute.\footnote{The design decision to have inactive positions' contributions zeroed out by the second attention layer was made specifically to avoid a $T$ dependency in the gradient bounds.}

Thus, following the same arguments above yields
$$\Big\|\nabla_{M}G_{t,i}\Big\|_F \lessapprox8D_f^{\frac{3}{2}}I[i=d+1\wedge t\leq\omega]$$
\subsubsection{$\|\nabla_{F}\mathcal{T}(X,\Theta)\|_F$}
$$ \|\nabla_{F}\mathcal{T}(X,\Theta)\|_F \leq \sum_{t=1}^{\omega} |c_t|\Big\|\big(M^Tb_t + \Gamma\big)_+e_{d+1}^T\Big\|_F\leq \sum_{t=1}^{\omega} |c_t|\Big\|\big(M^Tb_t + \Gamma\big)_+\Big\|$$
$$\|\nabla_{F}\mathcal{T}(X,\Theta)\|_F\leq 2\sqrt{(D_f+1)\omega}\approx 2\sqrt{D_f \omega}$$
We once again provide the norm of just the intra-layer gradient for later use. This gradient is:
$$\|\nabla_{F}G_{t,i}\| \leq \|\big(M^Tb_t + \Gamma\big)_+e_{i}^T\|\lessapprox 2\sqrt{D_f}$$
Just as in the gradient with respect to $M,$ when $b_t=\mathbf{0}_{d+1},$ the above expression becomes $0.$ Thus we make this explicit:
$$\|\nabla_{F}G_{t,i}\| \lessapprox 2\sqrt{D_f}I[t\leq \omega]$$

\subsubsection{$\|\nabla_{\Gamma}\mathcal{T}(X,\Theta)\|_F$}
$$\|\nabla_{\Gamma}\mathcal{T}(X,\Theta)\|_F \leq \sum_{t=1}^{\omega} |c_t|\Big\|diag(I\big[M^Tb_t+\Gamma\geq0\big])Fe_{d+1}\Big\|$$
$$\|\nabla_{\Gamma}\mathcal{T}(X,\Theta)\|_F\leq 8D_f\sqrt{\omega(D_f+1)} \approx 8D_f\sqrt{\omega D_f} $$
Recall that $\nabla_\Gamma G_{t,i}= diag(I\big[M^Tb_t+\Gamma\geq0\big])Fe_{i}.$ Note that $Fe_i = \mathbf{0}_{4(D_f+1)}$ if $i\neq d+1.$ And just as in the gradient with respect to $M,$ when $b_t=\mathbf{0}_{d+1},$ the above expression becomes $0.$ Thus we have
$$\|\nabla_\Gamma G_{t,i}\|\lessapprox 8 D_f^{\frac{3}{2}}I[i=d+1 \wedge t\leq \omega]$$
\subsubsection{$\|\nabla_{V^{(1)}}\mathcal{T}(X,\Theta)\|_F$}
We showed that the gradient with  respect to $V^{(1)}$ was a 0-matrix due to the $\Bigg|\frac{\partial \mathcal{T}(X,\Theta)}{\partial B_{t,d+1}}\Bigg|$ component being $0$, and therefore $\|\nabla_{V^{(1)}}\mathcal{T}(X,\Theta)\|_F=0.$ Nonetheless, we will find it useful when calculating gradients for a perturbed construction to bound the norm of the intra-layer gradient. Recall that 
$$\nabla_{V^{(1)}}B_{t,i}=X^T\phi_te_{i}^T$$
The Frobenius norm of this rank-1 outer product is the product of the 2-norms:
$$\Big\|\nabla_{V^{(1)}}B_{t,i}\Big\|_F=\Big\|X^T\phi_t\Big\|\leq \|X^T\|_{1,2}\|\phi_t\|_1 = \|X\|_{2,\infty}\lessapprox \sqrt{2}$$
where we have applied our bound on $\|X\|_{2,\infty}$ in \ref{parameterbounds}.

\subsubsection{$\|\nabla_{W^{(1)}}\mathcal{T}(X,\Theta)\|_F$}
Similarly, it was already shown that the total gradient with respect to $W^{(1)}$ is the $0$-matrix, and therefore $\|\nabla_{W^{(1)}}\mathcal{T}(X,\Theta)\|_F=0,$ we still bound the norm of the intra-layer gradient for future reference. Recall that 
$$\nabla_{W_1} B_{t,i}=x_te_{i}^T(V^{(1)})^TX^T\phi'_tX$$
This is once again a rank-1 outer product, and its frobenius norm is thus bounded by 
$$\Big\|\nabla_{W_1} B_{t,i}\Big\|_F=\Big\|x_t\Big\|_2\Big\|e_{i}^T(V^{(1)})^TX^T\phi'_tX\Big\|_2\leq \Big\|X\Big\|_{2,\infty}\Big\|e_{i}^T(V^{(1)})^TX^T\phi'_tX\Big\|_2$$
Recall that the first $d$ rows and columns of $V^{(1)}$ are all $0s,$ and therefore $e_{i}^T(V^{(1)})^T = \mathbf{0} $ if $i\neq d+1.$ Thus we have 
$$\Big\|e_{i}^T(V^{(1)})^TX^T\phi'_tX\Big\|_2 = I[i=d+1]\Big\|e_{d+1}^T(V^{(1)})^TX^T\phi'_tX\Big\|_2$$
$$\leq  2I[i=d+1]\|X\|_{2,\infty}\|XV^{(1)}e_{d+1}\|_{\infty},$$
where the last inequality follows from \ref{deoralemma}. Using \ref{parameterbounds}, we conclude that 
$$\Big\|\nabla_{W_1} B_{t,i}\Big\|_F \lessapprox 2\sqrt{2}I[i=d+1].$$
In summary, we have
$$\|\nabla_{\Theta}\mathcal{T}(X,\Theta)\|^2_F \lessapprox  2 + 8D^2 + 64\omega D_f^3 + 4D_f\omega +64\omega D_f^3+2+128\omega D_f^2$$
$$=4+4\omega(2+ D_f + 32D_f^2 + 32 D_f^3)$$
$$\in O(\omega D_f^3)$$

$$\triangleq G_u(\omega, D_f)\in O(\omega D_f^3)$$
\end{proof}
\section{Chain Of Thought}

\begin{theorem}
\label{chain of thought appendix}
Let $T \in \mathbb{N}$. Let $f_1, \dots, f_T$ where $f_i(x_1\dots x_T f_1\dots f_{i-1}) = \bigoplus_{j\leq i} x_j$. Let $\widehat{\Theta_{CoT}}(\alpha,\beta)$ be the solution returned by the general learning procedure on the training set of size $m$ jointly consisting of these functions using regularization parameters $\alpha, \beta$, and let $\delta_{CoT}(\alpha,\beta)$ be the expected final error for a chain of $T$ auto-regressive steps using the transformer parametrized by $\widehat{\Theta_{CoT}}$ to solve the Parity Task of length $T.$ Then there exist regularization parameters $\alpha_{CoT},\beta_{CoT}$ such that
$$\delta_{CoT}(\alpha_{CoT},\beta_{CoT}) \leq  B_{CoT}(\sigma) ,$$ where
$$B_{CoT}(\sigma):=4Te^{-\frac{m}{8\Sigma^2}}e^{\frac{m\sigma^2}{4\Sigma^2}\Big(2G_u(1,2) + P(\sigma,1, 2,T)\Big)+\frac{ L(1, 2,T) }{2\sigma^2}}.$$

In contrast, Let $\widehat{\Theta_{OP}}(\alpha, \beta)$ be the solution returned by the general learning procedure on the training set of size $m$ consisting of only the inputs and outputs of the complete Parity task of length $T,$ i.e. the functions $f(x_1\dots x_T)=\bigoplus_{j\leq T} x_j$.  Let $\delta_{OP}(\alpha, \beta)$ be the expected error for a single pass of a transformer parameterized by $\widehat{\Theta_{OP}}(\alpha, \beta)$ on the parity task of length $T.$ Then there exist regularization parameters $(\alpha_{OP},\beta_{OP})$ such that
$$\delta_{OP} \leq  B_{OP}(\sigma),$$
where
$$B_{OP}(\sigma) := 4e^{-\frac{m}{8\Sigma^2}}e^{\frac{m\sigma^2}{4\Sigma^2}\Big(2G_u(1,T) + P(\sigma,1, T,T)\Big)+\frac{ L(1, T,T) }{2\sigma^2}}$$
For any fixed $\sigma,$ it follows that
$$B_{OP}(\sigma)\geq \frac{B_{CoT}(\sigma)^Te^{\frac{m}{8\Sigma^2}(T-1)}}{T}$$

In other words, our bound on the error for Parity increases exponentially with length when using the one-pass approach, whereas using the CoT approach, the error increases only linearly with T.
\end{theorem}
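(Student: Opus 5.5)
The plan is to convert the expected-loss guarantee of Theorem~\ref{PAC-Bayes Bound} into a \emph{tail} (error-probability) guarantee for each individual step. Then I would union bound over the $T$ steps for CoT, and use a single application for one-pass. Finally I would compare the two exponents.

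\textbf{Step 1 (per-function tail bound).} For a single function of degree $D_f$ and sparsity $\omega$, I would choose $(\alpha,\beta)$ large enough that the idealized learner returns an interpolator dominated by our construction in norm and expected perturbed sharpness. This is the same existence argument as in Step~2 of Section~\ref{sec:theoretical-results}, supported by Appendix~\ref{non-pareto}. The parity outputs lie in $\{0,1\}$, so a wrong prediction incurs squared loss bounded below by a constant, roughly $1/4$ after thresholding at $1/2$. Hence the error probability is at most $\Pr_X[L(X,f_{\hat\Theta})-\mathbb{E}L\ge \tfrac14-\mathbb{E}L]$. I would bound this with the sub-Gaussian (Chernoff) condition, giving $e^{-t(1/4-\mathbb{E}L)+\Sigma^2t^2/2}$. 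I would then choose $t$ proportional to $m/\Sigma^2$ rather than optimizing it. Into $\mathbb{E}L$ I would substitute the oracle bound in its $\lambda$-form, before optimizing $\lambda$:
\[
\mathbb{E}L\le \tfrac{\sigma^2}{2}\bigl(2G_u+P\bigr)+\tfrac{\lambda\Sigma^2}{2m}+\tfrac{L(\omega,D_f,T)/(2\sigma^2)+\ln(1/\delta)}{\lambda}.
\]
With $\lambda$ tied to $m$, this produces exactly the exponent $-\tfrac{m}{8\Sigma^2}+\tfrac{m\sigma^2}{4\Sigma^2}(2G_u+P)+\tfrac{L}{2\sigma^2}$. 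The constant $4$ absorbs the $\delta$ term, the two-sidedness of the deviation, and the difference between the Gaussian-averaged and the centered predictor.

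\textbf{Step 2 (CoT).} Each step function $f_i$ can be re-expressed on its inputs $(x_i, f_{i-1})$ as a degree-$2$, sparsity-$1$ parity. So the construction of Theorem~\ref{ExactTraceBounds} and the norm bound of Theorem~\ref{norm bounds appendix} apply with $(\omega,D_f)=(1,2)$ at context length $T$. Since a single transformer is trained jointly on all $f_i$, I would take one posterior and apply Step~1 to each $i$. The chain is correct whenever every step is correct, so a union bound over the $T$ steps gives $\delta_{CoT}\le T\cdot 4e^{(\cdots)(1,2)}=B_{CoT}(\sigma)$.

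\textbf{Step 3 (one-pass).} Here $f$ is a single parity with $(\omega,D_f)=(1,T)$, and Step~1 gives $B_{OP}(\sigma)$ directly. For the comparison, I would write
\[
B_{CoT}^T/T = 4^T T^{T-1}\exp\Bigl(-\tfrac{Tm}{8\Sigma^2}+T\bigl[\tfrac{m\sigma^2}{4\Sigma^2}(2G_u(1,2)+P(\sigma,1,2,T))+\tfrac{L(1,2,T)}{2\sigma^2}\bigr]\Bigr).
\]
Multiplying by $e^{m(T-1)/(8\Sigma^2)}$ restores the single $-m/(8\Sigma^2)$ term. It then remains to show the following superadditivity facts:
\begin{itemize}
\item $2G_u(1,T)\ge T\cdot 2G_u(1,2)$, which holds because $G_u(1,D)=4+4(2+D+32D^2+32D^3)$ is superlinear with nonneg coefficients.
\item $L(1,T,T)\ge T\,L(1,2,T)$, which follows from the cubic term $16(D+1)D^2$.
\item The analogous inequality for $P$.
\end{itemize}

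\textbf{Main obstacle.} I expect the main difficulty to be the comparison in Step~3. Superadditivity in the degree argument at $D=2$ versus $D=T$ must genuinely dominate the constant terms and the factors $4^{T-1}T^{T-1}$. This forces me to lean on the paper's assertion that $G_u$, $P$, and $L$ are superlinear in the degree. For $P$, which is only bounded analytically in Appendix~\ref{full perturbed hessian bound}, I would try to verify monotonic superlinear growth in $D_f$ term by term. The combinatorial prefactor I would handle by absorbing it into the growth of the exponent for $T\ge 3$, checking small $T$ separately. A secondary delicacy is the error-to-loss conversion in Step~1, which needs the output threshold and the perturbed-versus-unperturbed predictor to be handled consistently with the stated constant $4$.
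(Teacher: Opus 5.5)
Your Step~1 does not work, and it is the step that is supposed to carry all of the $m$-dependence. The sub-Gaussian condition concerns the loss at a single fresh input $X$. So the Chernoff bound $e^{-t(1/4-\mathbb{E}L)+\Sigma^2t^2/2}$ is at best $e^{-(1/4-\mathbb{E}L)^2/(2\Sigma^2)}$, which has no factor decaying in $m$. Choosing $t\propto m/\Sigma^2$ only makes the $\Sigma^2t^2/2$ term of order $m^2/\Sigma^2$. Concretely, matching the coefficients you want in front of $-\tfrac14$, $\tfrac{\sigma^2}{2}(2G_u+P)$ and $L/(2\sigma^2)$ forces $t=\lambda=m/(2\Sigma^2)$. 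Three things then happen. The term $t\cdot\lambda\Sigma^2/(2m)=m/(8\Sigma^2)$ exactly cancels your $-m/(8\Sigma^2)$. The Chernoff term adds $+m^2/(8\Sigma^2)$. And $\ln(1/\delta)$ enters with coefficient $t/\lambda=1$, i.e.\ as a multiplicative $1/\delta$ that no constant can absorb. The exponent is positive and the bound is vacuous.

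The paper obtains the exponent by a different mechanism. Markov's inequality gives $\Pr(\text{loss}\ge\tfrac14)\le 4\,\mathbb{E}[\text{loss}]$, which is where the $4$ comes from. The per-step bound $4\exp\bigl(-\tfrac{m}{8\Sigma^2}(1-\sigma^2(2G_u+P))^2+\tfrac{L}{2\sigma^2}\bigr)$ is then exactly what results from setting the \emph{optimized} bound $\sigma^2(G_u+\tfrac12P)+2\sqrt{\tfrac{\Sigma^2}{2m}(\tfrac{L}{2\sigma^2}+\ln\tfrac1\delta)}$ equal to $\tfrac12$ and solving for the confidence parameter $\delta$. Dropping the $-x^2$ in $(1-x)^2$ and a union bound over the $T$ steps give $B_{CoT}$. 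So the factor $e^{-m/(8\Sigma^2)}$ comes from the $\sqrt{\Sigma^2/(2m)}$ multiplying $\ln(1/\delta)$, i.e.\ from concentration over the training sample read as a per-step failure probability. The paper makes that identification without further justification, but in any case the factor never comes from a tail over the test input.

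Two smaller points. In Step~2 you do not explain how one parameter set serves every step. The construction's $W^{(1)}$ hard-codes fixed positions, whereas the relevant pair $(x_i,f_{i-1})$ moves with $i$ in a sequence of length up to $2T$. The paper handles this with period-$T$ cyclic positional encodings plus a simulation argument, which lets it inherit the $(\omega,D_f)=(1,2)$ norm and sharpness bounds. In Step~3, superlinearity does not give $G_u(1,T)\ge T\,G_u(1,2)$ for all $T$: $G_u(1,3)=4632<3G_u(1,2)=4668$, and likewise $L(1,3,3)<3L(1,2,3)$. So small $T$ is a real exception, not just a prefactor issue. Your worry about $(4T)^{T-1}$ is also justified: the paper proves the inequality only for $B_{OP}/4$ versus $B_{CoT}/(4T)$ and absorbs those factors informally.
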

\begin{proof}
First, we show the existence of a construction which recognizes PARITY perfectly using a transformer that has learned a degree-2 boolean function, and therefore has low sharpness and parameter norms. 

Let $\mathcal{T_{CoT}}$ be a transformer with context length $2T$ matching our main construction, except that the positional encodings (and their dimension-reduced random projections) are cyclic with period $T. $ For example, positions $1$ and $T+1$ have the same positional encoding. Note that  the (pre-random-projection) input vectors $y_t\in \mathbb{R}^{T+2},$ and $\bar{W}^{(1)}\in \mathbb{R}^{(T+2)\times(T+2)}$ still, despite there being a longer context window now. Like our main construction, the matrix $\bar{W}^{(1)}$ has all zeros in the final row and column, so that attention patterns are purely position-aware. The pre-random-projection matrix $\bar{W}$ however looks like one with just a single component of degree 1, containing only the current position. 

We begin the CoT by passing the model the input of length $T$ along with a CLS token, which has the $d+1^{th}$ dimension for bit values set to $0$ as described previously in our main construction. Thus our input on the first step is $x_1\dots x_TCLS.$ Being that the $T+1^{th}$ and $1^{st}$ positions are the only ones with this positional encoding, the positional-identity map in $W^{(1)}$ creates an attention pattern for position $T+1$ which is uniform across those two positions, just as if we had a component of degree 2 across those positions when using $T+1$ absolute positional encodings (in fact, we will use this simulation argument later in the proof). from there, the transformer is exactly identical to our construction, using $D_f=2$ and $\omega=1.$ We read the final scalar output from the $T+1^{th}$ position, and if the output is closer to $0$ than it is to $1,$ output $0;$ otherwise output $1.$ We then write this output to position $T+2, $ and then repeat the process, considering the new current position to be $T+2.$ In other words, the input to step 2 of the COT is $x_1\dots x_TCLSf_1,$ where $f_1=0\bigoplus x_1=x_1.$ We then use our transformer to calculate $f_2=f_1\bigoplus x_2=x_1\bigoplus x_2, $ and so on. Applying this autoregressively, we will have 
$$f_i(x_1\dots x_T CLS f_1\dots f_{i-1}) = \bigoplus_{j\leq i} x_j$$
We have shown that our transformer correctly outputs the parity of the input string using COT. it remains to show that our bounds on the norm and sharpness for our construction for $D_f=2,\omega=1$ still hold in this slightly modified construction. Note that our matrix $\bar{W}^{(1)}$ actually resembles our construction with $D_f=1$ in the sense relevant to the parameter norm: the first  column of $\bar{W}^{(1)}$ has only a single nonzero entry. Thus the norm of the CoT construction will actually be smaller than that of our main construction using $D_f=2. $ As for the sharpness, we might expect this modification to the $W^{(1)}$ matrix to cause the perturbed gradients and Hessian to differ from the fixed-positional-encoding, degree-2 construction. However, we note that all derivatives in our construction depend on $W^{(1)} $ only indirectly, through the weights of the attention matrix, $\phi_t. $ Since we can simulate the exact attention patterns of our transformer at timestep $i$ with a transformer with $T+i+1$ fixed positional encodings and our usual degree-2 construction, these two constructions will have the exact same Hessian and gradients. 

Therefore, there are regularization parameters $\alpha_{CoT}, \beta_{CoT} \geq 0$ such that the general learning procedure returns a solution $\hat{\Theta}_{S}$ with the following properties:
$$\|\hat{\Theta}_{S}\|\leq L(1,2,T)$$
$$\mathbb{E}_{\epsilon \sim \mathcal{N}(0,\sigma^2)^n}\big[\hat{L}(f_{\hat{\Theta}_{S}+\epsilon})\big]\leq \mathbb{E}_{\epsilon \sim \mathcal{N}(0,\sigma^2)^n}\big[\hat{L}(f_{\Theta_{S}+\epsilon})\big]$$

According to our generalization bound, each time step has a generalization error bounded by
$$ \mathbb{E}_{\epsilon \sim \mathcal{N}(0,\sigma^2)}[{L}(f_{\hat{\Theta}_{S}+\epsilon})] \leq \sigma^2\Big(G_u(1,2) + \frac{1}{2}P(\sigma,1, 2,T)\Big) + 2\sqrt{ \frac{\Sigma^2}{2m}\Big(\frac{ L(1, 2,T)}{2\sigma^2}+ln\frac{1}{\delta}\Big)}\\$$
To have an incorrect step in CoT after taking the argmax, the raw error in the output must be at least $0.5,$ so the squared loss must be at least $0.25.$ By Markov's inequality, $P(\text{misclassification}) = P(\text{loss} \geq 0.25) \leq 4\mathbb{E}[\text{loss}].$ Setting $4\mathbb{E}[\text{loss}] $ equal to $\delta_S$ and solving, we have
$$\delta_S \leq 4e^{-\frac{m}{8\Sigma^2}\Big(1-\sigma^2\Big(2G_u(1,2) + P(\sigma,1, 2,T)\Big)\Big)^2+\frac{ L(1, 2,T)}{2\sigma^2}} $$
When our bound is non-vacuous, we have $\sigma^2\Big(2G_u(1,2) + P(\sigma,1, 2,T)\Big)<1, $ and we have 
$$-\frac{m}{8\Sigma^2}\Big(1-\sigma^2\Big(2G_u(1,2) + P(\sigma,1, 2,T)\Big)\Big)^2+\frac{ L(1, 2,T)}{2\sigma^2} $$
$$=-\frac{m}{8\Sigma^2}+\frac{2m\sigma^2}{8\Sigma^2}\Big(2G_u(1,2) + P(\sigma,1, 2,T)\Big) -\frac{m\sigma^4}{8\Sigma^2}\Big(2G_u(1,2) + P(\sigma,1, 2,T)\Big)^2$$
$$\leq-\frac{m}{8\Sigma^2}+\frac{2m\sigma^2}{8\Sigma^2}\Big(2G_u(1,2) + P(\sigma,1, 2,T)\Big) $$

Thus, we can use a simpler form of the upper bound which approximates the above:
$$\delta_S \leq 4e^{-\frac{m}{8\Sigma^2}}e^{\frac{m\sigma^2}{4\Sigma^2}\Big(2G_u(1,2) + P(\sigma,1, 2,T)\Big)+\frac{ L(1, 2,T)}{2\sigma^2}} $$
to bound the probability of the entire CoT having an error, we can use a union bound. Thus, the probability of the final answer being incorrect is upper bounded by 
$$\delta_{CoT} \leq  B_{CoT} := 4Te^{-\frac{m}{8\Sigma^2}}e^{\frac{m\sigma^2}{4\Sigma^2}\Big(2G_u(1,2) + P(\sigma,1, 2,T)\Big)+\frac{ L(1, 2,T) }{2\sigma^2}} $$
We compare this to our generalization bound for the solution learned by a regularized learner for calculating parity of the full string of length $T $ all at once. In that scenario, there exist (different) $\lambda_1, \lambda_2$
a generalization gap bounded by:
$$ \mathbb{E}_{\epsilon \sim \mathcal{N}(0,\sigma^2)}[{L}(f_{\Theta_{OP}+\epsilon})] \leq \sigma^2\Big(G_u(1,T) + \frac{1}{2}P(\sigma,1, T,T)\Big) + 2\sqrt{ \frac{\Sigma^2}{2m}\Big(\frac{ L(1, T,T)}{2\sigma^2}+ln\frac{1}{\delta}\Big)}\\$$
The probability of the final answer being correct in this one-pass approach is bounded by
$$\delta_{OP} \leq B_{OP} :=  4e^{-\frac{m}{8\Sigma^2}}e^{\frac{m\sigma^2}{4\Sigma^2}\Big(2G_u(1,T) + P(\sigma,1, T,T)\Big)+\frac{ L(1, T,T) }{2\sigma^2}} $$
Note that in this scenario where the output is a binary, an upper bound on the probability of an error is an upper bound on the expected error.  If we compare our bound for CoT with this bound, we can see that when using $CoT,$ the expected error only scales with $T,$ whereas the expected error in the one-pass scenario will scale exponentially with T. We know that $G_u$ is cubic in $D_f=T,$ and $P()$ carries a factor of $D_f^\frac{7}{2}$. Meanwhile, the norm term $L()$ carries a factor of $T.$ Therefore, assuming $m, \Sigma, \sigma$ are all held constant, increasing the degree by a factor of $T$ will exponentially increase the upper bound on the expected error compared to $\delta_S$, our upper bound on the error for a single step. 

Mathematically, we have that for fixed $\sigma,$ denoting the exponential cores $\bar{B}_{CoT} := B_{CoT}/(4T) = e^{-\frac{m}{8\Sigma^2}}e^{\frac{m\sigma^2}{4\Sigma^2}\Big(2G_u(1,2) + P(\sigma,1, 2,T)\Big)+\frac{ L(1, 2,T) }{2\sigma^2}}$ and $\bar{B}_{OP} := B_{OP}/4$ (stripping the constant Markov and union bound factors),
$$\bar{B}_{OP}e^{\frac{m}{8\Sigma^2}}= e^{\frac{m\sigma^2}{4\Sigma^2}\Big(2G_u(1,T) + P(\sigma,1, T,T)\Big)+\frac{ L(1, T,T) }{2\sigma^2}}$$
$$ \geq  e^{\frac{m\sigma^2T}{4\Sigma^2}\Big(2G_u(1,2) + P(\sigma,1, 2,T)\Big)+\frac{ TL(1, 2,T) }{2\sigma^2}}=\Big(\bar{B}_{CoT}e^{\frac{m}{8\Sigma^2}}\Big)^T$$
where the inequality uses $2G_u(1,T)+P(\sigma,1,T,T) \geq T\cdot\big(2G_u(1,2)+P(\sigma,1,2,T)\big)$ and $L(1,T,T)\geq T\cdot L(1,2,T),$ which follow from $G_u(\omega,D_f)\in O(\omega D_f^3)$ and the superlinear scaling of $P$ and $L$ in $D_f.$
Thus
$$\bar{B}_{OP}\geq \bar{B}_{CoT}^T e^{\frac{m}{8\Sigma^2}(T-1)}$$
Since $B_{OP} = 4\bar{B}_{OP}$ and $B_{CoT} = 4T\bar{B}_{CoT},$ the one-pass bound grows exponentially in $T$ relative to the CoT bound, with the constant factors of $4$ and $T$ absorbed into the exponential scaling.

\section{Efficient Black-Box Property Testing for $D_f, \omega$}
\label{propertytesting}
To motivate our use of property testing, we note that exactly calculating these spectral properties of either the true target function or the learned model function would traditionally involve a Fast Fourier-Walsh-Hadamard transform of said function. The computational complexity of this algorithm is $O(T 2^T),$ where again $T$ is the number of input bits. Thus, for even modest-length strings such as $T=50,$ computing these properties exactly is intractable. Therefore, upper bounding these properties approximately with a sub-exponential complexity would fill a real gap for the applicability of our generalization bound in a grey- or black-box setting, where it is known that the model has learned to perfectly fit a Boolean function over the training data, but the degree and sparsity for both the trained model and it's target function are unknown (which may not be the same, for example if the model is a low-degree interpolator as in the sense of \cite{abbe2023generalization}). For our experiments, we implemented two known property testing algorithms from the literature, with the assistance of OpenAI's $o3$ and $o4-$mini-high reasoning models to help find the simplest and most efficient known algorithms for both properties, and to ensure the algorithm was parallelized across our GPUs and adapted to the setting where the outputs are non-binary. For this experiment, we use a string length of $T=50.$ 

For the low-degree test, we used the classical test from the literature \cite{alon2003lowdegreetesting}. The basic idea of this tester is, for a given supposed degree, select $d+1$ points in the domain, and calculate the unique $d-$degree polynomial that interpolates any given subset of $d$ of these points. If the function's output on the remaining point matches the interpolated output, we accept; otherwise, we reject. We then repeat this experiment $O(\frac{1}{\epsilon}log(\frac{1}{\delta}))$ times to achieve the desired confidence level $\delta, $ with proximity parameter $\epsilon.$ This yields an overall query complexity that is in $O(\frac{d}{\epsilon}log(\frac{1}{\delta}))$ We note that while this test was originally developed for Boolean functions, the proof of the function still works with real-valued functions on a Boolean domain, using the same definition of distance. For the low-spectral-width test, we used the classical property testing algorithm of which calculates the complete Fast Fourier-Walsh-Hadamard transform on a random restriction of our function on a smaller number of coordinates \cite{gopalan2011sparsitytesting} . We define the distance from a width-$\omega$ function to be to total tail energy of the function beyond the top-$\omega$ components. To test whether the function is $\epsilon-$ far from a width-$\omega$ function, we check whether the random restriction of our function has tail energy (the sum of Fourier-Walsh coefficients beyond the top $\omega$) greater than $\frac{\epsilon}{2},$ and if so, we reject. Each trial of this test involves $O(\frac{\omega}{\epsilon})$ queries, which we repeat $O(log(\frac{1}{\delta}))$ times o achieve a confidence level of $\delta.$ This yields an overall query complexity that is in $O(\frac{\omega}{\epsilon}log(\frac{1}{\delta})).$ 

The degrees used in our property testing experiment were the same as that of our main experiment shown in e.g. \ref{fig:generalizationgap}. Since the degree was capped at $5,$ the low-degree property tester did not encounter any difficulty with computation time, and completed in a few minutes. In contrast, since the widths used in our experiments go up to $20,$ efficiency became an issue. We experimented with testers that were more easily parallelized and sharded across GPUs, such as the sketch-based testing algorithm described later in the same paper \cite{gopalan2011sparsitytesting}, or a property tester that avoids computing the entire granular Fourier Spectrum, and rather finds the energy in random hash buckets, which can be readily parallelized \citep{YZ2019_FastFourierSparsity}. However, both of these testers carry a worse dependency on the proximity parameter, $\epsilon. $ In light of the fact that we found it necessary to set $\epsilon$ as low as $10^{-4}-10^{-3}$ to obtain bounds that were faithful to the true sparsity, we opted for the simpler, less-parallelizable tester, which holds the entire sub-cube of the random restriction in contiguous memory, as it was ultimately the fastest implementation. 

In our experiments, we generated 8 random functions from each complexity class in our main experiment --that is, each of the degrees in $\{1,2,3,4,5\}$ and each of the sparsities in $\{1,7,14,20\},$ again with string length $50.$ We implemented our test by sweeping up through the degrees and widths until we hit an acceptance.  \autoref{upperboundondegree} shows the average ``first upper bound'' corresponding to the first acceptance, aggregating over all $4$ sparsities and all $8$ trials for each degree, while the right panel of \autoref{upperboundondegree} shows the average of all $5$ degrees and all 8 trials over each . Both plots show a $1-\sigma$ standard deviation bar as well. We found that our property testing-based bounds on the degree and  were both relatively tight, in the sense that they usually accepted at a level at or below the true degree. However, If the proximity parameter $\epsilon$ was not small enough, the upper bounds we found on the degree and width were artificially low. This ``wilting'' effect of our upper bounds below the true complexity parameters was more pronounced with larger proximity parameters. We found that for the low-degree test, a proximity parameter of $\epsilon= 0.001$ and $\delta=0.0001,$ the property test was able to recover upper bounds matching the true complexity parameters for most functions. For the low-spectral-width test, we found that a proximity parameter of $\epsilon=0.001$ and $\delta=.001$ were able to get close to the correct degree and width. Obtaining more faithful upper bounds on the degree and width of the target function or model function evidently requires smaller proximity and confidence parameters, and larger query complexity. 
\begin{figure}[tbh]
    \includegraphics[width=1\columnwidth]{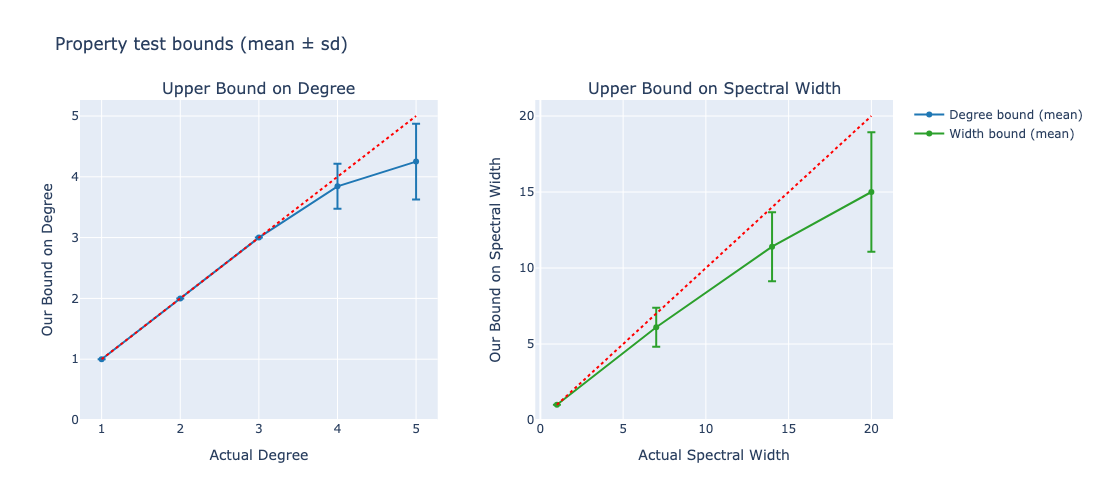}
    \caption{Left:  A plot showing the upper bound on the maximum degree of the target function obtained through property testing as a function of the actual maximum degree.  For each of the degrees used in our main experiments,  we show the mean of the bound as well as $\pm \sigma,$ aggregated over all degrees and all $10$ repetitions of the experiment.  Right: A plot showing the upper bound on the sparsity of the target function obtained through property testing. For each of the sparsity used in our main experiments,  we show the mean of the bound as well as $\pm \sigma, $ aggregated over all degrees and all $10$ repetitions of the experiment. Note that the bounds underestimate the maximum degree and sparsity slightly, due to the laxity of the test, $\epsilon$  The dotted red lines shows the line x=y. }
    \label{upperboundondegree}
\end{figure}

\end{proof}

This supplementary material is provided for the sake of rigor, in particular to include our analysis that upper bounds the term $P(\sigma;...), $ the Trace of the loss Hessian for the perturbed transformer construction. We include this bound in the supplement because, as referenced in the main paper, its derivation is quite involved, is quite loose due to its worst-case nature, and includes strong dependencies on the parameters $\omega, D_f, T, d$ that are not supported by our empirical studies. Therefore, the theoretical bound on $P(\sigma;...)$ derived herein was replaced in our main result with an empirical estimate of the worst-case perturbation to the loss Hessian. While we believe rigor demands an in-depth analysis of this perturbation term, including the analytic result in our main bound would have made it much less practical, making it non-vacuous only for very large samples, while adding little insight into the main drivers of complexity and learnability for transformer learning target functions on boolean domains. Note that our construction does not use a CLS sink; inactive positions $t \in [\omega+1, T]$ receive uniform attention weights in the first layer, but their contributions are zeroed out by the second attention layer, which places zero weight on all positions $t > \omega.$
\section{Perturbed Sharpness Bound}

\begin{theorem}
\label{full perturbed hessian bound}
    Given our perturbed transformer $\mathcal{T}(X,\Theta+\zeta), $ we can upper bound the trace of the hessian as follows:
$$Tr\Big(\nabla^2[L_f(\mathcal{T}(X,\Theta+\zeta))]\Big) \leq 2G_u(\omega, D_f) + P(\sigma,\omega,D_f,T,d) , $$
where $G_u(\omega,D_f)\in O(\omega D_f^3)$ is the squared gradient norm of the unperturbed network, and $P(\sigma,\omega,D_f,T,d)$ is the perturbation cost defined below.
\end{theorem}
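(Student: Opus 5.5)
The plan is to start from the exact quadratic-loss identity used in the proof of Theorem~\ref{ExactTraceBoundsAppendix}, but evaluated at the perturbed point:
\[
\nabla^2 L_f(X,\Theta+\zeta)=2\nabla\mathcal{T}(X,\Theta+\zeta)\nabla\mathcal{T}(X,\Theta+\zeta)^T+2(\mathcal{T}(X,\Theta+\zeta)-f(X))\nabla^2\mathcal{T}(X,\Theta+\zeta).
\]
Taking traces gives
\[
Tr\big(\nabla^2 L_f\big)\le 2\|\nabla\mathcal{T}(X,\Theta+\zeta)\|^2+2|f(X)-\mathcal{T}(X,\Theta+\zeta)|\,\big|Tr\,\nabla^2\mathcal{T}(X,\Theta+\zeta)\big|.
\]
This is the displayed inequality quoted in the main text. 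The first term is split as
\[
\|\nabla\mathcal{T}(\Theta+\zeta)\|^2\le\|\nabla\mathcal{T}(\Theta)\|^2+2\|\nabla\mathcal{T}(\Theta)\|\,\|\Delta_g\|+\|\Delta_g\|^2,
\qquad \Delta_g:=\nabla\mathcal{T}(\Theta+\zeta)-\nabla\mathcal{T}(\Theta).
\]
The unperturbed piece is bounded by $G_u(\omega,D_f)$ via Theorem~\ref{unperturbed gradient bounds appendix}. Everything else is collected into $P(\sigma,\omega,D_f,T,d)$.

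To control $\Delta_g$, I would reuse the chain-rule decomposition from Theorem~\ref{unperturbed gradient bounds appendix}. The gradient factors into three pieces:
\begin{itemize}
\item the inter-layer factor $\nabla_{g_t}\mathcal{T}$, which was exactly $c_t$ in the last coordinate;
\item the MLP Jacobian $\mathcal{J}_{g_t}(b_t)$, which was $g'_t$, equal to $0$ on the grid;
\item the intra-layer gradients (e.g.\ $\nabla_{W^{(1)}}B_{t,i}$, $\nabla_M G_{t,i}$), whose norms were already recorded with the indicators $I[i=d+1\wedge t\le\omega]$.
\end{itemize}
Under perturbation each factor moves. I would bound each moved factor by its unperturbed value plus a Lipschitz term times $\|\zeta\|$, with $\|\zeta\|\le\|\epsilon\|$ and $\mathbb{E}\|\epsilon\|^2=\sigma^2|\Theta|$. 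The tools are the perturbed versions of the norm bounds in Theorem~\ref{parameterbounds}, each of the form (old bound) $+\,O(\|\zeta\|)$, Lemma~\ref{deoralemma} for every softmax-Jacobian product, and the $1$-Lipschitzness of the softmax. Two things now become nonzero. The second-layer attention gradient acquires a $d+1$ component, since $W^{(2)}e_{d+1}$ is no longer $0$. The dense perturbed matrices also let all $T+1$ positions and all $d+1$ coordinates contribute. These effects are what produce the $T$ and $d$ dependence in $P$.

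For the second term, $|f-\mathcal{T}(\Theta+\zeta)|$ is bounded by $\|\nabla\mathcal{T}\|$ along the segment times $\|\zeta\|$ using the mean value theorem, since $\mathcal{T}(\Theta)=f$. The quantity $|Tr\,\nabla^2\mathcal{T}|$ is bounded by differentiating the same three-factor chain rule once more. The MLP second derivative vanishes almost everywhere, and I would invoke the lemma of Appendix~\ref{mlpperturbationlemma} that for small $\sigma$ no ReLU changes sign with high probability. With that, the only curvature sources are the two softmaxes, whose Hessians are bounded by constants times products of $\|X\|_{2,\infty}$- and $\|G\|_{2,\infty}$-type norms. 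The mixed parameter pairs enter through products of first-order intra-layer gradients. Summing over the $|\Theta|$ diagonal entries gives a factor $|\Theta|$; this is the $T_p\cdot|\Theta|\cdot(H_u+H_p)$ structure mentioned in the main text.

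The main obstacle is the first factor above: with $g'_t=0$ killing all first-layer gradients at $\Theta$, the perturbed MLP slope becomes $\pm4D_f$ as soon as $b_t$ leaves the flat parts of the trapezoid. The perturbed first-layer gradients can therefore jump rather than grow smoothly. I would first try to show, using the grid margin of $\frac{1}{4D_f}$ in $\Gamma$, that for $\sigma\lesssim 1/(D_f\,\mathrm{poly}(d,T))$ the perturbed $b_{t,d+1}$ stays within the flat region with high probability, so that $g'_t=0$ persists. On the complementary event I would fall back to crude worst-case bounds weighted by its small probability. That fallback is where the large constants and rapid growth in $D_f,\omega,T,d$ of $P$ will come from.
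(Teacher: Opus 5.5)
Your proposal follows the paper's proof: the same split of the loss Hessian into $2\nabla\mathcal{T}\nabla\mathcal{T}^T$ plus the residual-weighted $\nabla^2\mathcal{T}$, the triangle-inequality expansion of $\|\nabla\mathcal{T}(X,\Theta+\zeta)\|^2$ around $G_u$ via the perturbed-gradient bound $G_p$, and the residual bound $T_p$ times $|Tr\,\nabla^2\mathcal{T}|\le|\Theta|(H_u+H_p)$, with the slope-flip obstacle handled as you suggest by the $\frac{1}{4D_f}$ margin and the condition $\sigma\le\frac{1}{16dD_f}$ of Lemma~\ref{mlpperturbationlemma}, where the paper simply conditions on the no-flip event and has no probability-weighted fallback. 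The deviations are minor: you obtain $T_p$ from the mean value theorem plus the gradient bounds instead of the direct three-term decomposition in Theorem~\ref{perturbed transformer appendix}, which works if you use a high-probability bound on $\|\epsilon\|$ rather than $\mathbb{E}\|\epsilon\|^2$ (the statement is pointwise in $\zeta$); and your bookkeeping correctly puts $4\sqrt{G_u}G_p+2G_p^2$ into $P$, whereas the paper's displayed $P$ has only $2\sqrt{G_u}G_p+G_p^2$ because it drops the factor $2$ in front of $\|\nabla\mathcal{T}\|^2$.
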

\begin{proof}
We decompose the loss Hessian. Recall that for our quadratic loss,
$$\nabla^2[L_f(\mathcal{T}(X,\Theta+\zeta))]=2 \nabla \mathcal{T}(X,\Theta+\zeta)\nabla \mathcal{T}(X,\Theta+\zeta)^T+ 2(\mathcal{T}(X,\Theta+\zeta)-f(x))\nabla^2 \mathcal{T}(X,\Theta+\zeta)$$
At the exact construction $\Theta$ where $\mathcal{T}(X,\Theta)=f(x),$ the loss Hessian simplifies to $\nabla^2[L_f(\mathcal{T}(X,\Theta))]=2\nabla\mathcal{T}(X,\Theta)\nabla\mathcal{T}(X,\Theta)^T.$ Taking the trace of the perturbed loss Hessian:
$$Tr\Big(\nabla^2[L_f(\mathcal{T}(X,\Theta+\zeta))]\Big) = 2||\nabla \mathcal{T}(X,\Theta+\zeta)||^2+ 2(\mathcal{T}(X,\Theta+\zeta)-f(x))Tr\Big(\nabla^2 \mathcal{T}(X,\Theta+\zeta)\Big)$$
Applying the triangle inequality to the gradient norm:
$$||\nabla \mathcal{T}(X,\Theta+\zeta)||^2 \leq \Big(||\nabla \mathcal{T}(X,\Theta)||+||\nabla \mathcal{T}(X,\Theta+\zeta)-\nabla \mathcal{T}(X,\Theta)||\Big)^2$$
$$\leq G_u(\omega, D_f) + 2\sqrt{G_u}G_p + G_p^2$$
where in \ref{perturbed gradient bounds appendix} we demonstrate that
$$|| \nabla \mathcal{T}(X,\Theta+\zeta)-\nabla \mathcal{T}(X,\Theta)||\leq G_p(\sigma,\omega,D_f,T)$$
where $$G_p(\sigma,\omega, D_f,T) \in  O\Big(\sigma D_f^{\frac{7}{2}}\omega^{2}log^2(T)\Big)$$
and in \ref{unperturbed gradient bounds appendix} we show that $||\nabla_{\Theta}\mathcal{T}(X,\Theta)||^2_F \leq G_u(\omega, D_f)$ where $ G_u(\omega, D_f) \in O(\omega D_f^3).$

For the second term, in \ref{perturbed transformer appendix} it is shown that $|f(x)-\mathcal{T}(X,\Theta+\zeta)| \leq T_p(\sigma,\omega, D_f,T),$
where $T_p(\sigma, \omega, D_f,T) \in O(\sigma D_f^2\omega log^2(T)).$ The trace of the perturbed Hessian is bounded using the perturbed Hessian operator norm analysis in \ref{perturbedhessianbounds}:
$$\Big|Tr\Big(\nabla^2 \mathcal{T}(X,\Theta+\zeta)\Big)\Big|\leq |\Theta|\Big\|\nabla^2 \mathcal{T}(X,\Theta+\zeta)\Big\|_{op}\leq |\Theta|\Big(H_u+H_p\Big)$$
Combining these, the perturbation cost is:
$$P(\sigma,\omega,D_f,T,d) = 2\sqrt{G_u}G_p + G_p^2 + 2T_p|\Theta|(H_u+H_p)$$
\end{proof}

While the norm of the unperturbed transformer's Hessian did not play a role in the unperturbed Hessian of the loss due to a convenient simplification (the loss is exactly zero), it is a required component for the Hessian perturbation contribution, since the error is no longer exactly zero under perturbation.

\begin{theorem}
  \label{exact hessian bounds appendix}
    Suppose our transformer construction with context length $T$ is exactly representing some function $f$ with maximum degree $D_f$ and sparsity $\omega.$ Then the following bound on the operator norm of the hessian holds uniformly for all $X$.
$$||\nabla^2_{\Theta} \mathcal{T}(X,\Theta)||\lessapprox H_u(\omega,D_f,d)$$
where we have defined the function $H_u(\omega,D_f,d):=$
$$2\sqrt{2(d+1)}+ 16D_f\sqrt{2(D_f+1)\omega (d+1)}+ 16D_f\sqrt{(D_f+1)\omega(d+1)} $$
$$+ 4\sqrt{\omega (D_f+1)} + 2\sqrt{\omega}+ 8(D_f+1) \sqrt{d\omega}$$
$$\in O\Big(D_f^{\frac{3}{2}}\sqrt{\omega d}\Big) = O\Big(D_f^{\frac{3}{2}}\sqrt{\omega log(T)}\Big)$$
\end{theorem}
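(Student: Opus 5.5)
To bound $\|\nabla^2_\Theta \mathcal{T}(X,\Theta)\|$, I would use the fact that the operator norm of a symmetric block matrix is at most the sum of the operator norms of its blocks, treating each symmetric off-diagonal pair together. I would also reuse the bottleneck decomposition from the proof of Theorem~\ref{unperturbed gradient bounds appendix}. The parameter blocks are $V^{(2)}, W^{(2)}, F, M, \Gamma, V^{(1)}, W^{(1)}$.

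\textbf{Which blocks survive.} At the exact construction two facts remove most blocks.

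\begin{itemize}
\item $g'_t=0$ at every grid point, and the second derivative of the trapezoidal MLP vanishes almost everywhere. So every block routed through $\partial G_{t,d+1}/\partial B_{t,d+1}$ drops out. These are all second derivatives involving only $V^{(1)},W^{(1)}$, and the cross terms between them and $M,\Gamma$ that pass through $g'_t$.
\item $W^{(2)}e_{d+1}=(W^{(2)})^Te_{d+1}=0$, so the softmax Jacobian terms of layer~2 do not couple to the value coordinate except through $V^{(2)}$.
\end{itemize}

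The blocks that survive are the following, each computed by the same finite-difference technique used for the gradients (perturb two matrices, then extract the bilinear term in $\Delta_1,\Delta_2$).

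\begin{itemize}
\item $(V^{(2)},W^{(2)})$: this has the form $g_{T+1}\otimes G^T\phi'^{(2)}_{T+1}G$. Lemma~\ref{deoralemma} together with $\|G\|_{2,\infty}\lessapprox\sqrt2$ and $\|g_{T+1}\|\lessapprox 1$ gives a contribution of order $2\sqrt{2}$. A factor $\sqrt{d+1}$ is lost when converting the rank structure in the $V^{(2)}$ index to an operator norm, which gives $2\sqrt{2(d+1)}$.
\item $(W^{(2)},W^{(2)})$: this involves the second derivative of the softmax. I expect it to vanish or be negligible, because $\phi^{(2)}$ is nearly one-hot in the limit.
\item $(V^{(2)},F)$, $(V^{(2)},M)$, $(V^{(2)},\Gamma)$: $V^{(2)}$ enters linearly through $G^T\phi^{(2)}$. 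These are outer products of $\phi^{(2)}_{T+1}$-weighted intra-layer MLP gradients. Using the intra-layer bounds $\|\nabla_M G_{t,i}\|\lessapprox 8D_f^{3/2}$ and $\|\nabla_\Gamma G_{t,i}\|\lessapprox 8D_f^{3/2}$, plus Cauchy--Schwarz over the $\omega$ active positions with $\sum c_t\le\sqrt{\omega}$, these produce the $16D_f\sqrt{(D_f+1)\omega(d+1)}$-type terms.
\item $(F,M)$ and $(F,\Gamma)$: these are $c_t\, b_t e_{d+1}^T\otimes \mathrm{diag}(I[\cdot])$. Using $\|b_t\|\le 1$ and summing over $t\le\omega$, they give the $4\sqrt{\omega(D_f+1)}$ term.
\item $(M,M)$, $(M,\Gamma)$, $(\Gamma,\Gamma)$: these involve the ReLU second derivative and are zero almost everywhere.
\item $(F,V^{(1)})$ and $(F,W^{(1)})$: these pass through the MLP Jacobian with respect to $b_t$. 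That Jacobian is $F^T\mathrm{diag}(I)M^T$, whose nonzero part has size up to $4(D_f+1)$ before cancellation. Combined with $\|\nabla_{V^{(1)}}B_{t,i}\|\lessapprox\sqrt2$ and $\sqrt{d}$ from the projected dimension, they give $8(D_f+1)\sqrt{d\omega}$.
\item A remaining $2\sqrt{\omega}$ term comes from $(M,V^{(1)})$-type cross terms weighted by $c_t$.
\end{itemize}

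\textbf{Assembling.} Summing the block bounds gives $H_u$. The dominant terms are $O(D_f^{3/2}\sqrt{\omega d})$, and $d\in O(\log T)$ from the JL choice.

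\textbf{Main obstacle.} The hardest step is doing the cross blocks between layer-1 parameters and MLP parameters carefully. Because $g'_t=0$, one must distinguish the first-order MLP Jacobian, which is zero in the value coordinate, from mixed derivatives like $\partial^2 G/\partial F\,\partial b$, which are not zero. I would attack this by writing $G_{t,d+1}=e_{d+1}^TF^T(M^Tb_t+\Gamma)_+$ and differentiating once in $F$ and once in $b_t$. This gives $\mathrm{diag}(I)M^Tb$-type factors with norm at most $\|M\|_F\le 2\sqrt{D_f+1}$. I would then chain these with the intra-layer gradients already bounded. The $\lessapprox$ qualifiers would come from the random-projection approximations of Theorem~\ref{parameterbounds}.
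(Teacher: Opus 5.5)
You use the same block decomposition as the paper. The gap is the decisive step, ``summing the block bounds gives $H_u$'': you never compute the blocks. You read the six summands of $H_u$ off the target and attach them to blocks, and several attachments disagree with the paper's or are false.

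In the paper, the six summands are attributed one-to-one to six explicitly computed blocks: $(W^{(2)},V^{(2)})$, $(M,V^{(2)})$, $(\Gamma,V^{(2)})$, $(F,V^{(2)})$, $(\Gamma,F)$, $(F,M)$. These are bounded with Lemma~\ref{deoralemma} and Theorem~\ref{parameterbounds}, and every block touching $V^{(1)}$ or $W^{(1)}$ is discarded. You keep all of those \emph{plus} $(F,V^{(1)})$, $(F,W^{(1)})$, $(M,V^{(1)})$, and you relabel:
\begin{itemize}
\item $8(D_f+1)\sqrt{d\omega}$ goes to the layer-1 blocks (paper: $(F,M)$);
\item $4\sqrt{\omega(D_f+1)}$ goes to $(F,M)+(F,\Gamma)$ (paper: $(F,V^{(2)})$, which you fold into the $16D_f$ terms);
\item $2\sqrt{\omega}$ goes to $(M,V^{(1)})$ (paper: $(\Gamma,F)$).
\end{itemize}
More nonzero blocks than summands cannot add up to $H_u$ without an absorption argument.

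The last assignment is false. We have $\partial^2\mathcal{T}/\partial M_{b,k}\,\partial V^{(1)}_{a,b'}=\delta_{bb'}\sum_{t\le\omega}c_t\,F_{k,d+1}I_k(t)\,(X^T\phi_t)_a$. This carries the factor $F_{:,d+1}\odot I(t)$. When all bits of $S_t$ are $1$, about $D_f/2$ fully active quadruples have weights $\pm 4D_f$, so this factor has norm at least $4\sqrt{2}D_f^{3/2}$, while $(X^T\phi_t)_{d+1}=k_t/D_f=1$. For $\omega=1$ the block is rank one with norm at least $4\sqrt{2}\,|c_1|D_f^{3/2}$, not $\le 2\sqrt{\omega}$. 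The analogous $(M,W^{(1)})$ block, with the same structure and $\nabla_{W^{(1)}}B_{t,d+1}$ in place of $X^T\phi_t$, is missing from your list. Your first bullet also says these $M$/layer-1 cross terms drop out, and then you reinstate one of them.

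Your instinct that layer-1 cross blocks survive is nonetheless correct, and it exposes a non sequitur in the paper's own elimination. The quantity $g'_t=\sum_j F_{j,d+1}I_j(t)M_{d+1,j}$ vanishes only by cancellation, whereas $\partial g'_t/\partial F_{k,d+1}=I_k(t)\neq 0$ and $\partial g'_t/\partial M_{d+1,k}=F_{k,d+1}I_k(t)\neq 0$. So $\nabla_{V^{(1)}}\mathcal{T}=\nabla_{W^{(1)}}\mathcal{T}=0$ at $\Theta$ does not kill the corresponding Hessian rows. Consequently, neither your relabeling nor the paper's six-block bookkeeping establishes the displayed $H_u$. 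You would have to bound these blocks explicitly:
\begin{itemize}
\item $(F,V^{(1)})$ and $(F,W^{(1)})$ are of order $\sqrt{(D_f+1)\omega}$, via $\|\mathrm{diag}(I)M^T\|\le 2\sqrt{D_f+1}$ and $\|X^T\phi_t\|,\ \|\nabla_{W^{(1)}}B_{t,d+1}\|_F\lessapprox 2\sqrt{2}$;
\item $(M,V^{(1)})$ and $(M,W^{(1)})$ are of order $D_f^{3/2}\sqrt{\omega}$.
\end{itemize}
Then either show they fit under the slack in the loose $16D_f\sqrt{(D_f+1)\omega(d+1)}$ terms, or settle for the $O(D_f^{3/2}\sqrt{\omega d})$ order claim.

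Separately, your dismissal of $(W^{(2)},W^{(2)})$ rests on a false premise. $\phi^{(2)}_{T+1}\to c/D$ is spread over $\omega$ positions, not one-hot, so the softmax-curvature term does not vanish for $\omega\ge 2$; the paper simply asserts this block is zero. It is only $O(\sqrt{\omega})$, so it is harmless for the order, but it needs its own bound.
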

\begin{proof}

\section{Second Derivatives}
It is worth noting in advance of our calculation of the Hessian that, while conceivably there could be $\binom{7}{2} +7= 28$ terms or blocks in the Hessian, the above structure for the gradients implies that many of the second derivative terms will be $0.$ To understand which cross derivatives will vanish, we first recall that $\nabla_{V^{(1)}} \mathcal{T}(X,\Theta)$ and $\nabla_{W^{(1)}} \mathcal{T}(X,\Theta)$ are both $0$ matrices. This removes 13 cross derivatives. 

Clearly, $\nabla_{W^{(2)}} \mathcal{T}(X,\Theta)$ does not depend on $W^{(2)},$ so $\nabla^2_{W^{(2)}} \mathcal{T}(X,\Theta)$ is $0.$ But if we look closer at this gradient, we can see that almost all of their other cross derivatives are also $0$: recall that the first $d$ columns of $G,$ the entire vector $g_{T+1},$ and the entire second-layer attention vector ${\phi^{(2)}_{T+1}}$ for the output position $T+1$ (and therefore also ${\phi'^{(2)}_{T+1}} = diag({\phi^{(2)}_{T+1}}) - {\phi^{(2)}_{T+1}}({\phi^{(2)}_{T+1}})^T$)  depend only on positional information. Thus, the gradient is $0$ with respect to all other parameter matrices except for $V^{(2)}.$ This eliminates 4 additional cross derivatives. 

Similarly, it is clear that $\nabla_{V^{(2)}} \mathcal{T}(X,\Theta)$ does not depend on $V^{(2)}$  and therefore the second derivative with respect to $V^{(2)}$ will be $0$. The matrix $G^T {\phi^{(2)}_{T+1}}$  does indeed depend on the MLP outputs, but this eliminates one additional cross-derivative.

Regarding cross-derivatives among the MLP matrices, note that $\nabla_{F} \mathcal{T}(X,\Theta) $ does not depend on $F, $ and thus that second derivative is zero. Further, both $\nabla_{\Gamma} \mathcal{T}(X,\Theta) $ and $\nabla_{M} \mathcal{T}(X,\Theta) $ depend on the $M$ and $\Gamma$ only inside of an indicator function, which has a slope of $0$ almost everywhere. Thus, 4 more (unique) cross derivatives vanish, which means that we will need to consider no more than $28-13-4-1-4=6$ nonzero terms in the Hessian. 



1) \subsection{$\nabla^2_{W^{(2)}V^{(2)}} \mathcal{T}(X,\Theta):$}
The gradient with respect to $V^{(2)}$ is given by $\Big[\nabla_{V^{(2)}} \mathcal{T}(X,\Theta) \Big]_i =(G_{:,i})^T{\phi^{(2)}_{T+1}}.$ Recall that the output position $T+1$ places zero attention weight on itself (all weight is on positions $1,\dots,\omega$), so $[\phi^{(2)}_{T+1}]_{T+1}=0.$ Since the first $d$ columns of $G$ are independent of the parameter matrices, and $\phi^{(2)}_{T+1}$ depends only on positional information and the matrix $W^{(2)},$ all cross derivatives with  $V^{(2)}$ except this one will all be zero whenever $i\neq d+1.$ In this case however, the derivative is nonzero for $1\leq i\leq d+1.$
$$\nabla_{W^{
(2)}}\Big(\Big[\nabla_{V^{(2)}} \mathcal{T}(X,\Theta) \Big]_i\Big)  =\sum_{t=1}^{T+1} G_{t,i}\nabla_{W^{
(2)}}[\phi_{T+1}^{(2)}]_t$$
Note that 
$$[\phi_{T+1}^{(2)}]_t = e_t^T\phi(G(W^{(2)})^Tg_{T+1})$$
Taking finite differences, we have 
$$ e_t^T\phi(G(W^{(2)}+\Delta)^Tg_{T+1})=e_t^T\phi(G(W^{(2)})^Tg_{T+1})+e_t^T\phi'(G(W^{(2)})^Tg_{T+1})G\Delta^Tg_{T+1}$$
$$=e_t^T\phi(G(W^{(2)})^Tg_{T+1})+Tr\Big(g_{T+1}e_t^T\phi'^{(2)}_{T+1}G\Delta^T\Big)$$
From here we can conclude that $\nabla_{W^{
(2)}}[\phi_{T+1}^{(2)}]_t = g_{T+1}e_t^T\phi'^{(2)}_{T+1}G , $ and therefore
$$\nabla_{W^{
(2)}}\Big(\Big[\nabla_{V^{(2)}} \mathcal{T}(X,\Theta) \Big]_i \Big) =\sum_{t=1}^{T+1} G_{t,i}\nabla_{W^{
(2)}}[\phi_{T+1}^{(2)}]_t=\sum_{t=1}^{\omega} G_{t,i} g_{T+1}e_t^T\phi'^{(2)}_{T+1}G$$

Following the exact same arguments, we can immediately write the following 5 additional cross-derivatives involving $V^{(2)},$ noting that the cross-derivatives are all $0$ for $i < d+1.$

2) \subsection{$\nabla^2_{MV^{(2)}} \mathcal{T}(X,\Theta):$}
Recall that all cross derivatives with  $V^{(2)}$ will be zero whenever $i\neq d+1.$  When $i=d+1,$ we have 
$$e_{d+1}^TG^T{\phi^{(2)}_{T+1}} = \sum_{t=1}^{\omega} G_{t,d+1}c_t =\mathcal{T}(X,\Theta)$$

$$\nabla_{M } \Big(\big[\nabla_{V^{(2)}} \mathcal{T}(X,\Theta) \big]_{i}\Big)=\begin{cases}  
\mathbf{0}_{(d+1)\times 4(D_f+1)}  , &i\leq d \\ 
\\
\nabla_{M}\mathcal{T}(X,\Theta)=\sum_{t=1}^{\omega} c_tb_te_{d+1}^TF^Tdiag(I\big[M^Tb_t+\Gamma>0\big]), & i=d+1 \\ 
\end{cases}$$

3) \subsection{$\nabla^2_{FV^{(2)}} \mathcal{T}(X,\Theta):$}
$$\nabla_{F} \Big(\big[\nabla_{V^{(2)}} \mathcal{T}(X,\Theta) \big]_{i}\Big)=\begin{cases}  
\mathbf{0}_{4(D_f+1)\times (d+1) }  , &i\leq d \\ 
\\
\nabla_{F}\mathcal{T}(X,\Theta)=\sum_{t=1}^{\omega} c_t\big(M^Tb_t + \Gamma\big)_+e_{d+1}^T, & i=d+1 \\ 
\end{cases}$$
4) \subsection{$\nabla^2_{\Gamma V^{(2)}} \mathcal{T}(X,\Theta):$}
$$\nabla_{\Gamma} \Big(\big[\nabla_{V^{(2)}} \mathcal{T}(X,\Theta) \big]_{i}\Big)=\begin{cases}  
\mathbf{0}_{4(D_f+1)}  , &i\leq d \\ 
\\
\nabla_{\Gamma}\mathcal{T}(X,\Theta)=\sum_{t=1}^{\omega} c_te_{d+1}^T F^Tdiag(I\big[M^Tb_t+\Gamma\geq0\big]), & i=d+1 \\ 
\end{cases}$$

5) \subsection{$\nabla^2_{\Gamma F} \mathcal{T}(X,\Theta):$}
$$\nabla_{F} \mathcal{T}(X,\Theta) = \sum_{t=1}^{\omega} c_t\big(M^Tb_t + \Gamma\big)_+e_{d+1}^T $$
$$e_i^T \nabla_{F} \mathcal{T}(X,\Theta)e_j  =\sum_{t=1}^{\omega}  c_te_i^T\big(M^Tb_t + \Gamma\big)_+ e_{d+1}^Te_j $$
$$\nabla_{\Gamma} \Big(e_i^T \nabla_{F} \mathcal{T}(X,\Theta)e_j\Big) =I\big[j=d+1\big] \sum_{t=1}^{\omega}   c_te_iI\big[[M^Tb_t + \Gamma]_i>0\big]$$

6) \subsection{$\nabla^2_{FM} \mathcal{T}(X,\Theta):$}

From above, we have 
$$\nabla_{M} \mathcal{T}(X,\Theta) = \sum_{t=1}^{\omega} c_tb_te_{d+1}^TF^Tdiag(I\big[M^Tb_t+\Gamma>0\big])$$
$$\nabla_F\Big( \big[\nabla_{M} \mathcal{T}(X,\Theta)\big]_{i,j} \Big)= \sum_{t=1}^{\omega} c_t \nabla_F \Big(e_i^Tb_te_{d+1}^TF^Tdiag(I\big[M^Tb_t+\Gamma>0\big])e_j\Big)$$
$$\nabla_F\Big( \big[\nabla_{M} \mathcal{T}(X,\Theta)\big]_{i,j} \Big)= \sum_{t=1}^{\omega} c_t(e_i^Tb_t) \nabla_F \Bigg(e_j^Tdiag\Big(I\big[M^Tb_t+\Gamma>0\big]\Big)Fe_{d+1}\Bigg)$$
$$\nabla_F\Big( \big[\nabla_{M} \mathcal{T}(X,\Theta)\big]_{i,j} \Big)= \sum_{t=1}^{\omega} c_t(e_i^Tb_t) diag\Big(I\big[M^Tb_t+\Gamma>0\big]\Big)e_je_{d+1}^T$$
Note that we can take the scalar $e_i^Tb_t$ outside of the $\nabla_F$ operator in the third equation above because $b_t $ is upstream of $F.$  
\section{Bounding Maximum Eigenvalues}
We can express the complete hessian as a $\Big(\big(3(d+1)^2+(d+1)+(8d+6)(D_f+1)\big)\times \big(3(d+1)^2+(d+1)+(8d+6)(D_f+1)\big)\Big)$ block matrix. To bound the maximum eigenvalue of the Hessian, we want to find $\max_{||\mathbf{v}||=1}\langle \mathbf{v},\nabla^2_{\Theta}\mathcal{T}(X,\Theta)\mathbf{v}\rangle,$ where
$$\mathbf{v}=concat(\mathbf{s},\mathbf{p},\mathbf{q}, \mathbf{r},\mathbf{\eta},\mathbf{\gamma},\mathbf{\nu},),$$
$$=concat(s_1,\dots,s_{d+1},p_1,\dots,p_{d+1},q_1,\dots,q_{d+1},r_1,\dots r_{d+1},\eta_1,\dots,\eta_{d+1},\gamma_1,\nu_1,\dots,\nu_{4(D_f+1)}),$$

where $s_i\in \mathbb{R}$,$p_i,q_i,r_i,\nu_i \in \mathbb{R}^{d+1},\eta_i,\gamma_1 \in \mathbb{R}^{4(D_f+1)}.$ Written out  explicitly, we seek to upper bound 

\begin{equation}
\begin{split}
\bigl\|\nabla^2_{\Theta}\mathcal{T}(X,\Theta)\bigr\|
&= \max_{\|v\|=1}
  \begin{pmatrix}
    \mathbf{s}\\
    \mathbf{p}\\
    \mathbf{q}\\
    \mathbf{r}\\
    \boldsymbol{\eta}\\
    \boldsymbol{\gamma}\\
    \boldsymbol{\nu}
  \end{pmatrix}^T
  \begin{pmatrix}
    \nabla^2_{V^{(2)}V^{(2)}} & \nabla^2_{V^{(2)}W^{(2)}} & \cdots & \nabla^2_{V^{(2)}F} \\
    \nabla^2_{W^{(2)}V^{(2)}} & \nabla^2_{W^{(2)}W^{(2)}} & \cdots & \nabla^2_{W^{(2)}F} \\
    \vdots                    & \vdots                    & \ddots & \vdots                 \\
    \nabla^2_{F V^{(2)}}      & \nabla^2_{F W^{(2)}}      & \cdots & \nabla^2_{F F}
  \end{pmatrix}
  \begin{pmatrix}
    \mathbf{s}\\
    \mathbf{p}\\
    \mathbf{q}\\
    \mathbf{r}\\
    \boldsymbol{\eta}\\
    \boldsymbol{\gamma}\\
    \boldsymbol{\nu}
  \end{pmatrix}
\end{split}
\end{equation}

Therefore,
$$||\nabla^2_{\Theta} \mathcal{T}(X,\Theta)|| \leq \underbrace{2  \max_{||v||=1} concat(s_1,...s_{d+1})^T\Big(\nabla^2_{{W^{(2)}}V^{(2)}}\mathcal{T}(X,\Theta) \Big)concat(p_1,...,p_{d+1})}_{Term_1}$$

$$+\underbrace{2  \max_{||v||=1} concat(s_1,...,s_{d+1})^T\Big(\nabla^2_{MV^{(2)}}\mathcal{T}(X,\Theta) \Big)concat(\eta_1,...\eta_{d+1})}_{Term_2}$$
$$+\underbrace{2  \max_{||v||=1}concat(s_1,...,s_{d+1})^T \Big(\nabla^2_{\Gamma V^{(2)}}\mathcal{T}(X,\Theta)   \Big)\gamma_1}_{Term_3}$$
$$+\underbrace{2  \max_{||v||=1} concat(s_1,...,s_{d+1})^T\Big(\nabla^2_{FV^{(2)}}\mathcal{T}(X,\Theta) \Big)concat(\nu_1,...\nu_{4(D_f+1)})}_{Term_4}$$
$$+\underbrace{2 \max_{||v||=1} \gamma_1^T \Big(\nabla^2_{F \Gamma}\mathcal{T}(X,\Theta) \Big)concat(\nu_1,...\nu_{4(D_f+1)})}_{Term_{5}}$$
$$+\underbrace{2  \max_{||v||=1} concat(\eta_1,\dots,\eta_{d+1})^T\Big(\nabla^2_{F M}\mathcal{T}(X,\Theta) \Big)concat(\nu_1,...\nu_{4(D_f+1)})}_{Term_{6}}$$

\subsection{$Term_1:$}
We have $$Term_1 =2  \max_{||v||=1} concat(s_1,...s_{d+1})^T\Big(\nabla^2_{{W^{(2)}}V^{(2)}}\mathcal{T}(X,\Theta) \Big)concat(p_1,...,p_{d+1})$$
$$=2max_{||v||=1}\sum_{i=1}^{d+1} \sum_{t=1}^{\omega}  s_{i}Vec\Big(G_{t,i}g_{T+1}e_t^T{\phi'^{(2)}_{T+1}}G\Big)\mathbf{p}$$
$$=2max_{||v||=1}\sum_{i=1}^{d+1} \sum_{t=1}^{\omega}\sum_{k=1}^{d+1}  s_{i}G_{t,i}G_{T+1,k}e_t^T{\phi'^{(2)}_{T+1}}G\mathbf{p_k}$$
$$\leq 2\sum_{i=1}^{d+1} \sum_{t=1}^{\omega}\sum_{k=1}^{d+1}  |s_{i}|||G||_{1,\infty}^2\Big\|e_t^T{\phi'^{(2)}_{T+1}}G\Big\|||\mathbf{p_k}||$$
By \ref{deoralemma}, we know that $\Big\|e_t^T{\phi'^{(2)}_{T+1}}G\Big\|\leq 2||e_t||_{\infty} ||G||_{2,\infty} \leq 2||G||_{2,\infty}.$ By \ref{parameterbounds},
 we therefore have (collapsing the triple sum via Cauchy-Schwarz on the unit-vector constraint and substituting the parameter bounds from \ref{parameterbounds})
$$Term_1 \lessapprox 4\sqrt{2}d\omega$$

\subsection{$Term_2:$}
$$2  \max_{||v||=1} concat(s_1,...,s_{d+1})^T\Big(\nabla^2_{MV^{(2)}}\mathcal{T}(X,\Theta) \Big)concat(\eta_1,...\eta_{d+1})$$
(Since the output depends only on $V^{(2)}_{d+1,:},$ the Hessian block $\nabla^2_{MV^{(2)}}$ is zero for all rows $i\neq d+1,$ so only $s_{d+1}$ survives in all $V^{(2)}$ cross-terms below.)
$$=2\max_{||v||=1} s_{d+1}\sum_{t=1}^{\omega} Vec\Big(c_tb_te_{d+1}^TF^Tdiag(I\big[M^Tb_t+\Gamma>0\big])\Big)\mathbf{\eta}$$
$$=2\max_{||v||=1} s_{d+1}\sum_{t=1}^{\omega} \sum_{k=1}^{d+1} c_tb_{tk}e_{d+1}^TF^Tdiag(I\big[M^Tb_t+\Gamma>0\big])\mathbf{\eta}_k$$
$$\leq 2 |s_{d+1}|||B||_{1,\infty}\sum_{t=1}^{\omega} \sum_{k=1}^{d+1} |c_t| \Big\|e_{d+1}^TF^Tdiag(I\big[M^Tb_t+\Gamma>0\big])\Big\|||\mathbf{\eta}_k||$$
$$\leq 2 |s_{d+1}|||B||_{1,\infty}\sum_{t=1}^{\omega} \sum_{k=1}^{d+1} |c_t| ||F_{:,d+1}||||\mathbf{\eta}_k||$$
The entries of the final column of $F$ in our construction are in $[-4D_f,4D_f].$ So $||F_{:,d+1}||\leq 4D_f\sqrt{4(D_f+1)}.$ Using \ref{parameterbounds} to bound $||B||_{1,\infty},$ we get

$$Term_2 \leq 16D_f\sqrt{2(D_f+1)\omega (d+1)} $$

\subsection{$Term_3$:}
$$ Term_3 =2\max_{||v||=1} concat(s_1,...,s_{d+1})^T\Big(\nabla^2_{\Gamma V^{(2)}}\mathcal{T}(X,\Theta)\Big)\mathbf{\gamma_1}$$
$$= 2\max_{||v||=1} s_{d+1} \sum_{t=1}^{\omega}c_tVec\Big(diag(I\big[M^Tb_t+\Gamma\geq0\big])Fe_{d+1}\Big)\mathbf{\gamma_1}$$
$$=2\max_{||v||=1} s_{d+1} \sum_{t=1}^{\omega}\sum_{k=1}^{4(D_f+1)} c_te_k^Tdiag(I\big[M^Tb_t+\Gamma\geq0\big])Fe_{d+1}|\gamma_{1k}|$$
$$\leq 2|s_{d+1}| \sum_{t=1}^{\omega} |c_t| \sum_{k=1}^{4(D_f+1)} |(Fe_{d+1})_k||\gamma_{1k}| \leq 2|s_{d+1}| \sum_{t=1}^{\omega} |c_t| \|Fe_{d+1}\|\|\gamma_1\|$$
$$\leq 16D_f\omega\sqrt{D_f+1}$$
\subsection{$Term_4$:}
$$  Term_4= 2\max_{||v||=1}concat(s_1,...,s_{d+1})^T\Big(\nabla^2_{FV^{(2)}}\mathcal{T}(X,\Theta) \Big) concat(\mathbf{\nu}_1,...\nu_{4(D_f+1)})$$
$$=2\max_{||v||=1} s_{d+1}\sum_{t=1}^{\omega} c_tVec\Big(\big(M^Tb_t + \Gamma\big)_+e_{d+1}^T\Big)\mathbf{\nu}$$
$$=2\max_{||v||=1} s_{d+1}\sum_{t=1}^{\omega} \sum_{k=1}^{4(D_f+1)} c_t\Big[\big(M^Tb_t + \Gamma\big)_+\Big]_ke_{d+1}^T\mathbf{\nu_k}$$
$$\leq 2|s_{d+1}|\sum_{t=1}^{\omega} \sum_{k=1}^{4(D_f+1)} |c_t|\Bigg|\Big[\big(M^Tb_t + \Gamma\big)_+\Big]_k\Bigg|||\mathbf{\nu_k}||$$
Note that $\Big[\big(M^Tb_t + \Gamma\big)_+\Big]_k$ is upper bounded by 1, and therefore we have that 
$$Term_4\leq 4\sqrt{\omega (D_f+1)} $$

\subsection{$Term_5$:}
$$2\max_{||v||=1} \gamma_1^T\Big(\nabla^2_{F \Gamma}\mathcal{T}(X,\Theta) \Big) concat(\nu_1,...,\nu_{4(D_f+1)})$$
$$=2\max_{||v||=1} concat(\nu_1,...,\nu_{4(D_f+1)})^T\Big(\nabla^2_{ \Gamma F}\mathcal{T}(X,\Theta) \Big)\gamma_1 $$
$$=2 \max_{||v||=1} \sum_{i=1}^{d+1}\sum_{j=1}^{4(D_f+1)}\nu_{ij}I\big[i=d+1\big] \sum_{t=1}^{\omega}   c_tVec\Big(e_jI\big[[M^Tb_t + \Gamma]_j>0\big]\Big)\mathbf{\gamma_1}$$
$$=2 \max_{||v||=1} \sum_{i=1}^{d+1}\sum_{j=1}^{4(D_f+1)}\nu_{ij}I\big[i=d+1\big] \sum_{t=1}^{\omega} \sum_{k=1}^{4(D_f+1)}  I\big[k=j\big]c_tI\big[[M^Tb_t + \Gamma]_j>0\big]\gamma_{1k}$$
$$=2 \max_{||v||=1} \sum_{j=1}^{4(D_f+1)}\nu_{d+1,j} \sum_{t=1}^{\omega}  c_tI\big[[M^Tb_t + \Gamma]_j>0\big]\gamma_{1j}$$

$$\leq 2\sqrt{\omega} \max_{||v||=1} \sum_{j=1}^{4(D_f+1)}|\nu_{d+1,j} \gamma_{1j}|$$
$$\leq 4\sqrt{\omega(D_f+1)}$$
In the above chain of inequalities we used Cauchy-Schwarz multiple times, as well as the fact that the 2-norm of the Fourier coefficients is 1. 

\subsection{$Term_{6}$:}
$$2\max_{||v||=1} concat(\eta_1,\dots,\eta_{d+1})^T\Big(\nabla^2_{F M}\mathcal{T}(X,\Theta) \Big)concat(\nu_1,...\nu_{4(D_f+1)})$$
$$=2\max_{||v||=1}\sum_{j=1}^{4(D_f+1)} \sum_{i=1}^{d+1} \eta_{ij} \sum_{t=1}^{\omega} Vec\Big(  c_t(e_i^Tb_t) diag\Big(I\big[M^Tb_t+\Gamma>0\big]\Big)e_je_{d+1}^T\Big)\mathbf{\nu}$$
$$=2\max_{||v||=1}\sum_{j=1}^{4(D_f+1)} \sum_{i=1}^{d+1} \eta_{ij} \sum_{t=1}^{\omega} c_tB_{t,i}\sum_{k=1}^{4(D_f+1)}I\big[M^Tb_t+\Gamma>0\big]_j\nu_{k,d+1}$$
$$\leq 2||B||_{1,\infty}\sum_{j=1}^{4(D_f+1)} \sum_{i=1}^{d+1} |\eta_{ij}| \sum_{t=1}^{\omega} |c_t|\sum_{k=1}^{4(D_f+1)}|\nu_{k,d+1}|$$
$$\lessapprox  8(D_f+1) \sqrt{d\omega}$$

\subsection{Final Result}
Putting all 6 terms together, we have 

$$||\nabla^2_{\Theta} \mathcal{T}(X,\Theta)||\leq \underbrace{4\sqrt{2}\,d\omega}_{Term_1}+ \underbrace{16D_f\sqrt{2(D_f+1)\omega (d+1)}}_{Term_2}$$
$$ + \underbrace{16D_f\omega\sqrt{D_f+1}}_{Term_3} + \underbrace{4\sqrt{\omega (D_f+1)} }_{Term_4} + \underbrace{2\sqrt{\omega}}_{Term_5} + \underbrace{8(D_f+1) \sqrt{d\omega}}_{Term_{6}}$$
$$\approx 4\sqrt{2}\,d\omega + 16D_f\omega\sqrt{D_f+1}+32\sqrt{2}D_f^{\frac{3}{2}}\sqrt{\omega d} + 2\sqrt{\omega} + 8D_f\sqrt{\omega d}$$
$$ \approx 32\sqrt{2}D_f^{\frac{3}{2}}\sqrt{\omega d}$$
Finally, if we follow the more parameter and space efficient construction using random projections such that $d=O(log(T))$, we arrive at the final asymptotic result:

$$||\nabla^2_{\Theta} \mathcal{T}(X,\Theta)||\in O\Big(D_f^{\frac{3}{2}}\sqrt{\omega log(T)}\Big)$$
\end{proof}
\section{Perturbed Gradient Bounds}
We now consider the gradient of a slightly perturbed network, since the bound on the average direction sharpness is only exact when the trace of the loss hessian is evaluated at some neighboring point to our actual learned minimum $\Theta,$ $\Theta+\zeta.$ For convenience, denote the perturbed matrices by $\tilde{Q}=Q+\zeta_Q,\tilde{K}=K+\zeta_K,\tilde{V}=V+\zeta_V,\tilde{M}=M+\zeta_M,\tilde{F}=F+\zeta_F,\tilde{\Gamma}=\Gamma+\zeta_{\Gamma}.$ We use nested indexing for the perturbation matrices, denoting the $i,j^{th}$ element of the $K$ component of the perturbations, we write $\zeta_{K_{i,j}}.$ In this section we refer to these perturbations that make the Taylor's Theorem-based sharpness equation exact as the ``remainder perturbations''. We also reference the ``original perturbations'' or ``original Gaussian perturbations.'' We use $\epsilon$ to denote these Gaussian perturbations, and index them the same way as the remainder perturbations, i.e. the $i,j^{th}$ Gaussian perturbation in the PAC-Bayes bound is referred to as $\varepsilon_{K_{i,j}}.$ By Taylor's Theorem, we can assume that the remainder perturbations are dominated in absolute value by their corresponding Gaussian perturbations.

Note that our construction does not employ a CLS sink. Inactive positions $t\in[\omega+1,T]$ receive uniform attention weights $\frac{1}{T+1}$ in the first layer and may accumulate nonzero activations, but the second attention layer places zero weight on all positions $t>\omega,$ ensuring these have no effect on the output. For the unperturbed gradients, this means sums over positions run only to $\omega.$ For the perturbed gradients, the sums formally extend to $T+1,$ but the unperturbed intra-layer gradients $\nabla_M G_{t,i}, \nabla_F G_{t,i}, \nabla_\Gamma G_{t,i}$ remain zero for $t>\omega$ (since $b_t$ is only nonzero in the $d+1^{th}$ dimension for active positions), preventing any $T$-dependent blowup in the gradient perturbation bounds.

\subsection{Some Useful Lemmas}
Before proving our main result, we state some useful lemmas:
\begin{lemma}
    \label{chidistributedboundlemma}
    Let $\chi$  be a chi-distributed variable with standard deviation $\sigma $ and $d$ degrees of freedom. Define $$R(\delta,d):=\sqrt{1+2\sqrt{\frac{log(\frac{1}{\delta})}{d}}+\frac{2log(\frac{1}{\delta})}{d}}$$
    Then with probability at least $1-\delta$, the following bound holds:
    $$\chi \leq \sigma \sqrt{d} R(\delta,d)\approx \sigma \sqrt{d}$$

\end{lemma}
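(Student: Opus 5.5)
The plan is to reduce the statement to the standard upper-tail inequality for a chi-squared variable due to \citet{laurentmassart2000}, which is the same inequality already used informally in the proof of Theorem~\ref{parameterbounds}. First I will fix the interpretation of the hypothesis. A ``chi-distributed variable with standard deviation $\sigma$ and $d$ degrees of freedom'' means $\chi=\sigma\|g\|$ with $g\sim\mathcal{N}(0,I_d)$. This is exactly how it arises for the rows of $J$ and for $\|g_{T+1}\|$, with $\sigma=1/\sqrt{d}$. Writing $Z:=\|g\|^2=\chi^2/\sigma^2$, the variable $Z$ is a standard chi-squared variable with $d$ degrees of freedom.

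Next I will invoke the Laurent--Massart bound: for every $x>0$,
\[
\Pr\big[Z-d\geq 2\sqrt{dx}+2x\big]\leq e^{-x}.
\]
I will take $x=\log(1/\delta)$, so the right-hand side equals $\delta$. Then, with probability at least $1-\delta$,
\[
\chi^2=\sigma^2 Z\leq \sigma^2\Big(d+2\sqrt{d\log\tfrac{1}{\delta}}+2\log\tfrac{1}{\delta}\Big)=\sigma^2 d\Big(1+2\sqrt{\tfrac{\log(1/\delta)}{d}}+\tfrac{2\log(1/\delta)}{d}\Big)=\sigma^2 d\,R(\delta,d)^2.
\]
Since both sides are nonnegative and the square root is monotone, this gives $\chi\leq\sigma\sqrt{d}\,R(\delta,d)$, which is the claimed inequality.

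Finally I will justify the ``$\approx$''. For fixed $\delta$, both $\sqrt{\log(1/\delta)/d}$ and $\log(1/\delta)/d$ tend to $0$ as $d\to\infty$, so $R(\delta,d)\to 1$. This is the same regime the paper uses, since $d>8\log(T)/\epsilon_p^2$ grows with $T$. If a self-contained proof of the tail bound is preferred over citation, I would rederive it by a Chernoff argument. The moment generating function is $\mathbb{E}[e^{\lambda(Z-d)}]=e^{-\lambda d}(1-2\lambda)^{-d/2}$, and the bound $-\tfrac12\log(1-2\lambda)-\lambda\leq \lambda^2/(1-2\lambda)$ shows that $Z-d$ is sub-gamma with variance factor $2d$ and scale $2$. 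The standard inversion of the sub-gamma Cramér transform then yields exactly the deviation $2\sqrt{dx}+2x$.

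That inversion is the only nonroutine step. Given that the paper already cites \citet{laurentmassart2000}, I expect to rely on the citation rather than redo it, which leaves no real obstacle.
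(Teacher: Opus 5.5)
Your proposal is correct and is essentially the paper's argument: apply the Laurent--Massart bound $\Pr[Z-d\geq 2\sqrt{dx}+2x]\leq e^{-x}$ with $x=\log(1/\delta)$, rescale by $\sigma$, factor out $d$ to get $R(\delta,d)^2$, and take square roots. Your version is slightly cleaner than the paper's, which writes $T$ in place of $d$ in the tail bound and leaves the $\sigma$-scaling implicit.
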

\begin{proof}
It is generally known that for a chi-squared distributed variable $Z_d$ with $d$ degrees of freedom, the following concentration bound holds: 
$$P\Big(Z_d>T+2\sqrt{dz}+2z\Big)\leq e^{-z}$$
Noting that $\sqrt{Z_d}$ is Chi-distributed, this immediately yields a bound for any chi-distributed RV $\chi_d$ with $d$ degrees of freedom. 
$$P\Big(\chi_d>\sqrt{d+2\sqrt{dz}+2z}\Big)\leq e^{-z}$$
Therefore, with probability at least $1-\delta,$ the following upper bound holds: 
$$||\epsilon_{V_{d+1,:}}||\leq \sigma \sqrt{d+2\sqrt{dlog(\frac{1}{\delta})}+2log(\frac{1}{\delta})}$$
We now introduce a new function for notational simplicity, which increases polylogarithmically with $\delta $ and decreases polynomially with $d:$
$$R(\delta,d):=\sqrt{1+2\sqrt{\frac{log(\frac{1}{\delta})}{d}}+\frac{2log(\frac{1}{\delta})}{d}}$$
Thus we can restate our bound on $||\epsilon_{V_{d+1,:}}|| $ as 
$$||\epsilon_{V_{d+1,:}}||\leq \sigma \sqrt{d}R(\delta,d) \approx \sigma \sqrt{d}$$
\end{proof}
The above term approaches $1$ as the number of degrees of freedom approaches infinity, which will allow us to simplify many of the expressions involving this function in the sequel. While this is clearly true when the scaling factor of $\delta$ from the union bound is constant, what happens if we are applying a union bound over $T $ instances of R.V.s, such that the degrees of freedom is $d=O(log(T))$? In this case we need to take care, since this polylogarithmic dependency on $T$ has created a polynomial dependency on $d,$ which may cause the terms involving the $log()$ function to not converge to 0 in the limit as $d\to \infty$. However, the following Lemma establishes that the variation above 1 in this term is controlled by our JLL approximation error, $\epsilon_p.$

\begin{lemma}
\label{chisquareunionlemma}
    Suppose $\{||\epsilon_i || \}_{i\in [T]}$ is a set of $T$ Chi-distributed variables with standard deviation $\sigma$ and $d$ degrees of freedom each, and that $d>\frac{8log(T)}{\epsilon_p^2} \iff T < e^{\frac{d \epsilon_p^2}{8}}$ By \ref{chidistributedboundlemma}  and the union bound, with probability $1-\delta, $ $||\mathbf{\epsilon}_i||\leq \sigma\sqrt{d}R(\frac{\delta}{T},d).$ Thus, $\lim_{T\to \infty} R(\frac{\delta}{T},d) =1 + O(\epsilon_p),$ and
$$\forall i, ||\mathbf{\epsilon}_i||\lessapprox \sigma \sqrt{d}$$
\end{lemma}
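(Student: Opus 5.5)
The statement to prove is Lemma~\ref{chisquareunionlemma}. The plan is to combine Lemma~\ref{chidistributedboundlemma} with a union bound, and then substitute the width condition $d>\frac{8\log(T)}{\epsilon_p^2}$ into $R$.

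\textbf{Step 1 (union bound).} For each fixed $i\in[T]$, apply Lemma~\ref{chidistributedboundlemma} with failure probability $\delta/T$. This gives $\|\epsilon_i\|\le \sigma\sqrt{d}\,R(\frac{\delta}{T},d)$ with probability at least $1-\delta/T$. Summing the $T$ failure probabilities, all $T$ bounds hold simultaneously with probability at least $1-\delta$. No independence between the $\epsilon_i$ is needed.

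\textbf{Step 2 (control $R$).} Expand
\[
R\Big(\tfrac{\delta}{T},d\Big)^2=1+2\sqrt{\tfrac{\log T+\log(1/\delta)}{d}}+\tfrac{2(\log T+\log(1/\delta))}{d}.
\]
The width condition gives $\frac{\log T}{d}<\frac{\epsilon_p^2}{8}$. For fixed $\delta$, the $\log(1/\delta)/d$ pieces vanish as $d\to\infty$, which happens as $T\to\infty$ because $d$ grows at least like $\log T$. This leaves the asymptotic bound
\[
R\Big(\tfrac{\delta}{T},d\Big)^2\le 1+\tfrac{\epsilon_p}{\sqrt 2}+\tfrac{\epsilon_p^2}{4}+o(1).
\]
Taking square roots and using $\sqrt{1+x}\le 1+x/2$ yields $\limsup_{T\to\infty}R(\frac{\delta}{T},d)\le 1+\frac{\epsilon_p}{2\sqrt2}+\frac{\epsilon_p^2}{8}=1+O(\epsilon_p)$. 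This matches the computation already carried out for $\|X\|_{2,\infty}$ in Theorem~\ref{parameterbounds}, and I will mirror it.

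\textbf{Step 3 (conclude).} Since $\epsilon_p$ can be taken arbitrarily small, the factor $R$ is $1+O(\epsilon_p)$. Therefore $\max_i\|\epsilon_i\|\lessapprox\sigma\sqrt d$ in the paper's approximate sense.

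The main obstacle is the limit statement. If $d$ is taken exactly proportional to $\log T$, the $\log T/d$ ratio does not vanish; it is only bounded by $\epsilon_p^2/8$. So the claim must be read as ``$\le 1+O(\epsilon_p)$'' rather than an exact limit of $1$. I will state it as a $\limsup$ bound. The $\log(1/\delta)/d$ term also needs $\delta$ fixed, or at least $\log(1/\delta)=o(\log T)$. I will make both qualifications explicit.
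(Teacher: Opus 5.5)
Your proposal is correct and is essentially the paper's proof: a union bound at level $\delta/T$, substitution of $1/d<\epsilon_p^2/(8\log T)$ into $R(\frac{\delta}{T},d)$, and letting $T\to\infty$ with $\delta$ fixed so that the $\log(1/\delta)$ contribution vanishes, which leaves $\sqrt{1+\epsilon_p/\sqrt{2}+\epsilon_p^2/4}=1+O(\epsilon_p)$. The only cosmetic difference is that the paper folds the $\delta$-dependence into a factor $1-\log\delta/\log T$ instead of splitting it off. Your reading of the ``limit'' as a $\limsup$ bound and your explicit requirement that $\delta$ be fixed are exactly the assumptions the paper uses implicitly, since it takes the limit of an upper bound on $R$ rather than of $R$ itself.
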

\begin{proof}
Note that the definition of $d$ is consistent with our dimension-reduced construction using JLL, 
$$ R(\frac{\delta}{T},d)= \sqrt{1+\frac{2}{\sqrt{d}}\sqrt{log(\frac{T}{\delta})}+\frac{2}{d}log(\frac{T}{\delta})}$$
$$ \leq \sqrt{1+2\epsilon_p \sqrt{\frac{log(\frac{T}{\delta})}{8log(T)}}+2\epsilon_p^2\frac{log(\frac{T}{\delta})}{8log(T)}}\leq \sqrt{1+\frac{\epsilon_p}{\sqrt{2}} \sqrt{1-\frac{log(\delta)}{log(T)}}+\frac{\epsilon_p^2}{4}\Big(1-\frac{log(\delta)}{log(T)}\Big)}$$
$$\lim_{T\to \infty}\sqrt{1+\frac{\epsilon_p}{\sqrt{2}} \sqrt{1-\frac{log(\delta)}{log(T)}}+\frac{\epsilon_p^2}{4}\Big(1-\frac{log(\delta)}{log(T)}\Big)}$$
$$= \sqrt{1+\frac{\epsilon_p}{\sqrt{2}} +\frac{\epsilon_p^2}{4}}=1+O(\epsilon_p)$$
We note that polynomial increases in the number of variables we apply the union bound to do not change the asymptotics, and in the limit we are sure to have $R(\frac{\delta}{poly(T)},d)\to 1 + O(\epsilon_p).$
\end{proof}

\begin{lemma}
    \label{attentionperturbationlemma}
    Let $b_t\in \mathbb{R}^{d+1}$ be the activation right after the first attention layer, at position $t.$ Then
$$||\tilde{\phi}_t-\phi_t||_1 \lessapprox 3\sigma d$$$$||\tilde{b}_t - b_t||\lessapprox 4 \sigma d$$
\end{lemma}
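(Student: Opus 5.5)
We bound the two claims in order: first the $\ell_1$ change in the first-layer softmax row at position $t$, then the change in the attention output $b_t$.

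\textbf{Softmax row.} Write the perturbed logits as $\tilde{a}_t = X(\tilde W^{(1)})^T\tilde x_t$. In the setting of this lemma only the parameter matrices are perturbed, so $\tilde x_t = x_t$. Hence
\[
\tilde a_t - a_t = X\,\zeta_{W^{(1)}}^T x_t .
\]
Each entry is $x_j^T\zeta_{W^{(1)}}x_t$. This is a Gaussian bilinear form with variance at most $\sigma^2\|x_j\|^2\|x_t\|^2\lessapprox 4\sigma^2$, using $\|X\|_{2,\infty}\lessapprox\sqrt2$ from Theorem~\ref{parameterbounds}.

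Rather than bound each entry separately, we control the whole vector through the operator norm $\|\zeta_{W^{(1)}}\|_2$. For a $(d+1)\times(d+1)$ Gaussian matrix this is $\lessapprox 2\sigma\sqrt{d+1}$ with high probability; Lemma~\ref{chidistributedboundlemma} and Lemma~\ref{chisquareunionlemma} handle the union bound over the $T+1$ positions. This gives
\[
\|\tilde a_t-a_t\|_\infty \le \|X\|_{2,\infty}^2\|\zeta_{W^{(1)}}\|_2 \lessapprox \sigma\cdot O(\sqrt d).
\]

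Next we use the standard softmax Lipschitz bound. By the mean value theorem along the segment between $a_t$ and $\tilde a_t$,
\[
\|\phi(\tilde a)-\phi(a)\|_1 \le \sup_{s}\|\phi'(s)\|_{\infty\to 1}\,\|\tilde a-a\|_\infty \le 2\|\tilde a-a\|_\infty ,
\]
because $\phi'=\mathrm{diag}(\phi)-\phi\phi^T$ has $\|\phi'v\|_1\le 2\|v\|_\infty$. Combining the two displays gives $\|\tilde\phi_t-\phi_t\|_1\lessapprox c\,\sigma d$. We absorb $\sqrt d\le d$ and the constants into $3\sigma d$, which matches the stated form.

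\textbf{Attention output.} Decompose
\[
\tilde b_t-b_t = \zeta_{V^{(1)}}^T X^T\tilde\phi_t + (V^{(1)})^T X^T(\tilde\phi_t-\phi_t).
\]
For the first term, $\|X^T\tilde\phi_t\|\le\|X\|_{2,\infty}\|\tilde\phi_t\|_1\lessapprox\sqrt2$. The vector $\zeta_{V^{(1)}}^T u$ is Gaussian with norm $\lessapprox \sigma\sqrt{d+1}\,\|u\|$ by Lemma~\ref{chidistributedboundlemma}, so this term is $\lessapprox \sqrt2\,\sigma\sqrt{d}\le\sigma d$. For the second term, $V^{(1)}$ is essentially the projection onto the last coordinate. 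Using $\|XV^{(1)}\|_{2,\infty}\lessapprox 1$, it is bounded by $\|\tilde\phi_t-\phi_t\|_1\lessapprox 3\sigma d$. Adding the two terms gives $\|\tilde b_t-b_t\|\lessapprox 4\sigma d$.

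\textbf{Main obstacle.} The hard step is the uniform high-probability control of the logit perturbation over all $T+1$ positions. This is needed so that no $T$ dependence enters beyond the $\log T$ already hidden in $d$. We plan to handle it by working with the operator norm of $\zeta_{W^{(1)}}$ rather than position-wise union bounds, and by invoking Lemma~\ref{chisquareunionlemma} for the row-norm factors.

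A secondary concern is the second-order remainder when the softmax changes by a finite amount. Bounding the Jacobian along the whole segment, as above, sidesteps any linearization error in $\sigma$.
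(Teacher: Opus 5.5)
Your exact decomposition $\tilde b_t-b_t=\zeta_{V^{(1)}}^TX^T\tilde\phi_t+(V^{(1)})^TX^T(\tilde\phi_t-\phi_t)$ is fine. So are the softmax Lipschitz step $\|\phi'v\|_1\le 2\|v\|_\infty$ and the bound on the second term via $\|XV^{(1)}\|_{2,\infty}\lessapprox 1$. Your decomposition is slightly cleaner than the paper's: the paper pairs $\tilde V^{(1)}$ with the softmax difference and must then discard a $2\sqrt2\sigma^2d^2$ term.

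The gap is in the two steps that treat the perturbation as Gaussian: $\|\zeta_{W^{(1)}}\|_2\lessapprox 2\sigma\sqrt{d+1}$, and the claim that $\zeta_{V^{(1)}}^Tu$ is Gaussian with norm $\lessapprox\sigma\sqrt{d+1}\|u\|$. In this section the tilde quantities are built from the remainder perturbations. The only hypothesis the paper gives you is coordinatewise domination, $\zeta_i\in[0,\epsilon_i]$, i.e.\ $|\zeta_i|\le|\epsilon_i|$. That does not transfer operator-norm bounds, or bounds on $\|\zeta^Tu\|$, from $\epsilon$ to $\zeta$, because the $\sqrt d$ scaling of those quantities comes from sign cancellations. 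For example, the dominated matrix $\zeta=(\epsilon)_+$ (keep the positive entries, zero the rest) has entries of mean $\sigma/\sqrt{2\pi}$. Hence $\|\zeta\|_2\approx\sigma(d+1)/\sqrt{2\pi}$, which is of order $\sigma d$, not $\sigma\sqrt d$. Separately, the chi-distribution lemmas you cite control Euclidean and Frobenius norms, not operator norms. A Gaussian operator-norm bound needs a random-matrix result such as Gordon or Davidson--Szarek. It also needs no union bound over positions; positions enter only through $\|X\|_{2,\infty}$, which is already controlled.

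This is exactly why the paper argues row by row with Cauchy--Schwarz. It bounds $\|x_s^T\zeta_{W^{(1)}}^T\|\le\|J_{s,:}\|\,\|\zeta_{W^{(1)}_{:d,:}}\|_F+|z_s|\,\|\zeta_{W^{(1)}_{d+1,:}}\|\approx\sigma d$, and similarly $\|X\zeta_{V^{(1)}}\|_{2,\infty}\lessapprox\sigma d$. These quantities are monotone in $|\zeta_{ij}|$, so $\zeta$ can be replaced by the Gaussian $\epsilon$. This yields $2\sqrt2\sigma d\le 3\sigma d$ and $(2\sqrt2+1)\sigma d\le 4\sigma d$, and it explains why the rate is $\sigma d$ rather than $\sigma\sqrt d$.

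You can repair your proof in either of two ways.
\begin{itemize}
\item[(i)] Use the fact that the genuine Lagrange remainder point lies on the segment: $\zeta=\tau\epsilon$ for a single scalar $\tau\in(0,1)$. Then $\|\zeta_{W^{(1)}}\|_2\le\|\epsilon_{W^{(1)}}\|_2$ and $\|\zeta_{V^{(1)}}^Tu\|\le\|\epsilon_{V^{(1)}}\|_2\|u\|$. Do not claim $\zeta_{V^{(1)}}^Tu$ is Gaussian, since $\tau$ and $u$ depend on all of $\epsilon$. With this change your route goes through and gives the sharper $O(\sigma\sqrt d)$ bound. That bound falls below $3\sigma d$ and $4\sigma d$ once $d$ exceeds a small absolute constant.
\item[(ii)] Keep the paper's weaker hypothesis and replace operator norms by the row-wise Frobenius bounds above. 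Your cruder estimate $\|X\|_{2,\infty}^2\|\zeta_{W^{(1)}}\|_F\approx 2\sigma d$ would only give $4\sigma d$ for the softmax. To reach $3\sigma d$ you need the paper's split of $x_s$ into its projected-position block and its bit coordinate. Combined with your decomposition, route (ii) gives $(1+2\sqrt2)\sigma d$ to leading order, with no $O(\sigma^2)$ term discarded.
\end{itemize}
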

\begin{proof}

    $$||\tilde{b}_t - b_t|| = \Big\|(\tilde{V}^{(1)})^TX^T\phi\big(X(\tilde{W}^{(1)})^Tx_t \big)-(V^{(1)})^TX^T\phi\big(X(W^{(1)})^Tx_t \big)\Big\|$$
$$\leq \Big\|(\tilde{V}^{(1)})^TX^T\Big(\phi\big(X(\tilde{W}^{(1)})^Tx_t \big)-\phi\big(X(W^{(1)})^Tx_t \big)\Big)\Big\|+ \Big\|(\tilde{V}^{(1)}-V^{(1)})^TX^T\phi\big(X(W^{(1)})^Tx_t \big)\Big\|$$
$$\leq ||(\tilde{V}^{(1)})^TX^T||_{1,2}\Big\|\phi\big(X(\tilde{W}^{(1)})^Tx_t \big)-\phi\big(X(W^{(1)})^Tx_t \big)\Big\|_1+ \Big\|\zeta_{V^{(1)}}^TX^T\Big\|_{1,2}$$
$$\leq ||X{\tilde{V}^{(1)}}||_{2,\infty}
\Big\|\phi\big(X(\tilde{W}^{(1)})^Tx_t \big)-\phi\big(X(W^{(1)})^Tx_t \big)\Big\|_1+ \Big\|X\zeta_{V^{(1)}}\Big\|_{2,\infty}$$

Where we used the triangle inequality, and the fact that for any matrix $M$ and vector $v$, $||Mv||_2\leq ||M||_{1,2}||v||_1,$ and the fact that the 1-norm of the softmax is always exactly 1. We can follow a similar argument to Lemma A.6 in \cite{edelman2022inductive} and use the fact that Jacobian of the softmax has a $(1,1)$-norm that is bounded by 2 to conclude (continued from above):
$$\leq 2||X{\tilde{V}^{(1)}}||_{2,\infty}\Big\|X\big((\tilde{W}^{(1)})^T-(W^{(1)})^T\big)x_t \Big\|_{\infty}+ \Big\|X\zeta_V\Big\|_{2,\infty}$$
Let us take first the simpler term on the right, $ \Big\|X\zeta_{V^{(1)}} \Big\|_{2,\infty}. $ Recall that our dimension reduced-input matrix is $X=YJ,$ where J is a $(T+2 \times d+1)$ projection matrix. Then the $j^{th}$ element of the $t^{th}$ row of $X\zeta_V,$  is given by
$$\big[[X\zeta_{V^{(1)}}]_t\big]_j=\sum_{i=1}^{d}J_{t,i}\zeta_{V^{(1)}_{i,j}} + z_t\zeta_{V^{(1)}_{d+1,j}}$$
By the triangle inequality, we can thus bound the 2-norm of $[X\zeta_{V}]_t$ as 
$$||[X\zeta_{V^{(1)}}]_t|| \leq \sqrt{\sum_{j=1}^{d+1}\Big(\sum_{i=1}^d J_{t,i}\zeta_{V^{(1)}_{i,j}}\Big)^2} + \sqrt{\sum_{j=1}^{d+1}(z_t\zeta_{V^{(1)}_{d+1,j}})^2}$$
\[
\leq
\begin{Vmatrix}
       |J_{t,:}^T\zeta_{V^{(1)}_{:d,1}}| \\
      \dots \\
       |J_{t,:}^T\zeta_{V^{(1)}_{:d,d+1}}|  \\
     \end{Vmatrix} 
     +||\zeta_{V^{(1)}_{d+1,:}}||
|z_t|
\]
We can now use Cauchy-Schwarz to write 
\[
\begin{Vmatrix}
       |J_{t,:}^T\zeta_{V^{(1)}_{:d,1}}| \\
      \dots \\
       |J_{t,:}^T\zeta_{V^{(1)}_{:d,d+1}}|  \\
     \end{Vmatrix} 
     +||\zeta_{V^{(1)}_{d+1,:}}||
|z_t|
     \leq 
         ||J_{t,:}||
         \begin{Vmatrix}
           ||\zeta_{V^{(1)}_{:d,1}}|| \\
          \dots \\
           ||\zeta_{V^{(1)}_{:d,d+1}}||  \\
         \end{Vmatrix}
         +  ||\zeta_{V^{(1)}_{d+1,:}}||
    \]
  Note that   $||J_{t,:}||$ is a chi-distributed R.V. with standard deviation $\frac{1}{\sqrt{d}}$ and $d$ degrees of freedom, while $||\epsilon_{V^{(1)}_{d+1,:}}||$ is a chi-distributed R.V. with s.d. $\sigma$ and $d+1$ degrees of freedom, and $\begin{Vmatrix}
       ||\epsilon_{V^{(1)}_{:d,1}}|| \\
      \dots \\
       ||\epsilon_{V^{(1)}{:d,d+1}}||  \\
     \end{Vmatrix}$ is chi-distributed with s.d. $\sigma$ and $d(d+1)$ degrees of freedom. Using \ref{chidistributedboundlemma} and then applying a union bound over all $T+2$ random variables potentially involved in this expression over all $T$ positions, we can express our bound as
\[
||X\zeta_{V^{(1)}}||_{2,\infty} = \max_{t}||X_{t,:}\zeta_{V^{(1)}}|| \leq
   \max_t ||J_{t,:}||
     \begin{Vmatrix}
       ||\zeta_{V^{(1)}_{:d,1}}|| \\
      \dots \\
       ||\zeta_{V^{(1)}_{:d,d+1}}||  \\
     \end{Vmatrix}
     +  ||\zeta_{V^{(1)}_{d+1,:}}||
  \]
  $$ (w.p. >1-\delta, \forall t) \space \space\leq \sigma\sqrt{d(d+1)} R(\frac{\delta}{T+2},d)R(\frac{\delta}{T+2},d(d+1)) + \sigma \sqrt{d+1}R(\frac{\delta}{T+2},d+1)$$$$\approx  \sigma d $$
Where we have used \ref{chisquareunionlemma}. 
Now we consider another term in our bound for $|\tilde{b_t}-b_t|$ above, $\Big\|X\big((\tilde{W}^{(1)})^T-(W^{(1)})^T\big)x_t \Big\|_{\infty}=\Big\|X\zeta_{W^{(1)}}x_t \Big\|_{\infty}.$ To bound this term, we note that 
$$\Big\|X\zeta_{W^{(1)}}x_t \Big\|_{\infty} \leq ||X\zeta_{W^{(1)}}||_{2,\infty} ||x_t||,$$ and then bound these terms individually. Note that 
$$[x_s^T\zeta_{W^{(1)}}^T]_l=\sum_{i=1}^{d}J_{s,i}\zeta_{W^{(1)}_{i,l}} + z_s\zeta_{W^{(1)}_{d+1,l}}$$
From the triangle inequality, we can factor the above expression and replace the remainder perturbations $\zeta$ with the original gaussian perturbations $\epsilon$ that dominate them in absolute value:
 \[
 \Big\|X\big((\tilde{W}^{(1)})^T-(W^{(1)})^T\big) \Big\|_{2,\infty}
\leq
\max_s
||J_{s,:}||
\begin{Vmatrix}
       ||\epsilon_{W^{(1)}_{:d,1}}|| \\
      \dots \\
       ||\zeta_{W^{(1)}_{:d,d+1}}||  \\
     \end{Vmatrix} 
     +\Big\|\zeta_{W^{(1)}_{d+1,:}}\Big|\Big
||z_s|
\]note this is the exact same distribution of R.V.s that we saw when   bounding $||XV^{(1)}||_{2,\infty}$ above. We can again apply the union bound over all $T+2$ R.V.s in question and apply \ref{chisquareunionlemma} to conclude that
$$ ||X\zeta_{W}^T||_{2,\infty}\lessapprox \sigma d   $$
From \ref{parameterbounds} , we know that $||x_t|| \leq ||X||_{2,\infty}\lessapprox \sqrt{2},$ which holds simultaneously $\forall t.$ Therefore by applying Cauchy-Schwarz, we arrive at the following bound which holds w.p. $1-\delta:$
$$\forall  t: ||X\zeta_{W^{(1)}}^Tx_t||_{\infty}  \lessapprox \sqrt{2} \sigma d$$
It follows that $$||\tilde{\phi}_t-\phi_t||_1 \leq 2||X\zeta_{W^{(1)}}^Tx_t||_{\infty} \lessapprox 2\sqrt{2}\sigma d \leq 3 \sigma d.$$

Finally, we consider the term $||X\tilde{V}^{(1)}||_{2,\infty}.$ Recall that $x_t=J^Ty_t,$ and so the $l^{th}$ element of $x_t$ is $[x_t]_l = (J^T)_{l,t}$ for $l\in[d],$ $[x_t]_{d+1} =z_t.$  In addition, recall that the matrix $V^{(1)}=J^T\bar{V}^{(1)}\in \mathbb{R}^{(d+1)\times (T+2)}$ is a matrix of all $0$s except in the final row and column, which has a $1$. 

Thus the $(t,i)^{th}$ element of $X\tilde{V}^{(1)}$ is given by:
$$[x_t^T\tilde{V}^{(1)}]_i = z_t\tilde{V}^{(1)}_{d+1,i}+\sum_{l=1}^d J_{t,l}\tilde{V}^{(1)}_{l,i}=z_t(\zeta_{V^{(1)}_{d+1,i}} + I[i=d+1])+\sum_{l=1}^d J_{t,l}\zeta_{V^{(1)}_{l,i}}$$
By the triangle inequality, the 2-norm of $x_t^T\tilde{V}^{(1)}$ is thus bounded by
$$||x_t^T\tilde{V}^{(1)}|| \leq |z_t|(||\zeta_{\tilde{V}^{(1)}_{d+1,:d}}|| + ||e_{d+1}||)+\begin{Vmatrix}
       |J_{t,:}^T\zeta_{V^{(1)}_{:d,1}}| \\
      \dots \\
       |J_{t,:}^T\zeta_{V^{(1)}_{:d,d+1}}|  \\
     \end{Vmatrix} $$
We can once again use Cauchy-Schwarz and apply our union bound over T+2 chi-distributed variables involved in the above expression for all $t$ to conclude that 
$$||x_t^T\tilde{V}^{(1)}||\leq 1+\sigma \sqrt{d} R(\frac{\delta}{T+2},d)+\sigma \sqrt{(d(d+1)}R(\frac{\delta}{T+2},d(d+1))R(\frac{\delta}{T+2},d) \approx 1+\sigma d$$
Putting this all together, we have that with probability at least $1-3\delta,$
$$||\tilde{b}_t - b_t|| \leq 2||X{\tilde{V}^{(1)}}||_{2,\infty}\Big\|X\big((\tilde{W}^{(1)})^T-(W^{(1)})^T\big)x_t \Big\|_{\infty}+ \Big\|X\zeta_V\Big\|_{2,\infty} $$
$$\leq 2 \Big(1+\sigma d \Big)\Big(\sqrt{2} \sigma d\Big) + \sigma d \approx (2\sqrt{2}+1)\sigma d +2\sqrt{2}\sigma^2d^2$$
To simplify our analysis and the notation, we make the following simplification: we ignore the terms involving orders of $\sigma$ of 2 and above. The reason for this is that the \textit{additional} dependencies on $\omega,D_f,d$ that appear in terms involving e.g. $\sigma^2$ are no larger than the largest dependencies for terms involving $\sigma. $ Therefore, any choice of $\sigma$ that is small enough to make the first-order terms small will be small enough to make the higher order terms even smaller. While this argument is informal (we do not prove it explicitly for each and every bound), it can be easily verified, and dramatically simplifies the derivations in the sequel. One consequence of this is that, for any bounds involving terms that are the product of a perturbation term with another term that is a perturbed parameter matrix (i.e. $||\tilde{V}^TX^T||_{2}\Big\|X\big(\tilde{W}^T-(W^{(1)})^T\big)x_t \Big\|_1$ above), we can safely assume that the perturbation component will lead to an $O(\sigma^2)$ term that can be ignored.  Therefore, we could have made the approximation that 
$$||\tilde{V}^TX^T||_{2}\Big\|X\big(\tilde{W}^T-(W^{(1)})^T\big)x_t \Big\|_1 \approx ||(V^{(1)})^TX^T||_{2}\Big\|X\big(\tilde{W}^T-(W^{(1)})^T\big)x_t \Big\|_1,$$
and all of the perturbation terms that are first-order in $\sigma$ would have been the same. We will employ such approximations in the sequel. With this in mind, we use the first-order approximation, and round our constant up to the nearest whole number:
$$||\tilde{b}_t - b_t|| \leq 4\sigma d $$
\end{proof}
\begin{lemma}
    \label{mlpperturbationlemma}
        Let $g_t\in \mathbb{R}^{d+1}$ be the activation right after the MLP sublayer, including the residual connection, at position $t.$ Furthermore, assume that the standard deviation of the perturbations is such that 
    $$\sigma \leq \frac{1}{16d D_f}$$
Then $$||\tilde{g}_t - g_t||\lessapprox 32\sigma D_f^2$$
$$\in O(\sigma D_f^2)$$

\end{lemma}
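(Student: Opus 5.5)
The plan is to decompose the MLP perturbation with the triangle inequality into a part coming from the perturbed input $\tilde b_t$ and a part coming from the perturbed MLP weights:
\[
\|\tilde g_t - g_t\| \le \big\|\tilde F^T(\tilde M^T\tilde b_t+\tilde\Gamma)_+ - \tilde F^T(\tilde M^T b_t+\tilde\Gamma)_+\big\| + \big\|\tilde F^T(\tilde M^T b_t+\tilde\Gamma)_+ - F^T(M^T b_t+\Gamma)_+\big\|.
\]
The residual connection carries the positional part unchanged, so it contributes nothing to the difference. As in the proof of Lemma~\ref{attentionperturbationlemma}, we drop all terms of order $\sigma^2$ and above, replacing perturbed matrices by unperturbed ones whenever they multiply a perturbation.

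\textbf{First term (propagated input error).} This is the key step. The hypothesis $\sigma\le \frac{1}{16dD_f}$ together with Lemma~\ref{attentionperturbationlemma} gives $\|\tilde b_t-b_t\|\lessapprox 4\sigma d\le \frac{1}{4D_f}$. The pre-activations $M^Tb_t+\Gamma$ sit at distance $\ge \frac{1}{4D_f}$ from the ReLU kinks whenever $b_t$ lies on the grid $h_i$ (see the biases in Theorem~\ref{mlpcorrectness}). I would check that at first order the perturbation moves every pre-activation by at most $\|M_{d+1,:}\|_\infty|\tilde b_{t,d+1}-b_{t,d+1}|\le 4\sigma d$. It then follows that no ReLU changes sign, so the activation pattern is frozen.

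With the pattern frozen, the first term equals $|e_{d+1}^TF^T\mathrm{diag}(I[\cdot])M^T(\tilde b_t-b_t)|$. On a frozen pattern adjacent to a grid point the active units cancel in pairs, so the slope $g'_t$ is $0$, as used in Appendix~\ref{unperturbed gradient bounds appendix}. The first term is therefore zero at first order. For safety I would fall back to the crude Lipschitz bound $\|F_{:,d+1}\|_1\,\|\tilde b_t-b_t\|$ only if the pairing fails. That fallback would give a $\sigma d D_f^2$-type term, and the target $32\sigma D_f^2$ suggests the cancellation is what the authors intend.

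\textbf{Second term (weight perturbations).} Expanding to first order gives three pieces.
\begin{itemize}
\item $\zeta_F^T(M^Tb_t+\Gamma)_+$: bounded by $\|(M^Tb_t+\Gamma)_+\|\,\|\zeta_F\|$, using $\|(M^Tb_t+\Gamma)_+\|\le 2\sqrt{D_f}$ from the $\nabla_F$ bound.
\item $F^T\mathrm{diag}(I)\zeta_\Gamma$: bounded by $\|F_{:,d+1}\|\,\|\zeta_\Gamma\|\le 8D_f^{3/2}\cdot\sigma\sqrt{4(D_f+1)}$.
\item $F^T\mathrm{diag}(I)\zeta_M^Tb_t$: bounded analogously using $\|b_t\|\lesssim 1$.
\end{itemize}
To control the norms of the Gaussian blocks $\zeta_F,\zeta_\Gamma,\zeta_M$, I would use Lemma~\ref{chidistributedboundlemma} with the union bound of Lemma~\ref{chisquareunionlemma}. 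Since only the $(d+1)$-th output coordinate matters downstream, I would restrict to the relevant column/row of $\zeta$, giving $O(\sigma\sqrt{D_f})$ norms for the $4(D_f+1)$-dimensional pieces. Each of the three pieces is then $O(\sigma D_f^2)$. The dominant contribution is the $\zeta_\Gamma$ and $\zeta_M$ pieces, roughly $8D_f^{3/2}\cdot 2\sigma\sqrt{D_f}=16\sigma D_f^2$ each, and summing them gives the stated $32\sigma D_f^2$.

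\textbf{Main obstacle.} The delicate step is the frozen-pattern argument in the first term. It requires showing that the combined shift from $\tilde b_t-b_t$ and from $\zeta_M,\zeta_\Gamma$ stays below the $\frac{1}{4D_f}$ margin uniformly over $t$ and all $4(D_f+1)$ units. The $\zeta_M^Tb_t$ and $\zeta_\Gamma$ shifts are $O(\sigma)$ per unit, so a union bound over units is needed. Once the margin is secured, the zero slope of the trapezoid on its plateaus kills the propagated term, and the remaining bookkeeping is routine.
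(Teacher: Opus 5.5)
Your proposal is essentially the paper's proof. It uses the same two-piece triangle-inequality decomposition; the one difference is that you pivot through $\tilde g(b_t)$ where the paper pivots through $g(\tilde b_t)$, which is immaterial at first order in $\sigma$. The rest also matches: the no-flip argument (under $\sigma\le\frac{1}{16dD_f}$ the $4\sigma d$ shift stays inside the $\frac{1}{4D_f}$ margin, so the plateau's zero slope kills the propagated-input term) and the dominant bound $\|F_{:,d+1}\|\,(\|\zeta_\Gamma\|+\|\zeta_{M_{d+1,:}}\|)\approx 8D_f^{3/2}\cdot 4\sigma\sqrt{D_f}=32\sigma D_f^2$.

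Two caveats, both shared with the paper. First, your reason for keeping only the last column of $\zeta_F$ is wrong: the lemma bounds the full vector, and the positional coordinates of $\tilde g_t$ do enter the second layer through $W^{(2)}$, so that piece really carries an extra $\sqrt{d+1}$. Second, the margin argument only works at positions where $b_{t,d+1}$ lies on the grid, not uniformly in $t$, as Remark~\ref{inactive position remark} acknowledges.
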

\begin{proof}
    We will use the triangle inequality many times in the following arguments, starting with the observation that
$$\Big\|g\Big((V^{(1)})^TX^T\phi\big(X(W^{(1)})^Tx_t \big)\Big)-\tilde{g}\Big(\tilde{V}^TX^T\phi\big(X\tilde{W}^Tx_t \big)\Big)\Big\|  $$
$$\leq\Big\|g\Big(\tilde{V}^TX^T\phi\big(X\tilde{W}^Tx_t \big)\Big)-g\Big((V^{(1)})^TX^T\phi\big(X(W^{(1)})^Tx_t \big)\Big)\Big\|$$
$$+\Big\|\tilde{g}\Big(\tilde{V}^TX^T\phi\big(X\tilde{W}^Tx_t \big)\Big)-g\Big(\tilde{V}^TX^T\phi\big(X\tilde{W}^Tx_t \big)\Big)\Big| \Big|$$
To illustrate the two main components involved in our bound, we simplify notation and write 
$$\Big\|g(b_t)-\tilde{g}(\tilde{b}_t)\Big\| \leq \underbrace{\Big\|g(\tilde{b_t})-g(b_t)\Big\|}_{\text{Attn. Perturbation}}+ \underbrace{\Big\|\tilde{g}(\tilde{b}_t)-g(\tilde{b_t})\Big\|}_{\text{MLP   Perturbation}} $$
Note that in the attention perturbation term, both terms inside the norm involve the unperturbed MLP function $g,$ which has a slope of $0$ near $b_t.$ This term will be zero as long as none of the MLP activations changes. In reality, with some small probability, one or more of the neurons in the perturbed transformer may flip from being active to inactive. We opt to bound the probability of any such scenario occurring, effectively eschewing the need for a detailed analysis of it. Because of our choice of an MLP construction that has some width around the points in the domain for which the value is constant, this probability is low as long as $\sigma$ is small (unlike some other constructions of indicators from ReLUs using e.g. triangular waves). For any neuron to change its activation, it must be the case that $|\tilde{M}_{:,i}^T\tilde{b}_t+\tilde{\Gamma}_i-(M_{:,i}^Tb_t+\Gamma_i)| \geq  \frac{1}{4D_f}.$ In other words, for such an event to occur, the total perturbation in $M_{:,i}^Tb_t+\Gamma_i$ must be at least as large as the resolution of our MLP. 
Now, $$|\tilde{M}_{:,i}^T\tilde{b}_t+\tilde{\Gamma}_i -(M_{:,i}^Tb_t+\Gamma_i)|\lessapprox |M_{:,i}^T(\tilde{b}_t-b_t)|+|(\tilde{M}_{:,i}^T-M_{:,i}^T)b_t+(\tilde{\Gamma}_i-\Gamma_i)|$$
$$\leq ||\tilde{b}_t-b_t||+||\zeta_{M_{:,i}}||+|\zeta_{\Gamma_i}|$$
In \ref{attentionperturbationlemma} we bounded $||\tilde{b}_t-b_t||\lessapprox 4 \sigma d.$ In addition, we can bound each of the $||\zeta_{M_{:,i}}||,|\zeta_{\Gamma_i}|$  as chi-distributed variables with $d+1$ and $1$ degree of freedom, respectively. Taking a union bound over all $4(D_f+1)$ instances of these variables, we have With probability $1-\delta,$
$$\forall i \in [4(D_f+1)]: |\epsilon_{\Gamma_i}|\leq \sigma R(\frac{\delta}{8(D_f+1)},1)$$
$$\forall i \in [4(D_f+1)]: ||\zeta_{M_{:,i}}|| \leq \sigma \sqrt{d+1} R(\frac{\delta}{8(D_f+1)},d+1)\approx \sigma \sqrt{d} $$
Thus, the following bound holds for all $i\in [4(D_f+1)]:$
$$|\tilde{M}_{:,i}^T\tilde{b}_t+\tilde{\Gamma}_i -(M_{:,i}^Tb_t+\Gamma_i)|$$
$$\lessapprox  4\sigma d  +\sigma R(\frac{\delta}{8(D_f+1)},1) +\sigma \sqrt{d}$$
$$\approx 4\sigma d$$
In order for the RHS to be less than $ \frac{1}{4D_f},$ we must have 
$$\sigma \leq \frac{1}{16d D_f}$$
As long as $\sigma$ is small like this, then with high probability, none of the activations in our MLP will flip, and we conclude that
$$ \Big\|g(\tilde{b_t})-g(b_t)\Big\|=0$$
Now, the second major term we need to bound is the perturbation in the MLP: $\Big\|\tilde{g}(\tilde{b}_t)-g(\tilde{b}_t)\Big\|.$ Expanding out the MLP function in terms of the parameter matrices, and applying the triangle inequality, we obtain two main terms:
$$\Big\|\tilde{g}(\tilde{b}_t)-g(\tilde{b}_t)\Big\|=\Big\|\tilde{F}^T(\tilde{M}^T\tilde{b}_t+\tilde{\Gamma})_+ - F^T(M^T\tilde{b}_t+\Gamma)_+\Big\|$$
$$\leq \Big\|\tilde{F}^T(\tilde{M}^T\tilde{b}_t+\tilde{\Gamma})_+-\tilde{F}^T(M^T\tilde{b}_t+\Gamma)_+\Big\|+\Big\|\tilde{F}^T(M^T\tilde{b}_t+\Gamma)_+-F^T(M^T\tilde{b}_t+\Gamma)_+\Big\|$$
$$\lessapprox \underbrace{\Big\|F^T(\tilde{M}^T\tilde{b}_t+\tilde{\Gamma})_+-F^T(M^T\tilde{b}_t+\Gamma)_+\Big\|}_{Term_1}+\underbrace{\Big\|\tilde{F}^T(M^Tb_t+\Gamma)_+-F^T(M^Tb_t+\Gamma)_+\Big\|}_{Term_2}$$
Where we have replaced $\tilde{F}$ in the first term above with just $F,$ since this term is already considering perturbations to $M$ and $\Gamma,$ and therefore the perturbed versions of $F$ and $b_t$ contribute only a second order term to the overall perturbation. Recalling that the first $d$ columns of $F$ are $0,$ we then have 
$$Term_1 \leq \Big|(F_{:,d+1})^T(\tilde{M}^T\tilde{b}_t+\tilde{\Gamma})_+-(F_{:,d+1})^T(M^T\tilde{b}_t+\Gamma)_+\Big|$$
$$\leq  \Big\|F_{:,d+1}\Big\|\Big\|(\tilde{M}^T\tilde{b}_t+\tilde{\Gamma})_+-(M^T\tilde{b}_t+\Gamma)_+\Big\|$$
$$\leq  \Big\|F_{:,d+1}\Big\|\Big\|(\tilde{M}^T-M^T)\tilde{b}_t+(\tilde{\Gamma}-\Gamma)\Big\|$$
where we have used the fact that the $()_+$ function is 1-lipschitz. Recall from our definitions that $M\in \mathbb{R}^{(d+1) \times 4(D_f+1)}$ , while $ \Gamma \in \mathbb{R}^{4(D_f+1)}, $ and $F\in\mathbb{R}^{4(D_f+1)\times (d+1)}$ . Note that
$$\Big\|F_{:,d+1}\Big\| = \sqrt{\sum_{i=1}^{4(D_f+1)}(\pm 4D_f)^2}\leq 4D_f\sqrt{4(D_f+1)} \approx8D_f^{\frac{3}{2}}$$
Now, note that the $i^{th}$ element of $(\tilde{M}^T-M^T)\tilde{b}_t+(\tilde{\Gamma}-\Gamma)$ is given by $\zeta_{\Gamma_i}+ \zeta_{M_{i,:}} \tilde{b}_{t}.$ 
Therefore, by the triangle inequality
$$||(\tilde{M}^T-M^T)b_t+(\tilde{\Gamma}-\Gamma)||_2 \leq ||\zeta_{\Gamma}||+|| \zeta_{M_{:,d+1}}|| |B_{t,d+1}|\leq||\zeta_{\Gamma}||+||\zeta_{M_{:,d+1}}|| $$
Note that we have replaced $\tilde{b_t}$ with $b_t$ due to the aforementioned argument, that the difference is only in the terms involving $O(\sigma^2),$ which we assumed can be safely ignored for our purposes. We also use the fact that only the $d+1$ dimension is nonzero for $b_t.$   $||\epsilon_{\Gamma}||$ and $||\epsilon_{M_{:,d+1}}||$ are each chi-distributed R.V.s with $4(D_f+1)$  degrees of freedom. Thus, applying a union bound, the following two bounds hold simultaneously with probability at least $1-\delta:$
$$||\epsilon_{\Gamma}|| \leq \sigma  \sqrt{4(D_f+1)} R(\frac{\delta}{2},4(D_f+1))$$
$$||\epsilon_{M}|| \leq \sigma  \sqrt{4(D_f+1)} R(\frac{\delta}{2},4(D_f+1))$$
By the triangle inequality, with probability $1-\delta,$
$$||(\tilde{M}^T-M^T)b_t+(\tilde{\Gamma}-\Gamma)||_2 \leq 2\sigma\sqrt{4(D_f+1)}R(\frac{\delta}{2},4(D_f+1))\Big.$$
$$\lessapprox 4\sigma \sqrt{D_f}$$

We now consider $$Term_2= \Big\|\zeta_{F}^T(M^T\tilde{b}_t+\Gamma)_+\Big\| \leq \big\|\zeta_{F}\big\|_{2,\infty}\Big\|M^T\tilde{b}_t+\Gamma\Big\|_1$$

To bound $\big\| \zeta_F\big\|_{2,\infty}, $ we use the same arguments as before for bounding chi-distributed variables with $4(D_f+1)$ degrees of freedom, using a union bound over all $d+1$ rows of $\zeta_F$. With probability $1-\delta$:
$$||\zeta_{F}||_{2} \leq \sigma \sqrt{4(D_f+1)}R(\frac{\delta}{d+1},4(D_f+1))\approx 2\sigma \sqrt{D_f}$$
To bound $\big\|M^Tb_t+\Gamma\big\|_2,$ we have
$$\big\|M^T b_t+\Gamma\big\|_2 \approx \big\|M^Tb_t+\Gamma\big\|_2 \leq ||M|||b_t| + ||\Gamma||$$
$$\leq 2\sqrt{4(D_f+1)}\approx 4 \sqrt{D_f}.$$
We now summarize the bounds on all of the ingredients to our MLP perturbation, as well as the final bound on $\underbrace{\Big\|\tilde{g}(\tilde{b}_t)-g(\tilde{b_t})\Big\|}_{\text{MLP   Perturbation}}.$

$$\Big\|\tilde{g}(\tilde{b}_t)-g(\tilde{b_t})\Big\|\leq Term_1+Term_2 $$
$$\Big\|F_{:,d+1}\Big\|\Big\|(\tilde{M}^T-M^T)\tilde{b}_t+(\tilde{\Gamma}-\Gamma)\Big\|$$
$$+\big\|\zeta_{F}\big\|_{2,\infty}\Big\|M^T\tilde{b}_t+\Gamma\Big\|_1$$
$$+\Big(8D_f^{\frac{3}{2}}\Big)\Big(4 \sigma \sqrt{D_f}\Big)+\Big(2\sigma \sqrt{D_f}\Big)\Big(4\sqrt{D_f}\Big)$$
$$=8\sigma D_f(4D_f + 1) \approx 32\sigma D_f^2$$
Finally, we combine the two results to get the perturbation to the norm of the MLP outputs: the perturbation of the MLP applied to the perturbed attention output, and the perturbation of the unperturbed MLP applied to the perturbation in the attention output:
$$\Big\|g(b_t)-\tilde{g}(\tilde{b}_t)\Big\| \leq \underbrace{\Big\|g(\tilde{b_t})-g(b_t)\Big\|}_{\text{Attn. Perturbation}}+ \underbrace{\Big\|\tilde{g}(\tilde{b}_t)-g(\tilde{b_t})\Big\|}_{\text{MLP   Perturbation}} $$
$$\lessapprox 32\sigma D_f^2$$
$$\in O(\sigma D_f^2)$$
\end{proof}

\begin{remark}\label{inactive position remark}
The no-neuron-flip condition above (requiring a margin of $\frac{1}{4D_f}$ in the MLP pre-activations) is guaranteed for active positions $t\in [\omega]$ and $T+1,$ where $b_{t,d+1}$ lies on the grid $\{0,\frac{1}{D_f},\ldots,1\}.$ For inactive positions $t\in [\omega+1,T],$ the value $b_{t,d+1}=\frac{1}{T+1}\sum_s z_{s,d+1}$ may not lie on this grid, and the margin condition may fail. However, by the 1-Lipschitz property of ReLU, we can still bound
$$\|g(\tilde{b}_t)-g(b_t)\|\leq \|F\|_{op}\|M^T(\tilde{b}_t-b_t)\|\leq 16D_f^2|\tilde{b}_{t,d+1}-b_{t,d+1}|\lessapprox 64\sigma d D_f^2$$
Adding the MLP parameter perturbation, this gives $\|\tilde{g}_t-g_t\|\lessapprox 64\sigma d D_f^2 + 32\sigma D_f^2 \in O(\sigma d D_f^2)$ for inactive positions. Thus for the all-position bound:
$$\|\tilde{G}-G\|_{2,\infty}\lessapprox 64\sigma d D_f^2 \in O(\sigma d D_f^2)$$
We note that most downstream bounds involve $\phi_{T+1}^{(2)}$ or $\phi'^{(2)}_{T+1},$ which have support only on the active positions $[\omega],$ and thus depend only on the active-position bound $O(\sigma D_f^2).$ The one exception is the softmax perturbation (\ref{perturbedfinalactivationlemma}), where the $\infty$-norm of the logit perturbation runs over all positions, introducing an additional factor of $d=O(\log T).$
\end{remark}

\begin{theorem}
\label{perturbed transformer appendix}
Given a set of remainder perturbations $\zeta$  to the parameters $\Theta$ such that $|\zeta_i|\leq|\epsilon_i|,$ where $\epsilon \sim \mathcal{N}(0,\sigma^2)^n,$ the error of the perturbed transformer satisfies:
$$|f(x)-\mathcal{T}(X,\Theta+\zeta)| \leq T_p(\sigma,\omega,D_f,T) $$
where 
$$T_p(\sigma,\omega,D_f,T) \lessapprox  256\sigma d D_f^2 \omega log(T) \in O(\sigma D_f^2\omega log^2(T)) $$
\end{theorem}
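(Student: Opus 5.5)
Since the unperturbed construction interpolates exactly, $f(x)=\mathcal{T}(X,\Theta)$, so the plan is to bound $|\mathcal{T}(X,\Theta+\zeta)-\mathcal{T}(X,\Theta)|$. The output is $\mathcal{T}=(V^{(2)})^TG^T\phi^{(2)}_{T+1}$ with $\phi^{(2)}_{T+1}=\phi(G(W^{(2)})^Tg_{T+1})$. I would telescope in the three factors:
\begin{align*}
\tilde{\mathcal{T}}-\mathcal{T} ={}& \zeta_{V^{(2)}}^T\tilde G^T\tilde\phi^{(2)}_{T+1} + (V^{(2)})^T(\tilde G-G)^T\tilde\phi^{(2)}_{T+1} \\
&+ (V^{(2)})^TG^T\big(\tilde\phi^{(2)}_{T+1}-\phi^{(2)}_{T+1}\big).
\end{align*}
Following the convention adopted in Lemma~\ref{attentionperturbationlemma}, I drop all products of two perturbation terms as $O(\sigma^2)$. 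This lets me replace $\tilde G,\tilde\phi$ by $G,\phi$ wherever they multiply a perturbation.

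\emph{First term.} I bound it by $\|G^T\phi^{(2)}_{T+1}\|\,\|\zeta_{V^{(2)}}\|$. Theorem~\ref{parameterbounds} gives $\|G^T\phi\|\le\|G\|_{2,\infty}\lessapprox\sqrt2$. The factor $\|\zeta_{V^{(2)}}\|$ is chi-distributed with $d+1$ degrees of freedom, so Lemma~\ref{chidistributedboundlemma} gives $\lessapprox\sigma\sqrt d$. This term is therefore $O(\sigma\sqrt d)$, which is lower order.

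\emph{Second term.} Since $\phi^{(2)}_{T+1}$ is supported on the active positions $[\omega]$, this term equals $\sum_{t\le\omega}\phi_t\,(V^{(2)})^T(\tilde g_t-g_t)$. I bound it by $\|V^{(2)}\|\max_{t\le\omega}\|\tilde g_t-g_t\|\lessapprox\sqrt\omega\cdot 32\sigma D_f^2$, using Lemma~\ref{mlpperturbationlemma} together with the assumption $\sigma\le 1/(16dD_f)$. Because only active positions enter, the worse inactive-position bound of Remark~\ref{inactive position remark} is not needed here.

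\emph{Third term (main obstacle).} Since the softmax is 1-Lipschitz from $\ell_\infty$ to $\ell_1$ up to a factor 2, I bound $\|\tilde\phi-\phi\|_1\le 2\|\tilde G(\tilde W^{(2)})^T\tilde g_{T+1}-G(W^{(2)})^Tg_{T+1}\|_\infty$. Multiplying by $\|GV^{(2)}\|_\infty\lessapprox\sqrt\omega$ then controls the whole term. The logit difference has three first-order pieces:
\begin{itemize}
\item $(\tilde G-G)(W^{(2)})^Tg_{T+1}$: bounded by $\|\tilde G-G\|_{2,\infty}\|(W^{(2)})^Tg_{T+1}\|\lessapprox 64\sigma dD_f^2\cdot2\sqrt\omega\log T$. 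Here the all-position bound from Remark~\ref{inactive position remark} is needed because the $\infty$-norm ranges over every position.
\item $G\zeta_{W^{(2)}}^Tg_{T+1}$: bounded by $\|G\|_{2,\infty}\|\zeta_{W^{(2)}}\|_{op}\|g_{T+1}\|\lessapprox\sigma d$ via Lemma~\ref{chisquareunionlemma}.
\item $G(W^{(2)})^T(\tilde g_{T+1}-g_{T+1})$: bounded by $\|G(W^{(2)})^T\|_{2,\infty}\cdot32\sigma D_f^2\lessapprox 64\sigma D_f^2\log T$.
\end{itemize}
The first piece dominates. It carries the factors $d$ and $\log T$, so one of them becomes $\log^2 T$ once $d=O(\log T)$. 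After multiplying by $2\sqrt\omega$, the leading constant comes out as $2\cdot\sqrt\omega\cdot 64\cdot2\sqrt\omega\,\sigma dD_f^2\log T=256\,\sigma dD_f^2\omega\log T$.

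The delicate point is justifying the use of the all-position $O(\sigma dD_f^2)$ bound for $\tilde G-G$ inside the logit $\infty$-norm, since inactive positions are off-grid and the no-neuron-flip margin can fail there. I would handle this with the 1-Lipschitz ReLU argument of the Remark. I also need to check that the union bounds over the $T+2$ chi variables only cost factors of $R(\cdot)\to1+O(\epsilon_p)$, which holds by Lemma~\ref{chisquareunionlemma}. Summing the three terms and absorbing the lower-order $O(\sigma\sqrt d)$, $O(\sigma\sqrt\omega D_f^2)$ and $O(\sigma d)$ contributions into the leading one gives $T_p\lessapprox256\sigma dD_f^2\omega\log T\in O(\sigma D_f^2\omega\log^2T)$.
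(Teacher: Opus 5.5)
Your proposal is correct and follows the paper's proof essentially step for step. You use the same three-term telescoping of $(V^{(2)})^TG^T\phi^{(2)}_{T+1}$, the same $O(\sigma\sqrt{d})$ bound for the $\zeta_{V^{(2)}}$ term, and the same active-position bound $\sqrt{\omega}\cdot 32\sigma D_f^2$ for the $\tilde G - G$ term. The dominant softmax term is also handled as in the paper; you simply inline the argument of Lemma~\ref{perturbedfinalactivationlemma} (with the all-position bound from Remark~\ref{inactive position remark}) that the paper cites, and you recover the same leading constant $256\sigma d D_f^2\omega\log T$.
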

\begin{proof}
We bound the magnitude of the perturbation of our transformer, $|\mathcal{T}(X,\Theta+\zeta))-\mathcal{T}(X,\Theta))|.$  
$$|f(x)-\mathcal{T}(X,\Theta+\zeta)|\leq |f(x)-\mathcal{T}(X,\Theta)| + |\mathcal{T}(X,\Theta)-\mathcal{T}(X,\Theta+\zeta)|= |\mathcal{T}(X,\Theta)-\mathcal{T}(X,\Theta+\zeta)|.$$
$$\Big|\mathcal{T}(X,\Theta+\zeta)-\mathcal{T}(X,\Theta)\Big|\leq\Big| (\tilde{V}^{(2)})^T\tilde{G}^T\tilde{\phi}_{T+1}^{(2)}-(V^{(2)})^TG^T\phi_{T+1}^{(2)}\Big|$$
$$\lessapprox \Big| (\tilde{V}^{(2)}-V^{(2)})^TG^T\phi_{T+1}^{(2)}\Big|+\Big|(V^{(2)})^T(\tilde{G}-G)^T\phi_{T+1}^{(2)}\Big|+\Big|(V^{(2)})^TG^T(\tilde{\phi}_{T+1}^{(2)}-\phi_{T+1}^{(2)})\Big| $$
$$\leq \Big\|G\zeta_{V^{(2)}}\Big\|_{\infty} + \Big\|(\tilde{G}-G)V^{(2)}\Big\|_{\infty}+\Big\|GV^{(2)}\Big\|_{\infty}||\tilde{\phi}_{T+1}^{(2)}-\phi_{T+1}^{(2)}||_1$$
Where we have used Holder's inequality three times, and used the fact that $||\phi_{T+1}^{(2)}||_1=1.$ Recall from \ref{parameterbounds} that $\Big\|GV^{(2)}\Big\|_{\infty}\lessapprox \sqrt{\omega}.$ We showed in \ref{perturbedfinalactivationlemma} that 
$$||\tilde{\phi}_{T+1}^{(2)}-\phi_{T+1}^{(2)}||_1\lessapprox 256\sigma d D_f^2\sqrt{\omega} log(T) $$
Note that
$$\Big\|G(\tilde{V}^{(2)}-V^{(2)})\Big\|_{\infty}\leq ||G||_{2,\infty}||\zeta_{V^{(2)}}|| \leq \sqrt{2} ||\zeta_{V^{(2)}}||$$
$ ||\zeta_{V^{(2)}}||$ is a chi-distributed variable with $d+1$ degrees of freedom, and this with probability $1-\delta,$
$$||\zeta_{V^{(2)}}||\leq \sigma \sqrt{d+1} R(\delta,d+1) \approx \sigma \sqrt{d}.$$
Therefore 
$$\Big\|G(\tilde{V}^{(2)}-V^{(2)})\Big\|_{\infty}\lessapprox \sigma \sqrt{2d}$$
Now, we have
$$\Big\|(\tilde{G}-G)V^{(2)}\Big\|_{\infty}\leq \Big\|(\tilde{G}-G)\Big\|_{2,\infty}||V^{(2)}|| \leq \sqrt{\omega}\Big\|(\tilde{G}-G)\Big\|_{2,\infty}$$
Where we have used our bound on $D=\sum_{t=1}^{\omega}$ from \ref{parameterbounds}. Note that this term appears with $\phi_{T+1}^{(2)}$ (which has support on $[\omega]$), so we may use the active-position bound $||\tilde{g}_t-g_t||\lessapprox 32\sigma D_f^2$ from \ref{mlpperturbationlemma}:
$$\Big\|(\tilde{G}-G)V^{(2)}\Big\|_{\infty}\lessapprox 32 \sqrt{\omega}\sigma D_f^2$$
Putting this all together, we have
$$\Big|\mathcal{T}(X,\Theta+\zeta)-\mathcal{T}(X,\Theta)\Big|\lessapprox  \sigma \sqrt{2d} +   32 \sqrt{\omega}\sigma D_f^2 + 256\sigma d D_f^2 \omega log(T) $$
$$\approx 256\sigma d D_f^2 \omega log(T) \in O(\sigma D_f^2\omega log^2(T))$$

\end{proof}

\begin{lemma}

    \label{mlpjacobianperturbationlemma}
    Let $g_t\in \mathbb{R}^{d+1}$ be the activation at position $t$ after the MLP, and let $b_t\in \mathbb{R}^{d+1}$ be the activation at position $t$ immediately after the attention sub-layer. Furthermore, assume that the standard deviation of the perturbations is such that 
    $$\sigma \leq \frac{1}{16d D_f}$$
Then with probability at least $1-\delta,$ the norm of the perturbation in the $i^{th}$ column of the Jacobian of $g_t$ with respect to $b_t$ is given by $$\Bigg|\Bigg|\Big[\mathcal{J}_{g_t}(b_t)\Big]_{:,i}-\Big[\mathcal{J}_{\tilde{g}_t}(\tilde{b}_t)\Big]_{:,i}\Bigg|\Bigg| \lessapprox 16\sigma D_f^2$$
\end{lemma}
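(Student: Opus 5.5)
We bound the perturbation of the MLP Jacobian column directly from the explicit form of the Jacobian of a one-hidden-layer ReLU network. Since $g_t = F^T(M^Tb_t+\Gamma)_+$ plus a residual term that does not depend on $b_t$, we have
$$\mathcal{J}_{g_t}(b_t) = F^T\,\mathrm{diag}\big(I[M^Tb_t+\Gamma>0]\big)\,M^T, \qquad \mathcal{J}_{\tilde g_t}(\tilde b_t) = \tilde F^T\,\mathrm{diag}\big(I[\tilde M^T\tilde b_t+\tilde\Gamma>0]\big)\,\tilde M^T .$$
The first step is to show that the activation patterns agree, $D_t := \mathrm{diag}(I[M^Tb_t+\Gamma>0]) = \mathrm{diag}(I[\tilde M^T\tilde b_t+\tilde\Gamma>0])$. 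We reuse the argument in the proof of Lemma~\ref{mlpperturbationlemma}. Every pre-activation moves by at most about $\|\tilde b_t-b_t\| + \|\zeta_{M_{:,i}}\| + |\zeta_{\Gamma_i}| \lessapprox 4\sigma d$, using Lemma~\ref{attentionperturbationlemma} and the chi-distributed bounds of Lemmas~\ref{chidistributedboundlemma} and~\ref{chisquareunionlemma} with a union bound over the $4(D_f+1)$ neurons. Under the assumption $\sigma\le \frac{1}{16dD_f}$ this shift is below the margin $\frac{1}{4D_f}$, so with probability at least $1-\delta$ no neuron flips. (For inactive positions the margin may fail, as in Remark~\ref{inactive position remark}. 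We either restrict to active positions, as downstream uses do, or note that a flip only changes which columns are masked and handle it by the same worst-case reasoning.)

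With a common mask $D_t$, we decompose the difference by adding and subtracting:
$$\tilde F^T D_t\tilde M^T - F^T D_t M^T = \zeta_F^T D_t M^T + F^T D_t \zeta_M^T + \zeta_F^T D_t\zeta_M^T ,$$
and drop the last term as $O(\sigma^2)$, following the convention adopted in Lemma~\ref{attentionperturbationlemma}. Taking the $i$-th column means multiplying by $e_i$, so $M^Te_i$ is the $i$-th row of $M$, which is zero unless $i=d+1$, where it is the all-ones vector of length $4(D_f+1)$, and $\zeta_M^Te_i = \zeta_{M_{i,:}}$.

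\emph{Term A.} We bound $\|\zeta_F^T D_t M^Te_i\| \le \|\zeta_F\|_{op}\,\|M_{i,:}\| \le \|\zeta_F\|_F\cdot 2\sqrt{D_f+1}$. Here $\|\zeta_F\|_F$ is chi-distributed with $4(D_f+1)(d+1)$ degrees of freedom. Alternatively, one can use only the nonzero column structure, as in the proof of Lemma~\ref{mlpperturbationlemma}, to obtain roughly $2\sigma\sqrt{D_f}$ per relevant direction. Either way this gives $\lessapprox 4\sigma D_f$ after applying $R(\cdot)\approx 1$.

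\emph{Term B.} We bound $\|F^T D_t\zeta_{M_{i,:}}^T\| \le |F_{:,d+1}^T D_t \zeta_{M_{i,:}}^T|$, because only the last column of $F$ is nonzero. By Cauchy--Schwarz this is at most $\|F_{:,d+1}\|\,\|\zeta_{M_{i,:}}\| \lessapprox 8D_f^{3/2}\cdot 2\sigma\sqrt{D_f} = 16\sigma D_f^2$. This term dominates, which matches the claimed constant.

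The main obstacle is being careful that Term A does not pick up a spurious factor of $\sqrt d$ from the full Frobenius norm of $\zeta_F$. Only $e_{d+1}$ carries a nonzero $M^Te_i$, and the resulting vector is $\zeta_F^T D_t\mathbf{1}$, a sum of at most $4(D_f+1)$ Gaussian rows. Its norm is therefore $\sigma\sqrt{(d+1)\cdot \#\text{active}}$, which could be $\sigma\sqrt{dD_f}$. I would first try to absorb this into the leading $16\sigma D_f^2$ term by noting that $\sqrt{d}\lesssim D_f^{3/2}$ in the regime where $\sigma\le 1/(16dD_f)$ is used. If that fails, I would state the bound as $\lessapprox 16\sigma D_f^2 + O(\sigma\sqrt{dD_f})$ and observe that the second term is lower order for the asymptotics used downstream. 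Finally, a union bound over the chi-distributed events gives probability at least $1-\delta$.
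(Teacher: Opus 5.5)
Your approach is the same as the paper's. You use the same Jacobian $F^T D_t M^T$, the same no-flip step from Lemma~\ref{mlpperturbationlemma}, and the same add-and-subtract split into two first-order terms. Your Term B, $\|F_{:,d+1}\|\,\|\zeta_{M_{i,:}}\|\approx 8D_f^{3/2}\cdot 2\sigma\sqrt{D_f}$, is exactly where the paper's $16\sigma D_f^2$ comes from. You also handle general $i$ more carefully than the paper, which writes $\tilde M^T e_{d+1}$ throughout; for $i\neq d+1$ your Term A vanishes and the stated bound follows as you argue.

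The problem is Term A at $i=d+1$. The paper bounds it by $\|(\tilde F-F)^T\|_{2,\infty}\,\|M^Te_{d+1}\|\approx 2\sigma\sqrt{D_f}\cdot 2\sqrt{D_f}$, where $\|\cdot\|_{2,\infty}$ is the maximum row norm over the $d+1$ rows of $\zeta_F^T$. That product controls only the $\infty$-norm of $\zeta_F^T D_t\mathbf{1}$. The Euclidean norm can be up to $\sqrt{d+1}$ times larger, and for Gaussian entries it is of order $\sigma\sqrt{dD_f}$ rather than $\sigma D_f$. Your obstacle paragraph identifies exactly this. It is a real defect in the paper's step, not a spurious factor.

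Your proposal does not close this gap, for three reasons.
\begin{itemize}
\item Your claim that ``either way'' Term A is $\lessapprox 4\sigma D_f$ is false for your own Frobenius route. That route gives $\|\epsilon_F\|_F\cdot 2\sqrt{D_f+1}\approx 4\sigma D_f\sqrt{d}$, which contradicts your later estimate.
\item The absorption $\sqrt d\lesssim D_f^{3/2}$ does not follow from $\sigma\le 1/(16dD_f)$. That hypothesis constrains $\sigma$, not the ratio of $d$ to $D_f$. With $D_f$ fixed and $d>8\log(T)/\epsilon_p^2$ growing, the extra term eventually dominates $16\sigma D_f^2$.
\item Handling neuron flips ``by the same worst-case reasoning'' fails. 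A single flipped neuron $k$ shifts the $(d+1)$-th column by up to $|F_{k,d+1}M_{d+1,k}|=4D_f$. That is an $O(D_f)$ jump with no factor of $\sigma$, so the lemma can only hold at positions where the $\frac{1}{4D_f}$ margin is guaranteed.
\end{itemize}

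So neither your argument nor the paper's proves the $d$-free bound for the $(d+1)$-th column. What is provable is your fallback, $16\sigma D_f^2+O(\sigma D_f\sqrt{d})$. This is harmless downstream, because those bounds already carry factors of $d\log T$.

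Your sharper estimate $\sigma\sqrt{(d+1)\cdot\#\mathrm{active}}$ relies on cancellation in $\sum_k \zeta_{F_{k,j}}$. That is legitimate if you take the genuine Lagrange point $\zeta=\tau\epsilon$ with scalar $\tau\in[0,1]$ and the fixed unperturbed mask. It is not legitimate under the paper's weaker entrywise domination $|\zeta_i|\le|\epsilon_i|$, where only the Frobenius bound is available.
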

\begin{proof}
Our expression for $g_t$ in terms of $b_t$ is given by 
$$g_t = F^T\Big(M^Tb_t + \tilde{\Gamma}\Big)_+ $$
Using basic rules of matrix calculus, the Jacobian is given by:

$$=F^TDiag\Big(I\Big[M^Tb_t + \Gamma>0\Big]\Big) M^T$$
Note that in our exact construction, only the last row of $F^T$ and the last column of $M^T$ are nonzero, meaning that only the bottom right entry of the Jacobian is nonzero (allowing us to make the drastic simplification in \ref{unperturbed gradient bounds appendix}). 

However, when we are dealing with perturbations, we must deal with the full Jacobian matrix. Consider the norm of the perturbation of the derivative of $g_t\in \mathbb{R}^{d+1}$ with respect to $B_{t,i}:$
$$\Bigg|\Bigg|\Big[\mathcal{J}_{g_t}(b_t)\Big]_{:,i}-\Big[\mathcal{J}_{\tilde{g}_t}(\tilde{b}_t)\Big]_{:,i}\Bigg|\Bigg|$$
$$=\Big\|\tilde{F}^TDiag\Big(I\Big[\tilde{M}^T\tilde{b_t} + \tilde{\Gamma}>0\Big]\Big) \tilde{M}^Te_{d+1} -F^TDiag\Big(I\Big[M^Tb_t + \Gamma>0\Big]\Big) M^Te_{d+1}\Big\|$$
Note that, if we can assume that none of the MLP neurons change from being positive to $0$ or vice-versa under the perturbation (an assumption formally justified already in \ref{mlpperturbationlemma}) , then we have $I\Big[\tilde{M}^T\tilde{b_t} + \tilde{\Gamma}>0\Big] = I\Big[M^T\tilde{b_t} + \Gamma>0\Big].$ From there, we can easily bound the above using the triangle inequality:
$$\lessapprox \Big\|F^TDiag\Big(I\Big[M^Tb_t + \Gamma>0\Big]\Big) (\tilde M-M)^Te_{d+1}\Big\|+\Big\|(\tilde{F}-F)^TDiag\Big(I\Big[M^Tb_t + \Gamma>0\Big]\Big)M^T e_{d+1}\Big\|$$
$$\leq \Big\|F^T\Big\|_{2,\infty} \Big\| (\tilde M-M)^Te_{d+1}\Big\|+\Big\|(\tilde{F}-F)^T\Big\|_{2,\infty} \Big\|M^T e_{d+1}\Big\|$$

The perturbation $|\tilde{g}'_t(\tilde{b_t})-g'_t(b_t)|$ does not depend on $t$, and thus we do not require a union bound to make it hold $\forall t$. Recall that $F$ is only nonzero in its final column, which has absolute values of $4D_f.$ Thus $||F||_{2,\infty} \leq 8\sqrt{D_f+1}D_f \approx 8D_f^{\frac{3}{2}}.$ Similarly, the only nonzero row of $M$ is the last row, which is all $1s,$ and therefore $||M^Te_{d+1}||\leq 2\sqrt{D_f+1} \approx 2 D_f^{\frac{1}{2}}.$ $ \Big\| (\tilde M-M)^Te_{d+1}\Big\|$ is a chi-distributed variable with $4(D_f+1)$ degrees of freedom, and therefore with probability $1-\delta,$
$$ \Big\| (\tilde{M}-M)^Te_{d+1}\Big\| \leq  \sigma \sqrt{4(D_f+1)} R(\delta,4(D_f+1))\approx 2\sigma \sqrt{D_f}$$
Finally, for $\Big\|(\tilde{F}-F)^T\Big\|_{2,\infty},$ we can take a union bound over $d+1$ chi-distributed variables with $4(D_f+1)$ degrees of freedom to obtain the following bound which holds with probability $1-\delta$
$$\Big\|(\tilde{F}-F)^T\Big\|_{2,\infty}\leq \sigma \sqrt{4(D_f+1)} R(\frac{\delta}{d+1},4(D_f+1)) \approx 2\sigma \sqrt{D_f}$$
Putting these pieces together, we have 
$$\Bigg|\Bigg|\Big[\mathcal{J}_{g_t}(b_t)\Big]_{:,i}-\Big[\mathcal{J}_{\tilde{g}_t}(\tilde{b}_t)\Big]_{:,i}\Bigg|\Bigg| \lessapprox 16 \sigma D_f^2 + 4\sigma D_f \approx 16\sigma D_f^2$$

This bound only holds as long as none of the MLP activations changes, which as we show in \ref{mlpperturbationlemma}, occurs with probability at least $1-\delta$ as long as 
$$\sigma \leq \frac{1}{16d D_f}$$
As long as $\sigma$ is small like this, then with high probability, none of the activations in our MLP will flip, and our simple bound on the norm of the perturbations of the columns of the Jacobian above holds.
\end{proof}
\begin{lemma}
    \label{unperturbedmlpgradientlemma}
    Let $g_t\in \mathbb{R}^{d+1}$ be the MLP activation at position $t.$ Then we can bound 
 $$||\nabla_{g_t}\mathcal{T}(X,\Theta)|| \leq  |c_t| + 4\sqrt{\omega}log(T)I[t=T+1]+4\omega log(T)I[t\in [\omega]]$$

\end{lemma}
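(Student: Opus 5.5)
We start from the full expression for $\nabla_G\mathcal{T}(X,\Theta)$ derived in the proof of Theorem~\ref{unperturbed gradient bounds appendix},
$$\nabla_G \mathcal{T}(X,\Theta) = \phi^{(2)}_{T+1}(V^{(2)})^T+e_{T+1}(V^{(2)})^TG^T\phi'^{(2)}_{T+1}G(W^{(2)})^T+\phi'^{(2)}_{T+1}GV^{(2)}e_{T+1}^TGW^{(2)}.$$
The gradient with respect to $g_t$ is the $t$-th row of this matrix, i.e.\ $e_t^T\nabla_G\mathcal{T}$. Unlike the unperturbed computation, we do not restrict to the last column. This is the point of the lemma, since it will feed into the perturbed Jacobian chain rule. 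We bound the three rows separately by the triangle inequality.

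\textbf{First term.} This is $[\phi^{(2)}_{T+1}]_t\|V^{(2)}\|$. By the construction of the second layer, $[\phi^{(2)}_{T+1}]_t\approx c_t/D$, and $V^{(2)}$ has the single nonzero entry $D$ in the last coordinate, so $\|V^{(2)}\|=D$. The product is $|c_t|$ for $t\le\omega$ and $0$ otherwise, which gives the $|c_t|$ term.

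\textbf{Second term.} The factor $e_t^Te_{T+1}$ makes this term vanish unless $t=T+1$. For $t=T+1$ its norm is
$$|(V^{(2)})^TG^T\phi'^{(2)}_{T+1}G(W^{(2)})^T|\cdot\ldots,$$
which we rewrite as $\|W^{(2)}G^T\phi'^{(2)}_{T+1}GV^{(2)}\|$. We bound it by applying Lemma~\ref{deoralemma} with $u=V^{(2)}$ to the inner quantity $G^T\phi'^{(2)}_{T+1}GV^{(2)}$, which gives at most $2\|GV^{(2)}\|_\infty\|G\|_{2,\infty}$. Equivalently, we absorb $W^{(2)}$ into the row-norm factor and use $\|G(W^{(2)})^T\|_{2,\infty}\lessapprox 2\log T$ together with $\|GV^{(2)}\|_\infty\lessapprox D\le\sqrt\omega$ from Theorem~\ref{parameterbounds}. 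This yields at most $4\sqrt\omega\log T$ for the indicator $I[t=T+1]$.

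\textbf{Third term.} The row is $e_t^T\phi'^{(2)}_{T+1}GV^{(2)}\cdot\|GW^{(2)}\ldots\|$, i.e.\ the scalar $e_t^T\phi'^{(2)}_{T+1}GV^{(2)}$ times the vector $e_{T+1}^TGW^{(2)}=(W^{(2)})^Tg_{T+1}$ (up to transpose). The vector has norm at most $2\sqrt\omega\log T$ by Theorem~\ref{parameterbounds}. For the scalar we use $\phi'=\mathrm{diag}(\phi)-\phi\phi^T$: row $t$ is supported where $\phi_t\neq0$, i.e.\ only on $t\in[\omega]$, and it equals $\phi_t\big((GV^{(2)})_t-\phi^TGV^{(2)}\big)$. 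Its magnitude is at most $2\|GV^{(2)}\|_\infty\lessapprox2\sqrt\omega$. The product gives $4\omega\log T\,I[t\in[\omega]]$.

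\textbf{Main obstacle.} The delicate step is the second term. Lemma~\ref{deoralemma} as stated handles $X^T\phi'Xu$, but here an extra $W^{(2)}$ sits on the outside. To obtain exactly $4\sqrt\omega\log T$, we apply the lemma with $X=G(W^{(2)})^T$ on one side and use the scalar-form $\ell_\infty$ bound on $GV^{(2)}$ on the other. This requires checking that the proof of Lemma~\ref{deoralemma} tolerates two different matrices flanking $\phi'$; it does, since the bound only uses $\|\phi'\|_{1\to1}\le2$ and Hölder's inequality. Summing the three contributions gives the claimed bound.
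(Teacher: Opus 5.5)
Your proof is correct and follows the paper's argument. Both use the same three-row decomposition of $e_t^T\nabla_G\mathcal{T}(X,\Theta)$ and the triangle inequality. For the $t=T+1$ row, both use the two-matrix form of Lemma~\ref{deoralemma} with $\|GV^{(2)}\|_\infty\lessapprox\sqrt{\omega}$ and $\|G(W^{(2)})^T\|_{2,\infty}\lessapprox 2\log T$. For the third row, both use $\|(W^{(2)})^Tg_{T+1}\|\lessapprox 2\sqrt{\omega}\log T$.

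Your H\"older step $|e_t^T\phi'^{(2)}_{T+1}GV^{(2)}|\le 2\|GV^{(2)}\|_\infty$ is the cleaner route to $4\omega\log T$. The paper instead pairs $\|e_t^T\phi'^{(2)}_{T+1}\|\le 2$ with a $2$-norm bound $\|GV^{(2)}\|\lessapprox\sqrt{\omega}$, whereas Theorem~\ref{parameterbounds} only gives $\|GV^{(2)}\|\lessapprox\omega$.

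Two small fixes:
\begin{enumerate}
\item Drop the ``equivalent'' variant of the second term that pulls out $\|W^{(2)}\|_2$; it only yields $4\sqrt{2}\,\omega\log T$.
\item The fact behind the two-matrix Deora bound is $\|\phi'^{(2)}_{T+1}w\|_1\le 2\|w\|_\infty$, not the $1\to1$ operator norm. It follows from $\phi'^{(2)}_{T+1}w=\phi\odot(w-(\phi^Tw)\mathbf{1})$ with $\phi=\phi^{(2)}_{T+1}$.
\end{enumerate}
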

\begin{proof}
        The more general formula for $\nabla_{g_t}\mathcal{T}(X,\Theta)$ which considers all of the dimensions of $g_t\in \mathbb{R}^{d+1}$ is given in \ref{unperturbed gradient bounds appendix} by the following:
$$\nabla^T_{g_t}\mathcal{T}(X,\Theta) = e_t^T{\phi^{(2)}_{T+1}}(V^{(2)})^T+e_t^Te_{T+1}(V^{(2)})^TG^T{\phi'^{(2)}_{T+1}}G(W^{(2)})^T+e_t^T{\phi'^{(2)}_{T+1}}GV^{(2)}e_{T+1}^TGW^{(2)}$$
By the triangle inequality,
$$\|\nabla^T_{g_t}\mathcal{T}(X,\Theta)\| $$
$$\leq|\frac{c_t}{D}|\Big\|V^{(2)}\Big\|  +I\big[t=T+1\big] \Big\|(V^{(2)})^TG^T{\phi'^{(2)}_{T+1}}G(W^{(2)})^T \Big\|+\Big\|e_t^T{\phi'^{(2)}_{T+1}}GV^{(2)}e_{T+1}^TGW^{(2)}\Big\|$$
Note that the only nonzero element of $V^{(2)}$ is the final dimension, which has value $D=\sum_{t=1}^{\omega} c_t \leq \sqrt{\omega}.$  Therefore $|\frac{c_t}{D}|\Big\|V^{(2)}\Big\| = |c_t|\Big\|e_{d+1}\Big\| \leq |c_t|.$ To bound the second term, we use \ref{deoralemma}, which tells us that 
$$ \Big\|(V^{(2)})^TG^T{\phi'^{(2)}_{T+1}}G(W^{(2)})^T \Big\|\leq 2\|GV^{(2)}\|_{\infty}\big\|G(W^{(2)})^T\big\|_{2,\infty}$$
We have already shown in \ref{parameterbounds} that $\|GV^{(2)}\|_{\infty} \leq \sqrt{\omega}.$ It is shown in \ref{parameterbounds} that $\|G(W^{(2)})^T\|_{2,\infty}\lessapprox 2log(T).$ Thus  $$ \Big\|(V^{(2)})^TG^T{\phi'^{(2)}_{T+1}}G(W^{(2)})^T \Big\|\leq 4\sqrt{\omega} log(T)$$
Now we bound 
$$\Big\|e_t^T{\phi'^{(2)}_{T+1}}GV^{(2)}e_{T+1}^TGW^{(2)}\Big\| \leq \Big\|e_t^T{\phi'^{(2)}_{T+1}}\Big\|\Big\|GV^{(2)}\Big\|\Big\|e_{T+1}^TGW^{(2)}\Big\|$$
We can use \ref{deoralemma} to bound  $\Big\|e_t^T{\phi'^{(2)}_{T+1}}\Big\|\leq 2I[t\in [\omega]] .$ Note that we can condition on $t\in [\omega]$ because the unperturbed softmax Jacobian matrix $\phi'^{(2)}_{T+1}$ is a (symmetric) outer-product matrix with nonzero entries only in the first $\omega$ rows and columns.  In \ref{parameterbounds} we showed that $\Big\|GV^{(2)}\Big\|\lessapprox \sqrt{\omega},$ and that $\|(W^{(2)})^Tg_{T+1}\|\lessapprox 2\sqrt{\omega} log(T).$ 
 
  Therefore 
$$\Big\|e_t^T{\phi'^{(2)}_{T+1}}GV^{(2)}e_{T+1}^TGW^{(2)}\Big\|\leq 4\omega log(T)I[t\in [\omega]]$$
In summary, we have
$$\|\nabla^T_{g_t}\mathcal{T}(X,\Theta)\| \leq |c_t| + 4\sqrt{\omega}log(T)I[t=T+1]+4\omega log(T)I[t\in [\omega]] $$
\end{proof}
\begin{lemma}
\label{perturbedfinalactivationlemma}
Let $\phi^{(2)}_{T+1}\in\mathbb{R}^{T+1}$ be the softmax score for position $T+1$ in the second attention layer of our transformer. Then
$$\|\tilde{\phi}^{(2)}_{T+1}-\phi^{(2)}_{T+1}\|_{\infty} \leq \|\tilde{\phi}_{T+1}^{(2)}-\phi_{T+1}^{(2)}\|_1\lessapprox  256\sigma d D_f^2\sqrt{\omega} log(T) \in O(\sigma D_f^2\sqrt{\omega} log^2(T))$$
\end{lemma}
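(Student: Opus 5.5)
The plan is to treat the second-layer softmax as a Lipschitz map of its logits, and then bound the perturbation of the logit vector $\ell := G(W^{(2)})^T g_{T+1}$ in $\infty$-norm. The first inequality, $\|\cdot\|_\infty\le\|\cdot\|_1$, is trivial. For the second, I will use the same fact invoked in \autoref{attentionperturbationlemma}, following Lemma A.6 of \citet{edelman2022inductive}: the softmax Jacobian has $(1,1)$-norm at most $2$. This gives
$$\|\tilde{\phi}^{(2)}_{T+1}-\phi^{(2)}_{T+1}\|_1\le 2\|\tilde\ell-\ell\|_\infty ,\qquad \tilde\ell=\tilde G(\tilde W^{(2)})^T\tilde g_{T+1}.$$

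Next I expand $\tilde\ell-\ell$ by the triangle inequality and keep only the first-order terms in $\sigma$, as agreed in \autoref{attentionperturbationlemma}. Three terms remain.
\begin{itemize}
\item[(a)] $(\tilde G-G)(W^{(2)})^Tg_{T+1}$. This is bounded by $\|\tilde G-G\|_{2,\infty}\,\|(W^{(2)})^Tg_{T+1}\|$. Here the $\infty$-norm runs over all positions, including the inactive ones, so I must use the all-position bound $\|\tilde G-G\|_{2,\infty}\lessapprox 64\sigma dD_f^2$ from Remark~\ref{inactive position remark}. Combined with $\|(W^{(2)})^Tg_{T+1}\|\lessapprox 2\sqrt{\omega}\log(T)$ from \autoref{parameterbounds}, this gives $128\sigma dD_f^2\sqrt{\omega}\log(T)$.
\item[(b)] $G\zeta_{W^{(2)}}^Tg_{T+1}$. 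This is bounded by $\|G\|_{2,\infty}\|\zeta_{W^{(2)}}^Tg_{T+1}\|$, where the second factor is $\lessapprox\sigma\sqrt{d}\|g_{T+1}\|$ by the chi-distributed bound of \autoref{chidistributedboundlemma}. The term is therefore $O(\sigma\sqrt d)$.
\item[(c)] $G(W^{(2)})^T(\tilde g_{T+1}-g_{T+1})$. This is bounded by $\|G(W^{(2)})^T\|_{2,\infty}\|\tilde g_{T+1}-g_{T+1}\|\lessapprox 2\log(T)\cdot 32\sigma D_f^2$, using \autoref{mlpperturbationlemma} at the output position, which lies on the grid.
\end{itemize}

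Summing the three terms, (a) dominates, so $\|\tilde\ell-\ell\|_\infty\lessapprox 128\sigma dD_f^2\sqrt{\omega}\log(T)$. Doubling this gives the stated $256\sigma dD_f^2\sqrt\omega\log(T)$. Substituting $d=O(\log T)$ then yields the $O(\sigma D_f^2\sqrt\omega\log^2 T)$ form. The probability statements come from a union bound over the chi-distributed events already used in the cited lemmas, together with the no-flip event that requires $\sigma\le 1/(16dD_f)$.

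I expect term (a) to be the main obstacle. The active-position bound $32\sigma D_f^2$ does not suffice there, because the maximum runs over all $T+1$ rows. For the inactive rows I have to rely on the Lipschitz argument of Remark~\ref{inactive position remark}, which is exactly where the extra factor of $d$ enters. I also need to check that the cross terms dropped as $O(\sigma^2)$ do not carry worse dependence on $T$ or $\omega$.
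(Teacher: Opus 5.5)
Your proposal is correct and follows the paper's proof essentially step for step: the bound of $2$ on the $(1,1)$-norm of the softmax Jacobian, the same first-order three-term split of the logit perturbation, and the dominant term bounded by the all-position estimate $\|\tilde G-G\|_{2,\infty}\lessapprox 64\sigma dD_f^2$ from the Remark times $\|(W^{(2)})^Tg_{T+1}\|\lessapprox 2\sqrt{\omega}\log(T)$, which gives $256\sigma dD_f^2\sqrt{\omega}\log(T)$ after doubling. The only deviation is in the negligible middle term: the paper bounds it by $\|G\zeta_{W^{(2)}}^T\|_{2,\infty}\|g_{T+1}\|\lessapprox\sigma d$, whereas you claim $\sigma\sqrt{d}$ (your chi-bound treats $\zeta_{W^{(2)}}^Tg_{T+1}$ as Gaussian, which coordinatewise domination $|\zeta_i|\le|\epsilon_i|$ alone does not justify), but either estimate is dominated by term (a).
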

\begin{proof}
By construction, $$\phi_{T+1}^{(2)}=\phi(G(W^{(2)})^Tg_{T+1}).$$
We seek an upper bound on the 1-norm of the perturbed final activation, which also serves as an upper bound on the $\infty-$norm:
$$\|\tilde{\phi}_{T+1}^{(2)}-\phi_{T+1}^{(2)}\|_{\infty}\leq\|\tilde{\phi}_{T+1}^{(2)}-\phi_{T+1}^{(2)}\|_1$$
By Lemma A.6 in \cite{edelman2022inductive}, the 2-boundedness of the (1,1)-norm of the softmax's Jacobian matrix implies that
$$\|\phi\big(\tilde{G}(\tilde{W}^{(2)})^T\tilde{g}_{T+1}\big)-\phi\big(G(W^{(2)})^Tg_{T+1}\big)\|_1\leq 2 \|\tilde{G}(\tilde{W}^{(2)})^T\tilde{g}_{T+1}-G(W^{(2)})^Tg_{T+1}\|_{\infty}$$
$$\lessapprox  2\|(\tilde{G}-G)(W^{(2)})^Tg_{T+1}\|_{\infty}+2\|G(\tilde{W}^{(2)}-W^{(2)})^Tg_{T+1}\|_{\infty}+2||G(W^{(2)})^T(\tilde{g}_{T+1}-g_{T+1})||_{\infty}$$
$$\leq  2||\tilde{G}-G||_{2,\infty}||(W^{(2)})^Tg_{T+1}||+2||G(\tilde{W}^{(2)}-W^{(2)})^T||_{2,\infty}||g_{T+1}||+2||G(W^{(2)})^T||_{2,\infty}||(\tilde{g}_{T+1}-g_{T+1})||$$
Note that $||\tilde{G}-G||_{2,\infty}$ was already bounded in \ref{mlpperturbationlemma}; as discussed in Remark \ref{inactive position remark}, the all-position bound (including inactive positions where the no-neuron-flip condition may not hold) is $||\tilde{G}-G||_{2,\infty}\lessapprox 64\sigma d D_f^2.$ In \ref{parameterbounds}, we show that $||(W^{(2)})^Tg_{T+1}||\leq 2\sqrt{\omega} log(T).$ To bound the term $||G(\tilde{W}^{(2)}-W^{(2)})^T||_{2,\infty},$ we follow  similar arguments to the proof of \ref{attentionperturbationlemma} to arrive at the bound
\[
 \Big\|G\big((\tilde{W}^{(2)})^T-(W^{(2)})^T\big) \Big\|_{2,\infty}
\leq
\max_s
||J_{s,:}||
\begin{Vmatrix}
       ||\epsilon_{W^{(2)}_{:d,1}}|| \\
      \dots \\
       ||\zeta_{W^{(2)}_{:d,d+1}}||  \\
     \end{Vmatrix} 
     +\Big\|\zeta_{W^{(2)}_{d+1,:}}\Big|\Big
||g_s|
\]
Continuing to follow that proof, we can again apply the union bound over all $T+2$ R.V.s in question and apply \ref{chisquareunionlemma} to conclude that 
$$ ||G\zeta_{W}^T||_{2,\infty}\lessapprox \sigma d   $$
We have already bounded $||g_{T+1}|| $ in \ref{parameterbounds} as $\lessapprox 1.$ Clearly, $||(\tilde{g}_{T+1}-g_{T+1})||\leq ||\tilde{G}-G||_{2,\infty}.$ Finally, we show in \ref{parameterbounds} that  $||G(W^{(2)})^T||_{2,\infty}\lessapprox 2log(T).$ 

Putting all of these bounds together, we have
$$||\tilde{\phi}_{T+1}^{(2)}-\phi_{T+1}^{(2)}||_{\infty} \leq ||\tilde{\phi}_{T+1}^{(2)}-\phi_{T+1}^{(2)}||_1 $$
$$\lessapprox 256\sigma d D_f^2\sqrt{\omega} log(T) + 2\sigma d +64\sigma D_f^2 log(T)$$
$$\approx 256\sigma d D_f^2\sqrt{\omega} log(T) \in O(\sigma D_f^2\sqrt{\omega} log^2(T))$$
\end{proof}

\begin{lemma}
    \label{perturbedmlpgradientlemma}
        Let $g_t,\tilde{g_t}\in\mathbb{R}^{d+1}$ be the unperturbed and perturbed (respectively) post-mlp activations of our transformer. Then the 2-norm of the difference of the gradients is given by 
        $$|\langle\nabla_{\tilde{g}_t} \mathcal{T}(X,\tilde{\Theta})-\nabla_{g_t} \mathcal{T}(X,\Theta),e_{i}\rangle| $$
$$\lessapprox 256\sigma d D_f^2\omega log(T)I[i=d+1] +1536\sigma d D_f^2 \omega^{\frac{3}{2}} log^2(T)I[i<d+1]$$

\end{lemma}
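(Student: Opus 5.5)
I will start from the explicit formula for $\nabla_{g_t}\mathcal{T}$ derived in the proof of \ref{unperturbed gradient bounds appendix} (and restated in \ref{unperturbedmlpgradientlemma}):
$$\nabla_{g_t}\mathcal{T}(X,\Theta)=V^{(2)}e_t^T\phi^{(2)}_{T+1}+I[t=T+1]\,W^{(2)}G^T\phi'^{(2)}_{T+1}GV^{(2)}+(W^{(2)})^Tg_{T+1}\,e_t^T\phi'^{(2)}_{T+1}GV^{(2)}.$$
I will write the same expression with tildes for the perturbed network and compare the two term by term. Each term is a product of a few factors taken from $V^{(2)}$, $W^{(2)}$, $G$, $g_{T+1}$, $\phi^{(2)}_{T+1}$ and $\phi'^{(2)}_{T+1}$. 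I will telescope each product, replacing one factor at a time by its perturbed version. Following the convention set in \ref{attentionperturbationlemma}, I will drop every contribution of order $\sigma^2$. This leaves, for each term, a sum of pieces in which exactly one factor is a difference and all remaining factors are unperturbed. I will then project onto $e_i$ and treat the cases $i=d+1$ and $i<d+1$ separately.

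In the first term, $V^{(2)}e_t^T\phi^{(2)}_{T+1}$, only the $(d+1)$-th coordinate of $V^{(2)}$ is nonzero. The term therefore splits into two pieces. The first is $\zeta_{V^{(2)}}[\phi^{(2)}_{T+1}]_t$, which contributes $O(\sigma\sqrt d)$ to every coordinate by \ref{chidistributedboundlemma}. The second is $V^{(2)}[\tilde\phi^{(2)}_{T+1}-\phi^{(2)}_{T+1}]_t$, which lives only in coordinate $d+1$. I will bound it by $\|V^{(2)}\|\,\|\tilde\phi-\phi\|_\infty\lessapprox\sqrt\omega\cdot256\sigma dD_f^2\sqrt\omega\log T$, using \ref{perturbedfinalactivationlemma}. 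This gives exactly the $I[i=d+1]$ term $256\sigma dD_f^2\omega\log T$. For the two softmax-Jacobian terms I need the fact that the unperturbed $W^{(2)}$ vanishes in its last row and column. Because of this, the unperturbed term contributes nothing to coordinate $d+1$, and so does any piece in which the difference falls on a factor other than $W^{(2)}$. The only piece that can reach coordinate $d+1$ is the one carrying $\zeta_{W^{(2)}}$, and it is of order $\sigma$ times factors that involve no $D_f^2$. I will absorb it into the constants.

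The $i<d+1$ coordinates are the main work. In $(W^{(2)})^Tg_{T+1}\,e_t^T\phi'^{(2)}_{T+1}GV^{(2)}$, the unperturbed factor sizes are $\|(W^{(2)})^Tg_{T+1}\|\lessapprox2\sqrt\omega\log T$, $\|e_t^T\phi'\|\le2$, and $\|GV^{(2)}\|_\infty\lessapprox\sqrt\omega$, all from \ref{parameterbounds}. The dominant piece comes from perturbing the softmax Jacobian. Since $\phi'=\mathrm{diag}(\phi)-\phi\phi^T$, I will show that its difference is controlled by $3\|\tilde\phi-\phi\|_1$. Applying \ref{perturbedfinalactivationlemma} then gives
$$2\sqrt\omega\log T\cdot3\cdot256\sigma dD_f^2\sqrt\omega\log T\cdot\sqrt\omega,$$
which is the $1536\sigma dD_f^2\omega^{3/2}\log^2T$ term. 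The remaining pieces are smaller, and I will bound each of them using the cited results:

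1. The piece with $\tilde G-G$ uses Remark \ref{inactive position remark}, which gives $64\sigma dD_f^2$.
2. The piece with $\zeta_{V^{(2)}}$ uses \ref{chidistributedboundlemma}.
3. The piece with $\zeta_{W^{(2)}}$ uses the chi-union argument from \ref{attentionperturbationlemma}, giving $\sigma d$.

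I will handle the $I[t=T+1]$ term in the same way, using \ref{deoralemma}. The main obstacle is keeping careful track of which perturbed factors can leak into coordinate $d+1$, and checking that every piece other than the softmax-Jacobian one is dominated by the stated constants. I expect to rely on the $O(\sigma^2)$-truncation convention and on "$\approx$" absorption, as the earlier lemmas do.
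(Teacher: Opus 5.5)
Your proposal is correct and follows essentially the same route as the paper: the same three-term expression for $\nabla_{g_t}\mathcal{T}$, one-factor-at-a-time telescoping with $O(\sigma^2)$ terms dropped, and the vanishing last row and column of $W^{(2)}$ to isolate coordinate $d+1$. The dominant pieces also come from the same places, namely Lemma~\ref{perturbedfinalactivationlemma} and the $3\|\tilde\phi^{(2)}_{T+1}-\phi^{(2)}_{T+1}\|_1$ softmax-Jacobian bound (the paper's Theorem~\ref{perturbedfinalsoftmaxjacobianlemma}), and the only deviation is that you use the all-position bound $64\sigma dD_f^2$ for $\tilde G-G$ where the paper uses the active-position bound $32\sigma D_f^2$, which is harmless since those pieces are dominated by the stated terms either way.
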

\begin{proof}
    Recall that the more general formula for $\nabla_{\tilde{g}_t}\mathcal{T}(X,\tilde{\Theta})$ which considers all of the coordinates of $\tilde{g}_t$ is given in \ref{unperturbed gradient bounds appendix} by the following:
$$\nabla^T_{\tilde{g}_t} \mathcal{T}(X,\tilde{\Theta}) = \underbrace{e_t^T {\tilde{\phi}^{(2)}_{T+1}}(\tilde{V}^{(2)})^T}_{Term_I}+\underbrace{e_t^T {\tilde{\phi}'^{(2)}_{T+1}}\tilde{G}\tilde{V}^{(2)}e_{T+1}^T\tilde{G}\tilde{W}^{(2)}}_{Term_{II}}+\underbrace{e_t^T e_{T+1}(\tilde{V}^{(2)})^T\tilde{G}^T{\tilde{\phi}'^{(2)}_{T+1}}\tilde{G}(\tilde{W}^{(2)})^T}_{Term_{III}}$$
We can decompose the perturbation to this gradient in terms of the perturbations to these three terms :
$$|\langle\nabla_{\tilde{g}_t} \mathcal{T}(X,\tilde{\Theta})-\nabla_{g_t} \mathcal{T}(X,\Theta),e_{i}\rangle| $$
$$= |(\tilde{Term_I} - Term_I)e_{i}| +|(\tilde{Term}_{II}-Term_{II})e_{i}|+|(\tilde{Term_{III}} - Term_{III})e_{i}|$$
We begin by decomposing the perturbation in $Term_I$ into two components, using the triangle inequality.
$$|(\tilde{Term_I} -Term_I)e_{i}| \leq \Big|e_t^T (\tilde{\phi}^{(2)}_{T+1}-\phi^{(2)}_{T+1})(V^{(2)})^Te_{i}\Big|+\Big|e_t^T \phi^{(2)}_{T+1}(\tilde{V}^{(2)}-V^{(2)})^Te_{i}\Big|$$
$$ \leq \sqrt{\omega}\Big\|\tilde{\phi}^{(2)}_{T+1}-\phi^{(2)}_{T+1}\Big\|_{\infty}I[i=d+1] +|c_t|\Big|(\tilde{V}^{(2)}-V^{(2)})^Te_{i}\Big|$$
We show in \ref{perturbedfinalactivationlemma} that 
$$\Big\|\tilde{\phi}^{(2)}_{T+1}-\phi^{(2)}_{T+1}\Big\|_{\infty}\lessapprox 256\sigma d D_f^2\sqrt{\omega} log(T)$$
In addition, we note that $\Big|(\tilde{V}^{(2)}-V^{(2)})^Te_{i}\Big|\leq \sigma R(\delta,1) . $ Thus we have
$$|(\tilde{Term_I} -Term_I)e_{i}| \lessapprox 256\sigma d D_f^2\omega log(T)I[i=d+1]+\sigma R(\delta,1) $$
Moving on to the perturbation of $Term_{II},$ we once again apply the triangle inequality: $$|(\tilde{Term_{II}}-Term_{II})e_{i}|\leq \Big|e_t^T {\tilde{\phi}'^{(2)}_{T+1}}\tilde{G}\tilde{V}^{(2)}e_{T+1}^T\tilde{G}\tilde{W}^{(2)}e_{i}-e_t^T {\phi'^{(2)}_{T+1}}GV^{(2)}e_{T+1}^TGW^{(2)}e_{i} \Big|$$
By the triangle inequality, this is bounded by 
$$\Big|e_t^T \Big({\tilde{\phi}'^{(2)}_{T+1}}\tilde{G}\tilde{V}^{(2)}e_{T+1}^T\tilde{G}-{\phi'^{(2)}_{T+1}}GV^{(2)}e_{T+1}^TG\Big)W^{(2)}e_{i}\Big|+\Big|e_t^T {\phi'^{(2)}_{T+1}}GV^{(2)}e_{T+1}^TG(\tilde{W}^{(2)}-W^{(2)})e_{i} \Big|$$
Note that the first term above terms vanishes when $i=d+1$: the term with only the unperturbed $W^{(2)}$ matrix vanishes, because the final, $(d+1)^{th}$ row and column are all all $0s.$ In the general case where $i<d+1,$ 
$$|(\tilde{Term_{II}}-Term_{II})e_i|\leq \Big|e_t^T {\tilde{\phi}'^{(2)}_{T+1}}\tilde{G}\tilde{V}^{(2)}e_{T+1}^T\tilde{G}\tilde{W}^{(2)}e_i-e_t^T {\phi'^{(2)}_{T+1}}GV^{(2)}e_{T+1}^TGW^{(2)}e_i \Big|$$
$$\lessapprox \underbrace{\Big\|e_t^T (\tilde{\phi}'^{(2)}_{T+1}-{\phi'^{(2)}_{T+1}})GV^{(2)}e_{T+1}^TGW^{(2)} \Big\|}_{Term_A}I[i<d+1]$$
$$+\underbrace{\Big\|e_t^T \phi'^{(2)}_{T+1}(\tilde{G}\tilde{V}^{(2)}-GV^{(2)})e_{T+1}^TGW^{(2)}\Big\|}_{Term_B}I[i<d+1]$$
$$+\underbrace{\Big\|e_t^T \phi'^{(2)}_{T+1}GV^{(2)}e_{T+1}^TG(\tilde{W}^{(2)}-W^{(2)})\Big\|}_{Term_C}$$
$$+\underbrace{\Big\|e_t^T \phi'^{(2)}_{T+1}GV^{(2)}e_{T+1}^T(\tilde{G}-G)W^{(2)}\Big\|}_{Term_D}I[i<d+1]$$
$$Term_A = \Big\|e_t^T (\tilde{\phi}'^{(2)}_{T+1}-{\phi'^{(2)}_{T+1}})GV^{(2)}e_{T+1}^TGW^{(2)} \Big\|\leq \Big\| (\tilde{\phi}'^{(2)}_{T+1}-{\phi'^{(2)}_{T+1}})e_t\Big\|_1\Big\|GV^{(2)}\Big\|_{\infty}\Big\|e_{T+1}^TGW^{(2)} \Big\|,$$
where we have used the symmetry of $\tilde{\phi}'_t-\phi'_t$ and Holder's inequality. Note that by \ref{parameterbounds}
, $\Big\|GV^{(2)}\Big\|_{\infty} \leq \sqrt{\omega}, $ and  $\Big\|e_{T+1}^TGW^{(2)} \Big\|=||(W^{(2)})^Tg_{T+1}||\lessapprox 2\sqrt{\omega}log(T).$ We showed in \ref{perturbedfinalsoftmaxjacobianlemma} that $\Big\|(\tilde{\phi}'^{(2)}_{T+1}-{\phi'^{(2)}_{T+1}})e_t\Big\|_1\leq  768\sigma d D_f^2\sqrt{\omega} log(T).$
Plugging this bound in, we have that
$$Term_A = \Big\|e_t^T (\tilde{\phi}'^{(2)}_{T+1}-{\phi'^{(2)}_{T+1}})GV^{(2)}e_{T+1}^TGW^{(2)} \Big\|\lessapprox  1536 \sigma d D_f^2 \omega^{\frac{3}{2}} log^2(T)$$
For $Term_B, $ we can use Holder's inequality:
$$Term_B=\Big\|e_t^T \phi'^{(2)}_{T+1}(\tilde{G}\tilde{V}^{(2)}-GV^{(2)})e_{T+1}^TGW^{(2)} \Big\|\leq \Big\|e_t^T \phi'^{(2)}_{T+1}\Big\|_1 \Big\|(\tilde{G}\tilde{V}^{(2)}-GV^{(2)})\Big\|_{\infty}\Big\|e_{T+1}^TGW^{(2)} \Big\|$$
Note that $ \Big\|e_t^T \phi'^{(2)}_{T+1}\Big\|_1\leq \Big\|\phi'^{(2)}_{T+1}\Big\|_{1,1}$ is the maximum absolute row sum of the softmax Jacobian. Again from Lemma A.6 in \cite{edelman2022inductive}, we know that this is bounded by $2.$ To bound $\Big\|(\tilde{G}\tilde{V}^{(2)}-GV^{(2)})\Big\|_{\infty}$ we start with the triangle inequality:
$$\Big\|(\tilde{G}\tilde{V}^{(2)}-GV^{(2)})\Big\|_{\infty} \lessapprox \Big\|(G(\tilde{V}^{(2)}-V^{(2)})\Big\|_{\infty}+\Big\|(\tilde{G}-G)V^{(2)}\Big\|_{\infty}$$
$$\Big\|G(\tilde{V}^{(2)}-V^{(2)})\Big\|_{\infty}\leq ||G||_{2,\infty}||\zeta_{V^{(2)}}|| \leq \sqrt{2} ||\zeta_{V^{(2)}}||$$
$ ||\zeta_{V^{(2)}}||$ is a chi-distributed variable with $d+1$ degrees of freedom, and this with probability $1-\delta,$
$$||\zeta_{V^{(2)}}||\leq \sigma \sqrt{d+1} R(\delta,d+1) \approx \sigma \sqrt{d}.$$
Therefore 
$$\Big\|G(\tilde{V}^{(2)}-V^{(2)})\Big\|_{\infty}\lessapprox \sigma \sqrt{2d}$$
Now, we have
$$\Big\|(\tilde{G}-G)V^{(2)}\Big\|_{\infty}\leq \Big\|(\tilde{G}-G)\Big\|_{2,\infty}||V^{(2)}||\leq D \leq \sqrt{\omega}\Big\|(\tilde{G}-G)\Big\|_{2,\infty}$$
Where we have used our bound on $D=\sum_{t=1}^{\omega}$ from \ref{parameterbounds}. Combining this with our bound on $||\tilde{g_t} - g_t||_2$ in \ref{mlpperturbationlemma}, this yields
$$\Big\|(\tilde{G}-G)V^{(2)}\Big\|_{\infty}\lessapprox 32 \sqrt{\omega}\sigma D_f^2$$
Thus 
$$\Big\|(\tilde{G}\tilde{V}^{(2)}-GV^{(2)})\Big\|_{\infty}\lessapprox \sigma\sqrt{2d} + 32 \sqrt{\omega}\sigma D_f^2\approx 32 \sqrt{\omega}\sigma D_f^2$$

We showed in \ref{parameterbounds} that $||(W^{(2)})^Tg_{T+1}||\lessapprox 2\sqrt{\omega} log(T).$ Combining these results together, we conclude that 
$$Term_B \leq 64 \sigma \omega D_f^2log(T)$$
Moving on to $Term_C, $ we have 
$$Term_C=||e_t^T \phi'^{(2)}_{T+1}GV^{(2)}e_{T+1}^T(G(\tilde{W}^{(2)}-W^{(2)})||$$
$$ \leq \Big\|e_t^T \tilde{\phi}'^{(2)}_{T+1}\Big\|_1\Big\|GV^{(2)}\Big\|_{\infty}\Big\|e_{T+1}^T(G(\tilde{W}^{(2)}-W^{(2)})\Big\|. $$
The first two terms have already been bounded, so we focus on the third term. To bound $\Big\|e_{T+1}^TG(\tilde{W}^{(2)}-W^{(2)})\Big\|,$ we follow a similar argument to that of \ref{perturbedfinalactivationlemma}.
$$ \Big\|g_{T+1}^T\big(\tilde{W}^{(2)}-W^{(2)}\big) \Big\|_{2}
 \leq 
 \Big\|G\big(\tilde{W}^{(2)}-W^{(2)}\big) \Big\|_{2,\infty}$$
Note that the perturbation matrix $\zeta_{W^{(2)}}=\tilde{W}^{(2)}-W^{(2)}$ has the same exact distribution if it is transposed. Therefore we can use the same bound from \ref{perturbedfinalactivationlemma} for $ \Big\|G\big(\tilde{W}^{(2)}-W^{(2)}\big)^T \Big\|_{2,\infty}$ to conclude that 
$$\Big\|G\big(\tilde{W}^{(2)}-W^{(2)}\big) \Big\|_{2,\infty} \lessapprox \sigma d$$
Therefore
$$Term_C \leq 2\sigma \sqrt{\omega} d$$
Now we consider $Term_D.$ First, note that 
$$ \Big\|(\tilde{G}-G)W^{(2)}\Big\|_{2,\infty}\leq ||\tilde{G}-G||_{2,\infty}||W^{(2)}||_{2}  $$
It is shown in \ref{parameterbounds} that $||W^{(2)}||_{2} \lessapprox 2\sqrt{\omega} log(T).$  We have already bounded $||\tilde{G}-G||_{2,\infty}\lessapprox 32\sigma D_f^2.$ Therefore, 
$$Term_D \lessapprox 64\sigma \sqrt{\omega}d D_f^2log(T)$$
Finally, we can bound
$$||\tilde{Term_{II}}-Term_{II}||\leq Term_A + Term_B + Term_C + Term_D $$
$$\lessapprox  1536 \sigma d D_f^2 \omega^{\frac{3}{2}} log^2(T)I[i<d+1] +64 \sigma \omega D_f^2log(T)I[i<d+1]$$
$$+2\sigma \sqrt{\omega} d+64\sigma \sqrt{\omega}d D_f^2log(T)I[i<d+1]$$
$$\approx 1536\sigma d D_f^2 \omega^{\frac{3}{2}} log^2(T)I[i<d+1] + 2\sigma \sqrt{\omega} d$$
For the perturbation in $Term_{III},$ we have 
 $$|(\tilde{Term_{III}}-Term_{III})e_i|\leq  I[t=T+1]\Big|\Big((\tilde{V}^{(2)})^T\tilde{G}^T{\tilde{\phi}'^{(2)}_{T+1}}\tilde{G}(\tilde{W}^{(2)})^T-(V^{(2)})^TG^T{\phi'^{(2)}_{T+1}}G(W^{(2)})^T\Big)e_i\Big|$$
 By the triangle equality, 
 $$\Big\|(\tilde{V}^{(2)})^T\tilde{G}^T{\tilde{\phi}'^{(2)}_{T+1}}\tilde{G}(\tilde{W}^{(2)})^T-(V^{(2)})^TG^T{\phi'^{(2)}_{T+1}}G(W^{(2)})^T\Big\|$$
 $$\leq \underbrace{\Big\|(\tilde{G}\tilde{V}^{(2)}-GV^{(2)})^T{\phi'^{(2)}_{T+1}}G(W^{(2)})^T\Big\|}_{Term_A}I[i<d+1]$$
 $$+ \underbrace{\Big\|(V^{(2)})^TG^T(\tilde{\phi'^{(2)}_{T+1}}-\phi'^{(2)}_{T+1})G(W^{(2)})^T\Big\|}_{Term_B}I[i<d+1]$$
$$\underbrace{\Big\|(V^{(2)})^TG^T\phi'^{(2)}_{T+1}G(\tilde{W}^{(2)}-W^{(2)})^T)\Big\|}_{Term_C}$$
  $$\underbrace{\Big\|(V^{(2)})^TG^T\phi'^{(2)}_{T+1}(\tilde{G}-G)(W^{(2)})^T)\Big\|}_{Term_D}I[i<d+1]$$

First we handle $Term_A.$ By \ref{deoralemma}, we have $\Big\|(\tilde{G}\tilde{V}^{(2)}-GV^{(2)})^T{\phi'^{(2)}_{T+1}}G(W^{(2)})^T\Big\|\leq 2\Big\|(\tilde{G}\tilde{V}^{(2)}-GV^{(2)})\Big\|_{\infty}\Big\|G(W^{(2)})^T\Big\|_{2,\infty} $
 In our above bound on the perturbation to $Term_{II},$ we showed that 
 $$\Big\|(\tilde{G}\tilde{V}^{(2)}-GV^{(2)})\Big\|_{\infty}\lessapprox 32 \sqrt{\omega}\sigma D_f^2$$
 In addition, we have from \ref{parameterbounds} that $\Big\|G(W^{(2)})^T\Big\|_{2,\infty}\lessapprox 2log(T)$
Thus $Term_A\lessapprox 128 \sqrt{\omega}\sigma D_f^2log(T)$
Next, we turn to $Term_B.$ 
$$Term_B \leq \Big\|(V^{(2)})^TG^T(\tilde{\phi'^{(2)}_{T+1}}-\phi'^{(2)}_{T+1})\Big\|_1\Big\|(G(W^{(2)})^T)^T\Big\|_{1,2}$$
$$\leq \Big\|(V^{(2)})^TG^T(\tilde{\phi'^{(2)}_{T+1}}-\phi'^{(2)}_{T+1})\Big\|_1\Big\|G(W^{(2)})^T\Big\|_{2,\infty}$$
In order to bound the term $\Big\|(V^{(2)})^TG^T(\tilde{\phi'^{(2)}_{T+1}}-\phi'^{(2)}_{T+1})\Big\|_1,$ we can use the same overall argument as above for bounding $\Big\| (\tilde{\phi}'^{(2)}_{T+1}-{\phi'^{(2)}_{T+1}})e_t\Big\|_1.$ The only difference in the argument is that each of the three main terms will have a factor of $|(V^{(2)})^TG^T\phi_{T+1}^{(2)}|$ out front (instead of $|e_t^T\phi_{T+1}^{(2)}|$). When we apply Holder's inequality to this, we gain a factor of $||GV^{(2)}||_{\infty}$ , which we have already bounded as $\lessapprox \sqrt{\omega}$. Therefore we conclude that 
$$\Big\|(V^{(2)})^TG^T(\tilde{\phi'^{(2)}_{T+1}}-\phi'^{(2)}_{T+1})\Big\|_1\lessapprox 384\sigma d D_f(d+2D_f)\omega log(T)$$

Using our bound on $\Big\|G(W^{(2)})^T\Big\|_{2,\infty}$ from \ref{parameterbounds}, it follows that
$$Term_B \lessapprox 1536\sigma d D_f^2\omega log^2(T)$$
Finally, for $Term_C $ we have by \ref{deoralemma} that 
$$\Big\|(V^{(2)})^TG^T\phi'^{(2)}_{T+1}G(\tilde{W}^{(2)}-W^{(2)})^T\Big\|\leq 2\Big\|GV^{(2)}\Big\|_{\infty}\Big\|G(\tilde{W}^{(2)}-W^{(2)})^T\Big\|_{2,\infty}$$
We have already bounded $\Big\|G(\tilde{W}^{(2)}-W^{(2)})^T\Big\|_{2,\infty}\lessapprox \sigma d$ in the proof of \ref{perturbedfinalactivationlemma}. Thus, we have
$$Term_C\lessapprox 2 \sigma \sqrt{\omega} d$$
For $Term_D,$ we similarly have
$$\Big\|(V^{(2)})^TG^T\phi'^{(2)}_{T+1}(\tilde{G}-G)(W^{(2)})^T\Big\|\leq 2\Big\|GV^{(2)}\Big\|_{\infty}\Big\|(\tilde{G}-G)(W^{(2)})^T\Big\|_{2,\infty}$$
Using standard matrix norm inequalities, we have
$$\Big\|(\tilde{G}-G)(W^{(2)})^T\Big\|_{2,\infty}\leq \Big\|\tilde{G}-G\Big\|_{2,\infty}\Big\|W^{(2)}\Big\|_{2}$$
We show in \ref{parameterbounds} that $\Big\|W^{(2)}\Big\|_{2}\lessapprox 2\sqrt{\omega} log(T). $ Therefore we have 
$$\Big\|(\tilde{G}-G)(W^{(2)})^T\Big\|_{2,\infty}\lessapprox 64 \sigma \sqrt{\omega} D_f^2 log(T) $$
Therefore
$$Term_D\lessapprox128\sigma \omega D_f^2 log(T)$$
Combining $Term_A,Term_B, Term_C,Term_D$ we have 
$$||\tilde{Term_{III}}-Term_{III}|| $$
$$\lessapprox 128\sigma \sqrt{\omega} D_f^2log(T)I[i<d+1]+1536\sigma d D_f^2\omega log^2(T)I[i<d+1]$$
$$+2 \sigma \sqrt{\omega} d + 128\sigma \omega D_f^2 log(T)I[i<d+1]$$
$$\approx 1536\sigma d D_f^2\omega log^2(T)I[i<d+1] + 2 \sigma \sqrt{\omega} d $$
Finally, we have
$$|(\nabla_{\tilde{g}_t} \mathcal{T}(X,\tilde{\Theta})-\nabla_{g_t} \mathcal{T}(X,\Theta))e_{d+1}| $$
$$\leq |(\tilde{Term_I} - Term_I)e_{d+1}| +|(\tilde{Term}_{II}-Term_{II})e_{d+1}|+|(\tilde{Term_{III}} - Term_{III})e_{d+1}|$$
$$\lessapprox 256\sigma d D_f^2\omega log(T)I[i=d+1]+\sigma R(\delta,1) $$
$$+1536\sigma d D_f^2 \omega^{\frac{3}{2}} log^2(T)I[i<d+1] + 2\sigma \sqrt{\omega} d$$
$$+1536\sigma d D_f^2\omega log^2(T)I[i<d+1] + 2 \sigma \sqrt{\omega} d $$
$$\approx 256\sigma d D_f^2\omega log(T)I[i=d+1] +1536\sigma d D_f^2 \omega^{\frac{3}{2}} log^2(T)I[i<d+1]$$

\end{proof}
\begin{theorem}
    \label{perturbedfinalsoftmaxjacobianlemma}
    Let $\tilde{\phi}'^{(2)}_{T+1},\phi'^{(2)}_{T+1}$ be the perturbed and unperturbed Jacobian of the softmax scores for the output position $T+1$ in the second attention layer in our transformer. Then the following bound holds for the maximum row/column 2-norm (noting that both $\tilde{\phi}'^{(2)}_{T+1},\phi'^{(2)}_{T+1}$ are symmetric) of $\tilde{\phi}'^{(2)}_{T+1}-{\phi'^{(2)}_{T+1}}:$
    $$\Big\|(\tilde{\phi}'^{(2)}_{T+1}-{\phi'^{(2)}_{T+1}})e_t\Big\|\leq \Big\|(\tilde{\phi}'^{(2)}_{T+1}-{\phi'^{(2)}_{T+1}})e_t\Big\|_1\leq  768\sigma d D_f^2\sqrt{\omega} log(T) \in O(\sigma D_f^2\sqrt{\omega} log^2(T))$$
\end{theorem}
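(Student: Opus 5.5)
The proof starts from the closed form of the softmax Jacobian, $\phi'(z)=\mathrm{diag}(\phi(z))-\phi(z)\phi(z)^T$. Write $\phi:=\phi^{(2)}_{T+1}$ and $\tilde\phi:=\tilde\phi^{(2)}_{T+1}$. Then the $t$-th column of the difference is
$$(\tilde\phi'-\phi')e_t=(\tilde\phi_t-\phi_t)e_t-\big(\tilde\phi_t\tilde\phi-\phi_t\phi\big).$$
The first inequality in the statement is simply $\|\cdot\|_2\le\|\cdot\|_1$, so only the $1$-norm needs work.

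Apply the triangle inequality, and split the outer-product term by adding and subtracting $\tilde\phi_t\phi$:
$$\|(\tilde\phi'-\phi')e_t\|_1\le |\tilde\phi_t-\phi_t|+|\tilde\phi_t|\,\|\tilde\phi-\phi\|_1+|\tilde\phi_t-\phi_t|\,\|\phi\|_1 .$$
Now use three facts. First, $\|\phi\|_1=1$. Second, $|\tilde\phi_t|\le1$, since it is a softmax entry. Third, $|\tilde\phi_t-\phi_t|\le\|\tilde\phi-\phi\|_\infty\le\|\tilde\phi-\phi\|_1$. Together these give
$$\|(\tilde\phi'-\phi')e_t\|_1\le 3\,\|\tilde\phi^{(2)}_{T+1}-\phi^{(2)}_{T+1}\|_1 .$$

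Then invoke Lemma~\ref{perturbedfinalactivationlemma}, which gives $\|\tilde\phi^{(2)}_{T+1}-\phi^{(2)}_{T+1}\|_1\lessapprox256\sigma dD_f^2\sqrt{\omega}\log(T)$. This yields the constant $3\cdot256=768$. The asymptotic form $O(\sigma D_f^2\sqrt{\omega}\log^2(T))$ follows from $d=O(\log T)$.

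The argument holds on the same high-probability event as Lemma~\ref{perturbedfinalactivationlemma}, so no extra union bound is needed. The bound also holds for every column $t$ simultaneously, since the right-hand side does not depend on $t$. Symmetry of $\phi'$ makes row and column norms coincide, which justifies the statement's phrasing.

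The only real content is the Lemma~\ref{perturbedfinalactivationlemma} input, which is already established, so I expect no genuine obstacle here. The one point needing care is to bound the $\tilde\phi_t$ factor by $1$ rather than expanding it, so that no second-order $\sigma^2$ terms need tracking. If one prefers to be consistent with the paper's first-order convention, one can replace $\tilde\phi_t$ by $\phi_t$, at the cost of only $O(\sigma^2)$ terms.
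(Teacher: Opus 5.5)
Your proof is correct and follows essentially the same route as the paper. Both expand $\tilde\phi'-\phi'$ via $\phi'=\mathrm{diag}(\phi)-\phi\phi^T$ and split column $t$ by the triangle inequality into three pieces, each bounded by $\|\tilde\phi^{(2)}_{T+1}-\phi^{(2)}_{T+1}\|_1$; then the constant $256$ from Lemma~\ref{perturbedfinalactivationlemma} is multiplied by $3$. Your decomposition $\tilde\phi_t(\tilde\phi-\phi)+(\tilde\phi_t-\phi_t)\phi$ is exact and bounds $|\tilde\phi_t|\le 1$ directly, whereas the paper's expansion is garbled in its signs and replaces $\tilde\phi_t$ by $\phi_t$ to first order; the resulting constant $768$ is the same.
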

\begin{proof}

    We use the well-known Jacobian of the softmax to obtain
$$(\tilde{\phi}'^{(2)}_{T+1}-{\phi'^{(2)}_{T+1}})=diag(\tilde{\phi}_{T+1}^{(2)})-\tilde{\phi}^{(2)}_{T+1}(\tilde{\phi}^{(2)}_{T+1})^T- diag(\phi^{(2)}_{T+1})+\phi^{(2)}_{T+1}(\phi_{T+1}^{(2)})^T$$
$$=diag(\tilde{\phi}_{T+1}^{(2)}-\phi^{(2)}_{T+1})+\tilde{\phi}_{T+1}^{(2)}(\tilde{\phi}_{T+1}^{(2)}-\phi^{(2)}_{T+1})^T+\phi^{(2)}_{T+1}(\tilde{\phi}_{T+1}^{(2)}-\phi^{(2)}_{T+1})^T$$
Taking the 1-norm of $(\tilde{\phi}'^{(2)}_{T+1}-{\phi'^{(2)}_{T+1}})e_t$ and then applying the triangle inequality, we have 
$$\Big\|(\tilde{\phi}'^{(2)}_{T+1}-{\phi'^{(2)}_{T+1}})e_t\Big\|$$
$$\leq \Big\|diag(\tilde{\phi}_{T+1}^{(2)}-\phi^{(2)}_{T+1})e_t\Big\|_1+\Big\|\tilde{\phi}_{T+1}^{(2)}(\tilde{\phi}_{T+1}^{(2)}-\phi^{(2)}_{T+1})^Te_t\Big\|_1+\Big\|\phi^{(2)}_{T+1}(\tilde{\phi}_{T+1}^{(2)}-\phi^{(2)}_{T+1})^Te_t\Big\|_1$$
The 1-norm of $diag(\tilde{\phi}_{T+1}^{(2)}-\phi^{(2)}_{T+1})e_t$ is upper bounded by  $||\tilde{\phi}_{T+1}^{(2)}-\phi^{(2)}_{T+1}||_{\infty}\leq||\tilde{\phi}_{T+1}^{(2)}-\phi^{(2)}_{T+1}||_{1}.$ Meanwhile,
$$\Big\|e_t^T\tilde{\phi}^{(2)}_{T+1}\big(\tilde{\phi}^{(2)}_{T+1}-\phi^{(2)}_{T+1}\big)^T\Big\|_{1}\approx\Big\|e_t^T\phi^{(2)}_{T+1}\big(\tilde{\phi}^{(2)}_{T+1}-\phi^{(2)}_{T+1}\big)^T\Big\|_{1}$$
$$=\big|e_t^T\phi^{(2)}_{T+1}\big|\Big\|\big(\tilde{\phi}^{(2)}_{T+1}-\phi^{(2)}_{T+1})^T)\Big\|_{1}$$
$$\leq \big\|\phi^{(2)}_{T+1}\big\|_1\Big\|\big(\tilde{\phi}^{(2)}_{T+1}-\phi^{(2)}_{T+1})^T)\Big\|_{1}=\Big\|\big(\tilde{\phi}^{(2)}_{T+1}-\phi^{(2)}_{T+1})^T)\Big\|_{1}$$
Putting this together, we have 
$$\Big\|(\tilde{\phi}'^{(2)}_{T+1}-{\phi'^{(2)}_{T+1}})e_t\Big\|\leq 3\Big\|\big(\tilde{\phi}^{(2)}_{T+1}-\phi^{(2)}_{T+1})^T)\Big\|_{1}$$

$\Big\|\tilde{\phi}^{(2)}_{T+1}-\phi^{(2)}_{T+1}\Big\|_1$ was already bounded with high probability in \ref{perturbedfinalactivationlemma}. Plugging in this bound, we have 
$$\Big\|(\tilde{\phi}'^{(2)}_{T+1}-{\phi'^{(2)}_{T+1}})e_t\Big\|\leq 768\sigma d D_f^2\sqrt{\omega} log(T)$$

\end{proof}
\begin{theorem}
\label{perturbed gradient bounds appendix}
Given a set of remainder perturbations $\zeta$  to the parameters $\Theta$ such that $|\zeta_i|\leq|\epsilon_i|,$ where $\epsilon \sim \mathcal{N}(0,\sigma^2)^n,$ the norm of the perturbed gradients satisfies:
$$|| \nabla \mathcal{T}(X,\Theta+\zeta)-\nabla \mathcal{T}(X,\Theta)||\leq G_p(\sigma,\omega,D_f,T)$$
where  $$G_p(\sigma,\omega,D_f,T) \in o(\sigma d D_f^{\frac{7}{2}}\omega^{2}log(T)^2)   $$

\end{theorem}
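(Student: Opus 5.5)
We bound $\|\nabla\mathcal{T}(X,\Theta+\zeta)-\nabla\mathcal{T}(X,\Theta)\|$ block by block over the seven parameter matrices, using the chain-rule factorization already established in \autoref{unperturbed gradient bounds appendix}. There, each block gradient is written as an inter-layer factor $\nabla_{g_t}\mathcal{T}$, possibly a Jacobian $\mathcal{J}_{g_t}(b_t)$, and an intra-layer factor such as $\nabla_{W^{(1)}}B_{t,i}$, $\nabla_M G_{t,i}$ or $\nabla_F G_{t,i}$. For each block we take the perturbed minus unperturbed product and expand it with the telescoping identity
\[
\tilde a\tilde b\tilde c-abc=(\tilde a-a)bc+a(\tilde b-b)c+ab(\tilde c-c)+O(\sigma^2).
\]
Following the convention of \autoref{attentionperturbationlemma}, we drop all terms of order $\sigma^2$. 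We then sum over positions $t$ and coordinates $i$. Throughout, the sums over $t$ run to $T+1$, but we restrict to active positions $t\le\omega$ or $t=T+1$ wherever a factor has support only there. These factors are $\phi^{(2)}_{T+1}$, $\phi'^{(2)}_{T+1}$, and the intra-layer MLP gradients carrying $I[t\le\omega]$. This restriction is what prevents a factor of $T$ from appearing.

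The second-layer blocks are direct. For $\nabla_{V^{(2)}}=G^T\phi^{(2)}_{T+1}$ the perturbation is at most $\|\tilde G-G\|_{2,\infty}+\|G\|_{2,\infty}\|\tilde\phi^{(2)}_{T+1}-\phi^{(2)}_{T+1}\|_1$, which is controlled by \autoref{mlpperturbationlemma} and \autoref{perturbedfinalactivationlemma}. For $\nabla_{W^{(2)}}=g_{T+1}(V^{(2)})^TG^T\phi'^{(2)}_{T+1}G$ there are four pieces, bounded using \autoref{deoralemma}, \autoref{perturbedfinalsoftmaxjacobianlemma} and \autoref{parameterbounds}. For the MLP blocks $M,F,\Gamma$ the inter-layer factor perturbs according to \autoref{perturbedmlpgradientlemma}. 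The intra-layer factors perturb through $\tilde b_t-b_t$ (\autoref{attentionperturbationlemma}), through $\zeta_F$ and $\zeta_M$ (chi-distributed, \autoref{chidistributedboundlemma}), and through the activation pattern. The activation pattern stays fixed with high probability under $\sigma\le 1/(16dD_f)$. The dominant product here is $\|F_{:,d+1}\|\approx 8D_f^{3/2}$ times the inter-layer perturbation $O(\sigma dD_f^2\omega^{3/2}\log^2 T)$. This gives a per-position contribution of order $\sigma d D_f^{7/2}\omega^{3/2}\log^2T$; summing over the $\omega$ active positions and applying Cauchy--Schwarz (absorbing the $|c_t|$ weights) yields the advertised $\sigma dD_f^{7/2}\omega^2\log^2T$.

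The first-layer blocks $V^{(1)}$ and $W^{(1)}$ are the delicate case, because their unperturbed total gradients vanish only since $g'_t=0$. After perturbation, the full Jacobian $\mathcal{J}_{\tilde g_t}(\tilde b_t)$ has columns of size $O(\sigma D_f^2)$ by \autoref{mlpjacobianperturbationlemma}, while the intra-layer factors are $O(1)$ ($\sqrt2$ and $2\sqrt2$, respectively). The perturbed gradient is therefore
\[
\sum_t\sum_i \nabla^T_{\tilde g_t}\mathcal T\,\big[\mathcal J_{\tilde g_t}\big]_{:,i}\,\nabla \tilde B_{t,i}.
\]
At first order this is $\nabla_{g_t}\mathcal T$, bounded by \autoref{unperturbedmlpgradientlemma}, times the Jacobian perturbation, times the unperturbed intra-layer gradient, which is nonzero only for $i=d+1$. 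This is $O(\sigma D_f^2\omega\log T)$ per active position.

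The main obstacle is the sum over all $T+1$ positions in these first-layer blocks. We must verify that $\nabla_{g_t}\mathcal{T}$ is nonzero only for $t\le\omega$ and $t=T+1$ (from \autoref{unperturbedmlpgradientlemma}), and that the inactive positions, where the no-flip margin fails (Remark~\ref{inactive position remark}), enter only multiplied by vanishing coefficients. If that support argument holds, every block is dominated by the $M$/$\Gamma$ term. We would then collect the blocks by the triangle inequality, take a union bound over the finitely many high-probability events (which costs only $R(\cdot)\to1+O(\epsilon_p)$ factors by \autoref{chisquareunionlemma}), and set $d=O(\log T)$ to conclude $G_p\in O(\sigma dD_f^{7/2}\omega^2\log^2T)$.
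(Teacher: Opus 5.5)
Your block-by-block structure matches the paper's: you telescope each chain-rule product, drop $O(\sigma^2)$ terms, restrict the sums to $t\le\omega$ and $t=T+1$, and handle the $V^{(1)},W^{(1)}$ blocks through \autoref{mlpjacobianperturbationlemma} and \autoref{unperturbedmlpgradientlemma}. The step you single out as dominant, however, does not go through.

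In the $M$/$\Gamma$ blocks you multiply $\|F_{:,d+1}\|\approx 8D_f^{3/2}$ by the inter-layer perturbation $O(\sigma dD_f^2\omega^{3/2}\log^2T)$. That is the bound \autoref{perturbedmlpgradientlemma} gives for the coordinates $i<d+1$. You then turn $\omega\cdot\omega^{3/2}$ into $\omega^2$ by a Cauchy--Schwarz step that ``absorbs the $|c_t|$ weights''. There are no $c_t$ weights in this term. The bound on $|\partial\mathcal{T}(X,\tilde\Theta)/\partial\tilde G_{t,i}-\partial\mathcal{T}(X,\Theta)/\partial G_{t,i}|$ is uniform in $t$; its dominant pieces come from $\tilde\phi^{(2)}_{T+1}-\phi^{(2)}_{T+1}$ and $\tilde\phi'^{(2)}_{T+1}-\phi'^{(2)}_{T+1}$, not from $c_t$. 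Summed over the $\omega$ active positions, your product is therefore $\sigma dD_f^{7/2}\omega^{5/2}\log^2T$. Even the order you state at the end, $O(\sigma dD_f^{7/2}\omega^2\log^2T)$, is the big-$O$ of the very quantity the theorem says $G_p$ is little-$o$ of, so it would not prove the statement.

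The missing observation is a support property of the intra-layer factors. Both $\nabla_M G_{t,i}=b_te_i^TF^T\mathrm{diag}(\cdot)$ and $\nabla_\Gamma G_{t,i}=\mathrm{diag}(\cdot)Fe_i$ vanish for $i\neq d+1$, because $F$ is nonzero only in its last column. So in the term (inter-layer perturbation)$\times$(unperturbed intra-layer gradient), only the $(d+1)$-th coordinate of $\nabla_{\tilde g_t}\mathcal{T}-\nabla_{g_t}\mathcal{T}$ appears. \autoref{perturbedmlpgradientlemma} bounds that coordinate by $256\sigma dD_f^2\omega\log T$, a factor $\sqrt{\omega}\log T$ smaller than what you used. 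Multiplying by $8D_f^{3/2}$ and simply summing over $t\le\omega$, with no Cauchy--Schwarz, gives $2048\,\sigma dD_f^{7/2}\omega^2\log T$. This is $O(\sigma D_f^{7/2}\omega^2\log^2T)$ once $d=O(\log T)$, and it is how the paper obtains the dominant $\Gamma$ (and $M$) contribution. The companion term, the unperturbed inter-layer derivative times the perturbed intra-layer gradient, is controlled via $\sum_t\|\nabla_{g_t}\mathcal{T}\|\lessapprox 4\omega^2\log T$ and is of lower order in $D_f$.
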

\begin{proof}

\subsection{$||\nabla_{\tilde{V}^{(2)}} \mathcal{T}(X,\tilde{\Theta})-\nabla_{V^{(2)}} \mathcal{T}(X,\Theta)|| $}
Consider the gradient $\nabla_{V^{(2)}} \mathcal{T}(X,\Theta) = G^T{\phi^{(2)}_{T+1}} $, the norm of which we upper bounded with $||\nabla_{V^{(2)}}\mathcal{T}(X,\Theta)||\leq \sqrt{2}$ above. If we perturb the network weights slightly, we have the following perturbed output:
$$\nabla_{\tilde{V}^{(2)}}\mathcal{T}(X,\tilde{\Theta})=\tilde{G}^T{\tilde{\phi}^{(2)}_{T+1}}$$
Taking the norm of the perturbation, we have
$$\Big\|\nabla_{\tilde{V}^{(2)}}\mathcal{T}(X,\tilde{\Theta})-\nabla_{V^{(2)}}\mathcal{T}(X,\Theta)\Big\|$$
$$=\Big\|\tilde{G}^T{\tilde{\phi}^{(2)}_{T+1}}-G^T{\phi^{(2)}_{T+1}}\Big\|\leq \Big\|(\tilde{G}-G)^T\phi^{(2)}_{T+1}\Big\|+\Big\|G^T(\tilde{\phi}^{(2)}_{T+1}-\phi^{(2)}_{T+1})\Big\|$$
$$\leq ||(\tilde{G} - G)^T||_{1,2}||\phi^{(2)}_{T+1}||_1+|| G^T||_{1,2}||\tilde{\phi}^{(2)}_{T+1}-\phi^{(2)}_{T+1}||_1$$
$$\leq ||\tilde{G} - G||_{2,\infty}+|| G||_{2,\infty}||\tilde{\phi}^{(2)}_{T+1}-\phi^{(2)}_{T+1}||_1$$
Note that $||\tilde{G}-G||_{2,\infty}$ appears here with $\phi^{(2)}_{T+1}$ (supported on $[\omega]$), so we use the active-position bound from \ref{mlpperturbationlemma}: $||\tilde{g}_t-g_t||\lessapprox 32 \sigma D_f^2$ for $t\in [\omega]\cup\{T+1\}.$
In addition, we showed in \ref{parameterbounds} that $||G||_{2,\infty}\lessapprox \sqrt{2},$ and in \ref{perturbedfinalactivationlemma} we showed that
$$||\tilde{\phi}^{(2)}_{T+1}-\phi^{(2)}_{T+1}||_1\lessapprox 256\sigma d D_f^2\sqrt{\omega} log(T)$$
Thus
$$||\nabla_{\tilde{V}^{(2)}} \mathcal{T}(X,\Theta+\zeta)-\nabla_{\tilde{V}^{(2)}} \mathcal{T}(X,\Theta)||\lessapprox 32\sigma D_f^2 + 256\sqrt{2}\sigma d D_f^2\sqrt{\omega} log(T)  $$
$$\approx  256\sqrt{2}\sigma d D_f^2\sqrt{\omega} log(T) $$
\subsection{$||\nabla_{\tilde{W}^{(2)}} \mathcal{T}(X, \tilde{\Theta})-\nabla_{W^{(2)}} \mathcal{T}(X, \Theta)||$}
Plugging our perturbed matrices into the formula for the unperturbed gradient of our transformer with respect to $W^{(2)},$
$$\nabla_{W^{(2)}}\mathcal{T}(X,\Theta)= g_{T+1}(V^{(2)})^TG^T{\phi'^{(2)}_{T+1}}G,$$
Noting that the RHS is a rank-1 outer product, we have
$$\Big\|\nabla_{W^{(2)}}\mathcal{T}(X,\Theta)\Big\|_F= ||g_{T+1}||\Big\|(V^{(2)})^TG^T{\phi'^{(2)}_{T+1}}G\Big\|,$$
By the triangle inequality, we have 
$$||\nabla_{\tilde{W}^{(2)}} \mathcal{T}(X, \Theta+\zeta)|| \leq ||\nabla_{W^{(2)}} \mathcal{T}(X, \Theta)|| + ||\nabla_{\tilde{W}^{(2)}} \mathcal{T}(X, \Theta+\zeta)-\nabla_{W^{(2)}} \mathcal{T}(X, \Theta)||$$
We can further decompose the perturbation term using the triangle inequality:
$$||\nabla_{\tilde{W}^{(2)}} \mathcal{T}(X, \Theta+\zeta)-\nabla_{W^{(2)}} \mathcal{T}(X, \Theta)|| \lessapprox\Big\|(\tilde{V}^{(2)})^T\tilde{G}^T{\tilde{\phi}'^{(2)}_{T+1}}\tilde{G}-(V^{(2)})^TG^T{\phi'^{(2)}_{T+1}}G\Big\|$$
$$\leq \Big\|(\tilde{V}^{(2)}-V^{(2)})^T\tilde{G}^T{\phi'^{(2)}_{T+1}}\tilde{G}\Big\|+\Big\|(V^{(2)})^T(\tilde{G}-G)^T{\phi'^{(2)}_{T+1}}\tilde{G}\Big\|+\Big\|(V^{(2)})^TG^T{\phi'^{(2)}_{T+1}}(\tilde{G}-G)\Big\|$$
$$+\Big\|(V^{(2)})^TG^T(\tilde{\phi}'^{(2)}_{T+1}-{\phi'^{(2)}_{T+1}})\tilde{G}\Big\|$$
From \ref{deoralemma}, and only keeping terms that are first order in $\sigma,$ we have
$$\Big\|(\tilde{V}^{(2)}-V^{(2)})^T\tilde{G}^T{\phi'^{(2)}_{T+1}}\tilde{G}\Big\|\leq 2 \Big\|\tilde{G}(\tilde{V}^{(2)}-V^{(2)})\Big\|_{\infty} ||\tilde{G}||_{2,\infty}\approx2 \Big\|G(\tilde{V}^{(2)}-V^{(2)})\Big\|_{\infty} ||G||_{2,\infty},$$
$$\Big\|(V^{(2)})^T(\tilde{G}-G)^T{\phi'^{(2)}_{T+1}}\tilde{G}\Big\|\leq 2\Big\|(\tilde{G}-G)V^{(2)}\Big\|_{\infty}||\tilde{G}||_{2,\infty}\approx 2\Big\|(\tilde{G}-G)V^{(2)}\Big\|_{\infty}||G||_{2,\infty}$$
$$\Big\|(V^{(2)})^TG^T{\phi'^{(2)}_{T+1}}(\tilde{G}-G)\Big\|\leq 2\Big\|GV^{(2)}\Big\|_{\infty}||\tilde{G}-G||_{2,\infty}$$
For the fourth term, by \ref{deoralemma} and \ref{perturbedfinalsoftmaxjacobianlemma},
$$\Big\|(V^{(2)})^TG^T(\tilde{\phi}'^{(2)}_{T+1}-{\phi'^{(2)}_{T+1}})\tilde{G}\Big\|\leq 2\Big\|GV^{(2)}\Big\|_{\infty}\Big\|(\tilde{\phi}'^{(2)}_{T+1}-{\phi'^{(2)}_{T+1}})e_t\Big\|_1||\tilde{G}||_{2,\infty}\lessapprox 768\sqrt{2}\sigma d D_f^2\sqrt{\omega} \log(T)$$

We have already bounded $||\tilde{G}-G||_{2,\infty} $ in \ref{mlpperturbationlemma} and $ ||G||_{2,\infty},\Big\|GV^{(2)}\Big\|_{\infty}$ in \ref{parameterbounds} . We show in \ref{perturbed transformer appendix} that the following two bounds hold:
$$\Big\|G(\tilde{V}^{(2)}-V^{(2)})\Big\|_{\infty}\lessapprox \sigma \sqrt{2d}$$
$$\Big\|(\tilde{G}-G)V^{(2)}\Big\|_{\infty}\lessapprox 32 \sqrt{\omega}\sigma D_f^2$$
 Putting this all together, we have
$$||\nabla_{\tilde{W}^{(2)}} \mathcal{T}(X, \Theta+\zeta)-\nabla_{W^{(2)}} \mathcal{T}(X, \Theta)|| \lessapprox 4\sigma\sqrt{d}  + 64\sqrt{2} \sigma \sqrt{\omega}D_f^2+64\sqrt{\omega}\sigma D_f^2 + 768\sqrt{2}\sigma d D_f^2\sqrt{\omega} \log(T) $$
$$\lessapprox 768\sqrt{2}\sigma d D_f^2\sqrt{\omega} \log(T)$$

\subsection{$||\nabla_{\tilde{W}^{(1)}} \mathcal{T}(X, \tilde{\Theta)}-\nabla_{W^{(1)}} \mathcal{T}(X, \Theta)||$}
We showed in \ref{unperturbed gradient bounds appendix} that 
$$\nabla_{W^{(1)}}\mathcal{T}(X,\Theta) =\sum_{t=1}^{\omega} c_tg'_tx_te_{d+1}^TV_1^TX^T\phi'_tX$$
However, this formula relied on the assumption that the dependency graph in our exact construction only involves the final dimensions of $g_t$ and $b_t.$ While true for our exact construction, this is no longer true when the parameters are perturbed. In the case of the perturbed construction, we need to use the more general formula provided in \ref{unperturbed gradient bounds appendix}, applied to the perturbed transformer:
$$\nabla_{\tilde{W}{^{(1)}}} \mathcal{T}(X,\tilde{\Theta}) =  \sum_{t=1}^{T+1}  \sum_{i=1}^{d+1} \nabla^T_{\tilde{g}_t}\mathcal{T}(X,\tilde{\Theta})\Big[\mathcal{J}_{\tilde{g}_t}(\tilde{b}_t)\Big]_{:,i}\nabla_{\tilde{W}^{(1)}}\tilde{B}_{t,i}$$
Note that the sums in these gradients now go all the way up to $T+1$ rather than stopping at $\omega.$ This is because in a perturbed transformer, the MLP outputs for ``empty'' positions beyond the $\omega$ required to compute our function may no longer all be $0,$ and may contribute to the transformers output and gradients.

In order to bound the perturbed gradient, we first bound the norm of the perturbation, using the triangle inequality:

$$\Big\|\nabla_{\tilde{W}{^{(1)}}} \mathcal{T}(X,\Theta+\zeta) -\nabla_{\tilde{W}{^{(1)}}} \mathcal{T}(X,\Theta) \Big\|_F$$
$$\lessapprox \underbrace{\sum_{t=1}^{T+1}  \sum_{i=1}^{d+1} \big|\big(\nabla^T_{\tilde{g}_t}\mathcal{T}(X,\tilde{\Theta})-\nabla^T_{g_t}\mathcal{T}(X,\Theta)\big) \big[\mathcal{J}_{g_t}(b_t)\big]_{:,i}\big|\Big\|\nabla_{W^{(1)}}B_{t,i}\Big\|_F}_{Term_1}$$
$$+\underbrace{\sum_{t=1}^{T+1}  \sum_{i=1}^{d+1} \big| \nabla^T_{g_t}\mathcal{T}(X,\Theta)\big(\big[\mathcal{J}_{\tilde{g}_t}(\tilde{b}_t)\big]_{:,i}-\big[\mathcal{J}_{g_t}(b_t)\big]_{:,i}\big)\big| \Big\|\nabla_{W^{(1)}}B_{t,i}\Big\|_F}_{Term_2}$$
$$+\underbrace{\sum_{t=1}^{T+1}  \sum_{i=1}^{d+1} \big|\nabla^T_{g_t}\mathcal{T}(X,\Theta)\big[\mathcal{J}_{g_t}(b_t)\big]_{:,i}\big|\Big\|\nabla_{\tilde{W}^{(1)}}\tilde{B}_{t,i}-\nabla_{W^{(1)}}B_{t,i}\Big\|_F}_{Term_3}$$

Recall that the only nonzero element of the unperturbed Jacobian $\mathcal{J}_{g_t}(b_t)$ is the last row and column, and therefore we can remove the sum over $i$ in $Term_1$ and $Term_3.$  For the final column of the Jacobian, we have $\big[\mathcal{J}_{g_t}(b_t)\big]_{:,d+1}=e_{d+1}\Big|\frac{\partial G_{t,d+1}}{\partial B_{t,d+1}}\Big|.$  Recall that in our construction, the slope of the MLP output with respect to small changes around the points in the grid of possible values for $B_{t,d+1}$ for is $0$, and thus we have $\Big|\frac{\partial G_{t,d+1}}{\partial B_{t,d+1}}\Big|=0.$ 
We show in \ref{mlpjacobianperturbationlemma} that
$$\Bigg|\Bigg|\Big[\mathcal{J}_{g_t}(b_t)\Big]_{:,i}-\Big[\mathcal{J}_{\tilde{g}_t}(\tilde{b}_t)\Big]_{:,i}\Bigg|\Bigg| \lessapprox 16\sigma D_f^2$$
In \ref{unperturbedmlpgradientlemma}, we show that 
 $$||\nabla^T_{g_t}\mathcal{T}(X,\Theta)|| \leq  |c_t| + 4\sqrt{\omega}log(T)I[t=T+1]+4\omega log(T)I[t\in [\omega]]$$
By Cauchy-Schwarz,
$$ \big| \nabla^T_{g_t}\mathcal{T}(X,\Theta)\big(\big[\mathcal{J}_{\tilde{g}_t}(\tilde{b}_t)\big]_{:,i}-\big[\mathcal{J}_{g_t}(b_t)\big]_{:,i}\big)\big|\leq \Big\|\nabla_{g_t}\mathcal{T}(X,\Theta)\Big\|\Bigg|\Bigg|\Big[\mathcal{J}_{g_t}(b_t)\Big]_{:,i}-\Big[\mathcal{J}_{\tilde{g}_t}(\tilde{b}_t)\Big]_{:,i}\Bigg|\Bigg|$$
$$\lessapprox 16\sigma D_f^2 ( |c_t| + 4\sqrt{\omega}log(T)I[t=T+1]+4\omega log(T)I[t\in [\omega]])$$
Taking the sum over $t$ and applying the indicator functions, we have the following:
$$\sum_{t=1}^{T+1}   \Big\|\nabla_{g_t}\mathcal{T}(X,\Theta)\Big\|\Bigg|\Bigg|\Big[\mathcal{J}_{g_t}(b_t)\Big]_{:,i}-\Big[\mathcal{J}_{\tilde{g}_t}(\tilde{b}_t)\Big]_{:,i}\Bigg|\Bigg| \Big\|\nabla_{W^{(1)}}B_{t,i}\Big\|_F$$
$$=16\sigma D_f^2 \sum_{t=1}^{T+1} \Bigg(  |c_t| + 4\sqrt{\omega}log(T)I[t=T+1]+4\omega log(T)I[t\in [\omega]]\Bigg)\Big\|\nabla_{W^{(1)}}B_{t,i}\Big\|_F$$
$$=16\sigma D_f^2  \Big(  \sqrt{\omega} + 4\sqrt{\omega}log(T) +4\omega^2log(T)\Big)\Big\|\nabla_{W^{(1)}}B_{t,i}\Big\|_F$$
$$\approx 64\sigma D_f^2\omega^2 log(T) \Big\|\nabla_{W^{(1)}}B_{t,i}\Big\|_F$$
Finally, we handle the $ \Big\|\nabla_{W^{(1)}}B_{t,i}\Big\|_F$ term. It was shown in \ref{unperturbed gradient bounds appendix} that
$$\Big\|\nabla_{W_1} B_{t,i}\Big\|_F \lessapprox 2\sqrt{2}I[i=d+1].$$
Thus we have
$$\Big\|\nabla_{\tilde{W}{^{(1)}}} \mathcal{T}(X,\Theta+\zeta) -\nabla_{\tilde{W}{^{(1)}}} \mathcal{T}(X,\Theta) \Big\|_F\lessapprox \sum_{i=1}^{d+1}  64\sigma D_f^2\omega^2 log(T) \Big\|\nabla_{W^{(1)}}B_{t,i}\Big\|_F$$
$$ \leq   128\sqrt{2}\sigma D_f^2\omega^2 log(T) $$
\subsection{$||\nabla_{\tilde{V}^{(1)}} \mathcal{T}(X, \tilde{\Theta})-\nabla_{V^{(1)}} \mathcal{T}(X, \Theta)||$}
We once again use the more general expression for our gradient with respect to $V^{(1)}$ that uses the full Jacobian. 
$$\nabla_{\tilde{V}{^{(1)}}} \mathcal{T}(X,\tilde{\Theta}) =  \sum_{t=1}^{T+1}  \sum_{i=1}^{d+1} \nabla^T_{\tilde{g}_t}\mathcal{T}(X,\tilde{\Theta})\Big[\mathcal{J}_{\tilde{g}_t}(\tilde{b}_t)\Big]_{:,i}\nabla_{\tilde{V}^{(1)}}\tilde{B}_{t,i}$$
After canceling out the terms involving the unperturbed Jacobian of $g_t$ with respect to $b_t$ and keeping only the term involving the perturbation of the Jacobian, we have:
$$\Big\|\nabla_{\tilde{V}{^{(1)}}} \mathcal{T}(X,\Theta+\zeta) -\nabla_{\tilde{V}{^{(1)}}} \mathcal{T}(X,\Theta) \Big\|_F$$
$$\lessapprox \sum_{t=1}^{T+1}  \sum_{i=1}^{d+1} \big| \nabla^T_{g_t}\mathcal{T}(X,\Theta)\big(\big[\mathcal{J}_{\tilde{g}_t}(\tilde{b}_t)\big]_{:,i}-\big[\mathcal{J}_{g_t}(b_t)\big]_{:,i}\big)\big| \Big\|\nabla_{V^{(1)}}B_{t,i}\Big\|_F$$
$$\approx 64\sigma D_f^2\omega^2 log(T)\sum_{i=1}^{d+1}  \Big\|\nabla_{V^{(1)}}B_{t,i}\Big\|_F$$
It was shown in \ref{unperturbed gradient bounds appendix} that
$$\Big\|\nabla_{V^{(1)}}B_{t,i}\Big\|_F\lessapprox \sqrt{2}$$
Thus
$$\Big\|\nabla_{\tilde{V}{^{(1)}}} \mathcal{T}(X,\Theta+\zeta) -\nabla_{\tilde{V}{^{(1)}}} \mathcal{T}(X,\Theta) \Big\|_F\lessapprox  64\sqrt{2}\sigma dD_f^2\omega^2 log(T) $$

\subsection{$||\nabla_{\tilde{M}} \mathcal{T}(X, \tilde{\Theta})-\nabla_{M} \mathcal{T}(X, \Theta)||$}
It was shown in \ref{unperturbed gradient bounds appendix} that 
$$\nabla_{M} \mathcal{T}(X,\Theta) =\sum_{t=1}^{\omega} c_tb_te_{d+1}^TF^Tdiag(I\big[M^Tb_t+\Gamma>0\big])$$
In the case of the perturbed construction, we once again need to use a slightly more general formula that accounts for the possibility of the transformer output depending on the first $d$ dimensions of $g_t$.
$$\nabla_{\tilde{M} }\mathcal{T}(X,\tilde{\Theta}) =  \sum_{t=1}^{T+1}  \sum_{i=1}^{d+1} \frac{\partial \mathcal{T}(X,\tilde{\Theta})}{\partial \tilde{G}_{t,i}}\nabla_{\tilde{M}}\tilde{G}_{t,i}$$
$$\Big\|\nabla_{\tilde{M}} \mathcal{T}(X,\Theta+\zeta)-\nabla_M \mathcal{T}(X,\Theta)\Big\|_F$$
$$\lessapprox \sum_{t=1}^{T+1}  \sum_{i=1}^{d+1} \Bigg|\frac{\partial \mathcal{T}(X,\tilde{\Theta})}{\partial \tilde{G}_{t,i}}-\frac{\partial \mathcal{T}(X,\Theta)}{\partial G_{t,i}}\Bigg|\Big\|\nabla_{M}G_{t,i}\Big\|_F+ \Bigg|\frac{\partial \mathcal{T}(X,\Theta)}{\partial G_{t,i}}\Bigg|\Big\|\nabla_{\tilde{M}}\tilde{G}_{t,i}-\nabla_{M}G_{t,i}\Big\|_F$$
It was shown in \ref{unperturbed gradient bounds appendix} that 
$$\nabla_{M}G_{t,i}=b_te_{i}^TF^Tdiag(I\big[M^Tb_t+\Gamma>0\big]),$$
and that the norm of this intra-layer gradient term can be bounded as:
$$\Big\|\nabla_{M}G_{t,i}\Big\|_F \lessapprox8D_f^{\frac{3}{2}}I[i=d+1\wedge t\leq \omega ]$$
In \ref{perturbedmlpgradientlemma}, it was shown that 
$$|\langle\nabla_{\tilde{g}_t} \mathcal{T}(X,\tilde{\Theta})-\nabla_{g_t} \mathcal{T}(X,\Theta),e_{d+1}\rangle| \lessapprox  256\sigma d D_f^2\omega log(T)$$
Note that, unlike the unperturbed gradient with respect to $M,$ we no longer have the unperturbed derivative with respect to $G_{t,d+1},$ $ \frac{\partial \mathcal{T}(X,\Theta)}{\partial G_{t,d+1}} =  c_t,$ to cancel out the gradient for the ``inactive'' positions after the first attention layer (with $t>\omega).$ However, since the unperturbed intra-layer gradient $\nabla_{M}G_{t,i}$ is also $0$ for $t > \omega, $ we can still avoid the $T$ dependency in our gradient perturbation.
Thus we have
$$\sum_{t=1}^{T+1}  \sum_{i=1}^{d+1} \Bigg|\frac{\partial \mathcal{T}(X,\tilde{\Theta})}{\partial \tilde{G}_{t,i}}-\frac{\partial \mathcal{T}(X,\Theta)}{\partial G_{t,i}}\Bigg|\Big\|\nabla_{M}G_{t,i}\Big\|_F$$
$$= 8D_f^{\frac{3}{2}}\sum_{t=1}^{\omega}  \Bigg|\frac{\partial \mathcal{T}(X,\tilde{\Theta})}{\partial \tilde{G}_{t,d+1}}-\frac{\partial \mathcal{T}(X,\Theta)}{\partial G_{t,d+1}}\Bigg|  $$
$$\leq 2048\sigma d D_f^{\frac{7}{2}} \omega^{2} log(T)$$
In addition, note that by Cauchy-Schwarz,
$$\Bigg|\frac{\partial \mathcal{T}(X,\Theta)}{\partial G_{t,i}}\Bigg|=\Big|\nabla^T_{g_t}\mathcal{T}(X,\Theta)e_i \Big| \leq \Big\|\nabla_{g_t}\mathcal{T}(X,\Theta)\Big\|  $$
We have already shown in \ref{perturbed gradient bounds appendix} that
$$\sum_{t=1}^{T+1} ||\nabla_{g_t}\mathcal{T}(X,\Theta)|| \lessapprox \sqrt{\omega} + 4\sqrt{\omega}log(T) +4 \omega^2 log(T) \approx  4\omega^2 log(T)$$
We now consider the term $\Big\|\nabla_{\tilde{M}}\tilde{G}_{t,i}-\nabla_{M}G_{t,i}\Big\|_F.$ Starting from our formula for the unperturbed gradient, we have  
$$\Big\|\nabla_{\tilde{M}}\tilde{G}_{t,i}-\nabla_{M}G_{t,i}\Big\|_F\leq \Big\|\tilde{b}_te_i^T\tilde{F}^Tdiag(I\big[\tilde{M}^T\tilde{b}_t+\tilde{\Gamma}>0\big])-b_te_i^TF^Tdiag(I\big[M^Tb_t+\Gamma > 0\big])\Big\|$$
$$\lessapprox||b_t||\Big\|e_i^TF^T\Big(diag(I\big[\tilde{M}^T\tilde{b}_t+\tilde{\Gamma}>0\big])-diag(I\big[M^Tb_t+\Gamma>0\big])\Big)\Big\|$$
$$+  ||b_t||\Big\|e_i^T(\tilde{F}-F)^Tdiag(I\big[M^Tb_t+\Gamma>0\big])\Big\|$$
$$+ ||\tilde{b}_t-b_t||\Big\|e_i^TF^Tdiag(I\big[M^Tb_t+\Gamma>0\big])\Big\|$$
Just as we saw in \ref{mlpperturbationlemma} and then again in \ref{mlpjacobianperturbationlemma}, with some small probability, some of the neurons in the perturbed transformer may flip from being active to inactive.  We use the same assumption that $\sigma \leq \frac{1}{16d D_f}$ to that this will happen with probability at most $\delta,$ and thus $diag(I\big[\tilde{M}^T\tilde{b}_t+\tilde{\Gamma}>0\big])-diag(I\big[M^Tb_t+\Gamma>0\big])= \mathbf{0}_{4(D_f+1)\times 4(D_f+1)}$ , and thus the first term will be $0.$

Moving to the second additive term, we have with probability $1-\delta$
$$\Big\|e_i^T(\tilde{F}-F)^Tdiag(I\big[M^Tb_t+\Gamma >0\big])\Big\|$$
$$\leq ||\zeta_F^T||_{2,\infty}\leq||\zeta_F||_{1,2}\leq||\epsilon_F||_{1,2}\leq \sigma \sqrt{4(D_f+1)}R(\frac{\delta}{d+1},4(D_f+1))$$
$$\approx 2\sigma D_f^{\frac{1}{2}}$$
For the last term, recall that all but the last column of $F$ is $0,$ we have 
$$\Big\|e_i^TF^Tdiag(I\big[b_tM+\Gamma^T>0\big])\Big\|\leq ||F||_{1,2}=4D_f\sqrt{4(D_f+1)} I[i=d+1] \approx 8D_f^{\frac{3}{2}}I[i=d+1]$$
Using this, and combining our bound for $||\tilde{b}_t-b_t||\lessapprox 4\sigma d$ from \ref{attentionperturbationlemma}, we have

$$\Big\|\nabla_{\tilde{M}}\tilde{G}_{t,i}-\nabla_{M}G_{t,i}\Big\|_F 
\lessapprox 2\sigma D_f^{\frac{1}{2}}  + 32\sigma d D_f^{\frac{3}{2}}I[i=d+1]$$
Therefore
$$\sum_{t=1}^{T+1}  \sum_{i=1}^{d+1} \Bigg|\frac{\partial \mathcal{T}(X,\Theta)}{\partial G_{t,i}}\Bigg|\Big\|\nabla_{\tilde{M}}\tilde{G}_{t,i}-\nabla_{M}G_{t,i}\Big\|_F $$
$$\leq  4\omega^2 log(T)  \sum_{i=1}^{d+1}  ||b_t||\Big\|e_i^T(\tilde{F}-F)^Tdiag(I\big[M^Tb_t+\Gamma>0\big])\Big\|$$
$$+ 4\omega^2 log(T)  \sum_{i=1}^{d+1} ||\tilde{b}_t-b_t||\Big\|e_i^TF^Tdiag(I\big[M^Tb_t+\Gamma>0\big])\Big\|$$
$$\leq    4\omega^2 log(T)\sum_{i=1}^{d+1} \Big(2\sigma D_f^{\frac{1}{2}}\Big)$$
$$+ 4\omega^2  log(T)  (4\sigma d)(8D_f^{\frac{3}{2}}) $$
$$\leq 8\sigma d \omega^2  D_f^{\frac{1}{2}}log(T) +128 \sigma d \omega^2 log(T)D_f^{\frac{3}{2}}$$
$$\lessapprox 128 \sigma d \omega^2  log(T)D_f^{\frac{3}{2}}$$

\subsection{$||\nabla_{\tilde{\Gamma}} \mathcal{T}(X, \tilde{\Theta})-\nabla_\Gamma \mathcal{T}(X, \Theta)||$}

We have for the perturbed transformer the following expression for the gradient:
$$\nabla_{\tilde{\Gamma} }\mathcal{T}(X,\tilde{\Theta}) =  \sum_{t=1}^{T+1}  \sum_{i=1}^{d+1} \frac{\partial \mathcal{T}(X,\tilde{\Theta})}{\partial \tilde{G}_{t,i}}\nabla_{\tilde{\Gamma}}\tilde{G}_{t,i}$$
We again decompose the perturbation in the norm of the gradient using the triangle inequality:
$$\Big\|\nabla_{\tilde{\Gamma}} \mathcal{T}(X,\Theta+\zeta)-\nabla_{\Gamma} \mathcal{T}(X,\Theta)\Big\|_F$$
$$\lessapprox \sum_{t=1}^{T+1}  \sum_{i=1}^{d+1} \Bigg|\frac{\partial \mathcal{T}(X,\tilde{\Theta})}{\partial \tilde{G}_{t,i}}-\frac{\partial \mathcal{T}(X,\Theta)}{\partial G_{t,i}}\Bigg|\Big\|\nabla_{\Gamma}G_{t,i}\Big\|_F+ \Bigg|\frac{\partial \mathcal{T}(X,\Theta)}{\partial G_{t,i}}\Bigg|\Big\|\nabla_{\tilde{\Gamma}}\tilde{G}_{t,i}-\nabla_{\Gamma}G_{t,i}\Big\|_F$$
We just upper bounded both of the inter-layer derivatives, $ \Bigg|\frac{\partial \mathcal{T}(X,\tilde{\Theta})}{\partial \tilde{G}_{t,i}}-\frac{\partial \mathcal{T}(X,\Theta)}{\partial G_{t,i}}\Bigg|$ and $\Bigg|\frac{\partial \mathcal{T}(X,\Theta)}{\partial G_{t,i}}\Bigg|.$ It remains to bound the perturbation in the intra-layer gradient. We show in \ref{unperturbed gradient bounds appendix} that 
$$||\nabla_\Gamma G_{t,i}||\lessapprox 8 D_f^{\frac{3}{2}}I[i=d+1 \wedge t\leq \omega]$$
The first sum becomes 
$$\sum_{t=1}^{T+1}  \sum_{i=1}^{d+1} \Bigg|\frac{\partial \mathcal{T}(X,\tilde{\Theta})}{\partial \tilde{G}_{t,i}}-\frac{\partial \mathcal{T}(X,\Theta)}{\partial G_{t,i}}\Bigg|\Big\|\nabla_{\Gamma}G_{t,i}\Big\|_F$$
$$\leq  8 D_f^{\frac{3}{2}}\sum_{t=1}^{\omega} 256\sigma d D_f^2\omega log(T)$$
$$=2048\sigma d D_f^{\frac{7}{2}} \omega^{2}log(T)$$
Now, turning to the second sum, we have 
$$||\nabla_{\tilde{\Gamma}}\tilde{G}_{t,i}-\nabla_{\Gamma}G_{t,i}||  $$
$$\leq  \Big\|diag(I\big[\tilde{M}^T\tilde{b}_t+\tilde{\Gamma}\geq0\big])\tilde{F}e_{i}-diag(I\big[M^Tb_t+\Gamma\geq0\big])Fe_{i}\Big\|$$
We again apply the triangle inequality, and apply our first-order (in $\sigma$) approximation to the perturbation, noting that the term involve the difference of indicators vanishes, assuming that $\sigma $ is small in the sense of \ref{mlpperturbationlemma}. Thus we obtain
$$||\nabla_{\tilde{\Gamma}}\tilde{G}_{t,i}-\nabla_{\Gamma}G_{t,i}||  \lessapprox \Big\|diag(I\big[M^Tb_t+\Gamma>0\big])(\tilde{F}-F)e_{d+1}\Big\|$$
$$\leq ||\zeta_{F}||_{1,2} \leq ||\epsilon_{F}||_{1,2}\leq \sigma \sqrt{4(D_f+1)} R(\frac{\delta}{d+1},4(D_f+1))\approx 2\sigma\sqrt{D_f}$$
Putting this together, for the second sum we have 
$$\sum_{t=1}^{T+1}  \sum_{i=1}^{d+1}\Bigg|\frac{\partial \mathcal{T}(X,\Theta)}{\partial G_{t,i}}\Bigg|\Big\|\nabla_{\tilde{\Gamma}}\tilde{G}_{t,i}-\nabla_{\Gamma}G_{t,i}\Big\|_F$$
$$\leq 4\sqrt{\omega} log(T) \sum_{i=1}^{d+1}\Big\|\nabla_{\tilde{\Gamma}}\tilde{G}_{t,i}-\nabla_{\Gamma}G_{t,i}\Big\|_F$$
$$\leq 4\sqrt{\omega} log(T)\Big(2\sigma\sqrt{D_f}\Big)(d+1)\approx 8\sigma D_f^{\frac{1}{2}}\sqrt{\omega} log(T)d$$
Putting the two sums together, we have 
$$\Big\|\nabla_{\tilde{\Gamma}} \mathcal{T}(X,\Theta+\zeta)-\nabla_{\Gamma} \mathcal{T}(X,\Theta)\Big\|_F$$
$$\leq 2048\sigma d D_f^{\frac{7}{2}} \omega^{2}log(T)  + 8\sigma D_f^{\frac{1}{2}}\sqrt{\omega} log(T)d$$

\subsection{$||\nabla_{\tilde{F}} \mathcal{T}(X, \tilde{\Theta})-\nabla_{F} \mathcal{T}(X, \Theta)||$}
We follow the same overall flow as the previous two gradient calculations. We showed in \ref{unperturbed gradient bounds appendix} that 
$$||\nabla_{F}G_{t,i}|| \leq ||\big(M^Tb_t + \Gamma\big)_+e_i^T||\lessapprox  2\sqrt{D_f}I[t\leq \omega]$$
In addition, we note that 

$$||\nabla_{\tilde{F}}\tilde{G}_{t,i}-\nabla_{F}G_{t,i}||_F  $$
$$\leq  \Big\|\big(\tilde{M}^T\tilde{b}_t + \tilde{\Gamma}\big)_+e_i^T-\big(M^Tb_t + \Gamma\big)_+e_i^T\Big\|_F=\Big\|\big(\tilde{M}^T\tilde{b}_t + \tilde{\Gamma}\big)_+-\big(M^Tb_t + \Gamma\big)_+\Big\|$$
Using the triangle inequality, we have
$$\lessapprox \Big\|\big(\tilde{M}^Tb_t + \tilde{\Gamma}\big)-\big(M^Tb_t + \Gamma\big)\Big\|+\Big\|\big(M^T(\tilde{b}_t-b_t) \Big\|$$
Where we have used the $1-$lipschitzness of the max function. It was shown in \ref{mlpperturbationlemma} that
$$||(\tilde{M}^T-M^T)b_t+(\tilde{\Gamma}-\Gamma)||_2 \lessapprox  4\sigma  D_f^{\frac{1}{2}}$$
and it is easy to see that $\Big\|M^T(\tilde{b}_t-b_t) \Big\|\leq \sqrt{4(D_f+1)}||\tilde{b}_t-b_t|| \lessapprox 8\sigma D_f^{\frac{1}{2}} d$ . Thus we obtain
$$\Big\|\nabla_{\tilde{F}}\tilde{G}_{t,i}-\nabla_{F}G_{t,i}\Big\|_F \lessapprox  4\sigma D_f^{\frac{1}{2}}+ 8\sigma D_f^{\frac{1}{2}} d \approx  8\sigma D_f^{\frac{1}{2}} d $$
Putting this all together, we get 
$$\Big\|\nabla_{\tilde{F}} \mathcal{T}(X,\Theta+\zeta)-\nabla_{F} \mathcal{T}(X,\Theta)\Big\|_F$$
$$\lessapprox \sum_{t=1}^{T+1}  \sum_{i=1}^{d+1} \Bigg|\frac{\partial \mathcal{T}(X,\tilde{\Theta})}{\partial \tilde{G}_{t,i}}-\frac{\partial \mathcal{T}(X,\Theta)}{\partial G_{t,i}}\Bigg|\Big\|\nabla_{F}G_{t,i}\Big\|_F+ \Bigg|\frac{\partial \mathcal{T}(X,\Theta)}{\partial G_{t,i}}\Bigg|\Big\|\nabla_{\tilde{F}}\tilde{G}_{t,i}-\nabla_{F}G_{t,i}\Big\|_F$$
$$\leq \Big(256\sigma d D_f^2\omega log(T)\Big)\Big(2 \sqrt{D_f}\Big)(d+1)\omega + 4\sqrt{\omega} log(T) \Big(  4\sigma \sqrt{D_f}+8\sigma D_f^{\frac{1}{2}} d\Big)(d+1) $$
$$\approx 512 \sigma D_f^{\frac{5}{2}}\omega^2 d^2 + 32 \sigma D_f^{\frac{1}{2}} \omega^2 d^2$$
\subsection{Final Result for Perturbed Gradient Norm Bounds}
Putting this all together, we have that with probability at least $1-\delta$ (for exponentially small $\delta$),
$$||\nabla_{\tilde{\Theta}} \mathcal{T}(X, \tilde{\Theta})-\nabla_{\Theta} \mathcal{T}(X, \Theta)||$$
$$\lessapprox \underbrace{256\sqrt{2}\sigma d D_f^2\sqrt{\omega} log(T)}_{||\nabla_{\tilde{V}^{(2)}} -\nabla_{V^{(2)}} ||} + \underbrace{192\sigma\sqrt{\omega}D_f^2}_{||\nabla_{\tilde{W}^{(2)}} -\nabla_{W^{(2)}} ||}+ \underbrace{128\sqrt{2}\sigma D_f^2\omega^2 log(T)}_{||\nabla_{\tilde{W}^{(1)}}-\nabla_{W^{(1)}} ||}+\underbrace{64\sqrt{2}\sigma dD_f^2\omega^2 log(T)}_{||\nabla_{\tilde{V}^{(1)}}-\nabla_{V^{(1)}} ||}$$
$$+\underbrace{128 \sigma d \omega^2  log(T)D_f^{\frac{3}{2}}}_{||\nabla_{\tilde{M}} -\nabla_{M} ||}+ \underbrace{2048\sigma d D_f^{\frac{7}{2}} \omega^{2}log(T)  + 8\sigma D_f^{\frac{1}{2}}\sqrt{\omega} log(T)d}_{||\nabla_{\tilde{\Gamma}} -\nabla_\Gamma ||}+\underbrace{512 \sigma D_f^{\frac{5}{2}}\omega^2 d^2 + 32 \sigma D_f^{\frac{1}{2}} \omega^2 d^2}_{||\nabla_{\tilde{F}} -\nabla_{F} ||}$$
$$\in O\Big(\sigma D_f^{\frac{7}{2}}\omega^{2}log^2(T)\Big)$$
\end{proof}

\section{Perturbed Hessian Norm Bounds}
\begin{theorem}
\label{perturbedhessianbounds}
Given our transformer construction, a set of $\sigma-$ normal perturbations $\epsilon,$ and a corresponding set of remainder perturbations $\zeta$ (as defined in \ref{average sharpness approximation appendix}), the operator norm of the perturbation to the Hessian obeys:
$$||\nabla^2_{\Theta} \mathcal{T}(X,\Theta+\zeta)-\nabla^2_{\Theta} \mathcal{T}(X,\Theta)||\lessapprox H_p(\sigma, \omega, D_f,T,d)$$

where
$$H_p(\sigma, \omega, D_f,T,d)\in o(\sigma D_f^{\frac{7}{2}}\omega^{\frac{5}{2}}log(T)^{\frac{7}{2}})$$
Combined with our bound on the operator norm of the exact hessian in \ref{exact hessian bounds appendix}, it follows that 
$$||\nabla^2_{\Theta} \mathcal{T}(X,\Theta+\zeta)||\leq ||\nabla^2_{\Theta} \mathcal{T}(X,\Theta)||+||\nabla^2_{\Theta} \mathcal{T}(X,\Theta+\zeta)-\nabla^2_{\Theta} \mathcal{T}(X,\Theta)||$$
$$\lessapprox H_u(\omega, D_f,d)+H_p(\sigma,\omega, D_f,T,d)$$
\end{theorem}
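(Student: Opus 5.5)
The plan is to mirror the structure of the unperturbed Hessian bound in Theorem~\ref{exact hessian bounds appendix}, but applied to the difference $\nabla^2_{\Theta}\mathcal{T}(X,\Theta+\zeta)-\nabla^2_{\Theta}\mathcal{T}(X,\Theta)$. The second claim of the statement then follows immediately from the triangle inequality together with Theorem~\ref{exact hessian bounds appendix}. For the difference itself I would write the perturbed Hessian as a $7\times 7$ block matrix indexed by $(V^{(2)},W^{(2)},V^{(1)},W^{(1)},M,\Gamma,F)$. I would bound its operator norm by the sum of the operator norms of the block differences, using $\max_{\|v\|=1} v^T A v\le \sum_{a,b}\|A_{ab}\|_{op}$.

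The key structural difficulty is that the 22 blocks which vanished in the exact construction need not vanish after perturbation. The exact zeros came from several facts: $g'_t=0$, the zero last row and column of $W^{(2)}$, and positional-only attention. After perturbation these degeneracies hold only up to $O(\sigma)$. So for each block I would take its general formula, obtained by differentiating the general (full-Jacobian) gradient expressions of Theorem~\ref{perturbed gradient bounds appendix} once more, and write every such block as a product of inter-layer factors and intra-layer factors. I would then expand the perturbed-minus-unperturbed difference by the product-rule telescoping already used in that theorem. Each resulting term has exactly one perturbed factor, and $O(\sigma^2)$ terms are discarded under the paper's standing first-order convention. The one-factor perturbations are all available from earlier lemmas:
\begin{itemize}
\item Lemma~\ref{attentionperturbationlemma} gives $\|\tilde b_t-b_t\|$ and the first-layer softmax;
\item Lemma~\ref{mlpperturbationlemma} gives $\tilde G-G$;
\item Lemma~\ref{mlpjacobianperturbationlemma} gives the MLP Jacobian;
\item Lemmas~\ref{perturbedfinalactivationlemma} and \ref{perturbedfinalsoftmaxjacobianlemma} give $\phi^{(2)}_{T+1}$ and $\phi'^{(2)}_{T+1}$;
\item Lemma~\ref{perturbedmlpgradientlemma} gives $\nabla_{g_t}\mathcal{T}$.
\end{itemize}
The unperturbed factors are bounded by Theorem~\ref{parameterbounds} and Lemma~\ref{deoralemma}. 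The no-flip condition $\sigma\le 1/(16dD_f)$ makes the ReLU indicators locally constant, so the MLP blocks differentiating indicators remain zero.

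The new ingredients are the second-order intra-layer objects, which never appeared before: $\nabla^2_{W^{(1)}}B_{t,i}$, the cross term $\nabla_{V^{(1)}}\nabla_{W^{(1)}}B_{t,i}$, and $\nabla^2_{W^{(2)}}$ of the softmax. These involve the softmax Hessian, a third-order tensor with entrywise bounds of the form $\|\phi''\|\lesssim 6$ in the appropriate $(1,\cdot)$ norms. Applying a Deora-type bound twice, they should each give $O(\|X\|_{2,\infty}^2\|XV^{(1)}e_{d+1}\|_\infty)=O(1)$ for the first layer. For the second layer they should give $O(\|G(W^{(2)})^T\|_{2,\infty}\cdot\|GV^{(2)}\|_\infty)=O(\sqrt\omega\log T)$. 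These are multiplied by perturbed inter-layer factors such as $\tilde g'_t=O(\sigma D_f^2)$ and $\nabla_{\tilde g_t}\mathcal{T}-\nabla_{g_t}\mathcal{T}=O(\sigma dD_f^2\omega^{3/2}\log^2T)$.

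The main obstacle I expect is the first-attention-layer blocks. In $W^{(1)}$–$W^{(1)}$, $W^{(1)}$–$V^{(1)}$, and their crosses with MLP and second-layer parameters, the sum over positions runs to $T+1$ rather than $\omega$. A naive bound would therefore introduce a polynomial factor in $T$. I would control this as in the $\nabla_{W^{(1)}}$ gradient perturbation. Every such block carries a factor $\nabla_{g_t}\mathcal{T}$ whose unperturbed value is supported on $[\omega]\cup\{T+1\}$ by Lemma~\ref{unperturbedmlpgradientlemma}, or else carries an unperturbed intra-layer gradient that vanishes for $t>\omega$. This keeps only $O(\omega^2\log T)$ total weight. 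Collecting the worst term, which pairs the $\omega^{3/2}\log^2T$ inter-layer perturbation with the $8D_f^{3/2}$ MLP weight factor, an $\omega$-sum, and a $\sqrt{d\omega}$ from Cauchy–Schwarz over the $(d+1)$ coordinates, should give $H_p\in O(\sigma D_f^{7/2}\omega^{5/2}\log^{7/2}T)$ with $d=O(\log T)$. Checking that no block exceeds this rate is the bookkeeping-heavy step.
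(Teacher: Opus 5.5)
Your route is genuinely different from the paper's, and more complete. The paper never leaves the six blocks that survive in Theorem~\ref{exact hessian bounds appendix}. It bounds $\Delta_{Term_1},\dots,\Delta_{Term_6}$, the perturbations of the $W^{(2)}V^{(2)}$, $MV^{(2)}$, $\Gamma V^{(2)}$, $FV^{(2)}$, $F\Gamma$ and $FM$ blocks, by the same one-perturbed-factor triangle inequality you describe. It uses only first-order objects: $\tilde G-G$, $\tilde\phi^{(2)}_{T+1}-\phi^{(2)}_{T+1}$, $\tilde{\phi}'^{(2)}_{T+1}-\phi'^{(2)}_{T+1}$, Lemma~\ref{perturbedmlpgradientlemma}, and the intra-layer perturbations of Theorem~\ref{perturbed gradient bounds appendix}. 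The other 22 blocks are tacitly taken to stay zero, so no softmax Hessian ever appears.

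That assumption is not harmless. After perturbation $\tilde g'_t\neq 0$, so blocks such as $W^{(1)}W^{(1)}$ and $W^{(1)}V^{(2)}$ acquire $O(\sigma)$ entries. Some blocks are nonzero already at $\Theta$:
\begin{itemize}
\item $\nabla^2_{W^{(2)}W^{(2)}}\mathcal{T}$ generally is, since $\phi'^{(2)}_{T+1}$ depends on $W^{(2)}$;
\item $\nabla^2_{FW^{(1)}}\mathcal{T}$ and $\nabla^2_{MW^{(1)}}\mathcal{T}$ are too, since $g'_t=\sum_jF_{j,d+1}M_{d+1,j}I[(M^Tb_t+\Gamma)_j>0]=0$ does not make its partials in $F$, $M$ vanish.
\end{itemize}
Bounding differences block by block absorbs all of this automatically, at the price of the second-order estimates you sketch. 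The paper's shortcut is shorter, but its $H_p$, and the $H_u$ you import for the second claim, omit these contributions.

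There is also a bookkeeping slip. The pairing you name as worst cannot occur at first order. The $\omega^{3/2}\log^2T$ part of the inter-layer perturbation in Lemma~\ref{perturbedmlpgradientlemma} lives on coordinates $i\le d$, whereas the $8D_f^{3/2}$ factors $\|\nabla_MG_{t,i}\|$, $\|\nabla_\Gamma G_{t,i}\|$ live only on $i=d+1$. Moreover, the factors you list multiply to $\omega^3$, not $\omega^{5/2}$. The paper's $\omega^{5/2}$, in $\Delta_{Term_5}$, is $\omega^{3/2}\log^2T$ times an $\omega$-sum times only $\sqrt d$.

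The step that fails as stated is your control of the sum over $t\le T+1$. In the first-order expansion of each first-layer block there is a term in which $\nabla_{g_t}\mathcal{T}$ is the \emph{perturbed} factor. That term carries neither an unperturbed $\nabla_{g_t}\mathcal{T}$ nor an intra-layer factor vanishing for $t>\omega$.
\begin{itemize}
\item In the $(F_{j,d+1},W^{(1)})$ entries, for example, the term is $\sum_t[\nabla_{\tilde g_t}\mathcal{T}-\nabla_{g_t}\mathcal{T}]_{d+1}\,I[(M^Tb_t+\Gamma)_j>0]\,\nabla_{W^{(1)}}B_{t,d+1}$. For inactive $t$ the first-layer attention is uniform, so $\nabla_{W^{(1)}}B_{t,d+1}=x_t\,(X^T\phi'_tz)^T$, where $z=XV^{(1)}e_{d+1}$ is the bit column. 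The last entry of $X^T\phi'_tz$ is exactly $\bar z(1-\bar z)$ with $\bar z=\frac{1}{T+1}\sum_s z_s$, which is not small.
\item Likewise, $g'_t$ vanishes only on the grid. Inactive positions have $b_{t,d+1}=\bar z$ off the grid, where $g'_t\in\{0,\pm4D_f\}$ (Remark~\ref{inactive position remark}), so your claim $\tilde g'_t=O(\sigma D_f^2)$ fails there.
\end{itemize}
Summing the per-position bound of Lemma~\ref{perturbedmlpgradientlemma} over the $T-\omega$ inactive positions therefore produces a factor of $T$.

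The missing observation is that for $t\notin[\omega]\cup\{T+1\}$ every term of $\nabla_{\tilde g_t}\mathcal{T}$ is proportional to $[\tilde\phi^{(2)}_{T+1}]_t$. This holds directly in the first term. In the second it holds via $e_t^T\tilde{\phi}'^{(2)}_{T+1}=[\tilde\phi^{(2)}_{T+1}]_t(e_t-\tilde\phi^{(2)}_{T+1})^T$. The third term lives only at $t=T+1$. So the inter-layer perturbation must be summed in $\ell_1$ over $t$:
\begin{itemize}
\item $\sum_t|[\tilde\phi^{(2)}_{T+1}-\phi^{(2)}_{T+1}]_t|$ is exactly what Lemma~\ref{perturbedfinalactivationlemma} bounds;
\item the decomposition in Theorem~\ref{perturbedfinalsoftmaxjacobianlemma} gives $\|(\tilde{\phi}'^{(2)}_{T+1}-\phi'^{(2)}_{T+1})u\|_1\le 3\|\tilde\phi^{(2)}_{T+1}-\phi^{(2)}_{T+1}\|_1\|u\|_\infty$.
\end{itemize}
This makes the sum over all $T+1$ positions $T$-free. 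The same tiny softmax mass on inactive positions, about $1/(T^2D)$ each, also neutralizes possible ReLU flips there, where the $\frac{1}{4D_f}$ margin behind your no-flip argument is not guaranteed.
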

\begin{proof}
We calculate the perturbations in the Hessian norm by calculating the perturbation in each of the 6 non-zero terms contributing to the Hessian norm, as presented in the previous subsection. We illustrate the overall approach with an example. Suppose we want to upper bound the perturbation in the hessian norm contributed by the perturbed second derivative $\nabla^2_{\tilde{M}\tilde{V}^{(2)}}\mathcal{T}(X,\Theta)$
The perturbation in the hessian is given by 
$$\nabla_{\tilde{M}} \Big(\big[\nabla_{\tilde{V}^{(
2)}}\mathcal{T}(X,\Theta)\big]_{i} \Big)-\nabla_{M} \Big(\big[\nabla_{V^{(2)}}\mathcal{T}(X,\Theta)\big]_{i} \Big)$$
$$=\sum_{t=1}^{T+1}\sum_{j=1}^{d+1}\frac{\Big[\nabla_{\tilde{V}^{(2)}} \mathcal{T}(X,\tilde{\Theta}) \Big]_i}{ \partial\tilde{G}_{t,j}} \nabla_{\tilde{M}}\tilde{G}_{t,j}-\frac{\Big[\nabla_{V^{(2)}} \mathcal{T}(X,\Theta) \Big]_i}{ \partial G_{t,j}} \nabla_{M}G_{t,j}$$
Recall that the $i^{th}$ row of $\nabla^2_{\tilde{M}\tilde{V}^{(2)}}\mathcal{T}(X,\tilde{\Theta})-\nabla^2_{MV^{(2)}}\mathcal{T}(X,\Theta)$ is equal to $$Vec\Bigg(\nabla_{\tilde{M}} \Big(\big[\nabla_{\tilde{V}^{(
2)}}\mathcal{T}(X,\Theta)\big]_{i} \Big)-\nabla_{M} \Big(\big[\nabla_{V^{(2)}}\mathcal{T}(X,\Theta)\big]_{i} \Big)\Bigg)$$
for a fixed $i.$ So the contribution to the overall operator norm of the perturbation in $\nabla^2_{\tilde{M}\tilde{V}^{(2)}}\mathcal{T}(X,\tilde{\Theta})$ is given by 
$$2  \max_{||v||=1} concat(s_1,...s_{d+1})^T\Big(\nabla^2_{{W^{(2)}}V^{(2)}}\mathcal{T}(X,\Theta) -\nabla^2_{{\tilde{W}^{(2)}}\tilde{V}^{(2)}}\mathcal{T}(X,\tilde{\Theta}) \Big)concat(p_1,...,p_{d+1})$$
$$\leq 2\sum_{i=1}^{d+1} |s_i| \Big\| Vec\Big(\nabla_{\tilde{M}} \Big(\big[\nabla_{\tilde{V}^{(
2)}}\mathcal{T}(X,\Theta)\big]_{i} \Big)-\nabla_{M} \Big(\big[\nabla_{V^{(2)}}\mathcal{T}(X,\Theta)\big]_{i} \Big)\Big) \Big\| ||\mathbf{p}||$$
$$\leq 2\sum_{i=1}^{d+1} \sum_{k=1}^{d+1} |s_i|  \Big\|e_k^T\nabla_{\tilde{M}} \Big(\big[\nabla_{\tilde{V}^{(
2)}}\mathcal{T}(X,\Theta)\big]_{i} \Big)-e_k^T\nabla_{M} \Big(\big[\nabla_{V^{(2)}}\mathcal{T}(X,\Theta)\big]_{i} \Big) \Big\| ||\mathbf{p_k}||$$
Note that the $s_i$ and $p_j$ here are different from the $s_i$ and $p_i$ involved in calculating the unperturbed hessian, and even different than the singular vectors involved in the calculation of the overall hessian perturbation (as opposed to the operator norm of a sub-matrix within the overall Hessian). However, since the only thing we care about is that these vectors and all projections onto different subspaces of coordinates) have a 2-norm that is upper bounded by 1, we use the same notation.  We refer to the contribution of the perturbation of a given sub-matrix to the operator of the overall Hessian perturbation as $\Delta_{Term_i
},$ where $Term_i$ is the term corresponding to the given sub-matrix in \ref{exact hessian bounds appendix}. 

Ultimately, we will use the triangle inequality to conclude that 
$$\Big\|\nabla^2_{\tilde{\Theta}}\mathcal{T}(X,\tilde{\Theta})\Big\|\leq \Big\|\nabla^2_{\Theta}\mathcal{T}(X,\Theta)\Big\|+\Big\|\nabla^2_{\tilde{\Theta}}\mathcal{T}(X,\tilde{\Theta})-\nabla^2_{\Theta}\mathcal{T}(X,\Theta)\Big\|$$
$$\leq \Big\|\nabla^2_{\Theta}\mathcal{T}(X,\Theta)\Big\|+\sum_{i=1}^{6}\Delta_{Term_i}$$
\subsection{$\Delta_{Term_1}$} 
Recall from \ref{exact hessian bounds appendix} that  $$Term_1 =2  \max_{||v||=1} concat(s_1,...s_{d+1})^T\Big(\nabla^2_{{W^{(2)}}V^{(2)}}\mathcal{T}(X,\Theta) \Big)concat(p_1,...,p_{d+1}).$$
Recall our formula for $\Big[\nabla_{V^{(2)}} \mathcal{T}(X,\Theta) \Big]_i =(G_{:,i})^T{\phi^{(2)}_{T+1}}$ from \ref{exact hessian bounds appendix}. Whereas in the exact hessian case, $G_{:,i}$ is independent of all parameter matrices (since the first $d$ columns of $G,$ like $X,$ encode purely positional information), in the perturbed case $G_{:,i}$ may in fact depend on the perturbations of some of the parameter matrices. To handle this more general case, we will use the product rule:
$$\nabla_{\tilde{W}^{(2)}}\Big(\Big[\nabla_{\tilde{V}^{(2)}} \mathcal{T}(X,\tilde{\Theta}) \Big]_i\Big) =\nabla_{\tilde{W}^{(2)}}\Big((\tilde{G}_{:,i})^T{\tilde{\phi}^{(2)}_{T+1}}\Big)=\sum_{t=1}^{T+1} \nabla_{\tilde{W}^{(2)}}\big(\big[\tilde{\phi}^{(2)}_{T+1}\big]_t\big)\tilde{G}_{t,i}+\nabla_{\tilde{W}^{(2)}}\big(\tilde{G}_{t,i}\big)\big[\tilde{\phi}^{(2)}_{T+1}\big]_t$$
In this case, the perturbed MLP output $\tilde{G}_{:,i}$ does not depend on the (downstream) parameters $\tilde{W}^{(2)},$ and $\nabla_{\tilde{W}^{(2)}}\big(\tilde{G}_{:,i}\big){\tilde{\phi}^{(2)}_{T+1}}=0.$ It remains to calculate $\nabla_{\tilde{W}^{(2)}}\big({\tilde{\phi}^{(2)}_{T+1}}\big)\tilde{G}_{:,i}.$ Recall from \ref{exact hessian bounds appendix} (specifically, in the derivation of $\nabla^2_{W^{(2)}V^{(2)}} \mathcal{T}(X,\Theta)$)  that $\nabla_{W^{
(2)}}[\phi_{T+1}^{(2)}]_t = g_{T+1}e_t^T\phi'^{(2)}_{T+1}G, $ and thus we can write the perturbed Hessian as:
$$\nabla_{\tilde{W}^{(2)}}\Big(\Big[\nabla_{\tilde{V}^{(2)}} \mathcal{T}(X,\tilde{\Theta}) \Big]_i\Big) =\sum_{t=1}^{T+1} \nabla_{\tilde{W}^{(2)}}\big(\big[\tilde{\phi}^{(2)}_{T+1}\big]_t\big)\tilde{G}_{t,i}=\sum_{t=1}^{T+1} \tilde{G}_{t,i}\tilde{g}_{T+1}e_t^T\tilde{\phi}'^{(2)}_{T+1}\tilde{G}$$
$$\nabla_{\tilde{W}^{(2)}}\Big(\Big[\nabla_{\tilde{V}^{(2)}} \mathcal{T}(X,\tilde{\Theta}) \Big]_i\Big)-\nabla_{W^{(2)}}\Big(\Big[\nabla_{V^{(2)}} \mathcal{T}(X,\Theta) \Big]_i\Big) = \sum_{t=1}^{T+1} \tilde{G}_{t,i}\tilde{g}_{T+1}e_t^T\tilde{\phi}'^{(2)}_{T+1}\tilde{G} -G_{t,i}g_{T+1}e_t^T\phi'^{(2)}_{T+1}G$$
$$\approx \sum_{t=1}^{T+1} (\tilde{G}_{t,i}\tilde{g}_{T+1}-G_{t,i}g_{T+1})e_t^T\phi'^{(2)}_{T+1}G$$
$$+ \sum_{t=1}^{T+1} G_{t,i}g_{T+1}e_t^T\Big(\tilde{\phi}'^{(2)}_{T+1}-\phi'^{(2)}_{T+1}\Big)G$$
$$+\sum_{t=1}^{T+1} G_{t,i}g_{T+1}e_t^T\phi'^{(2)}_{T+1}(\tilde{G}-G)$$

We can decompose the contribution to the perturbation of the overall hessian using the triangle inequality:
$$\Delta_{Term_1}\leq \underbrace{2\sum_{i=1}^{d+1} \sum_{t=1}^{\omega}\sum_{k=1}^{d+1}  |s_{i}|\Big|G_{t,i}G_{T+1,k}\Big\|\Big|e_t^T{\phi'^{(2)}_{T+1}}(\tilde{G}-G)\Big\|||\mathbf{p_k}|| }_{Term_{I}}$$
$$+ \underbrace{2\sum_{i=1}^{d+1} \sum_{t=1}^{\omega}\sum_{k=1}^{d+1}  |s_{i}|\Big|G_{t,i}G_{T+1,k}\Big\|\Big|e_t^T(\tilde{\phi}'^{(2)}_{T+1}-\phi'^{(2)}_{T+1})G\Big\|||\mathbf{p_k}||}_{Term{II}}$$
$$+ \underbrace{2 \sum_{i=1}^{d+1} \sum_{t=1}^{\omega}\sum_{k=1}^{d+1}  |s_{i}|\Big|\tilde{G}_{t,d+1}\tilde{G}_{T+1,d+1}-G_{t,d+1}G_{T+1,d+1}\Big\|\Big|e_t^T\phi'^{(2)}_{T+1}G\Big\|||\mathbf{p_{k}}||
}_{Term_{III}}$$
First, we bound $Term_I.$  By \ref{deoralemma}, we have $\Big\|e_t^T{\phi'^{(2)}_{T+1}}(\tilde{G}-G)\Big\| \leq 2 ||\tilde{G}-G||_{2,\infty} \lessapprox 64\sigma D_f^2.$
Thus
$$Term_I \lessapprox  64\sigma D_f^2 ||G||_{1,\infty}^2 \sum_{i=1}^{d+1} \sum_{t=1}^{\omega}\sum_{k=1}^{d+1}  |s_{i}|||\mathbf{p_k}||$$
$$\lessapprox 64\sigma d \omega D_f^2 $$
For $Term_{II},$ we note that $\Big\|e_t^T(\tilde{\phi}'^{(2)}_{T+1}-\phi'^{(2)}_{T+1})G\Big\| \leq  \Big\|G^T(\tilde{\phi}'^{(2)}_{T+1}-\phi'^{(2)}_{T+1})e_t\Big\|\leq ||G^T||_{1,2}\Big\|(\tilde{\phi}'^{(2)}_{T+1}-\phi'^{(2)}_{T+1})e_t\Big\|_1= ||G||_{2,\infty}\Big\|(\tilde{\phi}'^{(2)}_{T+1}-\phi'^{(2)}_{T+1})e_t\Big\|_1$
$$\lessapprox 768\sqrt{2}\sigma d D_f^2\sqrt{\omega} log(T) $$
It follows that
$$Term_{II}\lessapprox 1536\sqrt{2} \sigma D_f^2\omega^{\frac{3}{2}} d^2 log(T) $$
Finally, for $Term_{III}, $ note that 
$$\Big|\tilde{G}_{t,d+1}\tilde{G}_{T+1,d+1}-G_{t,d+1}G_{T+1,d+1}\Big| $$
$$\lessapprox \Big|G_{t,d+1}\big(\tilde{G}_{T+1,d+1}-G_{T+1,d+1}\big)\Big|+\Big|\big(\tilde{G}_{t,d+1}-G_{t,d+1}\big)G_{T+1,d+1}\Big|$$
$$\leq 2||\tilde{G}-G||_{2,\infty} \leq 64\sigma D_f^2$$
Therefore 
$$Term_{III} = 2\sum_{i=1}^{d+1} \sum_{t=1}^{\omega}\sum_{k=1}^{d+1}  |s_{i}|\Big|\tilde{G}_{t,d+1}\tilde{G}_{T+1,d+1}-G_{t,d+1}G_{T+1,d+1}\Big\|\Big|e_t^T\phi'^{(2)}_{T+1}G\Big\|||\mathbf{p_{k}}||
$$
$$\leq 128\sqrt{2} \sigma d \omega D_f^2 $$
Putting this all together, we have
$$\Delta_{Term_1}\leq  64\sigma d \omega D_f^2 +1536\sqrt{2} \sigma D_f^2\omega^{\frac{3}{2}} d^2 log(T) + 128\sqrt{2} \sigma d \omega D_f^2$$
$$\approx 1536\sqrt{2} \sigma D_f^2\omega^{\frac{3}{2}} d^2 log(T)$$

\subsection{$\Delta_{Term_2}$}
$$Term_2 = 2  \max_{||v||=1} concat(s_1,...,s_{d+1})^T\Big(\nabla^2_{MV^{(2)}}\mathcal{T}(X,\Theta) \Big)concat(\eta_1,...\eta_{d+1})$$
Recall that 
$$\Big[\nabla_{\tilde{V}^{(2)}} \mathcal{T}(X,\tilde{\Theta}) \Big]_i=e_{i}^T\tilde{G}^T{\tilde{\phi}^{(2)}_{T+1}}$$

And note that this expression is very similar to $ \mathcal{T}(X,\tilde{\Theta})=(V^{(2)})^T\tilde{G}^T\tilde{\phi}_{T+1}^{(2)},$ except that the $V^{(2)}$ has been replaced by $e_i^T.$ We can therefore follow the derivation of $\nabla_G \mathcal{T}(X,\Theta)$ in \ref{unperturbed gradient bounds appendix} to conclude that 
$$\frac{\partial\Big( e_{i}^T\tilde{G}^T{\tilde{\phi}^{(2)}_{T+1}}\Big)}{ \partial\tilde{G}_{t,j}}  = e_t^T\tilde{\phi}^{(2)}_{T+1}e_i^Te_j+e_t^Te_{T+1}e_i^T\tilde{G}^T{\tilde{\phi}'^{(2)}_{T+1}}\tilde{G}(\tilde{W}^{(2)})^Te_j$$
$$=e_t^T\tilde{\phi}^{(2)}_{T+1}I[i=j]+e_i^T\tilde{G}^T{\tilde{\phi}'^{(2)}_{T+1}}\tilde{G}(\tilde{W}^{(2)})^Te_jI[t=T+1]$$
We have from \ref{unperturbed gradient bounds appendix} that $$\nabla_MG_{t,i} =b_te_{i}^TF^Tdiag(I\big[M^Tb_t+\Gamma>0\big])$$
Therefore
$$\nabla_{\tilde{M}} \Big(\Big[\nabla_{\tilde{V}^{(2)}} \mathcal{T}(X,\tilde{\Theta}) \Big]_i\Big)=\nabla_{\tilde{M}} \Big(e_{i}^T\tilde{G}^T{\tilde{\phi}^{(2)}_{T+1}}\Big)$$
$$=\sum_{t=1}^{T+1} \sum_{j=1}^{d+1} \frac{\partial\Big( e_{i}^T\tilde{G}^T{\tilde{\phi}^{(2)}_{T+1}}\Big)}{ \partial\tilde{G}_{t,j}} \nabla_{\tilde{M}}\tilde{G}_{t,j}$$
We can decompose the perturbation as:
$$\nabla_{\tilde{M}} \Big(\Big[\nabla_{\tilde{V}^{(2)}} \mathcal{T}(X,\tilde{\Theta}) \Big]_i\Big)-\nabla_{M} \Big(\Big[\nabla_{V^{(2)}} \mathcal{T}(X,\Theta) \Big]_i\Big)$$
$$=\sum_{t=1}^{T+1} \sum_{j=1}^{d+1} \frac{\partial\Big( e_{i}^T\tilde{G}^T{\tilde{\phi}^{(2)}_{T+1}}\Big)}{ \partial\tilde{G}_{t,j}} \nabla_{\tilde{M}}\tilde{G}_{t,j}- \frac{\partial\Big( e_{i}^TG^T{\phi^{(2)}_{T+1}}\Big)}{ \partial G_{t,j}} \nabla_{M}G_{t,j}$$
$$\approx \sum_{t=1}^{T+1} \sum_{j=1}^{d+1} \Big(\frac{\partial\Big( e_{i}^T\tilde{G}^T{\tilde{\phi}^{(2)}_{T+1}}\Big)}{ \partial\tilde{G}_{t,j}}- \frac{\partial\Big( e_{i}^TG^T{\phi^{(2)}_{T+1}}\Big)}{ \partial G_{t,j}}\Big) \nabla_{M}G_{t,j}+\sum_{t=1}^{T+1} \sum_{j=1}^{d+1}\frac{\partial\Big( e_{i}^TG^T{\phi^{(2)}_{T+1}}\Big)}{ \partial G_{t,j}}\Big(\nabla_{\tilde{M}}\tilde{G}_{t,j}- \nabla_{M}G_{t,j}\Big)$$
Thus 
$$\Delta_{Term_2}=\max_{||v||=1}\sum_{i=1}^{d+1} s_{i} Vec\Big(\sum_{t=1}^{T+1} \sum_{j=1}^{d+1} \frac{\partial\Big( e_{i}^T\tilde{G}^T{\tilde{\phi}^{(2)}_{T+1}}\Big)}{ \partial\tilde{G}_{t,j}} \nabla_{\tilde{M}}\tilde{G}_{t,j}- \frac{\partial\Big( e_{i}^TG^T{\phi^{(2)}_{T+1}}\Big)}{ \partial G_{t,j}} \nabla_{M}G_{t,j}\Big)\mathbf{\eta}$$

$$\leq \sum_{i=1}^{d+1} |s_{i}| \sum_{k=1}^{d+1} \sum_{t=1}^{T+1} \sum_{j=1}^{d+1} \Bigg| \Bigg| \frac{\partial\Big( e_{i}^T\tilde{G}^T{\tilde{\phi}^{(2)}_{T+1}}\Big)}{ \partial\tilde{G}_{t,j}} \nabla_{\tilde{M}}\tilde{G}_{t,j}- \frac{\partial\Big( e_{i}^TG^T{\phi^{(2)}_{T+1}}\Big)}{ \partial G_{t,j}} \nabla_{M}G_{t,j} \Bigg| \Bigg|||\mathbf{\eta_k}||$$
$$\leq \underbrace{\sum_{i=1}^{d+1} |s_{i}| \sum_{k=1}^{d+1} \sum_{t=1}^{T+1} \sum_{j=1}^{d+1} \Bigg| \frac{\partial\Big( e_{i}^T\tilde{G}^T{\tilde{\phi}^{(2)}_{T+1}}\Big)}{ \partial\tilde{G}_{t,j}}- \frac{\partial\Big( e_{i}^TG^T{\phi^{(2)}_{T+1}}\Big)}{ \partial G_{t,j}}\Bigg| \Big\| \nabla_{M}G_{t,i}\Big\|||\mathbf{\eta_k}||}_{Term_A}$$
$$+ \underbrace{\sum_{i=1}^{d+1} |s_{i}| \sum_{k=1}^{d+1} \sum_{t=1}^{T+1} \sum_{j=1}^{d+1} \Bigg|\frac{\partial\Big( e_{i}^TG^T{\phi^{(2)}_{T+1}}\Big)}{ \partial G_{t,j}} \Bigg|\Big\|\nabla_{\tilde{M}}\tilde{G}_{t,i}- \nabla_{M}G_{t,i}\Big\| ||\mathbf{\eta_k}||}_{Term_B}$$

In the above series of inequalities, we used Cauchy-Schwarz and the triangle inequality multiple times. Now, we derive the ingredients needed to evaluate the RHS above. It was shown in \ref{unperturbed gradient bounds appendix} that 
$$\nabla_{M}G_{t,i}=b_te_{i}^TF^Tdiag(I\big[M^Tb_t+\Gamma>0\big]),$$
and that the norm of this intra-layer gradient term can be bounded as:
$$\Big\|\nabla_{M}G_{t,i}\Big\| \lessapprox8D_f^{\frac{3}{2}}I[i=d+1\wedge t\leq \omega ]$$
In \ref{perturbedmlpgradientlemma}, it was shown that
$$\Bigg|\frac{\partial\Big(\mathcal{T}(X,\tilde{\Theta})\Big)}{ \partial\tilde{G}_{t,d+1}}- \frac{\partial\Big(\mathcal{T} (X,\Theta)\Big)}{ \partial G_{t,d+1}}\Bigg|=|\langle\nabla_{\tilde{g}_t} \mathcal{T}(X,\tilde{\Theta})-\nabla_{g_t} \mathcal{T}(X,\Theta),e_{d+1}\rangle| \lessapprox  256\sigma d D_f^2\omega log(T)$$
We can follow similar arguments as \ref{perturbedmlpgradientlemma}  but with $V^{(2)}$ replaced with $e_i,$ and conclude that
$$\Bigg|\frac{\partial\Big( e_{i}^T\tilde{G}^T{\tilde{\phi}^{(2)}_{T+1}}\Big)}{ \partial\tilde{G}_{t,d+1}}- \frac{\partial\Big( e_{i}^TG^T{\phi^{(2)}_{T+1}}\Big)}{ \partial G_{t,d+1}}\Bigg|\lessapprox   256\sigma d D_f^2\sqrt{\omega} log(T) $$
Note that this differs from the perturbation in the gradient in \ref{perturbed gradient bounds appendix} only by a factor of $\sqrt{\omega}, $ as expected, since $V^{(2)}$ was carrying this factor. Thus
$$Term_A=\sum_{k=1}^{d+1} \sum_{i=1}^{d+1}\sum_{t=1}^{\omega}  |s_{i}|\Big(256\sigma d D_f^2\sqrt{\omega} log(T)\Big)\Big(8D_f^{\frac{3}{2}}\Big)||\mathbf{\eta_k}||= 2048 \sigma d(d+1) D_f^{\frac{7}{2}} \omega^{\frac{3}{2}}log(T)$$
We now move on to $Term_B.$ In \ref{perturbed gradient bounds appendix} we showed that 
$$\Big\|\nabla_{\tilde{M}}\tilde{G}_{t,i}-\nabla_{M}G_{t,i}\Big\|_F 
\lessapprox 2\sigma D_f^{\frac{1}{2}}  + 32\sigma d D_f^{\frac{3}{2}}I[i=d+1]$$
Finally,  we can make a similar argument to \ref{unperturbedmlpgradientlemma}, but once again replacing $V^{(2)}$ with $e_i,$ to conclude that 
$$||\nabla_{g_t} e_{i}^T\tilde{G}^T{\tilde{\phi}^{(2)}_{T+1}}|| \leq \frac{|c_t|}{\sqrt{\omega}} + 4log(T)I[t=T+1] +4\omega^{\frac{1}{2}}log(T)I[t\leq\omega] $$
Summing over $t\in [T+1], $ we have 
$$\sum_{t=1}^{T+1}||\nabla_{g_t} e_{i}^T\tilde{G}^T{\tilde{\phi}^{(2)}_{T+1}}|| \leq 1 + 4log(T) +4\omega^{\frac{3}{2}}log(T) \approx  4\omega^{\frac{3}{2}}log(T)$$

$$Term_B  \leq 2\sum_{i=1}^{d+1} |s_{i}| \sum_{k=1}^{d+1}\sum_{j=1}^{d+1} \Big(4\omega^{\frac{3}{2}}log(T)\Big)\Big(2\sigma D_f^{\frac{1}{2}}  + 32\sigma d D_f^{\frac{3}{2}}I[i=d+1]\Big)||\mathbf{\eta_k}||$$
$$=8\omega^{\frac{3}{2}}log(T)\sum_{i=1}^{d+1} |s_{i}| \sum_{k=1}^{d+1} \sum_{j=1}^{d+1}\Big(2\sigma D_f^{\frac{1}{2}}  + 32\sigma d D_f^{\frac{3}{2}}I[i=d+1]\Big)||\mathbf{\eta_k}||$$
$$=8\omega^{\frac{3}{2}}log(T)\sum_{i=1}^{d+1} |s_{i}| \sum_{k=1}^{d+1} \Big(2(d+1)\sigma D_f^{\frac{1}{2}}  + 32\sigma d D_f^{\frac{3}{2}}\Big)||\mathbf{\eta_k}||$$
$$\leq 8\omega^{\frac{3}{2}}log(T)(d+1)\Big(2(d+1)\sigma D_f^{\frac{1}{2}}  + 32\sigma d D_f^{\frac{3}{2}}\Big)$$
$$\approx 256\sigma \omega^{\frac{3}{2}} d^2 D_f^{\frac{3}{2}} log(T)$$
Putting $Term_A $ and $Term_B$ together, we have 

$$\Delta_{Term_2} \lessapprox 2048 \sigma d^2 D_f^{\frac{7}{2}} \omega^{\frac{3}{2}}log(T) + 256\sigma \omega^{\frac{3}{2}}d^2 D_f^{\frac{3}{2}} log(T)$$
\subsection{$\Delta_{Term_3}$}
$$ Term_3 =2\max_{||v||=1} concat(s_1,...,s_{d+1})^T\Big(\nabla^2_{\Gamma V^{(2)}}\mathcal{T}(X,\Theta)\Big)\mathbf{\gamma_1}$$
We follow the same approach as $\tilde{Term}_2$.  We have 
$$\tilde{Term}_3 - Term_3 \lessapprox \underbrace{2\sum_{i=1}^{d+1} |s_{i}| \sum_{k=1}^{d+1} \sum_{t=1}^{T+1} \sum_{j=1}^{d+1} \Bigg| \frac{\partial\Big( e_{i}^T\tilde{G}^T{\tilde{\phi}^{(2)}_{T+1}}\Big)}{ \partial\tilde{G}_{t,j}}- \frac{\partial\Big( e_{i}^TG^T{\phi^{(2)}_{T+1}}\Big)}{ \partial G_{t,j}}\Bigg| \Big\| \nabla_{\Gamma}G_{t,i}\Big\|||\mathbf{\eta_k}||}_{Term_A}$$
$$+ \underbrace{2\sum_{i=1}^{d+1} |s_{i}| \sum_{k=1}^{d+1} \sum_{t=1}^{T+1} \sum_{j=1}^{d+1} \Bigg|\frac{\partial\Big( e_{i}^TG^T{\phi^{(2)}_{T+1}}\Big)}{ \partial G_{t,j}} \Bigg|\Big\|\nabla_{\tilde{\Gamma}}\tilde{G}_{t,i}- \nabla_{\Gamma}G_{t,i}\Big\| ||\mathbf{\eta_k}||}_{Term_B}$$
In \ref{unperturbed gradient bounds appendix}, we showed that 
$$||\nabla_\Gamma G_{t,i}||\lessapprox 8 D_f^{\frac{3}{2}}I[i=d+1 \wedge t\leq \omega]$$
In addition, \ref{perturbed gradient bounds appendix}, tells us that 
$$||\nabla_{\tilde{\Gamma}}\tilde{G}_{t,i}-\nabla_{\Gamma}G_{t,i}||\leq  2\sigma \sqrt{D_f}$$
Combining this with our previous bounds on $\Bigg| \frac{\partial\Big( e_{i}^T\tilde{G}^T{\tilde{\phi}^{(2)}_{T+1}}\Big)}{ \partial\tilde{G}_{t,j}}- \frac{\partial\Big( e_{i}^TG^T{\phi^{(2)}_{T+1}}\Big)}{ \partial G_{t,j}}\Bigg| \Big\| \nabla_{\Gamma}G_{t,i}\Big\|$ earlier in this section, we have
$$Term_A \leq 2 |s_{d+1}| \sum_{k=1}^{d+1} \sum_{t=1}^{\omega} \sum_{j=1}^{d+1} \Big( 256\sigma d D_f^2\sqrt{\omega} log(T) \Big) \Big(8D_f^{\frac{3}{2}}\Big)||\mathbf{\eta_k}||$$
$$\leq 2048\sigma d D_f^{\frac{7}{2}}\omega^{\frac{3}{2}}log(T)$$
$$Term_B \leq 4\sum_{i=1}^{d+1} |s_{i}| \sum_{k=1}^{d+1}  \sum_{j=1}^{d+1} \Big(1+4log(T)+4\omega^{\frac{3}{2}}log(T)\Big)\Big(2 \sigma D_f^{\frac{1}{2}}\Big)  ||\mathbf{\eta_k}||$$
$$\approx 32\sigma \omega^{\frac{3}{2}} D_f^{\frac{1}{2}}d log(T)$$
Putting $Term_A$ and $Term_B$ together, we have 
$$\Delta_{Term_3} \lessapprox  2048\sigma d D_f^{\frac{7}{2}}\omega^{\frac{3}{2}}log(T)$$
 \subsection{$\Delta_{Term_4}$}
 $$  Term_4= 2\max_{||v||=1}concat(s_1,...,s_{d+1})^T\Big(\nabla^2_{FV^{(2)}}\mathcal{T}(X,\Theta) \Big) concat(\mathbf{\nu}_1,...\nu_{4(D_f+1)})$$
In \ref{unperturbed gradient bounds appendix}, we showed that 
$$||\nabla_F G_{t,i}||\lessapprox 2\sqrt{D_f}I[t\leq \omega]$$
In addition, \ref{perturbed gradient bounds appendix}, tells us that 
$$||\nabla_{\tilde{F}}\tilde{G}_{t,i}-\nabla_{F}G_{t,i}||\leq  8\sigma D_f^{\frac{1}{2}} d$$
$$\Delta_{Term} \lessapprox \underbrace{2\sum_{i=1}^{d+1} |s_{i}| \sum_{k=1}^{d+1} \sum_{t=1}^{T+1} \sum_{j=1}^{d+1} \Bigg| \frac{\partial\Big( e_{i}^T\tilde{G}^T{\tilde{\phi}^{(2)}_{T+1}}\Big)}{ \partial\tilde{G}_{t,j}}- \frac{\partial\Big( e_{i}^TG^T{\phi^{(2)}_{T+1}}\Big)}{ \partial G_{t,j}}\Bigg| \Big\|Vec\big( \nabla_{F}G_{t,i}\big)\Big\|||\mathbf{\eta_k}||}_{Term_A}$$
$$+ \underbrace{2\sum_{i=1}^{d+1} |s_{i}| \sum_{k=1}^{d+1} \sum_{t=1}^{T+1} \sum_{j=1}^{d+1} \Bigg|\frac{\partial\Big( e_{i}^TG^T{\phi^{(2)}_{T+1}}\Big)}{ \partial G_{t,j}} \Bigg|\Big\|Vec\big(\nabla_{\tilde{F}}\tilde{G}_{t,i}- \nabla_{F}G_{t,i}\big )\Big\| ||\mathbf{\eta_k}||}_{Term_B}$$
$$Term_A \leq 2\sum_{i=1}^{d+1} |s_{i}| \sum_{k=1}^{d+1} \sum_{t=1}^{\omega} \sum_{j=1}^{d+1} \Big(  256\sigma d D_f^2\sqrt{\omega} log(T)\Big)\big(2\sqrt{D_f}\big)||\mathbf{\eta_k}||$$
$$\leq 1024 \sigma d^3 D_f^{\frac{5}{2}} \omega^{\frac{3}{2}} log(T) $$
  $$Term_B \leq 2\sum_{i=1}^{d+1} |s_{i}| \sum_{k=1}^{d+1} \sum_{j=1}^{d+1} \Big(1+4log(T)+4\omega^{\frac{3}{2}}log(T)\Big)\Big(8\sigma D_f^{\frac{1}{2}}d\Big) ||\mathbf{\eta_k}||$$
  $$\lessapprox 64\sigma D_f^{\frac{1}{2}}log(T)d^2 $$
  Thus
  $$\Delta_{Term_4} \lessapprox 1024 \sigma d^3 D_f^{\frac{5}{2}} \omega^{\frac{3}{2}} log(T) +64\sigma \omega^{\frac{3}{2}} D_f^{\frac{1}{2}}log(T)d^2$$
  $$\approx  1024 \sigma D_f^{\frac{5}{2}} \omega^{\frac{3}{2}} d^3  log(T)$$
  \subsection{$\Delta_{Term_5}$}
  $$Term_5=2\max_{||v||=1} \gamma_1^T\Big(\nabla^2_{F \Gamma}\mathcal{T}(X,\Theta) \Big) concat(\nu_1,...,\nu_{4(D_f+1)})$$
$$=2\max_{||v||=1} concat(\nu_1,...,\nu_{4(D_f+1)})^T\Big(\nabla^2_{ \Gamma F}\mathcal{T}(X,\Theta) \Big)\gamma_1 $$
Recall that $$\Big[\nabla_{F} \mathcal{T}(X,\Theta) \Big]_{i,j}= \sum_{k=1}^{d+1}\sum_{t=1}^{\omega} c_te_i^T\big(M^Tb_t + \Gamma\big)_+e_{k}^T e_j $$

To obtain $\nabla^2_{\tilde{F} \tilde{\Gamma}}\mathcal{T}(X,\tilde{\Theta})$ in the perturbed case, we once again modify the unperturbed hessian to account for the fact that the final derivative in the perturbed case depends on more than just the last dimension. We start with a generalization of the perturbed gradient, noting that in the unperturbed case, we had $\frac{\partial  \mathcal{T}(X,\tilde{\Theta})}{\partial \tilde{G}_{t,d+1}} =c_t .$ 
$$\Big[\nabla_{\tilde{F}} \mathcal{T}(X,\tilde{\Theta}) \Big]_{i,j}= \sum_{k=1}^{d+1}\sum_{t=1}^{\omega} \frac{\partial  \mathcal{T}(X,\tilde{\Theta})}{\partial \tilde{G}_{t,k}}e_i^T\nabla_{\tilde{F}}\tilde{G}_{t,k}e_j$$
$$\Big[\nabla_{\tilde{F}} \mathcal{T}(X,\tilde{\Theta}) \Big]_{i,j}= \sum_{k=1}^{d+1}\sum_{t=1}^{\omega}  \frac{\partial  \mathcal{T}(X,\tilde{\Theta})}{\partial \tilde{G}_{t,k}} e_i^T\big(\tilde{M}^T\tilde{b}_t + \tilde{\Gamma}\big)_+e_{k}^T e_j $$
$$=\sum_{t=1}^{\omega}  \frac{\partial  \mathcal{T}(X,\tilde{\Theta})}{\partial \tilde{G}_{t,j}}\big([(\tilde{M}_{:,i})^T\tilde{b}_t + \tilde{\Gamma}_{i}]\big)_+$$
Taking the derivative with respect to $\tilde{\Gamma}$ yields
$$\nabla_{\tilde{\Gamma}} \Big(\big[\nabla_{\tilde{F} }\mathcal{T}(X,\tilde{\Theta})\big]_{i,j} \Big)= \sum_{t=1}^{\omega} \frac{\partial  \mathcal{T}(X,\tilde{\Theta})}{\partial \tilde{G}_{t,j}}e_iI\big[\tilde{M}_i^T\tilde{b}_t + \tilde{\Gamma}_i>0\big] $$
We have used the fact that $ \frac{\partial  \mathcal{T}(X,\tilde{\Theta})}{\partial \tilde{G}_{t,k}}$ is determined by the portion of the transformer downstream of the MLP; namely, the second attention layer. Therefore it is independent of the MLP and constant relative to $\Gamma,$ and can be taken outside of the derivative. We then upper bound the difference as 

$$\Delta_{Term_5}\leq\sum_{i=1}^{4(D_f+1)}\sum_{j=1}^{d+1}\sum_{t=1}^{\omega}|\mathbf{\nu}_{ij}|\Bigg|Vec\Big( \frac{\partial  \mathcal{T}(X,\tilde{\Theta})}{\partial \tilde{G}_{t,j}}e_iI\big[\tilde{M}_i^T\tilde{b}_t + \tilde{\Gamma}_i>0\big]- \frac{\partial  \mathcal{T}(X,\Theta)}{\partial G_{t,j}}e_iI\big[M_i^Tb_t + \Gamma_i>0\big] \Big)\Bigg|||\mathbf{\gamma}_{1}||$$
$$=\sum_{i=1}^{4(D_f+1)}\sum_{j=1}^{d+1}\sum_{t=1}^{\omega}\sum_{k=1}^{d+1}|\mathbf{\nu}_{ij}|I[i=k]\Bigg|\frac{\partial  \mathcal{T}(X,\tilde{\Theta})}{\partial \tilde{G}_{t,j}}I\big[\tilde{M}_i^T\tilde{b}_t + \tilde{\Gamma}_i>0\big]- \frac{\partial  \mathcal{T}(X,\Theta)}{\partial G_{t,j}}I\big[M_i^Tb_t + \Gamma_i>0\big] \Bigg||\mathbf{\gamma}_{1k}|$$
$$=\sum_{i=1}^{4(D_f+1)}\sum_{j=1}^{d+1}\sum_{t=1}^{\omega}|\mathbf{\nu}_{ij}|\Bigg|\frac{\partial  \mathcal{T}(X,\tilde{\Theta})}{\partial \tilde{G}_{t,j}}I\big[\tilde{M}_i^T\tilde{b}_t + \tilde{\Gamma}_i>0\big]- \frac{\partial  \mathcal{T}(X,\Theta)}{\partial G_{t,j}}I\big[M_i^Tb_t + \Gamma_i>0\big] \Bigg||\mathbf{\gamma}_{1i}|$$
$$\lessapprox \underbrace{\sum_{i=1}^{4(D_f+1)}\sum_{j=1}^{d+1}\sum_{t=1}^{\omega}|\mathbf{\nu}_{ij}|\Bigg|\frac{\partial  \mathcal{T}(X,\Theta)}{\partial G_{t,j}}\Big(I\big[\tilde{M}_i^T\tilde{b}_t + \tilde{\Gamma}_i>0\big]- I\big[M_i^Tb_t + \Gamma_i>0\big] \Big)\Bigg||\mathbf{\gamma}_{1i}|}_{Term_A}$$
$$+\underbrace{\sum_{i=1}^{4(D_f+1)}\sum_{j=1}^{d+1}\sum_{t=1}^{\omega}|\mathbf{\nu}_{ij}|\Bigg|\Big(\frac{\partial  \mathcal{T}(X,\tilde{\Theta})}{\partial \tilde{G}_{t,j}}- \frac{\partial  \mathcal{T}(X,\Theta)}{\partial G_{t,j}}\Big)I\big[M_i^Tb_t + \Gamma_i>0\big] \Bigg||\mathbf{\gamma}_{1i}|}_{Term_B}$$
Where the last inequality follows from the triangle inequality, and the fact that we are approximating the perturbation using only terms that are first-order in $\sigma. $ Once again, we use the fact that with high probability, none of the neurons in the MLP will flip with high probability to conclude that $Term_A=0.$ Clearly, $\Big|I\big[M^Tb_t+\Gamma\geq0\big]_i\Big|\leq 1, $ and thus 
$$Term_B \leq \sum_{i=1}^{4(D_f+1)}\sum_{j=1}^{d+1}\sum_{t=1}^{\omega}|\mathbf{\nu}_{ij}|\Bigg|\frac{\partial  \mathcal{T}(X,\tilde{\Theta})}{\partial \tilde{G}_{t,j}}- \frac{\partial  \mathcal{T}(X,\Theta)}{\partial G_{t,j}}\Bigg||\mathbf{\gamma}_{1i}|$$
In \ref{perturbedmlpgradientlemma} we show that
$$\Bigg|\frac{\partial  \mathcal{T}(X,\tilde{\Theta})}{\partial \tilde{G}_{t,i}}- \frac{\partial  \mathcal{T}(X,\Theta)}{\partial G_{t,i}}\Bigg|\lessapprox 256\sigma d D_f^2\omega log(T)I[i=d+1] +1536\sigma d D_f^2 \omega^{\frac{3}{2}} log^2(T)I[i<d+1].$$
Therefore,
$$\Delta_{Term_5}= Term_B \leq $$
$$\sum_{i=1}^{4(D_f+1)}\sum_{t=1}^{\omega}|\mathbf{\nu}_{i,d+1}|\Bigg|\frac{\partial  \mathcal{T}(X,\tilde{\Theta})}{\partial \tilde{G}_{t,d+1}}- \frac{\partial  \mathcal{T}(X,\Theta)}{\partial G_{t,d+1}}\Bigg||\mathbf{\gamma}_{1i}|+\sum_{i=1}^{4(D_f+1)}\sum_{j=1}^{d}\sum_{t=1}^{\omega}|\mathbf{\nu}_{ij}|\Bigg|\frac{\partial  \mathcal{T}(X,\tilde{\Theta})}{\partial \tilde{G}_{t,j}}- \frac{\partial  \mathcal{T}(X,\Theta)}{\partial G_{t,j}}\Bigg||\mathbf{\gamma}_{1i}|$$
$$\leq 512\sigma d D_f^{\frac{7}{2}}\omega^2 log(T)+3072\sigma D_f^{\frac{7}{2}} \omega^{\frac{5}{2}} d^{\frac{3}{2}}log^2(T) $$
$$\approx 3072\sigma D_f^{\frac{7}{2}} \omega^{\frac{5}{2}} d^{\frac{3}{2}}log^2(T) $$

 \subsection{$\Delta_{Term_6}$}
Recall that 
$$Term_6 = 2  \max_{||v||=1} concat(\eta_1,\dots,\eta_{d+1})^T\Big(\nabla^2_{F M}\mathcal{T}(X,\Theta) \Big)concat(\nu_1,...,\nu_{4(D_f+1)})$$
We once again re-derive the hessian under the assumption that the output depends on all dimensions of $g_t.$ By the chain rule, we have the following for the perturbed gradient with respect to $M:$
$$\nabla_{\tilde{M}} \mathcal{T}(X,\tilde{\Theta}) = \sum_{k=1}^{d+1}\sum_{t=1}^{\omega} \frac{\partial  \mathcal{T}(X,\tilde{\Theta})}{\partial \tilde{G}_{t,k}}\tilde{b}_te_{k}^T\tilde{F}^Tdiag(I\big[\tilde{M}^T\tilde{b}_t+\tilde{\Gamma}>0\big])$$
$$\implies \Big[\nabla_{\tilde{M}} \mathcal{T}(X,\tilde{\Theta})\Big]_{ij} = \sum_{k=1}^{d+1}\sum_{t=1}^{\omega} \frac{\partial  \mathcal{T}(X,\tilde{\Theta})}{\partial \tilde{G}_{t,k}}e_i^T\tilde{b}_te_{k}^T\tilde{F}^Tdiag(I\big[\tilde{M}^T\tilde{b}_t+\tilde{\Gamma}>0\big])e_j$$
where $e_i,e_k\in \mathbb{R}^{d+1},$ $e_j\in\mathbb{R}^{4(D_f+1)}.$ The partial derivative $ \frac{\partial  \mathcal{T}(X,\tilde{\Theta})}{\partial \tilde{G}_{t,k}}$ is independent of $F,$ and therefore the gradient with resect to $F$ is given by:
$$\nabla_{\tilde{F}}\Big(\Big[\nabla_{\tilde{M}} \mathcal{T}(X,\tilde{\Theta})\Big]_{ij}\Big)=\sum_{k=1}^{d+1}\sum_{t=1}^{\omega} \frac{\partial  \mathcal{T}(X,\tilde{\Theta})}{\partial \tilde{G}_{t,k}}\nabla_{\tilde{F}}\Big(e_i^T\tilde{b}_te_{k}^T\tilde{F}^Tdiag(I\big[\tilde{M}^T\tilde{b}_t+\tilde{\Gamma}>0\big])e_j\Big)$$
We again leverage the rule of matrix calculus that says $\frac{d}{dM}\Big(x^TMy\Big) = xy^T,$ we have 
$$\nabla_{\tilde{F}}\Big(\Big[\nabla_{\tilde{M}} \mathcal{T}(X,\tilde{\Theta})\Big]_{ij}\Big) = \sum_{k=1}^{d+1}\sum_{t=1}^{\omega} \frac{\partial  \mathcal{T}(X,\tilde{\Theta})}{\partial \tilde{G}_{t,k}}e_i^T\tilde{b}_te_ke_j^Tdiag(I\big[\tilde{M}^T\tilde{b}_t+\tilde{\Gamma}>0\big])$$
We can decompose this perturbation as follows:
$$\nabla_{\tilde{F}}\Big(\Big[\nabla_{\tilde{M}} \mathcal{T}(X,\tilde{\Theta})\Big]_{ij}\Big) -\nabla_F\Big(\Big[\nabla_{M} \mathcal{T}(X,\Theta)\Big]_{ij}\Big)$$
$$\lessapprox \sum_{k=1}^{d+1}\sum_{t=1}^{\omega} \Big(\frac{\partial  \mathcal{T}(X,\tilde{\Theta})}{\partial \tilde{G}_{t,k}}- \frac{\partial  \mathcal{T}(X,\Theta)}{\partial G_{t,k}}\Big)e_i^Tb_te_ke_j^Tdiag(I\big[M^Tb_t+\Gamma>0\big])$$
$$+ \sum_{k=1}^{d+1}\sum_{t=1}^{\omega}  \frac{\partial  \mathcal{T}(X,\tilde{\Theta})}{\partial \tilde{G}_{t,k}}e_i^T(\tilde{b}_t-b_t)e_ke_j^Tdiag(I\big[M^Tb_t+\Gamma>0\big])$$
$$+ \sum_{k=1}^{d+1}\sum_{t=1}^{\omega}  \frac{\partial  \mathcal{T}(X,\tilde{\Theta})}{\partial \tilde{G}_{t,k}}\Big(e_i^Tb_te_ke_j^Tdiag(I\big[\tilde{M}^T\tilde{b}_t+\tilde{\Gamma}>0\big])-diag(I\big[M^Tb_t+\Gamma>0\big])
\Big)$$
Note that once again we argue that for $\sigma$ small in the sense described in \ref{mlpperturbationlemma},
the final sum will vanish. In the first sum, note that since the first $d$ dimensions of the unperturbed attention outputs $b_t$ are all $0,$ the summand vanishes when $i\neq d+1. $ To bound the norm of the perturbation, we apply the triangle inequality, where $\mathbf{\nu_k}\in \mathbb{R}^{4(D_f+1)}$ denotes the sub-vector of $\mathbf{\nu}$ corresponding to the $k$-th column of $F$:
$$\Delta_{Term_6} \lessapprox \sum_{j=1}^{d+1}\sum_{k=1}^{d+1}\sum_{t=1}^{\omega} |\eta_{d+1,j}|\Bigg|\frac{\partial  \mathcal{T}(X,\tilde{\Theta})}{\partial \tilde{G}_{t,k}}- \frac{\partial  \mathcal{T}(X,\Theta)}{\partial G_{t,k}}\Bigg|||\mathbf{\nu_k}||$$
$$+\sum_{i=1}^{d+1}\sum_{j=1}^{d+1}\sum_{k=1}^{d+1}\sum_{t=1}^{\omega}|\mathbf{\eta_{ij}}|  \Bigg|\frac{\partial  \mathcal{T}(X,\tilde{\Theta})}{\partial \tilde{G}_{t,k}}\Bigg|\Big\|\tilde{b}_t-b_t\Big\|||\mathbf{\nu_k}||$$
We once again use \ref{perturbedmlpgradientlemma}:
$$\Bigg|\frac{\partial  \mathcal{T}(X,\tilde{\Theta})}{\partial \tilde{G}_{t,i}}- \frac{\partial  \mathcal{T}(X,\Theta)}{\partial G_{t,i}}\Bigg|\lessapprox 256\sigma d D_f^2\omega log(T)I[i=d+1] +1536\sigma d D_f^2 \omega^{\frac{3}{2}} log^2(T)I[i<d+1].$$
we also leverage our bound from \ref{unperturbedmlpgradientlemma} that says that 
 $$||\nabla_{g_t}\mathcal{T}(X,\Theta)|| \leq  |c_t| + 4\sqrt{\omega}log(T)I[t=T+1]+4\omega log(T)I[t\in [\omega]]$$
Plugging these bounds in, we get
$$\Delta_{Term_6} \lessapprox 1536\sigma d D_f^2 \omega^{\frac{3}{2}} log^2(T)\sum_{j=1}^{d+1}\sum_{k=1}^{d}\sum_{t=1}^{\omega} |\eta_{d+1,j}|||\mathbf{\nu_k}||+256\sigma d D_f^2\omega log(T)\sum_{j=1}^{d+1}\sum_{t=1}^{\omega} |\eta_{d+1,j}|||\mathbf{\nu_{d+1}}||$$
$$+ 4\omega^2 log(T)\max_{t\leq\omega}\Big\|\tilde{b}_t-b_t\Big\|\sum_{i=1}^{d+1}\sum_{j=1}^{d+1}\sum_{k=1}^{d+1} |\mathbf{\eta_{ij}}| ||\mathbf{\nu_k}||$$
$$\leq 1536\sigma D_f^2 \omega^{\frac{5}{2}} d^2 log^2(T) +256\sigma D_f^2\omega^2 d^{\frac{3}{2}} log(T)+16\sigma \omega^2 d^{\frac{5}{2}} log(T)$$
$$\approx 1536\sigma D_f^2 \omega^{\frac{5}{2}} d^2 log^2(T)+16\sigma \omega^2 d^{\frac{5}{2}} log(T) $$

\subsection{Final Result for Perturbed Hessian Norm Bounds}
Putting these together, we have 
$$\underbrace{1536\sqrt{2} \sigma D_f^2\omega^{\frac{3}{2}} d^2 log(T)}_{\Delta_{Term_1}}+\underbrace{2048 \sigma d^2 D_f^{\frac{7}{2}} \omega^{\frac{3}{2}}log(T)}_{\Delta_{Term_2}}+\underbrace{2048\sigma d D_f^{\frac{7}{2}}\omega^{\frac{3}{2}}log(T)}_{\Delta_{Term_3}}$$
$$+\underbrace{1024 \sigma D_f^{\frac{5}{2}} \omega^{\frac{3}{2}} d^3  log(T)}_{\Delta_{Term_4}}+\underbrace{ 3072\sigma D_f^{\frac{7}{2}} \omega^{\frac{5}{2}} d^{\frac{3}{2}}log^2(T)}_{\Delta_{Term_5}}+\underbrace{1536\sigma D_f^2 \omega^{\frac{5}{2}} d^2 log^2(T)+16\sigma \omega^2 d^{\frac{5}{2}} log(T)}_{\Delta_{Term_6}}$$

$$\in O(\sigma D_f^{\frac{7}{2}}\omega^{\frac{5}{2}}log(T)^{\frac{7}{2}})$$
\end{proof}

\end{document}